\newtheorem{observation}{Observation}
\newtheorem{assum}{A\hspace{-2pt}}
\newtheorem{assumb}{B\hspace{-2pt}}
\def\Id{\operatorname{I}}
\def\rset{\mathbb{R}}
\def\nset{\mathbb{N}}
\newcommand{\vertiii}[1]{{\left\vert\kern-0.25ex\left\vert\kern-0.25ex\left\vert #1 
    \right\vert\kern-0.25ex\right\vert\kern-0.25ex\right\vert}}
\def\eqsp{\,}
\newcommandx{\norm}[2][2=]{\Vert#1 \Vert_{{#2}}}
\newcommandx{\normop}[2][2=]{\Vert{#1}\Vert_{{#2}}}
\def\ProdB{\Gamma}
\def\ProdBB{\widetilde \Gamma}
\newcommand{\beq}{\begin{equation}}
\newcommand{\eeq}{\end{equation}}
\newcommand{\beo}{\begin{array}{rl}}
\newcommand{\eeo}{\end{array}}
\DeclareMathOperator{\E}{\mathbb{E}}
\DeclareMathOperator{\PE}{\mathbb{E}}
\DeclareMathOperator{\Tr}{\operatorname{Tr}}
\def\F{{\mathcal F}}
\def\ttheta{{\tilde \theta}}
\def\tw{\tilde w}
\def\hw{\widehat{w}}
\def\mtw{\operatorname{M}^{\tw}}
\def\mth{\operatorname{M}^{\ttheta}}
\def\mthw{\operatorname{M}^{\ttheta, \tw}}
\def\topu{\tilde{\operatorname{U}}}
\def\opu{\operatorname{U}}
\def\mhr{\operatorname{M}^{\hw}}
\newcommandx{\CPE}[3][1=]{\ensuremath{\mathbb{E}^{#3}_{#1}\left[#2\right]}}
\newcommandx{\ps}[3][1=]{\ensuremath{\langle #2, #3 \rangle_{#1}}}
\def\mw{m_W}
\def\mv{m_V}
\def\ttmw{\tilde{m}_W}
\def\ttmv{\tilde{m}_V}
\def\ttmvt{\tilde{m}_{V}^{\theta}}
\def\ttmVW{\tilde{m}_{VW}}
\def\Linfty{\operatorname{L}_\infty}
\def\Cinfty{\operatorname{C}_\infty}
\def\CD{\operatorname{C}_D}
\newcommand{\EConst}[1]{{\rm C}_{#1}^{\sf{exp}}}
\def\EmVW{m_{VW}^{\sf{exp}}}
\def\EtmVW{\widetilde m_{VW}^{\sf{exp}}}
\def\Czero{\operatorname{C}_0}
\def\Czerow{\operatorname{C}^{\tw}_0}
\def\Conew{\operatorname{C}^{\tw}_1}
\def\Ctwow{\operatorname{C}^{\tw}_2}
\def\Czerot{\operatorname{C}^{\ttheta}_0}
\def\Conet{\operatorname{C}^{\ttheta}_1}
\def\Ctwot{\operatorname{C}^{\ttheta}_2}
\def\Czerotw{\operatorname{C}^{\ttheta,\tw}_0}
\def\Conetw{\operatorname{C}^{\ttheta,\tw}_1}
\def\Ctwotw{\operatorname{C}^{\ttheta,\tw}_2}
\newcommand{\CSEQ}[1]{\varrho^{#1}}
\def\Xset{\sf X}
\def\MK{\operatorname{P}}
\def\rmd{{\rm d}}
\def\dth{{d_{\theta}}}
\def\dw{{d_w}}
\def\pth{{p_\Delta}}
\def\pw{{p_{22}}}
\def\pwth{{p_{22,\Delta}}}
\def\markovterm{\psi}
\def\markovtermt{\widetilde{\psi}}
\def\markovonetb{\widetilde{\psi}^{b_1}}
\def\markovoneb{\psi^{b_1}}
\def\markovtwotb{\widetilde{\psi}^{b_2}}
\def\markovtwob{\psi^{b_2}}
\def\Markovterm{\Psi}
\def\markovoneA{\Psi^{A_{11}}}
\def\markovoneB{\Psi^{A_{12}}}
\def\markovtwoA{\Psi^{A_{21}}}
\def\markovtwoB{\Psi^{A_{22}}}
\def\martinterm{\xi}
\def\martinoneb{\xi^{b_1}}
\def\martintwob{\xi^{b_2}}
\def\Martinterm{\Xi}
\def\martinoneA{\Xi^{A_{11}}}
\def\martinoneB{\Xi^{A_{12}}}
\def\martintwoA{\Xi^{A_{21}}}
\def\martintwoB{\Xi^{A_{22}}}
\def\simplewtdiff{\Upsilon^{WV,\ttheta}}
\def\simplewtA{\widetilde{\Psi}^{WV,\ttheta}}
\def\simplewwdiff{\Upsilon^{WV,\tw}}
\def\simplewwA{\widetilde{\Psi}^{WV,\tw}}
\def\simplewtiter{\Phi^{WV,\ttheta}}
\def\simplewwiter{\Phi^{WV,\tw}}
\def\simplewtiter{\Phi^{WV,\ttheta}}
\def\simplewwiter{\Phi^{WV,\tw}}
\def\boundwsim{\operatorname{E}^{WV}_0}
\def\Czerowtilde{{\widetilde{\operatorname{C}}^{\tw}_0}}
\def\Conewtilde{{\widetilde{\operatorname{C}}^{\tw}_1}}
\def\Ctwowtilde{{\widetilde{\operatorname{C}}^{\tw}_2}}
\def\Cthreewtilde{{\widetilde{\operatorname{C}}^{\tw}_3}}
\def\Czerowtildep{{\widetilde{\operatorname{C}}^{\tw'}_0}}
\def\Conewtildep{{\widetilde{\operatorname{C}}^{\tw'}_1}}
\def\Ctwowtildep{{\widetilde{\operatorname{C}}^{\tw'}_2}}
\def\Czerowtildeh{{\widetilde{\operatorname{C}}^{\tw''}_0}}
\def\Conewtildeh{{\widetilde{\operatorname{C}}^{\tw''}_1}}
\def\Ctwowtildeh{{\widetilde{\operatorname{C}}^{\tw''}_2}}
\def\Cthreewtildeh{{\widetilde{\operatorname{C}}^{\tw''}_3}}
\def\Czerotwtilde{{\widetilde{\operatorname{C}}^{\ttheta,\tw}_0}}
\def\Conetwtilde{{\widetilde{\operatorname{C}}^{\ttheta,\tw}_1}}
\def\Ctwotwtilde{{\widetilde{\operatorname{C}}^{\ttheta,\tw}_2}}
\def\Czerottilde{{\widetilde{\operatorname{C}}^{\ttheta}_0}}
\def\Conettilde{{\widetilde{\operatorname{C}}^{\ttheta}_1}}
\def\Ctwottilde{{\widetilde{\operatorname{C}}^{\ttheta}_2}}
\def\boundtsim{\operatorname{E}^{V}_0}
\def\Boneinfty{{\operatorname{B}_{11,\infty}}}
\def\Btwoinfty{{\operatorname{B}_{22,\infty}}}
\def\ttmwzero{{\tilde{m}_W^{(0)}}}
\def\ttmvzero{{\tilde{m}_V^{(0)}}}
\def\ttmVWzero{{\tilde{m}_{VW}^{(0)}}}
\def\ttmwi{{\tilde{m}_W^{(i)}}}
\def\ttmvi{{\tilde{m}_V^{(i)}}}
\def\ttmVWi{{\tilde{m}_{VW}^{(i)}}}
\def\ttmtcom{{\tilde{m}_{\Delta \ttheta}}}
\def\ttmwcom{{\tilde{m}_{\Delta \tw}}}
\def\tC{\widetilde{\operatorname{C}}}
\def\hC{\widehat{\operatorname{C}}}
\title[Finite Time Analysis of Linear Two-timescale Stochastic Approximation]{Finite Time Analysis of Linear Two-timescale Stochastic Approximation with Markovian Noise}
\thanks{Authors listed in alphabetical order.}} 
\setlist{leftmargin=4mm}
\begin{document}

\maketitle

\begin{abstract}
    Linear two-timescale stochastic approximation (SA) scheme is an important class of algorithms which has become popular in reinforcement learning (RL), particularly for the policy evaluation problem. Recently, a number of works have been devoted to establishing the finite time analysis of the scheme, especially under the Markovian (non-i.i.d.) noise settings that are ubiquitous in practice.
    In this paper, we provide a finite-time  analysis for linear two timescale SA. Our bounds show that there is no discrepancy in the convergence rate between Markovian and martingale noise, only the constants are affected by the mixing time of the Markov chain. With an appropriate step size schedule, the transient term in the expected error bound is $o(1/k^c)$ and the steady-state term is ${\cal O}(1/k)$, where $c>1$ and $k$ is the iteration number. Furthermore, we present an asymptotic expansion of the expected error with a matching lower bound of $\Omega(1/k)$. A simple numerical experiment is presented to support our theory.
\end{abstract}

\begin{keywords}%
stochastic approximation, reinforcement learning, GTD learning, Markovian noise%
\end{keywords}


\section{Introduction}
Since its introduction close to 70 years ago, the stochastic approximation (SA) scheme \citep{robbins1951stochastic} has been a powerful tool for root finding when only noisy samples are available. 
During the past two decades, considerable progresses in the practical and theoretical research of SA have been made, see \citep{benaim:notes:1999,kushner2003stochastic,borkar:sa:2008} for an overview. 
Among others, linear SA schemes are popular in reinforcement learning (RL) as they lead to policy evaluation methods with linear function approximation, of particular importance is temporal difference (TD) learning \citep{sutton:td:1988} for which finite time analysis has been reported in \citep{srikant:1tsbounds:2019, lakshminarayanan2018linear, bhandari2018finite,dalal:td0:2017}. 

The TD learning scheme based on classical (linear) SA is known to be inadequate for the off-policy learning paradigms in RL, where data samples are drawn from a \emph{behavior policy} different from the  policy being evaluated  \citep{baird:resid:rl:funcappr:1995,tsitsiklis:td:1997}. 
To circumvent this problem, \citet{sutton:gtd:2009, sutton:gtd2:2009} have suggested to replace TD learning with the gradient TD (GTD) method or the TD with gradient correction (TDC) method. These methods fall within the scope of linear two-timescale SA scheme introduced by \citet{borkar1997stochastic}: 
\begin{align} \label{eq:tts-gen1}
\theta_{k+1} &= \theta_k + \beta_k \{ \widetilde{b}_1(X_{k+1}) - \widetilde{A}_{11}(X_{k+1}) \theta_k - \widetilde{A}_{12}(X_{k+1}) w_k \}, \\
w_{k+1} &= w_k + \gamma_k \{ \widetilde{b}_2(X_{k+1}) - \widetilde{A}_{21}(X_{k+1}) \theta_k - \widetilde{A}_{22}(X_{k+1})w_k \}. \label{eq:tts-gen2}
\end{align} 
The above recursion involves two iterates, $\theta_k \in \rset^\dth$, $w_k \in \rset^\dw$, whose updates are coupled with each other. 
In the above, $\widetilde{b}_i(x)$, $\widetilde{A}_{ij}(x)$ are measurable vector/matrix valued functions on $\Xset$ and the random sequence $(X_k)_{k \geq 0}, X_k \in \Xset$ forms an ergodic Markov chain. 
The scalars $\gamma_k, \beta_k > 0$ are step sizes. 
The above SA scheme is said to have two timescales as the step sizes satisfy $\lim_{k \rightarrow \infty} \beta_k / \gamma_k < 1$ such that $w_k$ is updated at a faster timescale. In fact, $w_k$ is a  `tracking' term which seeks solution to a linear system characterized by $\theta_k$.

The goal of this paper is to characterize the finite time expected error bound with improved convergence rate for the two timescale SA \eqref{eq:tts-gen1},\eqref{eq:tts-gen2}.
The almost sure convergence of two timescale SA have been established in \citep{borkar1997stochastic,tadic:asconvergence:tts:2004,tadic:asantdlearn:conststep:2006,borkar:sa:2008}, among others; the asymptotic convergence rates have been characterized in \citep{konda:tsitsiklis:2004, mokkadem2006convergence}. However, finite-time risk bounds for two timescale SA have not been analyzed until recently. With martingale samples, \citet{liu:prox:2015} provided the first finite time analysis of GTD method, \citet{dalal2018finite,dalal2019tale} provided improved finite time error bounds. Unlike our analysis, they analyzed modified two timescale SA with projection and their bounds hold with high probability. With Markovian noise, \citet{gupta2019finite} studied the finite time expected error bound with constant step sizes; \citet{xu2019two} and \citet{doan2019finite} provided similar analysis for general step sizes. 
It is important to notice that with homogeneous martingale noise, the asymptotic rate of \eqref{eq:tts-gen1}, \eqref{eq:tts-gen2} without a projection step, as shown in \citep[Theorem 2.6]{konda:tsitsiklis:2004}, is in the order  $\E[\| \theta_k - \theta^\star \|^2] = {\cal O}(\beta_k), \E[\| w_k - A_{22}^{-1} (b_2 - A_{21} \theta_k) \|^2] = {\cal O}(\gamma_k)$, where $\theta^\star$ is a stationary point of the SA scheme. However, the latter rate is not achieved in the finite-time error bounds analyzed by the above works except for \citep{dalal2019tale}. It remains an open problem whether this error bound holds for the Markovian noise setting and for linear two time-scale SA scheme without projection.

\paragraph{Contributions} This paper has the following contributions:
\begin{itemize}[itemsep=1pt]
    \item \emph{Improved Convergence Rate} -- We perform finite-time expected error bound analysis of the linear two timescale SA in both martingale and Markovian noise settings, in  Theorems~\ref{theo:preliminary-bound-martingale} \& \ref{theo:preliminary-bound-markov}. Our analysis allow for general step sizes schedules [cf.~A\ref{assum:stepsize}, B\ref{assb:step}], including constant, piecewise constant, and diminishing step sizes explored in the prior works \citep{gupta2019finite,dalal2019tale,xu2019two,doan2019finite}. We show that the error bound consists of a transient and a steady-state term, and the asymptotic rate is obtained from the latter. We show that this asymptotic rate matches those in \citep[Theorem 2.6]{konda:tsitsiklis:2004}, i.e., $\E[ \| \theta_k - \theta^\star \|^2 ] = {\cal O}(\beta_k), \E[\| w_k - A_{22}^{-1} (b_2 - A_{21} \theta_k) \|^2] = {\cal O}(\gamma_k)$. In particular, the fastest achievable rate for $\E[ \| \theta_k - \theta^\star \|^2 ]$ will be ${\cal O}(1/k)$ when we set $\beta_k = {\cal O}(1/k), \gamma_k = {\cal O}(1/k^\upsilon)$ with $\upsilon < 1$. 
    \item \emph{Novel Analysis without A-prori Stability Assumption} -- Unlike the prior works  \citep{liu:prox:2015,dalal2019tale,xu2019two}, our convergence results are obtained \emph{without} requiring a projection step throughout the SA iterations. In fact, \citet{dalal2019tale} have pointed out that the projection step is merely included to ensure \emph{a-priori} stability of the algorithm, and is often not used in practice. Our relaxation and the ability to achieve the optimal convergence rate are obtained through a tight analysis of the recursive inequalities of the (cross-)variances of $\theta_k$, $w_k$, see   Section~\ref{sec:upperbd}. 
    \item \emph{Asymptotic Expansion} -- With an additional assumption on the step size, we compute an exact asymptotic expansion of the expected error $\E[ \| \theta_k - \theta^\star \|^2 ]$, see Theorem \ref{th: expansion}. With an appropriate diminishing step sizes schedule, we show that the expected error cannot be smaller than $\Omega(\beta_k)$, which matches our upper bound results in Theorem~\ref{theo:preliminary-bound-martingale} \& \ref{theo:preliminary-bound-markov}. 
\end{itemize}
The rest of this paper is organized as follows. In Section~\ref{sec:sa}, we present the detailed conditions for two timescale linear SA, and the main results on finite-time performance bounds. In Section~\ref{sec:upperbd}, we provide an outline of the proof, illustrating the insights behind the main steps. In Section~\ref{sec:expansion}, we show that the finite-time error bounds are tight by quantifying an exact expansion of the covariance of iterates. In Section~\ref{sec:num}, we illustrate the theoretical findings using numerical experiments. 

\paragraph{Notations}
Let $n \in \nset$ and $Q$ be a symmetric definite $n \times n$ matrix.  For $x \in \rset^n$, we denote $\norm{x}[Q]= \{x^\top Q x\}^{1/2}$. For brevity, we set $\norm{x}= \norm{x}[\Id]$. Let $m \in \nset$, $P$ be a symmetric definite $m \times m$ matrix, $A$ be an $n \times m$ matrix. 
A matrix $A$ is said to be Hurwitz if the real parts of its eigenvalues are strictly negative. We denote $\normop{A}[P,Q]= \max_{\norm{x}[P]=1} \norm{Ax}[Q]$. If $A$  is a $n \times n$ matrix, we denote  $\normop{A}[Q]=\normop{A}[Q,Q]$.  Lastly,
we give a number of auxiliary lemmas in Appendix~\ref{app:aux} that are instrumental to our analysis.


\section{Linear Two Time-scale Stochastic Approximation (SA) Scheme} \label{sec:sa}
We investigate the linear two timescale SA given by the following equivalent form of \eqref{eq:tts-gen1}, \eqref{eq:tts-gen2}:
\begin{align}
    &\theta_{k+1} = \theta_k + \beta_k (b_1 - A_{11}\theta_k - A_{12}w_k + V_{k+1}), \label{eq:2ts1} \\
    &w_{k+1} = w_k + \gamma_k (b_2 - A_{21}\theta_k - A_{22} w_k + W_{k+1}), \label{eq:2ts2}
\end{align}
where the mean fields are defined as
$b_i := \lim_{k \rightarrow \infty} \E[ \widetilde{b}_i( X_k ) ]$, $A_{ij} := \lim_{k \rightarrow \infty} \E[\widetilde{A}_{ij}(X_k)]$ (these limits exist as we recall that $(X_k)_{k \geq 0}$ is an ergodic Markov chain). The noise terms $V_{k+1}, W_{k+1}$ are given by:
\beq \label{eq:noise_term}
\begin{split}
V_{k+1} & := \widetilde{b}_1(X_{k+1}) - b_1 - (\widetilde{A}_{11}(X_{k+1}) - A_{11}) \theta_k - (\widetilde{A}_{12}(X_{k+1}) - A_{12}) w_k , \\
W_{k+1} & := \widetilde{b}_2(X_{k+1}) - b_2 - (\widetilde{A}_{21}(X_{k+1}) - A_{21}) \theta_k - (\widetilde{A}_{22}(X_{k+1}) - A_{22}) w_k .
\end{split}
\eeq 
The goal of the recursion \eqref{eq:2ts1}, \eqref{eq:2ts2} is to find a stationary solution pair $(\theta^\star, w^\star)$ that solves the system of linear equations:
\begin{align} \label{eq:linear_sys}
&A_{11} \theta + A_{12} w = b_1, \quad A_{21} \theta + A_{22} w = b_2 .
\end{align}
We are interested in the scenario when the solution pair $(\theta^\star, w^\star)$ is unique and is given by
\beq \label{eq:opt_sol}
\theta^\star = \Delta^{-1} (b_1 - A_{12} A_{22}^{-1} b_2),\quad w^\star = A_{22}^{-1} (b_2 - A_{21} \theta^\star).
\eeq
where $\Delta := A_{11} - A_{12} A_{22}^{-1} A_{21}$.
To analyze the convergence of $(\theta_k, w_k)_{k \geq 0}$ in  \eqref{eq:2ts1}, \eqref{eq:2ts2} to $(\theta^\star, w^\star)$, we require the following assumptions:
\begin{assum}
\label{assum:hurwitz}
Matrices $-A_{22}$ and $ - \Delta =-\left(A_{11}-A_{12}A_{22}^{-1}A_{21}\right)$ are \textit{Hurwitz}. 
\end{assum}
The above assumption is common for linear two time-scale SA, see \citep{konda:tsitsiklis:2004}. 
As a consequence, using the Lyapunov lemma (stated in Lemma~\ref{lem:lyapunov} in the appendix for completeness), there exist positive definite matrices $Q_{22}^\top = Q_{22} \succ 0, Q_\Delta^\top = Q_\Delta \succ 0$ satisfying 
\begin{equation}
\label{eq:definition-Q-22}
A_{22}^\top Q_{22} + Q_{22} A_{22}= \Id, \quad
Q_\Delta \Delta + \Delta^\top Q_\Delta = \Id.
\end{equation}
This ensures the contraction (see Lemma~\ref{lem:stability} in the appendix): 
\beq \label{eq:contraction_p}
\normop{\Id - \gamma_k A_{22}}[Q_{22}] \leq 1 - a_{22} \gamma_k, \quad
\normop{\Id - \beta_k \Delta}[Q_\Delta] \leq 1 - a_\Delta \beta_k,
\eeq
where $a_{22} := 1 / (2 \| Q_{22}\|^2)$, $a_\Delta := 1 / (2 \| Q_\Delta \|^2)$. 
We consider the following conditions on the step sizes: 
\begin{assum}
\label{assum:stepsize}
$(\gamma_k)_{k \geq 0}$, $(\beta_k)_{k \geq 0}$ are nonincreasing sequences of positive numbers that satisfy the following.\vspace{-.1cm}
\begin{enumerate}[itemsep=1pt]
\item 
There exist constants $\kappa$ such that for all $k \in \nset$, we have $\beta_k / \gamma_k \leq \kappa$. 
\item 
For all $k \in \nset$, it holds \vspace{-.1cm}
\beq
\gamma_k / \gamma_{k+1} \leq 1 + (a_{22}/ 8) \gamma_{k+1},~~\beta_k / \beta_{k+1} \leq 1 + (a_\Delta/16) \beta_{k+1},~~\gamma_k / \gamma_{k+1} \leq 1 + (a_\Delta/16) \beta_{k+1}.\vspace{-.1cm}
\eeq
\end{enumerate}
\end{assum} 
As a consequence, we can define $\varsigma := 1 + \{ \gamma_0 a_{22}/8 \vee \beta_0 a_\Delta / 16\}$ such that $\gamma_{k}/\gamma_{k+1}\le \varsigma$, $\beta_{k}/\beta_{k+1}\le \varsigma$.
Our conditions on step sizes are similar to \citep[Assumption 2.3, 2.5]{konda:tsitsiklis:2004}. 
These conditions encompass diminishing, piecewise constant and constant step sizes schedules which are common in the literature. For instance, a popular choice of diminishing step sizes satisfying A\ref{assum:stepsize} is  
\beq \label{eq:stepsize_choice}
\beta_k = c^\beta / (k+ k_0^\beta), \quad \gamma_k = c^\gamma / (k+k_0^\gamma)^{2/3}
\eeq
with some constants $c^\beta$, $c^\gamma$, $k_0^\gamma, k_0^\beta$, e.g., as suggested in \citep[Remark 9]{dalal2018finite}; or a constant step size of $\beta_k = \beta, \gamma_k = \gamma$; or a piecewise constant step size, e.g., \citep{gupta2019finite}. 

We present new results on the convergence rate of \eqref{eq:2ts1}, \eqref{eq:2ts2} depending on the types of noise with $V_{k+1}, W_{k+1}$. To discuss these cases, let us define the $\sigma$-field generated by the two timescale SA scheme and the initial error made by the SA scheme, respectively as:
\begin{equation}
\label{eq:definition-filtration}
\F_k := \sigma \big\{ \theta_0, w_0, X_1, X_2,..., X_k \big\} ,\quad
{\rm V}_0 := \E [ \| \theta^0 - \theta^\star \|^2 + \| w^0 - w^\star \|^2 ].
\end{equation}
Our main results are presented as follows.

\paragraph{Martingale Noise} We consider a simple setting where the random elements $X_k$ are drawn i.i.d.~from the stationary distribution such that $b_i, A_{ij}$ are the expected values of $\widetilde{b}_i(X_k), \widetilde{A}_{ij}(X_k)$. Furthermore, the random variables $\widetilde{b}_i(X_k), \widetilde{A}_{ij}(X_k)$ have bounded second order moment. 
Note that this implies $\CPE{ V_{k+1} }{\F_k} = \CPE{W_{k+1}}{\F_k} = 0$, i.e., the sequences $(V_{k+1})_{k \in \nset}, (W_{k+1})_{k \in \nset}$ are martingale difference sequences.
Formally, we describe this setting as the following conditions on $V_{k+1}, W_{k+1}$:
\begin{assum}
 \label{assum:zero-mean} 
The noise terms are zero-mean conditioned on $\F_k$, i.e., $\CPE{ V_{k+1} }{\F_k} = \CPE{W_{k+1}}{\F_k} = 0$.
\vspace{-.2cm}
 \end{assum}
\begin{assum}
 \label{assum:bound-conditional-variance} 
 There exist constants $\mw, \mv$ such that 
    \begin{align}
        &\normop{\E[ V_{k+1} V_{k+1}^\top]} \le \mv ( 1 +  \normop{\E[\theta_{k} \theta_k^\top]} + \normop{ \E[w_{k} w_k^\top]} ), \label{eq:noise:constants:plain} \\
        &\normop{\E[ W_{k+1} W_{k+1}^\top]} \le \mw (1 + \normop{\E[\theta_{k} \theta_k^\top]} + \normop{ \E[w_{k} w_k^\top]} ) \notag \eqsp.
    \end{align}
\end{assum}
\begin{theorem}
\label{theo:preliminary-bound-martingale}
Assume A\ref{assum:hurwitz}--\ref{assum:bound-conditional-variance} and for all $k \in \nset$, we have $\gamma_k \in [0, \gamma_\infty^{\sf mtg}]$, $\beta_k \in [0, \beta_\infty^{\sf mtg}]$ and $\kappa \in [0,\kappa_\infty]$, where $\gamma_\infty^{\sf mtg}, \beta_\infty^{\sf mtg}, \kappa_\infty$ are constants defined in \eqref{eq:gamma_martingale}, \eqref{eq:condition-kappa}.
Then
\begin{align}
\label{eq:convergence-slow}
&\E[\norm{\theta_k - \theta^*}^2] \le \dth \Bigg\{ {\rm C}_0^{\ttheta,\sf mtg} \prod_{\ell=0}^{k-1} \Big(1-\beta_\ell \frac{a_\Delta}{4} \Big) {\rm V}_0  + {\rm C}_1^{\ttheta,\sf mtg}  \beta_{k} \Bigg\} \, \\
\label{eq:tracking-fast-component}
& \E[\norm{w_k - A_{22}^{-1} (b_2 - A_{21} \theta_k)}^2]  \leq \dw \Big\{ {\rm C}_0^{\hw,\sf mtg} \prod_{\ell=0}^{k-1} \Big(1-\beta_\ell \frac{a_\Delta}{4} \Big) {\rm V}_0  + {\rm C}_1^{\hw,\sf mtg} \gamma_k \Big\}
\end{align}
The exact constants are provided in the appendix, see \eqref{eq:martingale_finconst}, \eqref{eq:mhw_const}.
\end{theorem}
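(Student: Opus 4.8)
The plan is to reduce the two stated bounds to the analysis of two transformed error processes, and then to control their (cross-)second moments through a tightly coupled system of recursive inequalities.

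\textbf{Change of variables.} First I would introduce the slow error $\ttheta_k := \theta_k - \theta^\star$ and the fast tracking error $\tw_k := w_k - w^\star(\theta_k)$, where $w^\star(\theta) := A_{22}^{-1}(b_2 - A_{21}\theta)$, so that the two quantities to be bounded are exactly $\E[\norm{\ttheta_k}^2]$ and $\E[\norm{\tw_k}^2]$ appearing in \eqref{eq:convergence-slow}, \eqref{eq:tracking-fast-component}. Using \eqref{eq:opt_sol} and $\Delta = A_{11} - A_{12}A_{22}^{-1}A_{21}$ one checks that $b_1 - A_{11}\theta_k - A_{12}w_k = -\Delta\ttheta_k - A_{12}\tw_k$, and since $w^\star(\theta_{k+1})-w^\star(\theta_k) = -A_{22}^{-1}A_{21}(\theta_{k+1}-\theta_k)$ this yields the clean coupled recursions
\[
\ttheta_{k+1} = (\Id - \beta_k\Delta)\ttheta_k - \beta_k A_{12}\tw_k + \beta_k V_{k+1},
\]
\[
\tw_{k+1} = (\Id - \gamma_k A_{22})\tw_k - \beta_k A_{22}^{-1}A_{21}(\Delta\ttheta_k + A_{12}\tw_k) + \gamma_k W_{k+1} + \beta_k A_{22}^{-1}A_{21}V_{k+1}.
\]
The crucial structural features are that the coupling of $\tw$ into $\ttheta$ and of $\ttheta$ into $\tw$ both carry the \emph{slow} step size $\beta_k$, and that the dominant linear parts contract in the weighted norms $\norm{\cdot}[Q_\Delta]$, $\norm{\cdot}[Q_{22}]$ via \eqref{eq:contraction_p}.

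\textbf{Recursive inequalities for the (cross-)moments.} I would then track three scalar quantities, $M^\ttheta_k := \E[\norm{\ttheta_k}[Q_\Delta]^2]$, $M^{\tw}_k := \E[\norm{\tw_k}[Q_{22}]^2]$ and the (suitably weighted) cross term $M^{\ttheta\tw}_k := \E[\ps{\ttheta_k}{\tw_k}]$. Squaring each recursion, discarding the conditionally-centered noise cross terms by A\ref{assum:zero-mean}, and applying \eqref{eq:contraction_p} to the leading contractions gives, after a Young's inequality on the genuinely small $\beta_k$-coupling terms, a system of the schematic form $M^\ttheta_{k+1} \le (1 - \tfrac{a_\Delta}{2}\beta_k)M^\ttheta_k + c\,\beta_k|M^{\ttheta\tw}_k| + \beta_k^2(\cdots)$, $M^\tw_{k+1}\le(1-a_{22}\gamma_k)M^\tw_k + c\,\beta_k M^\ttheta_k + \gamma_k^2(\cdots) + \beta_k^2(\cdots)$, and $|M^{\ttheta\tw}_{k+1}|\le(1 - a_{22}\gamma_k)|M^{\ttheta\tw}_k| + c\,\beta_k(M^\ttheta_k + M^\tw_k) + \beta_k^2(\cdots)$. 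Two points need care. Because there is no projection step, A\ref{assum:bound-conditional-variance} must be converted into a bound in terms of the tracked moments by writing $\theta_k = \ttheta_k + \theta^\star$ and $w_k = \tw_k + w^\star - A_{22}^{-1}A_{21}\ttheta_k$; this makes the noise state-dependent, contributing both a constant forcing (the eventual steady state) and $O(\beta_k^2), O(\gamma_k^2)$ perturbations of the contraction coefficients, which is precisely why $\gamma_k, \beta_k, \kappa$ must be capped by $\gamma_\infty^{\sf mtg}, \beta_\infty^{\sf mtg}, \kappa_\infty$. The step-size ratio bounds in A\ref{assum:stepsize} are used to replace $\beta_k, \gamma_k$ by $\beta_{k+1}, \gamma_{k+1}$ when closing the recursion.

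\textbf{Solving the system, and the main obstacle.} The naive route of bounding the cross term by Young directly inside the $M^\ttheta$ recursion is too lossy: it injects forcing of order $\beta_k M^\tw_k \sim \beta_k\gamma_k$, whose convolution against the slow contraction produces an $O(\gamma_k)$ steady state and hence the wrong rate (too large, since $\gamma_k \gg \beta_k$) for $\E[\norm{\ttheta_k}^2]$. The whole point is therefore the third recursion: since $M^{\ttheta\tw}$ contracts at the \emph{fast} rate $a_{22}\gamma_k$ while being forced only at order $\beta_k(M^\ttheta_k + M^\tw_k)$, its steady state is $O(\beta_k)$, so the feedback $\beta_k M^{\ttheta\tw}_k$ into the $M^\ttheta$ recursion is genuinely $O(\beta_k^2)$ and convolves to the correct $O(\beta_k)$ floor. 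I would make this rigorous through the ansatz $M^\ttheta_k\le A\,r_k + B\beta_k$, $M^\tw_k\le C\,r_k + D\gamma_k$, $|M^{\ttheta\tw}_k|\le E\,r_k + F\beta_k$ with $r_k := \prod_{\ell=0}^{k-1}(1-\beta_\ell a_\Delta/4)$, verified by a \emph{simultaneous} induction on $k$; the convolution estimates (summing $\beta_\ell$ or $\gamma_\ell$ against products of contraction factors, using that $r_k$ is deliberately made to decay strictly slower, at rate $a_\Delta/4$, than every self-contraction so that it dominates the convolutions and reproduces the transient factor of \eqref{eq:convergence-slow}, \eqref{eq:tracking-fast-component}) are exactly the kind of auxiliary result collected in Appendix~\ref{app:aux}. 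The induction closes once the step sizes and $\kappa$ are small enough to make the linear constraints on $A,\dots,F$ solvable, which is what pins down the admissible ranges and the explicit constants.

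\textbf{Where the difficulty concentrates.} The hard part is the circular, simultaneous character of the three inequalities: the absence of a projection step turns the noise bound into a state-dependent forcing feeding $M^\ttheta, M^\tw$ back into their own variance terms, while the cross-covariance must be controlled tightly enough to beat the timescale separation $\gamma_k \gg \beta_k$. Both are resolved only by (i) promoting $M^{\ttheta\tw}$ to an independently tracked quantity with its own fast contraction, and (ii) an induction with carefully matched constants rather than a one-shot substitution.
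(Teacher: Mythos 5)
Your proposal is correct in substance and shares the paper's three-step skeleton --- bound the fast-iterate variance, bound the cross term, then close a recursion for the slow-iterate variance --- including the one genuinely crucial observation: the cross term contracts at the fast rate $a_{22}\gamma_k$ while being forced only at order $\beta_k(\mth_k+\mtw_k)$ plus noise, so its steady state is ${\cal O}(\beta_k)$ rather than ${\cal O}(\gamma_k)$, and this is exactly what rescues the ${\cal O}(\beta_k)$ rate for $\E[\|\theta_k-\theta^\star\|^2]$ (compare the discussion after Proposition~\ref{prop:mthw}). Where you differ is in two implementation choices. First, the change of variables: the paper follows Konda--Tsitsiklis and uses the time-varying transformation $\tw_k = w_k - w^\star + C_{k-1}\ttheta_k$ with $C_k = L_{k+1}+A_{22}^{-1}A_{21}$, where the matrices $L_k$ are constructed precisely so that the fast recursion \eqref{eq:2ts2-1} contains \emph{no} deterministic $\ttheta_k$ term --- it becomes a one-timescale linear SA whose coupling to $\ttheta_k$ enters only through the noise moments. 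Your static transformation $\tw_k = w_k - A_{22}^{-1}(b_2-A_{21}\theta_k)$ (which is exactly the paper's $\hw_k$) keeps a deterministic coupling $-\beta_k A_{22}^{-1}A_{21}\Delta\ttheta_k$ in the fast update; this is harmless because it carries the slow step size, but it adds a $\beta_k\mth_k$ forcing term to both the fast-variance and cross-term recursions that the paper's decoupled system does not have, hence one more self-consistency constraint of the form $\kappa \lesssim a_\Delta/c$ in your induction. What you buy is avoiding the entire $L_k$ machinery (Lemma~\ref{lem:LkBound} and its step-size conditions). Second, the closing argument: the paper resolves the circularity \emph{sequentially} --- unroll each recursion, substitute the previously obtained bound, and reduce via the summation lemma (Lemma~\ref{cor:rate-of-convergence}) and estimates like \eqref{eq:mthbound2} to a single scalar recursive inequality for $\mth_k$, solved by a majorizing sequence $\opu_k$ --- whereas you propose a \emph{simultaneous} induction on the ansatz $A r_k + B\beta_k$, $C r_k + D\gamma_k$, $E r_k + F\beta_k$. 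Both close: your route requires verifying that the linear constraints tying $A,\dots,F$ are solvable, which they are under the same kind of smallness conditions on $\beta_0,\gamma_0,\kappa$ that appear in \eqref{eq:gamma_martingale}, \eqref{eq:condition-kappa}. One technical slip to repair: your cross quantity $\E[\ps{\ttheta_k}{\tw_k}]$ is ill-defined when $\dth\neq\dw$; you should instead track the weighted operator norm of the cross-covariance matrix, $\normop{\E[\ttheta_k\tw_k^\top]}$, as the paper does with $\mthw_k$ --- the contraction and forcing estimates you state then hold for it, using Lemma~\ref{lem:key-inequality} to relate it to $\mth_k$ and $\mtw_k$ where needed.
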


\paragraph{Markovian Noise}
Consider the sequence $(X_k)_{k \geq 0}$ to be samples from an exogenous Markov chain on $\Xset$ with the transition kernel $\MK: \Xset \times \Xset \rightarrow \rset_+$. For any measurable function $f$, we have \[
\CPE{f(X_{k+1})}{\F_k} = \MK f(X_{k}) = \int_{\Xset} f(x) \MK( X_{k}, \rmd x )
\]
We state the following assumptions:
\begin{assumb} \label{assb:mc1}
The Markov kernel $\MK$ has a unique invariant distribution $\mu: \Xset \rightarrow \rset_+$. Moreover, it is irreducible and aperiodic. 
\end{assumb}
Observe that
\[
b_i = \int_{\Xset} \widetilde{b}_i(x) \!~\mu(\rmd x),\quad A_{ij} = \int_{\Xset} \widetilde{A}_{ij}(x) \!~\mu(\rmd x),~i,j=1,2.
\]
We show that the linear two time-scale SA \eqref{eq:tts-gen1}, \eqref{eq:tts-gen2} converges to a unique fixed point defined by the above mean field vectors/matrices, see \eqref{eq:opt_sol}. 
An important condition that enables our analysis is the existence of a solution to the following Poisson equation:
\begin{assumb}\label{assb:poisson}
For any $i,j=1,2$, consider $\widetilde{b}_i(x), \widetilde{A}_{ij}(x)$, there exists vector/matrix valued measurable functions $\widehat{b}_i(x), \widehat{A}_{ij}(x)$ which satisfy
\beq
\widetilde{b}_i(x) - b_i = \widehat{b}_i(x) - \MK \widehat{b}_i(x),~~\widetilde{A}_{ij}(x) - A_{ij} = \widehat{A}_{ij}(x) - \MK \widehat{A}_{ij}(x)
\eeq
for any $x \in \Xset$ and $b_i, A_{ij}$ are the mean fields of $\widetilde{b}_i(x), \widetilde{A}_{ij}(x)$ with the stationary distribution $\mu$. 
\end{assumb}
The above assumption can be guaranteed under B\ref{assb:mc1} together with some regularity conditions, 
see \citep[Section 21.2]{douc2018markov}.
Moreover, 
\begin{assumb}\label{assb:bdd}
Under B\ref{assb:poisson}, the vector/matrix valued functions $\widehat{b}_i(x), \widehat{A}_{ij}(x)$ are uniformly bounded: for any $i,j=1,2$, $x \in \Xset$,
\beq
\| \widehat{b}_i(x)\| \leq \overline{\rm b},~\|\widehat{A}_{ij}(x) \|\leq \overline{\rm A}.
\eeq
\end{assumb}
\begin{assumb} \label{assb:step}
There exists constant $\rho_0$ such that for any $k \geq 1$, we have $\gamma_{k-1}^2 \leq \rho_0 \beta_k$.
\end{assumb}
To satisfy B\ref{assb:bdd}, we observe that the bounds $\overline{\rm b}, \overline{\rm A}$ depend on the mixing time of the chain $(X_k)_{k \geq 0}$ and a uniform bound on $\widetilde{b}_i(\cdot), \widetilde{A}_{ij}(\cdot)$. In the context of reinforcement learning, the latter can be satisfied when the feature vectors and reward are bounded. Note that B\ref{assb:bdd} implies A\ref{assum:bound-conditional-variance}, see Section~\ref{sec:bd_markov}. Meanwhile, B\ref{assb:step} imposes further restriction on the step size. The latter can also be satisfied by \eqref{eq:stepsize_choice}.  

The challenges of analysis with Markovian noise lie in the biasedness of the noise term as $\CPE{V_{k+1}}{\F_k} \neq 0$, $\CPE{W_{k+1}}{\F_k} \neq 0$. With a careful analysis, we obtain:
\begin{theorem}
\label{theo:preliminary-bound-markov}
Assume A\ref{assum:hurwitz}--\ref{assum:stepsize}, B\ref{assb:mc1}--\ref{assb:step} hold and for all $k \in \nset$, we have $\beta_k \in (0,\beta_\infty^{\sf mark}]$, $\gamma_k \in (0, \gamma_\infty^{\sf mark}]$, $\kappa \leq \kappa_\infty$, where $\beta_\infty^{\sf mark}$, $\gamma_\infty^{\sf mark}$, $\kappa_\infty$ are defined in \eqref{eq:gamma_markov}, \eqref{eq:condition-kappa}. 
Then 
\begin{align}
\label{eq:convergence-slow-markov}
&\E[\| \theta_k - \theta^\star \|^2] \le \dth \Bigg\{ {\rm C}_0^{\ttheta,\sf mark} \prod_{\ell=0}^{k-1} \Big(1- \beta_\ell \frac{a_\Delta}{8} \Big) (1 + {\rm V}_0)  + {\rm C}_1^{\ttheta,\sf mark} \, \beta_{k} \Bigg\}, \\
\label{eq:tracking-fast-component-markov}
& \E[\| w_k - A_{22}^{-1} (b_2 - A_{21} \theta_k)\|^2]  \leq \dw \Bigg\{ {\rm C}_0^{\hw,\sf mark} \prod_{\ell=0}^{k-1} \Big(1-\beta_\ell \frac{a_\Delta}{8} \Big) (1 + {\rm V}_0) + {\rm C}_1^{\hw,\sf mark} \gamma_k \Bigg\}.
\end{align}
The exact constants are given in the appendix, see \eqref{eq:ctheta_markov}, \eqref{eq:mthwbd_markov_const}.
\end{theorem}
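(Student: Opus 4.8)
The plan is to pass to error coordinates and track the (operator norms of the) second-moment matrices of the two errors together with their cross-moment, deriving a closed system of coupled recursive inequalities that I then solve using the contraction estimates \eqref{eq:contraction_p} and the step-size conditions A\ref{assum:stepsize}, B\ref{assb:step}. First I would introduce the slow error $\ttheta_k := \theta_k - \theta^\star$ and the fast tracking error $\tw_k := w_k - A_{22}^{-1}(b_2 - A_{21}\theta_k)$, the latter being exactly the quantity appearing in \eqref{eq:tracking-fast-component-markov}. Writing the quasi-stationary map $w^\star(\theta) := A_{22}^{-1}(b_2 - A_{21}\theta)$ and using the linear system \eqref{eq:linear_sys}, a direct computation turns \eqref{eq:2ts1}, \eqref{eq:2ts2} into
\begin{align*}
\ttheta_{k+1} &= (\Id - \beta_k \Delta)\ttheta_k - \beta_k A_{12}\tw_k + \beta_k V_{k+1}, \\
\tw_{k+1} &= (\Id - \gamma_k A_{22})\tw_k + \gamma_k W_{k+1} + \beta_k A_{22}^{-1} A_{21}(-\Delta \ttheta_k - A_{12}\tw_k + V_{k+1}).
\end{align*}
The crucial structural features are the contractions $\Id - \beta_k \Delta$ and $\Id - \gamma_k A_{22}$ furnished by A\ref{assum:hurwitz} and \eqref{eq:contraction_p}, and the fact that the slow-to-fast coupling in the $\tw$-recursion carries the small factor $\beta_k$ rather than $\gamma_k$.

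Next I would set up the matrix recursions for $\mth_k := \E[\ttheta_k \ttheta_k^\top]$, $\mtw_k := \E[\tw_k \tw_k^\top]$ and the cross-moment $\mthw_k := \E[\ttheta_k \tw_k^\top]$, and bound their $Q_\Delta$- and $Q_{22}$-weighted operator norms. Expanding the outer products and taking expectations, each recursion produces: a contracted term with factor $(1 - a_\Delta \beta_k)^2$, resp.\ $(1 - a_{22}\gamma_k)^2$, from \eqref{eq:contraction_p}; coupling terms between $\mth_k$, $\mtw_k$, $\mthw_k$ that I would split with Young's inequality so as to consume only a controlled share of the contraction budget (using $\beta_k/\gamma_k \le \kappa$ to keep the slow-to-fast coupling at order $\beta_k$); second-order noise terms of size $\beta_k^2 \| \E[V_{k+1} V_{k+1}^\top] \|$ and $\gamma_k^2 \| \E[W_{k+1} W_{k+1}^\top] \|$, bounded through A\ref{assum:bound-conditional-variance} (implied by B\ref{assb:bdd}), which since A\ref{assum:bound-conditional-variance} is phrased in terms of the raw iterates feeds the current second moments back into the recursion and is what forces the step-size thresholds $\beta_\infty^{\sf mark}, \gamma_\infty^{\sf mark}$; and, the new ingredient in the Markovian setting, first-order cross terms $\E[\langle\,\cdot\,, V_{k+1}\rangle]$ which no longer vanish because the noise is biased.

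The heart of the proof, and the main obstacle, is controlling these bias cross terms. Here I would invoke the Poisson equation B\ref{assb:poisson}: writing $\widetilde b_1(X_{k+1}) - b_1 = (\widehat b_1(X_{k+1}) - \MK \widehat b_1(X_k)) + (\MK \widehat b_1(X_k) - \MK \widehat b_1(X_{k+1}))$, and similarly for each $\widetilde A_{ij}$, decomposes $V_{k+1}$ (and $W_{k+1}$) into an $\F_k$-martingale-difference part, whose cross terms with $\F_k$-measurable quantities vanish in expectation, plus a telescoping remainder. The remainder is handled by summation by parts, using the uniform bounds $\overline{\rm b}, \overline{\rm A}$ of B\ref{assb:bdd} to keep the Poisson solutions bounded; each such bias contribution is then shown to be of order $\beta_k^2$ (or $\beta_k \gamma_k$) times the current second moments, so that it can be partly absorbed into the contraction and partly routed into the steady-state term. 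This is precisely where the contraction factor degrades from $a_\Delta/4$ in the martingale case (Theorem~\ref{theo:preliminary-bound-martingale}) to $a_\Delta/8$: roughly half the contraction budget is spent absorbing the Markovian bias, while the additive $(1 + {\rm V}_0)$ in place of ${\rm V}_0$, together with the step-size restriction B\ref{assb:step}, namely $\gamma_{k-1}^2 \le \rho_0 \beta_k$, reflects that the fast-scale noise and bias feed into the slow scale and contribute a baseline independent of the initial error.

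Finally I would close the coupled system. Exploiting the timescale separation, I would first bound $\| \mtw_k \|_{Q_{22}}$ in terms of $\gamma_k$ and the slow moments, the $\beta_k$-coupling being the small perturbation, yielding the tracking bound \eqref{eq:tracking-fast-component-markov}; then substitute this into the $\mth_k$-recursion to obtain a self-contained scalar recursive inequality of the form $u_{k+1} \le (1 - \beta_k a_\Delta/8)\, u_k + {\cal O}(\beta_k^2)$. An auxiliary lemma on such recursions, whose hypotheses are exactly the monotonicity and ratio conditions on $\beta_k, \gamma_k$ in A\ref{assum:stepsize}, then solves it to give the transient product term times $(1 + {\rm V}_0)$ plus the steady-state ${\cal O}(\beta_k)$ term, namely \eqref{eq:convergence-slow-markov}. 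Converting the operator-norm bounds back to $\E[\|\cdot\|^2]$ through the trace inequality produces the dimensional factors $\dth, \dw$ and the explicit constants recorded in the appendix.
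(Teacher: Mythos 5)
You have correctly derived the error recursions in the ``naive'' tracking-error coordinates $\hw_k = w_k - A_{22}^{-1}(b_2 - A_{21}\theta_k)$, and your Poisson-equation treatment of the bias (splitting each noise term into a martingale difference plus a telescoping remainder, Abel summation, absorbing part into the contraction) is in substance the same mechanism the paper uses — the paper merely packages it differently, by splitting the \emph{iterates} into martingale-driven and Markov-driven components $\ttheta_k = \ttheta_k^{(0)} + \ttheta_k^{(1)}$, $\tw_k = \tw_k^{(0)} + \tw_k^{(1)}$ and showing the latter decay faster (Lemma~\ref{lem:m1interbd}). The genuine gap lies in how you propose to close the system of moment inequalities. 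The paper does \emph{not} work in your coordinates: it first applies the Konda--Tsitsiklis change of variables of Observation~\ref{obs:transform}, $\tw_k = w_k - w^\star + C_{k-1}\ttheta_k$ with $C_k = L_{k+1} + A_{22}^{-1}A_{21}$, whose entire purpose is to eliminate the $O(\beta_k)$ drift coupling to $\ttheta_k$ that sits in your fast recursion (your term $\beta_k A_{22}^{-1}A_{21}(-\Delta\ttheta_k - A_{12}\hw_k + V_{k+1})$). In the transformed coordinates the fast recursion \eqref{eq:2ts2-1} contains no $\ttheta_k$ in the drift at all, which is exactly what permits the sequential, non-circular resolution: bound $\mtw_k$ (coupled to $\mth_k$ only through the noise second moments), then the cross term, then $\mth_k$. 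In your coordinates all three recursions are mutually coupled through \emph{drift} terms: the $\mtw_k$ recursion contains $\beta_k\mthw_k$ and $\beta_k^2\mth_k$, the $\mthw_k$ recursion contains $\beta_k\mtw_k$ \emph{and} $\beta_k\mth_k$ (from crossing $(\Id-\beta_k\Delta)\ttheta_k$ with $\beta_kA_{22}^{-1}A_{21}\Delta\ttheta_k$), and the $\mth_k$ recursion contains $\beta_k\mthw_k$. Your stated ordering — ``first bound $\mtw_k$ in terms of $\gamma_k$ and the slow moments, then substitute into the $\mth_k$-recursion to obtain a self-contained scalar inequality'' — is therefore circular as written; making it rigorous requires either a bootstrap/induction with self-consistent constants or the double-convolution exchange manipulations the paper performs (e.g.\ the step leading to \eqref{eq:mthbound2}), and you supply neither.

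The second, related danger is your proposal to split ``coupling terms between $\mth_k$, $\mtw_k$, $\mthw_k$ \ldots with Young's inequality.'' If the term $\beta_k\mthw_k$ in the slow recursion is handled this way, i.e.\ via $\mthw_k \le \tfrac{1}{2}(\mth_k + \mtw_k)$, then since $\mtw_k = \Theta(\gamma_k)$ the convolution $\sum_j \beta_j\gamma_j\prod_{\ell>j}(1-\beta_\ell a_\Delta/2)$ yields a steady-state error of order $\gamma_k$ for $\E[\|\theta_k-\theta^\star\|^2]$ — precisely the suboptimal ${\cal O}(1/k^{2/3})$ rate of the prior works that the paper explicitly warns against after Proposition~\ref{prop:mthw}. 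Obtaining the claimed ${\cal O}(\beta_k)$ rate requires solving the cross-moment recursion to accuracy ${\cal O}(\beta_k)$ (as in Lemma~\ref{lem:mtw0crossbd}) and substituting that bound as a convolution, never splitting it into the marginals. Your proposal gestures at both treatments — you do introduce $\mthw_k$ as a separately tracked quantity, but you also invoke the Young split — and only one of the two is compatible with the theorem's rate; as written, the decisive step is left ambiguous, so the argument does not yet establish \eqref{eq:convergence-slow-markov}.
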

While Theorem \ref{theo:preliminary-bound-markov} relaxes the martingale difference assumption A\ref{assum:bound-conditional-variance} in Theorem \ref{theo:preliminary-bound-martingale}, we remark that the results here do not generalize that in Theorem \ref{theo:preliminary-bound-martingale} due to the additional B\ref{assb:bdd}, B\ref{assb:step}. Particularly, with martingale noise, the convergence of linear two timescale SA only requires the noise to have bounded \emph{second order moment}, yet the Markovian noise needs to be uniformly bounded.

\paragraph{Convergence Rate of Linear Two Timescale SA}
The upper bounds in
Theorem~\ref{theo:preliminary-bound-martingale} and \ref{theo:preliminary-bound-markov} consist of two terms -- the first term is a `transient' error with product such as $\prod_{i=0}^{k-1} (1- \beta_i a_\Delta/8)$ decays to zero  at the rate $o(1/k^c)$ for some $c>1$ under an appropriate choice of step sizes such as \eqref{eq:stepsize_choice}; the second term is a `steady-state' error. 
We observe that the `steady-state' error of the iterates $\theta_k, w_k$ exhibit different behaviors. Taking the step size choices in \eqref{eq:stepsize_choice} as an example, the steady-state error of the slow-update iterates $\theta_k$ is ${\cal O}(1/k)$ while the error of fast-update iterates $w_k$ is ${\cal O}(1/k^{\frac{2}{3}})$. Furthermore, similar bounds hold for \emph{both} martingale and Markovian noise. 
{In Section~\ref{sec:expansion} we show that the obtained rates are also tight.}

\paragraph{Comparison to Related Works}
Our results improve the convergence rate analysis of linear two timescale  SA in a number of recent works. In the martingale noise setting (Theorem~\ref{theo:preliminary-bound-martingale}), the closest work to ours is \citep{dalal2019tale} which analyzed the linear two timescale  SA with martingale samples and diminishing step sizes. The  authors improved on \citep{dalal2018finite} and obtained the same convergence rate (in high probability) as our Theorem~\ref{theo:preliminary-bound-martingale}, furthermore it is demonstrated that the obtained rates are tight. Their bounds also exhibit a sublinear dependence on the dimensions $\dth, \dw$. However, their algorithm involves a sparsely executed projection step and the error bound holds only for a  sufficiently large $k$. These restrictions are lifted in our analysis. 

In the Markovian noise setting (Theorem~\ref{theo:preliminary-bound-markov}), the closest works to ours are \citep{doan2019finite, gupta2019finite, xu2019two}. In particular, \citet{gupta2019finite} analyzed the linear two timescale  SA with constant step sizes and showed that the steady-state error for both $\theta_k, w_k$ is ${\cal O}(\gamma^2/ \beta)$. \citet{xu2019two} analyzed the TDC algorithm with a projection step and showed that the steady-state error for $\theta_k$ is ${\cal O}(1/k^{\frac{2}{3}})$ if the step sizes in \eqref{eq:stepsize_choice} is used. \citet{doan2019finite} analyzed the linear two timescale  SA with diminishing step size and showed that the steady state error for both $\theta_k,w_k$ is ${\cal O}(1/k^{\frac{2}{3}})$.
Interestingly, the above works do not obtain the fast rate in Theorem~\ref{theo:preliminary-bound-markov}, i.e., $\E[ \| \theta_k - \theta^\star \|^2 ] = {\cal O}(1/k)$. One of the reasons for the sub-optimality in their rates is that their analysis are based on building a single Lyapunov function that controls both errors in $\theta_k$ and $w_k$. In contrast, our analysis relies on a set of coupled inequalities to obtain tight bounds for each of the iterates $\theta_k$, $w_k$.
 
\section{Convergence Analysis} \label{sec:upperbd}
While much of the technical details and the complete constants of non-asymptotic bounds will be postponed to the appendix, this section offers insights into our main theoretical results through sketching the major steps involved in proving Theorem~\ref{theo:preliminary-bound-martingale} \& \ref{theo:preliminary-bound-markov}.
Throughout, we shall consider the following bounds on the step sizes and step size ratio:
\begin{align}
\label{eq:condition-beta-gamma-k}
\begin{split}
& \beta_\infty^{(0)} := \frac{1}{2 \normop{Q_\Delta}[]^{2} \normop{\Delta}[Q_\Delta]^{2}} \wedge \frac{1}{ 2 \normop{ \Delta }[Q_\Delta] + a_\Delta},~~ \gamma_\infty^{(0)} := \frac{1}{2 \normop{Q_{22}}[]^{2} \normop{A_{22}}[Q_{22}]^{2}} 
\eqsp, 
\end{split}
\end{align}
\beq \label{eq:condition-kappa}
\begin{split}
\kappa_\infty & := \Bigg( \frac{a_{22}/2}{ \normop{A_{12}}[Q_{22},Q_\Delta] \normop{ A_{22}^{-1} A_{21} }[Q_\Delta, Q_{22}] + \frac{a_\Delta}{2}} \Big\{ 1 \wedge  \frac{a_\Delta / 2}{ \normop{\Delta}[Q_\Delta] + \frac{a_\Delta}{2}} \Big\} \Bigg) \wedge \frac{ a_{22} }{4 a_\Delta}.
\end{split}
\eeq
To begin with, let us present the reformulation of the two time-scale SA scheme \eqref{eq:2ts1}, \eqref{eq:2ts2} that is borrowed from \citep{konda:tsitsiklis:2004}.
Define:
\[
L_{k+1} := \big( L_k - \gamma_k A_{22} L_k + \beta_k A_{22}^{-1} A_{21} (\Delta - A_{12}L_k) \big) \big( \Id - \beta_k (\Delta - A_{12} L_k) \big)^{-1}, \quad L_0 := 0,
\]
and $\Linfty := a_\Delta/(2 \normop{A_{12}}[Q_{22},Q_\Delta])$.
As shown in Lemma~\ref{lem:LkBound} of the appendix, with the step sizes $\gamma_k \leq \gamma_\infty^{(0)}$, $\beta_k \leq \beta_\infty^{(0)}$, $\kappa \leq \kappa_\infty$, the above recursion on $L_k$ is well defined where it holds that $\normop{L_k}[Q_\Delta,Q_{22}] \leq \Linfty$ for any $k \geq 0$.
In addition, define the matrices:
\[
B_{11}^k := \Delta - A_{12} L_k,\quad B_{22}^k := \frac{\beta_k}{\gamma_k} \big( L_{k+1} + A_{22}^{-1} A_{21} \big) A_{12} + A_{22},\quad C_k := L_{k+1} + A_{22}^{-1} A_{21}.
\]
We obtain a simplified two timescale SA recursions (proof in Appendix~\ref{app:obs}):
\begin{observation} \label{obs:transform}
Consider the following change-of-variables:
\beq \label{eq:tilde_def}
\ttheta_k := \theta_k - \theta^\star,\quad \tw_k = w_k - w^\star + C_{k-1} \ttheta_k.
\eeq
The two time-scale SA \eqref{eq:2ts1}, \eqref{eq:2ts2} is equivalent to the following iterations:
\beq \label{eq:2ts1-1}
\ttheta_{k+1} = ( \Id - \beta_k B_{11}^k ) \ttheta_k - \beta_k A_{12} \tw_k - \beta_k V_{k+1}. \vspace{-.1cm}
\eeq
\beq \label{eq:2ts2-1}
\tw_{k+1} = (\Id - \gamma_k B_{22}^k ) \tw_k - \beta_k C_k V_{k+1} - \gamma_k W_{k+1}.
\eeq
\end{observation}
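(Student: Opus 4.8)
The plan is to treat Observation~\ref{obs:transform} as a deterministic, pathwise change-of-variables identity: no probabilistic argument is needed, since \eqref{eq:2ts1}, \eqref{eq:2ts2} hold for every realization, so I would simply substitute the definitions in \eqref{eq:tilde_def} and verify the two transformed recursions \eqref{eq:2ts1-1}, \eqref{eq:2ts2-1} by linear algebra. The only genuinely nontrivial ingredient is the recursion defining $L_k$, which is reverse-engineered to cancel a cross term, so I would keep its defining relation in a usable form from the start. A preliminary step is to rewrite the mean-field parts in centered coordinates: using the fixed-point equations \eqref{eq:linear_sys} satisfied by $(\theta^\star,w^\star)$ from \eqref{eq:opt_sol}, namely $b_1 = A_{11}\theta^\star + A_{12}w^\star$ and $b_2 = A_{21}\theta^\star + A_{22}w^\star$, I obtain $b_1 - A_{11}\theta_k - A_{12}w_k = -A_{11}\ttheta_k - A_{12}(w_k-w^\star)$ and $b_2 - A_{21}\theta_k - A_{22}w_k = -A_{21}\ttheta_k - A_{22}(w_k-w^\star)$.

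For \eqref{eq:2ts1-1} I would substitute $w_k - w^\star = \tw_k - C_{k-1}\ttheta_k$ (from \eqref{eq:tilde_def}) into the centered $\theta$-recursion. The $\ttheta_k$-coefficient becomes $\Id - \beta_k(A_{11} - A_{12}C_{k-1})$ and the $\tw_k$-coefficient becomes $-\beta_k A_{12}$, while the noise enters linearly through $\beta_k V_{k+1}$. Since $C_{k-1} = L_k + A_{22}^{-1}A_{21}$ and $\Delta = A_{11} - A_{12}A_{22}^{-1}A_{21}$, one has $A_{11} - A_{12}C_{k-1} = \Delta - A_{12}L_k = B_{11}^k$, which is exactly the claimed slow-iterate recursion.

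The main work, and the place where the $L_k$ definition earns its keep, is \eqref{eq:2ts2-1}. I would expand $\tw_{k+1} = (w_{k+1}-w^\star) + C_k\ttheta_{k+1}$, insert the centered fast update $w_{k+1}-w^\star = (\Id - \gamma_k A_{22})(w_k-w^\star) - \gamma_k A_{21}\ttheta_k + \gamma_k W_{k+1}$ together with $w_k - w^\star = \tw_k - C_{k-1}\ttheta_k$, and also insert the expression for $\ttheta_{k+1}$ just derived. Collecting the coefficient of $\tw_k$ gives $(\Id - \gamma_k A_{22}) - \beta_k C_k A_{12}$, which equals $\Id - \gamma_k B_{22}^k$ precisely because $B_{22}^k = \tfrac{\beta_k}{\gamma_k}C_k A_{12} + A_{22}$; the noise contributions $\gamma_k W_{k+1}$ and $\beta_k C_k V_{k+1}$ assemble into the stated forcing. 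The hard part is showing the coefficient of $\ttheta_k$ vanishes identically: that coefficient is $C_k - \beta_k C_k B_{11}^k - (\Id - \gamma_k A_{22})C_{k-1} - \gamma_k A_{21}$, so vanishing is equivalent to
\[
C_k(\Id - \beta_k B_{11}^k) = (\Id - \gamma_k A_{22})C_{k-1} + \gamma_k A_{21}.
\]
I would prove this by multiplying the defining recursion for $L_{k+1}$ through by $(\Id - \beta_k B_{11}^k)$, turning it into $L_{k+1}(\Id - \beta_k B_{11}^k) = L_k - \gamma_k A_{22}L_k + \beta_k A_{22}^{-1}A_{21}B_{11}^k$, then adding $A_{22}^{-1}A_{21}(\Id - \beta_k B_{11}^k)$ to both sides so the $\beta_k$-terms telescope; substituting $C_k = L_{k+1}+A_{22}^{-1}A_{21}$, $C_{k-1} = L_k + A_{22}^{-1}A_{21}$ and using $A_{22}A_{22}^{-1}A_{21} = A_{21}$ then matches both sides. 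The invertibility of $\Id - \beta_k B_{11}^k$ (needed for $L_{k+1}$ to be well defined) and the bound $\normop{L_k}[Q_\Delta,Q_{22}] \le \Linfty$ are supplied by Lemma~\ref{lem:LkBound} under the step-size restrictions $\gamma_k \le \gamma_\infty^{(0)}$, $\beta_k \le \beta_\infty^{(0)}$, $\kappa \le \kappa_\infty$, so all divisions are legitimate and the verification is complete.
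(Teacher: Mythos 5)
Your proof is correct and follows essentially the same route as the paper's Appendix~\ref{app:obs}: substitute the change of variables, use the fixed-point equations to eliminate $b_1,b_2$, identify the $\tw_k$-coefficient with $\Id - \gamma_k B_{22}^k$, and cancel the $\ttheta_k$-coefficient via the identity $C_k(\Id - \beta_k B_{11}^k) = (\Id - \gamma_k A_{22})C_{k-1} + \gamma_k A_{21}$, which is exactly what the defining recursion of $L_{k+1}$ delivers, with Lemma~\ref{lem:LkBound} guaranteeing the needed invertibility. The only discrepancy is the sign of the noise terms (your algebra gives $+\beta_k V_{k+1}$, $+\gamma_k W_{k+1}$ rather than the stated minus signs), an immaterial convention that the paper's own proof also flips silently since only second moments of the noise enter the subsequent analysis.
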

Observe that $\ttheta_k = 0, \tw_k = 0$ is equivalent to having $\theta_k = \theta^\star, w_k = w^\star$, i.e., the two timescale SA solves the linear system of equations \eqref{eq:linear_sys}. 
The simplified recursion \eqref{eq:2ts1-1}, \eqref{eq:2ts2-1} \emph{decouples} the update of $\tw_k$ from $\ttheta_k$. This allows one to treat the $\tw_k$ update as a one timescale linear SA, and therefore provides a shortcut to perform a tight analysis.
We focus on estimating the following operator norms of  covariances: 
\beq
\label{eq:definition-mr}
\mtw_k := \normop{\E[\tw_k \tw_k^\top]}[] ,\quad \mth_k := \normop{\E[\ttheta_k \ttheta_k^\top]}[], \quad \mthw_k:= \normop{\E[\ttheta_{k} \tw_{k}^\top]}{} ,
\eeq
which are respectively the covariance for $w_k$, $\theta_k$ and the cross-variance between $w_k$, $\theta_k$.

\subsection{Proof Outline of Theorem~\ref{theo:preliminary-bound-martingale}} \label{sec:bd_martin}
For this theorem, we assume the step sizes and their ratio are chosen such that
\beq \label{eq:gamma_martingale}
\begin{split}
&  \gamma_k \leq \gamma_\infty^{\sf mtg} := \gamma_\infty^{(0)} \wedge \frac{1}{ \frac{a_{22}}{2} + \frac{2}{a_{22}} \pw ( \ttmv + \kappa^2 \ttmw) } \wedge \frac{a_\Delta}{4 \Ctwot},~~ \beta_k \leq \beta_\infty^{\sf mtg} := \beta_\infty^{(0)},
\end{split}
\eeq
where $\pw = \lambda_{\sf min}^{-1}( Q_{22} ) \lambda_{\sf max}( Q_{22} )$ and $\Ctwot$ is defined in \eqref{eq:martingale_finconst} in the appendix. 

While the property which the noise terms satisfy $\CPE{ V_{k+1} }{\F_k} = 0$, $\CPE{ W_{k+1} }{\F_k} = 0$ has greatly simplified the analysis, 
the challenge with our analysis lies in the coupling between  slow and fast updating iterates whose convergence rates must be carefully characterized in order to obtain the desired rate in Theorem~\ref{theo:preliminary-bound-martingale}.
To summarize, our proof consists of three steps in order: {\sf(i)} we bound $\mtw_k$ with an inequality that is coupled with $\mth_k$; then {\sf (ii)} we bound the cross term $\mthw_k$ using an inequality coupled with $\mth_k$; lastly, {\sf (iii)} these bounds are combined to bound $\mth_k$.

\paragraph{Step 1: Bounding $\mtw_k$} Upon applying the variable transformation in Observation~\ref{obs:transform}, \eqref{eq:2ts2-1} can be treated as a one-timescale SA which updates $\tw_k$ independently, and the contributions from $\ttheta_k$ are only found in the noise term, as seen from \eqref{eq:noise:constants}. This leads to:
\begin{proposition} \label{prop:mw}
Assume A\ref{assum:hurwitz}--\ref{assum:bound-conditional-variance} and the step sizes satisfy \eqref{eq:gamma_martingale}. For any $k \in \nset$, it holds
\beq
\label{eq:mwbound} \textstyle
\mtw_{k+1} \leq 
\prod_{\ell=0}^k \big( 1 - \frac{ \gamma_\ell a_{22} }{2} \big) \frac{\lambda_{\sf max}( Q_{22} )}{\lambda_{\sf min}(Q_{22})} \mtw_0 + \Conew \gamma_{k+1} + \Ctwow \sum_{j=0}^k \gamma_j^2 \prod_{\ell=j+1}^k \big( 1 - \frac{ \gamma_\ell a_{22} }{2} \big) \mth_j, 
\eeq
where the constants $\Conew, \Ctwow$ can be found in \eqref{eq:mwbound_const} in the appendix.
\end{proposition}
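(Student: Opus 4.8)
The plan is to treat the recursion \eqref{eq:2ts2-1} for $\tw_k$ as a single-timescale linear SA and unroll it, since Observation~\ref{obs:transform} has already decoupled the $\tw_k$ update from $\ttheta_k$ at the level of the deterministic dynamics (the dependence on $\ttheta_k$ survives only through the noise term). First I would expand the squared $Q_{22}$-weighted norm of $\tw_{k+1} = (\Id - \gamma_k B_{22}^k)\tw_k - \beta_k C_k V_{k+1} - \gamma_k W_{k+1}$ and take expectations. Under A\ref{assum:zero-mean} the two noise terms are conditionally mean-zero, so all cross terms of the form $\E[\langle (\Id - \gamma_k B_{22}^k)\tw_k, C_k V_{k+1}\rangle]$ and $\E[\langle (\Id - \gamma_k B_{22}^k)\tw_k, W_{k+1}\rangle]$ vanish once we condition on $\F_k$. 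This is exactly where the martingale structure buys us the simplest possible recursion: no bias term appears. What remains is a contraction term $\E[\|(\Id - \gamma_k B_{22}^k)\tw_k\|^2_{Q_{22}}]$ plus the second-moment contributions of the noise.

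The central estimate is the contraction of the deterministic part. I would invoke the stability bound \eqref{eq:contraction_p}, namely $\normop{\Id - \gamma_k A_{22}}[Q_{22}] \leq 1 - a_{22}\gamma_k$, and control the discrepancy between $B_{22}^k$ and $A_{22}$. By definition $B_{22}^k = A_{22} + \frac{\beta_k}{\gamma_k}(L_{k+1} + A_{22}^{-1}A_{21})A_{12}$, and the extra piece is of order $\beta_k/\gamma_k \leq \kappa$, with $\normop{L_{k+1}}[Q_\Delta,Q_{22}] \leq \Linfty$ uniformly bounded by Lemma~\ref{lem:LkBound}. Under the step-size constraint $\gamma_k \leq \gamma_\infty^{\sf mtg}$ and $\kappa \leq \kappa_\infty$ in \eqref{eq:gamma_martingale}, \eqref{eq:condition-kappa}, this perturbation is absorbed so that the effective contraction factor degrades only from $1 - a_{22}\gamma_k$ to $1 - \tfrac{a_{22}}{2}\gamma_k$. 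I expect this absorption — showing that the half-rate $a_{22}/2$ survives after accounting for the $\kappa$-dependent cross-coupling inside $B_{22}^k$ — to be the main obstacle, since it is what forces the precise form of $\gamma_\infty^{\sf mtg}$ and $\kappa_\infty$.

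For the noise contributions I would use A\ref{assum:bound-conditional-variance} to bound $\normop{\E[V_{k+1}V_{k+1}^\top]}$ and $\normop{\E[W_{k+1}W_{k+1}^\top]}$ by a constant times $1 + \mth_k + \mtw_k$. The $\beta_k^2$ term from $\|C_k V_{k+1}\|^2$ is higher-order and, using $\beta_k \leq \kappa\gamma_k$, gets folded into the $\gamma_k^2$ scale; the key point is that the part proportional to $\mth_k$ must be retained explicitly, as this is the source of the coupling term $\Ctwow \sum_j \gamma_j^2 \prod_{\ell=j+1}^k(1 - \tfrac{\gamma_\ell a_{22}}{2})\mth_j$ in the conclusion, whereas the constant-order part $1 + \mtw_k$ contributes the $\Conew \gamma_{k+1}$ steady-state term and a self-consistent $\mtw_k$ piece that is again absorbed into the contraction.

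Finally I would assemble these estimates into a one-step recursive inequality of the schematic form $\mtw_{k+1} \leq (1 - \tfrac{a_{22}}{2}\gamma_k)\,\pw\,\mtw_k + (\text{const})\gamma_k^2(1 + \mth_k)$, where $\pw = \lambda_{\sf min}^{-1}(Q_{22})\lambda_{\sf max}(Q_{22})$ accounts for passing between the $Q_{22}$-norm and the operator norm in the definition \eqref{eq:definition-mr}. Unrolling this recursion from $0$ to $k$ produces the telescoping product $\prod_{\ell=0}^k(1 - \tfrac{\gamma_\ell a_{22}}{2})$ multiplying the initial term, a geometric-type sum for the constant forcing that I would bound by $\Conew \gamma_{k+1}$ using the standard step-size-ratio comparison from A\ref{assum:stepsize}(2) (so that $\sum_j \gamma_j^2 \prod_{\ell>j}(1 - \tfrac{\gamma_\ell a_{22}}{2}) = {\cal O}(\gamma_{k+1})$), and the remaining convolution term carrying $\mth_j$ that is left in its explicit summed form to feed into Steps 2 and 3 of the overall proof. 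The constants $\Conew, \Ctwow$ are then read off from this assembly and recorded in \eqref{eq:mwbound_const}.
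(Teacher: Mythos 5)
Your overall strategy is the same as the paper's Step~1: treat \eqref{eq:2ts2-1} as a one-timescale recursion, use A\ref{assum:zero-mean} to kill the signal--noise cross terms, contract via $\normop{\Id-\gamma_k B_{22}^k}[Q_{22}]\le 1-\gamma_k a_{22}/2$, bound the noise second moments by a constant times $1+\mth_k+\mtw_k$ via A\ref{assum:bound-conditional-variance}, absorb the self-referencing $\mtw$ contribution into the contraction using \eqref{eq:gamma_martingale}, unroll, and finish with Corollary~\ref{cor:rate-of-convergence}. However, your assembly step contains a genuine error. You write the one-step recursion as $\mtw_{k+1}\le (1-\tfrac{a_{22}}{2}\gamma_k)\,\pw\,\mtw_k + C\gamma_k^2(1+\mth_k)$ with the norm-equivalence factor $\pw=\lambda_{\sf min}^{-1}(Q_{22})\lambda_{\sf max}(Q_{22})$ sitting \emph{inside} the recursion. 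Since $\pw\ge 1$, the per-step factor $(1-\tfrac{a_{22}}{2}\gamma_k)\pw$ is in general larger than $1$, so this recursion does not contract, and unrolling it yields $\pw^{k+1}\prod_{\ell=0}^k(1-\tfrac{a_{22}}{2}\gamma_\ell)\,\mtw_0$ --- not the single factor of $\pw$ times the product claimed in \eqref{eq:mwbound}. The cost of passing between $\normop{\cdot}$ and the $Q_{22}$-weighted norm can only be paid \emph{once}, not at every iteration. This is exactly why the paper never forms a one-step inequality for $\mtw_k$ directly: it keeps the exact matrix recursion $\Sigma_{k+1}=\ProdB_{0:k}^{(2)}\Sigma_0(\ProdB_{0:k}^{(2)})^\top+\sum_{j=0}^k\ProdB_{j+1:k}^{(2)}D_{j+1}(\ProdB_{j+1:k}^{(2)})^\top$, unrolls it first, and only then takes operator norms, using $\|\ProdB_{m:n}^{(2)}\|\le\sqrt{\pw}\,G_{m:n}^{(2)}$ so that $\pw$ appears exactly once; the leftover implicit $\mtw_j$ terms are then removed with a dominating sequence $(\topu_k)_{k\ge 0}$. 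If you want to keep your scalar route, you must run the recursion entirely in one norm, e.g.\ $u_k:=\E[\|\tw_k\|_{Q_{22}}^2]$, for which $u_{k+1}\le(1-\gamma_k a_{22}/2)^2u_k+\dots$ holds with no $\pw$, and convert to $\mtw$ only at the two ends.

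Two further points. First, even after this fix your scalar route is lossier than the paper's matrix route: converting $\mtw_{k+1}\le\E[\|\tw_{k+1}\|^2]\le\lambda_{\sf min}^{-1}(Q_{22})\,u_{k+1}$ at the end and $u_0\le\lambda_{\sf max}(Q_{22})\E[\|\tw_0\|^2]\le\lambda_{\sf max}(Q_{22})\,\dw\,\mtw_0$ at the start introduces an extra dimension factor $\dw$, so you prove \eqref{eq:mwbound} only with $\dw\,\pw\,\mtw_0$ in place of the stated coefficient $\pw\,\mtw_0$. Second, your list of vanishing cross terms omits the one that does \emph{not} vanish: $\beta_k\gamma_k\E[\langle C_kV_{k+1},W_{k+1}\rangle_{Q_{22}}]$, since both noise vectors live at time $k+1$ and are correlated. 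In the scalar route this is handled cheaply by AM--GM (fold it into the two variance terms); in the paper's matrix route it is precisely the term that requires Lemma~\ref{lem:key-inequality} (see \eqref{eq:crossbound}) and is the source of the $\sqrt{\dth\dw}\,\Cinfty$ contribution to the constant $K_C$ appearing in $\Conew$ and $\Ctwow$.
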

The right hand side of \eqref{eq:mwbound} consists of three components: (i) a fast decaying term relying on the product $\prod_{\ell=0}^k (1-\gamma_\ell a_{22}/2)$, (ii) an ${\cal O}(\gamma_k)$ term, and (iii) a convolutive term between $\mth_k$ and the fast decaying term depending on the step size sequence $(\gamma_k)_{k \geq 0}$. In the above, the second term can be viewed as a `steady-state' term.

\paragraph{Step 2: Bounding $\mthw_k$} Observe that $\mthw_k$ refers to the cross variance between $\tw_k$ and $\ttheta_k$. We show that utilizing \eqref{eq:2ts1-1}, \eqref{eq:2ts2-1}, \eqref{eq:mwbound} allows us to derive:
\begin{proposition} \label{prop:mthw}
Assume A\ref{assum:hurwitz}--\ref{assum:bound-conditional-variance} and the step sizes satisfy \eqref{eq:gamma_martingale}. For any $k \in \nset$, it holds
\beq \textstyle
\label{eq:mtwbound}
\mthw_{k+1} \leq \Czerotw \prod_{\ell=0}^k \big( 1 - \frac{ \gamma_\ell a_{22} }{2} \big) + \Conetw \beta_{k+1} + \Ctwotw \sum_{j=0}^k \gamma_j^2 \prod_{\ell=j+1}^k \big( 1 - \frac{ \gamma_\ell a_{22} }{2} \big) \mth_j,
\eeq
where the constants $\Czerotw, \Conetw, \Ctwotw$ can be found in \eqref{eq:mthwbound_const} in the appendix.
\end{proposition}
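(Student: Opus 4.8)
The plan is to derive a one-step recursive inequality for $\mthw_{k+1}$, unroll it, and substitute the bound \eqref{eq:mwbound} on $\mtw_k$ from Proposition~\ref{prop:mw}. Starting from the decoupled iterations \eqref{eq:2ts1-1}, \eqref{eq:2ts2-1}, I would expand the outer product $\ttheta_{k+1}\tw_{k+1}^\top$ and take expectations. Writing $M_1 := \Id - \beta_k B_{11}^k$ and $M_2 := \Id - \gamma_k B_{22}^k$, the martingale property $\CPE{V_{k+1}}{\F_k} = \CPE{W_{k+1}}{\F_k} = 0$ annihilates every cross term pairing an $\F_k$-measurable factor with a noise factor, leaving
\[
\E[\ttheta_{k+1}\tw_{k+1}^\top] = M_1 \E[\ttheta_k\tw_k^\top] M_2^\top - \beta_k A_{12}\E[\tw_k\tw_k^\top]M_2^\top + \beta_k^2 \E[V_{k+1}V_{k+1}^\top]C_k^\top + \beta_k\gamma_k \E[V_{k+1}W_{k+1}^\top].
\]
The key modelling choice is to measure the cross-covariance in the weighted norm $\normop{Q_\Delta^{1/2}(\cdot)\,Q_{22}^{1/2}}$, so that the homogeneous factor contracts as $\normop{M_1}[Q_\Delta]\normop{M_2}[Q_{22}] \le (1 - \beta_k a_\Delta/c)(1 - \gamma_k a_{22}/2)$, i.e.\ it retains \emph{both} the fast ($\gamma$) and the slow ($\beta$) decay; the bound $\normop{M_1}[Q_\Delta]\le 1-\beta_k a_\Delta/c$ follows from $B_{11}^k = \Delta - A_{12}L_k$ being a small perturbation of the Hurwitz matrix $\Delta$ together with $\normop{L_k}[Q_\Delta,Q_{22}]\le\Linfty$ (Lemma~\ref{lem:LkBound}). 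Bounding the two noise terms by A\ref{assum:bound-conditional-variance}, after re-expressing $\normop{\E[w_k w_k^\top]}$ and $\normop{\E[\theta_k\theta_k^\top]}$ in terms of $\mtw_k$ and $\mth_k$ via the change of variables \eqref{eq:tilde_def}, then produces a one-step inequality of the schematic form
\[
\mthw_{k+1} \le (1-\beta_k a_\Delta/c)(1-\gamma_k a_{22}/2)\,\mthw_k + c_1 \beta_k \mtw_k + c_2(\beta_k^2 + \beta_k\gamma_k)(1 + \mth_k + \mtw_k),
\]
up to condition-number constants incurred when passing between the weighted and spectral norms.

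Next I would iterate this inequality, obtaining the transient term $\prod_{\ell=0}^k(1-\beta_\ell a_\Delta/c)(1-\gamma_\ell a_{22}/2)\,\mthw_0$ plus a convolution of the driving terms against the same kernel, and then substitute the three-part bound \eqref{eq:mwbound} into the $\beta_k\mtw_k$ term. The payoff of keeping the slow decay in the kernel is that sums of the form $\sum_{j}\beta_j\prod_{\ell=j+1}^k(1-\beta_\ell a_\Delta/c)$ telescope to a constant, via $\beta_j\prod_{\ell=j+1}^k(1-\beta_\ell a_\Delta/c) = (c/a_\Delta)\{\prod_{\ell=j+1}^k(1-\beta_\ell a_\Delta/c) - \prod_{\ell=j}^k(1-\beta_\ell a_\Delta/c)\}$. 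This is exactly what lets me fold (i) the $\mtw_0$-proportional transient of $\mtw_k$ into a term $\le \Czerotw\prod_{\ell=0}^k(1-\gamma_\ell a_{22}/2)$ after dropping the slow factors ($\le 1$); (ii) the $\mathcal{O}(\gamma_k)$ steady state of $\mtw_k$ and the ``$1$''-parts of the noise into the $\Conetw\beta_{k+1}$ term, using $\beta_k\le\kappa\gamma_k$ to reduce $\beta_k^2$ and $\beta_k\gamma_k$ to the $\beta_k\gamma_k$ scale; and (iii) the $\mth$-convolution of $\mtw_k$ into a single term $\Ctwotw\sum_{j=0}^k\gamma_j^2\prod_{\ell=j+1}^k(1-\gamma_\ell a_{22}/2)\,\mth_j$. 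The direct $(\beta_k^2+\beta_k\gamma_k)\mth_k$ contribution lands in (iii) immediately since $\beta_k^2 + \beta_k\gamma_k\le(\kappa^2+\kappa)\gamma_k^2$.

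The main obstacle is step (iii): substituting the $\mth$-convolution already present in \eqref{eq:mwbound} into the convolutive $\mthw$ recursion produces a double sum $\sum_{j}\beta_j\,K_{j,k}\sum_{i<j}\gamma_i^2\,\bar P_{i+1,j-1}\,\mth_i$, where $K_{j,k} := \prod_{\ell=j+1}^k(1-\beta_\ell a_\Delta/c)(1-\gamma_\ell a_{22}/2)$ is the full $\mthw$ kernel and $\bar P_{a,b} := \prod_{\ell=a}^b(1-\gamma_\ell a_{22}/2)$ is the pure-$\gamma$ kernel of \eqref{eq:mwbound}. Collapsing this to the single convolution in (iii) requires swapping the order of summation, applying the product identity $\bar P_{i+1,j-1}\prod_{\ell=j+1}^k(1-\gamma_\ell a_{22}/2) = \bar P_{i+1,k}/(1-\gamma_j a_{22}/2)$ to factor the target kernel $\bar P_{i+1,k}$ onto $\mth_i$, and then bounding the residual inner sum $\sum_{j>i}\beta_j\prod_{\ell=j+1}^k(1-\beta_\ell a_\Delta/c)/(1-\gamma_j a_{22}/2)$ by a constant through the telescoping above. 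It is precisely here that the slow decay retained in $K_{j,k}$ is indispensable: without the factor $\prod(1-\beta_\ell a_\Delta/c)$, the residual sum reduces to $\sum_j\beta_j$, which diverges, and the collapse fails.

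Carrying the condition-number and norm-equivalence constants through all of these manipulations, while respecting the step-size admissibility \eqref{eq:gamma_martingale} that guarantees the one-step contraction and the validity of $\normop{L_k}[Q_\Delta,Q_{22}]\le\Linfty$, is where the bulk of the bookkeeping lies, and it yields the explicit $\Czerotw,\Conetw,\Ctwotw$ recorded in \eqref{eq:mthwbound_const}; the auxiliary summation estimates of Appendix~\ref{app:aux} handle the remaining convolution inequalities.
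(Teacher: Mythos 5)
Your proposal is correct and follows essentially the same route as the paper's proof: the same one-step recursion for $\E[\ttheta_{k+1}\tw_{k+1}^\top]$ obtained from \eqref{eq:2ts1-1}, \eqref{eq:2ts2-1} and the martingale property, the same contraction kernel retaining \emph{both} the $\beta$- and $\gamma$-decay factors, substitution of Proposition~\ref{prop:mw}, and the same exchange-of-summation plus telescoping argument (via Lemma~\ref{lem:bsum}, Lemma~\ref{lem:summation-lemma} and Corollary~\ref{cor:rate-of-convergence}) to collapse the double convolution onto the single $\sum_j \gamma_j^2 \prod_{\ell=j+1}^k(1-\gamma_\ell a_{22}/2)\,\mth_j$ term. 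In particular, your observation that the slow factor $\prod_\ell(1-\beta_\ell a_\Delta/c)$ in the kernel is precisely what makes the residual inner sum $\sum_{j>i}\beta_j\prod_{\ell=j+1}^k(1-\beta_\ell a_\Delta/c)$ bounded is exactly the mechanism the paper exploits in deriving \eqref{eq:w_term} and \eqref{eq:mtwbound_app}.
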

The above bound is a crucial step in obtaining the ${\cal O}(\beta_k)$ rate for $\mth_k$.
To better appreciate it, note that as $\mthw_k \leq (\sqrt{\dth \dw}/2) \{ \mth_k + \mtw_k \}$ (see Lemma~\ref{lem:key-inequality} in the appendix),
one can derive a similar result to \eqref{eq:mtwbound} by merely applying Proposition~\ref{prop:mw}. However, doing so results in an overestimated `steady-state' error of ${\cal O}(\gamma_{k})$ which is worse than the ${\cal O}(\beta_{k})$ error in \eqref{eq:mtwbound}. On the other hand, we take care of the two timescale nature of the algorithm to obtain \eqref{eq:mtwbound} with the fast rate. 

\paragraph{Step 3: Bounding $\mth_k$} Having equipped ourselves with Proposition~\ref{prop:mw} and \ref{prop:mthw}, we can analyze $\mth_k$ using \eqref{eq:2ts1-1} and the derived bounds on $\mtw_k, \mthw_k$, this leads to
\begin{proposition} \label{prop:mth}
Assume A\ref{assum:hurwitz}--\ref{assum:bound-conditional-variance} and the step sizes satisfy \eqref{eq:gamma_martingale}. For any $k \in \nset$, it holds
\beq \textstyle
\label{eq:mthbound}
\mth_{k+1} \leq \Czerot \prod_{\ell=0}^k \big( 1 - \frac{ \beta_\ell a_{\Delta} }{2} \big) + \Conet \beta_{k+1} 
+ \Ctwot \sum_{j=0}^{k} \gamma_j \beta_j \prod_{\ell=j+1}^k \big( 1 - \frac{ \beta_\ell a_{\Delta} }{2} \big) \mth_j,
\eeq
where the constants $\Czerot, \Conet, \Ctwot$ are given in \eqref{eq:martingale_finconst} in the appendix.
\end{proposition}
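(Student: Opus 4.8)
The plan is to derive a one-step recursion for $\mth_{k+1}$ directly from the decoupled slow recursion \eqref{eq:2ts1-1}, namely $\ttheta_{k+1} = (\Id - \beta_k B_{11}^k)\ttheta_k - \beta_k A_{12}\tw_k - \beta_k V_{k+1}$, and then to close it by feeding in the bounds on $\mtw_k$ and $\mthw_k$ already obtained in Propositions~\ref{prop:mw} and~\ref{prop:mthw}. First I would form the outer product $\ttheta_{k+1}\ttheta_{k+1}^\top$, take expectations, and invoke the martingale property $\CPE{V_{k+1}}{\F_k}=0$ from A\ref{assum:zero-mean} together with the $\F_k$-measurability of $\ttheta_k,\tw_k$ to cancel every cross term carrying a single factor of $V_{k+1}$. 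This leaves $\E[\ttheta_{k+1}\ttheta_{k+1}^\top]$ as a sum of the main term $(\Id-\beta_k B_{11}^k)\E[\ttheta_k\ttheta_k^\top](\Id-\beta_k B_{11}^k)^\top$, two cross terms carrying $\E[\ttheta_k\tw_k^\top]$, a $\beta_k^2$-weighted term carrying $\E[\tw_k\tw_k^\top]$, and the noise term $\beta_k^2\E[V_{k+1}V_{k+1}^\top]$.

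The contraction is controlled in the $Q_\Delta$-norm. Writing $B_{11}^k=\Delta-A_{12}L_k$ and using $\normop{L_k}[Q_\Delta,Q_{22}]\leq\Linfty=a_\Delta/(2\normop{A_{12}}[Q_{22},Q_\Delta])$ from Lemma~\ref{lem:LkBound}, the perturbation obeys $\normop{\beta_k A_{12}L_k}[Q_\Delta]\leq\beta_k a_\Delta/2$, so that the contraction \eqref{eq:contraction_p} for $\Id-\beta_k\Delta$ yields $\normop{\Id-\beta_k B_{11}^k}[Q_\Delta]\leq 1-\beta_k a_\Delta/2$. Bounding each matrix in $Q_\Delta$-norm then gives a one-step inequality of the schematic form $\mth_{k+1}\leq(1-\beta_k a_\Delta/2)^2\mth_k+c_1\beta_k\mthw_k+c_2\beta_k^2\mtw_k+c_3\beta_k^2(1+\mth_k+\mtw_k)$, where the last term comes from A\ref{assum:bound-conditional-variance} after expressing $\E[\theta_k\theta_k^\top],\E[w_kw_k^\top]$ through $\mth_k,\mtw_k$ via the change of variables \eqref{eq:tilde_def} and the uniform bound on $C_{k-1}$. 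The passage between the $Q_\Delta$-norm and the identity norm (contributing a factor $\pth$) is absorbed into the constants, exactly as $\pw$ appears in Proposition~\ref{prop:mw}.

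Next I would substitute the closed bounds of Propositions~\ref{prop:mw} and~\ref{prop:mthw} for $\mtw_k$ and $\mthw_k$ and unroll the resulting scalar recursion along the product $\prod_\ell(1-\beta_\ell a_\Delta/2)$. The initial-condition and $\Conetw\beta_k$ pieces of $\mthw_k$, together with the $\beta_k^2\cdot 1$ part of the noise term, aggregate into the transient term $\Czerot\prod_{\ell=0}^k(1-\beta_\ell a_\Delta/2)$ and the steady-state term $\Conet\beta_{k+1}$; here I would use the step-size regularity A\ref{assum:stepsize} (e.g.\ $\beta_k^2\leq\varsigma\beta_0\beta_{k+1}$ and $\gamma_k\beta_k\leq\kappa\gamma_k^2$) to line up the various powers of the step sizes.

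The main obstacle is the convolution term. Both Propositions feed back a convolution of $\mth_j$ against the \emph{fast} kernel $\gamma_j^2\prod_{\ell=j+1}(1-\gamma_\ell a_{22}/2)$, whereas the target \eqref{eq:mthbound} requires a convolution against the \emph{slow} kernel $\gamma_j\beta_j\prod_{\ell=j+1}^k(1-\beta_\ell a_\Delta/2)$. After unrolling, the dominant $c_1\beta_k\mthw_k$ contribution produces a nested double sum $\sum_{m}\beta_m\prod_{\ell=m+1}^k(1-\beta_\ell a_\Delta/2)^2\sum_{j\le m}\gamma_j^2\prod_{\ell'=j+1}^{m-1}(1-\gamma_{\ell'}a_{22}/2)\mth_j$. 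Swapping the order of summation and factoring out $\prod_{\ell=j+1}^k(1-\beta_\ell a_\Delta/2)$, I must show the inner sum over $m\ge j$ is bounded by a constant times $(\beta_j/\gamma_j)\prod_{\ell=j+1}^k(1-\beta_\ell a_\Delta/2)$; multiplying by the outer $\gamma_j^2$ then converts the fast kernel into the desired $\gamma_j\beta_j$ kernel. This collapse is precisely where the two-timescale separation enters: the ratio bound $\kappa\le\kappa_\infty$ in \eqref{eq:condition-kappa} guarantees $\beta_\ell a_\Delta\le\gamma_\ell a_{22}/2$, so the quotient $\prod(1-\gamma_\ell a_{22}/2)/\prod(1-\beta_\ell a_\Delta/2)$ decays geometrically and the inner sum converges; the bookkeeping is carried out with the summation lemmas of Appendix~\ref{app:aux}. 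The $c_2\beta_k^2\mtw_k$ and $c_3\beta_k^2\mth_k$ contributions are of strictly higher order ($\beta_k^2\le\kappa\gamma_k\beta_k$) and are absorbed into the same convolution constant $\Ctwot$.
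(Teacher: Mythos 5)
Your proposal is correct and follows essentially the same route as the paper's proof: the decoupled recursion \eqref{eq:2ts1-1} plus the martingale property, the $Q_\Delta$-contraction \eqref{eq:contraction-B11}, substitution of the bounds from Propositions~\ref{prop:mw} and~\ref{prop:mthw}, and — the crucial step — the summation swap that collapses the fast kernel $\gamma_j^2\prod_\ell(1-\gamma_\ell a_{22}/2)$ into the slow kernel $\beta_j\gamma_j\prod_\ell(1-\beta_\ell a_\Delta/2)$ via the timescale-separation condition $\kappa_\infty \leq a_{22}/(4a_\Delta)$, which is exactly the paper's \eqref{eq:atildebound}--\eqref{eq:mthbound2}. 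The only organizational difference is that the paper unrolls the matrix recursion for $\Theta_{k+1}$ first and takes operator norms afterwards, so the norm-equivalence factor $\pth$ is paid once per term rather than once per iteration; your one-step scalar recursion yields the same unrolled bound provided it is run consistently in the $Q_\Delta$-weighted sense (the stated contraction fails in the plain spectral norm), which your sketch correctly anticipates by analogy with how $\pw$ enters Proposition~\ref{prop:mw}.
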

Besides that the middle term is now ${\cal O}(\beta_k)$, we also observe that the convolution term with $(\mth_j)_{j \geq 0}$ depends on the \emph{product of step sizes} $\beta_j \gamma_j$. This bound is obtained using Proposition~\ref{prop:mthw} and the fact that the cross variance $\mthw_k$ has a steady-state error of ${\cal O}(\beta_k)$.

Eq.~\eqref{eq:mthbound} is a recursive inequality as $\mth_k$ are found on both sides. In the appendix, we show that there exists a sequence $(\opu_k)_{k \geq 0}$ satisfying $\mth_k \leq \opu_k$ and
\beq \label{eq:recursion_main_paper}
\opu_{k+1} \leq (1 - \beta_k a_\Delta / 4) \opu_k + \Conet (a_\Delta/2) \beta_k^2
\eeq
for some constant $\Conet$. This immediately leads to \eqref{eq:convergence-slow}, followed by \eqref{eq:tracking-fast-component} similarly.

\subsection{Proof Outline of Theorem~\ref{theo:preliminary-bound-markov}} \label{sec:bd_markov}
While our proof has largely followed the same strategy as in the martingale noise case, now that the main challenge in handling the Markovian noise case is that the noise terms $V_{k+1}, W_{k+1}$ are no longer (conditionally) zero-mean. To circumvent this difficulty, we recall B\ref{assb:poisson} and define the following using the solution of the Poisson equation: for any $i,j=1,2$, 
\beq
\begin{split}
& \markovterm^{b_i}_k := \MK \widehat{b}_i (X_k), \quad \Markovterm^{A_{ij}}_k := \MK \widehat{A}_{ij}(X_k), \\
& \martinterm^{b_i}_k := \widehat{b}_i(X_{k+1}) - \MK \widehat{b}_i(X_k), \quad \Martinterm^{A_{ij}}_k := \widehat{A}_{ij}(X_{k+1}) - \MK \widehat{A}_{ij}(X_k),
\end{split}
\eeq
where $\martinterm^{b_i}_k, \Martinterm^{A_{ij}}_k$ are zero mean when conditioned on $\F_k$. The noise terms \eqref{eq:noise_term} can be rewritten as
\beq \label{eq:vkwk_rewrite} \begin{split}
V_{k+1} & = \underbrace{\martinoneb_k + \martinoneA_k \theta_k + \martinoneB_k w_k}_{=: {V}_{k+1}^{(0)}} + \underbrace{  (\markovoneb_k - \markovoneb_{k+1}) + (\markovoneA_k - \markovoneA_{k+1}) \theta_k + (\markovoneB_k - \markovoneB_{k+1}) w_k }_{=: V_{k+1}^{(1)}} \\
W_{k+1} & = \underbrace{\martintwob_k + \martintwoA_k \theta_k + \martintwoB_k w_k}_{=: {W}_{k+1}^{(0)}}   + \underbrace{ (\markovtwob_k - \markovtwob_{k+1}) + (\markovtwoA_k - \markovtwoA_{k+1}) \theta_k + ( \markovtwoB_k - \markovtwoB_{k+1} ) w_k  }_{=: W_{k+1}^{(1)}}.
\end{split}
\eeq
We observe that $\CPE{{V}_{k+1}^{(0)}}{\F_k} = 0, \CPE{{W}_{k+1}^{(0)}}{\F_k} = 0$ and therefore \eqref{eq:vkwk_rewrite} separates the noise terms into their martingale ($V_{k}^{(0)}, W_{k}^{(0)}$) and Markovian ($V_{k}^{(1)}, W_{k}^{(1)}$) components. Under B\ref{assb:bdd}, the second order moment of these noise components satisfy  A\ref{assum:bound-conditional-variance}. Accordingly, we define $\ttheta^{(0)}_0 = \ttheta_0, \ttheta_0^{(1)} = 0$, and $\tw^{(0)}_0 = \tw_0, \tw^{(1)}_0 = 0$ and the recursions:
\beq \label{eq:recur_wt}
\begin{split}
& \ttheta^{(i)}_{k+1} = (\Id - \beta_k B_{11}^k) \ttheta^{(i)}_k - \beta_k A_{12} \tw^{(i)}_k - \beta_k {V}_{k+1}^{(i)} ,~i=0,1,\\
& \tw^{(i)}_{k+1} = (\Id - \gamma_k B_{22}^k) \tw^{(i)}_k - \gamma_k ({W}_{k+1}^{(i)} + C_k V_{k+1}^{(i)}) ,~i=0,1,
\end{split}
\eeq
where it holds that $\ttheta_k = \ttheta_k^{(0)} + \ttheta_k^{(1)}$, $\tw_k = \tw_k^{(0)} + \tw_k^{(1)}$. Clearly, $\ttheta_k^{(0)}, \tw_k^{(0)}$ (resp.~$\ttheta_k^{(1)}, \tw_k^{(1)}$) are iterates of the two timescale SA driven by martingale (resp.~Markovian) noise. The two sets of recursions are independent except the second order moments of noise are bounded by $\mth_k, \mtw_k$, containing the contributions from $\ttheta_k^{(0)}, \tw_k^{(0)}$ and $\ttheta_k^{(1)}, \tw_k^{(1)}$. 

In the sequel, we show the martingale noise driven terms $\| \E[ \tw_{k}^{(0)} (\tw_k^{(0)})^\top] \|$, $\| \E[ \tw_{k}^{(0)} (\ttheta_k^{(0)})^\top] \|$, $\| \E[ \ttheta_{k}^{(0)} (\ttheta_k^{(0)})^\top] \|$ can be estimated using similar procedures as in Proposition~\ref{prop:mw}--\ref{prop:mth} from the previous subsection. Meanwhile the Markovian noise driven terms $\| \E[ \tw_{k}^{(1)} (\tw_k^{(1)})^\top] \|$ vanish at a faster rate than the former. Throughout this subsection, we set the step sizes to satisfy:
\beq
\label{eq:gamma_markov}
\begin{split}
&  \gamma_k \leq \gamma_\infty^{\sf mark} := \gamma_\infty^{(0)} \wedge \frac{1/\sqrt{ \dth \vee \dw }}{6 \pw \boundwsim} \wedge \frac{a_{22}/4}{\tC_0 + \tC_3},~~ \beta_k \leq \beta_\infty^{\sf mark} := \beta_\infty^{(0)} \wedge {\frac{1}{\sqrt{6 \tC_3^{(1,1)}}}} \wedge \frac{a_\Delta}{8 \Ctwottilde},\vspace{-.2cm}
\end{split}
\eeq
where $\pw = \lambda_{\sf min}^{-1}( Q_{22} ) \lambda_{\sf max}( Q_{22} )$, $\tC_0$, $\tC_3$, $\boundwsim$ are defined in \eqref{eq:tC0}, \eqref{eq:tC3}, \eqref{eq:boundwsim}, respectively, and $\tC_3^{(1,1)}$, $\Ctwottilde$ are defined in \eqref{eq:c11const}, \eqref{eq:ctheta_markov}, respectively, in the appendix.

\paragraph{Step 1: Bounding $\mtw_k$} We first show that the martingale and Markov noise driven iterates converge with different rates as follows:
\begin{lemma} \label{lem:m1interbd}
Assume A\ref{assum:hurwitz}--\ref{assum:stepsize}, B\ref{assb:mc1}--\ref{assb:step} and the step sizes satisfy \eqref{eq:gamma_markov}. For any $k \in \nset$, it holds 
\beq \label{eq:mtw0bd} 
\begin{split}
\| \E[ \tw^{(0)}_{k+1} ( \tw^{(0)}_{k+1} )^\top ] \| & \textstyle \leq  \prod_{\ell=0}^k \big( 1 - \frac{\gamma_\ell a_{22}}{2} \big)^2 \frac{\lambda_{\sf max}( Q_{22} )}{\lambda_{\sf min}(Q_{22})} \mtw_0 \\
& \quad \textstyle + \tC_0 \sum_{j=0}^k \gamma_j^2 \prod_{\ell=j+1}^k \big( 1 - \frac{\gamma_\ell a_{22}}{2} \big)^2 (1 + \mtw_j + \mth_j ) ,
\end{split}
\eeq
\beq \label{eq:mtw1bd}
\begin{split}
\| \E[ \tw^{(1)}_{k+1} ( \tw^{(1)}_{k+1} )^\top ] \| & \textstyle \leq \tC_1 \prod_{\ell=0}^k \big( 1 - \frac{\gamma_\ell a_{22}}{2} \big)^2  + \tC_2 \gamma_k^2 (\mth_{k+1} + \mtw_{k+1} ) + \tC_4 \gamma_{k}^2 \\
& \textstyle \quad + \tC_3 \gamma_{k+1} \sum_{j=0}^k \gamma_j^2 \prod_{\ell=j+1}^k \big( 1 - \frac{\gamma_\ell a_{22}}{2} \big)^2 ( \mth_j + \mtw_j ),
\end{split}
\eeq
where $\tC_0, \tC_1, \tC_2, \tC_3, \tC_4$ are constants defined in \eqref{eq:tC0}, \eqref{eq:tC3} in the appendix.
\end{lemma}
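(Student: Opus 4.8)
The plan is to bound the two iterates $\tw^{(0)}_k$ and $\tw^{(1)}_k$ separately, following the splitting \eqref{eq:recur_wt}, and then to read off \eqref{eq:mtw0bd} and \eqref{eq:mtw1bd}. The first bound is the straightforward one and essentially reproduces Proposition~\ref{prop:mw}. Expanding the outer product of the $i=0$ recursion in \eqref{eq:recur_wt} and using that $V^{(0)}_{k+1},W^{(0)}_{k+1}$ are martingale differences given $\F_k$ while $B_{22}^k,C_k$ are deterministic, the cross terms vanish in expectation and I obtain
\[
\E[\tw^{(0)}_{k+1}(\tw^{(0)}_{k+1})^\top] = (\Id-\gamma_k B_{22}^k)\,\E[\tw^{(0)}_k(\tw^{(0)}_k)^\top]\,(\Id-\gamma_k B_{22}^k)^\top + \gamma_k^2\,\E\big[(W^{(0)}_{k+1}+C_kV^{(0)}_{k+1})(W^{(0)}_{k+1}+C_kV^{(0)}_{k+1})^\top\big].
\]
Taking the $Q_{22}$-weighted operator norm, the contraction $\normop{\Id-\gamma_k B_{22}^k}[Q_{22}]\le 1-\gamma_k a_{22}/2$ (valid under \eqref{eq:gamma_markov} and $\kappa\le\kappa_\infty$, exactly as for Proposition~\ref{prop:mw}) produces the squared factor $(1-\gamma_\ell a_{22}/2)^2$, while A\ref{assum:bound-conditional-variance} --- guaranteed here by B\ref{assb:bdd} --- bounds the noise second moment by a constant multiple of $1+\mth_k+\mtw_k$. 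Unrolling this scalar recursion from $\tw^{(0)}_0=\tw_0$ yields \eqref{eq:mtw0bd}, with $\tC_0$ collecting the noise constant.

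The bound \eqref{eq:mtw1bd} is the crux, because $V^{(1)}_{k+1},W^{(1)}_{k+1}$ are biased. I would again expand the outer product of the $i=1$ recursion in \eqref{eq:recur_wt}, writing $\eta^{(1)}_{k+1}:=W^{(1)}_{k+1}+C_kV^{(1)}_{k+1}$; this gives the contractive self-term $(\Id-\gamma_k B_{22}^k)\E[\tw^{(1)}_k(\tw^{(1)}_k)^\top](\Id-\gamma_k B_{22}^k)^\top$, a pure-noise term $\gamma_k^2\,\E[\eta^{(1)}_{k+1}(\eta^{(1)}_{k+1})^\top]$, and a cross term $-\gamma_k(\Id-\gamma_k B_{22}^k)\E[\tw^{(1)}_k(\eta^{(1)}_{k+1})^\top]$ together with its transpose. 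Unlike the martingale case the cross term does not vanish, and controlling it is the heart of the argument. To exploit the available cancellation I would make the telescoping structure of \eqref{eq:vkwk_rewrite} explicit: every constituent of $\eta^{(1)}_{k+1}$ has the form $(\markovterm_k-\markovterm_{k+1})\,u_k$ with $u_k\in\{1,\theta_k,w_k\}$ and $\markovterm\in\{\markovoneb,\markovoneA,\ldots,\markovtwoB\}$, and I rewrite it via
\[
(\markovterm_k-\markovterm_{k+1})\,u_k = (\markovterm_k u_k - \markovterm_{k+1}u_{k+1}) + \markovterm_{k+1}(u_{k+1}-u_k),
\]
separating a genuinely telescoping part from a residual proportional to the iterate increments $\theta_{k+1}-\theta_k=\beta_k(\cdots)$ and $w_{k+1}-w_k=\gamma_k(\cdots)$ of \eqref{eq:2ts1}, \eqref{eq:2ts2}.

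Writing $G^k_j:=\prod_{\ell=j+1}^k(\Id-\gamma_\ell B_{22}^\ell)$ and unrolling the $i=1$ recursion from $\tw^{(1)}_0=0$ as $\tw^{(1)}_{k+1}=-\sum_{j=0}^k\gamma_j G^k_j\,\eta^{(1)}_{j+1}$, I apply Abel summation to the telescoping part. The discrete summation by parts transfers each difference onto the smoothly varying factor $\gamma_j G^k_j$, whose increment is $\mathcal{O}(\gamma_j)$ relative to its size: by A\ref{assum:stepsize} one has $|\gamma_{j}-\gamma_{j-1}|=\mathcal{O}(\gamma_j^2)$, and the telescoped product differs by one extra factor $(\Id-\gamma_j B_{22}^j)$. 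Together with the residual term --- already $\mathcal{O}(\gamma_j)$ thanks to the increment bounds and B\ref{assb:bdd} (bounding the $\markovterm$-magnitudes by $\overline{\rm b},\overline{\rm A}$) --- this introduces one extra step-size factor everywhere, which is the origin of both the $\gamma_j^2$ weight and the outer $\gamma_{k+1}$ prefactor in the bulk term $\tC_3\gamma_{k+1}\sum_{j}\gamma_j^2\prod_{\ell=j+1}^k(1-\gamma_\ell a_{22}/2)^2(\mth_j+\mtw_j)$. The boundary term of the summation by parts at $j=k$, where the difference cannot be absorbed, supplies $\tC_2\gamma_k^2(\mth_{k+1}+\mtw_{k+1})$; the boundary term at $j=0$ together with the initial transient gives $\tC_1\prod_{\ell=0}^k(1-\gamma_\ell a_{22}/2)^2$; and the pieces stemming from the constant coefficient $\widehat b_i$ (the case $u_k=1$) collect into the moment-free $\tC_4\gamma_k^2$. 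Throughout I split every cross product into the moment terms $\mth_j,\mtw_j$ by Young's inequality and use the auxiliary step-size estimate $\sum_i\gamma_i^2\prod_{\ell=i+1}^k(1-\gamma_\ell a_{22}/2)\lesssim\gamma_{k+1}$ to absorb one of the two sums arising from the outer product into the constant $\tC_3$; B\ref{assb:step} is used only to keep these $\gamma^2$-terms comparable to the $\beta$-scale needed downstream. The main obstacle is exactly this bookkeeping at the second-moment level: the non-vanishing correlation $\E[\tw^{(1)}_k(\eta^{(1)}_{k+1})^\top]$ must be matched to an extra $\gamma$ via the Poisson/summation-by-parts structure without inflating the steady state, so that the Markovian contribution decays one step-size order faster than the martingale bound \eqref{eq:mtw0bd}.
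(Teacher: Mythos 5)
Your proposal is correct and takes essentially the same route as the paper: \eqref{eq:mtw0bd} by repeating the martingale covariance recursion of Proposition~\ref{prop:mw}, and \eqref{eq:mtw1bd} by unrolling the $i=1$ recursion, splitting the Poisson-equation noise into telescoping, increment, and slowly-varying-coefficient pieces, then applying summation by parts (the paper's Lemma~\ref{lem:addsubtract}) together with the step-size regularity $|\gamma_j-\gamma_{j-1}|=\mathcal{O}(\gamma_j^2)$ and the weighted Cauchy--Schwarz step that produces the extra $\gamma_{k+1}$ prefactor --- exactly the paper's decomposition \eqref{eq:WVdecompose} and subsequent estimates, with the boundary terms matching $\tC_1$ and $\tC_2$ as you describe. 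The only detail you gloss over is that telescoping the $C_j V^{(1)}_{j+1}$ constituents also requires the increment bound $\|C_j - C_{j-1}\| = \mathcal{O}(\gamma_j)$ (the paper's Lemma~\ref{lem:Lkgamkbound}), but this is subsumed by your general treatment of slowly varying factors.
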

Let us compare the `steady-state' error on the right hand side of both inequalities: second term of \eqref{eq:mtw0bd} and the second to fourth term of \eqref{eq:mtw1bd}. We observe  those in the Markovian noise driven iterates $\tw_k^{(1)}$ are ${\cal O}(\gamma_k)$ times smaller than the martingale noise driven counterparts, indicating a faster convergence. This is roughly due to the special structure of the Markovian noise in $V_k^{(1)}, W_k^{(1)}$, where each term can be written as successive differences of a bounded sequence, e.g., $V_k^{(1)} \approx \xi_k - \xi_{k+1}$. When the linear SA \eqref{eq:recur_wt} is run over a long time horizon, the noise terms from consecutive iterations (roughly) cancels each other, leading to a significantly a smaller `steady-state' error.  

Using $\tw_k = \tw_k^{(0)} + \tw_k^{(1)}$ together with the above lemma give the following estimate for $\mtw_k$:
\begin{proposition} \label{prop:mw_markov}
Assume A\ref{assum:hurwitz}--\ref{assum:stepsize}, B\ref{assb:mc1}--\ref{assb:step} and the step sizes satisfy \eqref{eq:gamma_markov}. For any $k \in \nset$, it holds 
\beq \label{eq:mwmarkovbd} 
\mtw_{k+1} \leq \prod_{\ell=0}^k \big( 1 - \frac{\gamma_\ell a_{22}}{4} \big) \Czerowtilde + \Conewtilde \gamma_{k+1} + \Ctwowtilde \sum_{j=0}^k \gamma_j^2 \prod_{\ell=j+1}^k \big( 1 - \frac{\gamma_\ell a_{22}}{4} \big) \mth_j +  \Cthreewtilde \gamma_k^2  \mth_{k+1},
\eeq
where $\Czerowtilde, \Conewtilde, \Ctwowtilde, \Cthreewtilde$ are defined in \eqref{eq:mwmarkovbd_const} in the appendix.
\end{proposition}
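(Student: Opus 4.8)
The plan is to combine the two estimates of Lemma~\ref{lem:m1interbd} through the decomposition $\tw_k = \tw_k^{(0)} + \tw_k^{(1)}$ and then a self-improving (Gr\"onwall-type) argument that removes the residual dependence of the right-hand side on $\mtw$. Writing $a_{k+1} := \| \E[ \tw_{k+1}^{(0)} (\tw_{k+1}^{(0)})^\top ] \|$ and $b_{k+1} := \| \E[ \tw_{k+1}^{(1)} (\tw_{k+1}^{(1)})^\top ] \|$, I first expand $\E[\tw_{k+1}\tw_{k+1}^\top]$ into its four blocks and bound the two cross blocks by a Cauchy--Schwarz inequality for cross-covariances, $\| \E[ \tw_{k+1}^{(0)} (\tw_{k+1}^{(1)})^\top ] \| \le \sqrt{a_{k+1} b_{k+1}}$ (an auxiliary fact from Appendix~\ref{app:aux}). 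The triangle inequality then gives $\mtw_{k+1} \le a_{k+1} + b_{k+1} + 2\sqrt{a_{k+1} b_{k+1}} = (\sqrt{a_{k+1}}+\sqrt{b_{k+1}})^2 \le 2 a_{k+1} + 2 b_{k+1}$, the last step using $2\sqrt{a_{k+1} b_{k+1}} \le a_{k+1} + b_{k+1}$.

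Next I substitute \eqref{eq:mtw0bd} for $a_{k+1}$ and \eqref{eq:mtw1bd} for $b_{k+1}$ and perform two elementary reductions that bring every geometric factor to the common, slower decay rate used in the target. Since \eqref{eq:gamma_markov} forces $\gamma_\ell a_{22} \le 1$, we have $(1 - \gamma_\ell a_{22}/2)^2 \le 1 - \gamma_\ell a_{22}/2 \le 1 - \gamma_\ell a_{22}/4$, so each squared product is dominated by $\prod_\ell(1-\gamma_\ell a_{22}/4)$. The various $\mathcal{O}(\gamma_k^2)$ contributions (such as $\tC_4 \gamma_k^2$) are folded into the $\Conewtilde\gamma_{k+1}$ steady-state term via $\gamma_k^2 \le \gamma_\infty^{\sf mark}\,\varsigma\,\gamma_{k+1}$, and the extra prefactor $\gamma_{k+1}\le\gamma_\infty^{\sf mark}$ multiplying the convolution in \eqref{eq:mtw1bd} is absorbed into the convolution constant, so that its kernel matches that of the convolution coming from $2a_{k+1}$.

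The one genuinely delicate point is that \eqref{eq:mtw1bd} reintroduces $\mtw$ on the right-hand side, both as a pointwise term $\tC_2\gamma_k^2(\mth_{k+1}+\mtw_{k+1})$ and inside the convolution sums $\sum_j \gamma_j^2 \prod_\ell(\cdots)(\mth_j+\mtw_j)$. I would first move the pointwise term across: \eqref{eq:gamma_markov} keeps its coefficient below $1/2$, so $\mtw_{k+1}(1 - c\gamma_k^2)\le(\text{rest})$ yields $\mtw_{k+1}\le 2(\text{rest})$ while producing the term $\Cthreewtilde\gamma_k^2\mth_{k+1}$. The remaining self-referential convolution in $\mtw_j$ is then eliminated by the discrete Gr\"onwall lemma of Appendix~\ref{app:aux}: because the kernel $\sum_{j=0}^k \gamma_j^2 \prod_{\ell=j+1}^k(1-\gamma_\ell a_{22}/4)$ is uniformly bounded (of order $\gamma_k$), the Gr\"onwall factor is finite and preserves the $a_{22}/4$ decay, so the $\mtw_j$ contributions collapse into the product term (updating $\Czerowtilde$) and into the forcing terms, leaving only $\mth_j$ inside the sum. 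Collecting constants yields \eqref{eq:mwmarkovbd}.

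I expect this last Gr\"onwall/absorption step to be the main obstacle: one must verify that eliminating the self-referential $\mtw$ terms degrades neither the decay rate below $a_{22}/4$ nor the steady-state order beyond $\mathcal{O}(\gamma_{k+1})$, which requires the uniform summability estimate on the convolution kernel together with the precise step-size thresholds in \eqref{eq:gamma_markov}; the bookkeeping that relates $\gamma_k$ to $\gamma_{k+1}$ through $\varsigma$ is also needed throughout to keep every forcing term at the correct order.
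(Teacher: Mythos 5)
Your proposal is correct and follows essentially the same route as the paper: decompose $\tw_k = \tw_k^{(0)} + \tw_k^{(1)}$, combine the two bounds of Lemma~\ref{lem:m1interbd} via a factor-2 inequality, move the pointwise $\tC_2\gamma_k^2\,\mtw_{k+1}$ term to the left using the step-size condition in \eqref{eq:gamma_markov} (producing the $\Cthreewtilde\gamma_k^2\mth_{k+1}$ term), and then remove the self-referential convolution in $\mtw_j$ by a Gr\"onwall-type argument that preserves the $a_{22}/4$ decay and the ${\cal O}(\gamma_{k+1})$ steady state, with $\gamma_k^2 \leq \varsigma\gamma_{k+1}$ and Corollary~\ref{cor:rate-of-convergence} handling the bookkeeping. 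The only cosmetic difference is that the paper performs the Gr\"onwall step by explicitly constructing a majorizing recursion $\opu_k$ and solving it (there is no off-the-shelf discrete Gr\"onwall lemma in Appendix~\ref{app:aux}), and it obtains the factor-2 combination directly from convexity of the operator norm rather than via Cauchy--Schwarz on the cross block.
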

We note in passing that by considering a special case with $\mth_k = 0$ for all $k$, the above proposition generalizes \citep[Theorem 7]{srikant:1tsbounds:2019} for linear one timescale  SA with Markovian noise. 

In a similar vein to the proof of Theorem~\ref{theo:preliminary-bound-martingale}, we bound the cross term $\| \E[ \ttheta_k^{(0)} (\tw_k^{(0)})^\top ] \|$ as:
\begin{lemma} \label{lem:mtw0crossbd}
Assume A\ref{assum:hurwitz}--\ref{assum:stepsize}, B\ref{assb:mc1}--\ref{assb:step} and the step sizes satisfy \eqref{eq:gamma_markov}. For any $k \in \nset$, it holds 
\beq \notag 
\| \E [ \ttheta^{(0)}_{k+1} (\tw^{(0)}_{k+1})^\top ] \| \leq \Czerotwtilde \prod_{\ell=0}^k \big( 1 - \frac{\gamma_\ell a_{22}}{4} \big) + \Conetwtilde \beta_{k+1} + \Ctwotwtilde \sum_{j=0}^k \gamma_j^2 \prod_{\ell=j+1}^k \big( 1 - \frac{\gamma_\ell a_{22}}{4} \big) \mth_j,
\eeq
where the constants $\Czerotwtilde, \Conetwtilde, \Ctwotwtilde$ are defined in \eqref{eq:mtwmarkovbd_const} in the appendix.
\end{lemma}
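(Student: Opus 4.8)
The plan is to bound the cross-covariance $\| \E [ \ttheta^{(0)}_{k+1} (\tw^{(0)}_{k+1})^\top ] \|$ by expanding the product of the two recursions in \eqref{eq:recur_wt} for $i=0$. Since both $\ttheta_k^{(0)}$ and $\tw_k^{(0)}$ are driven purely by the \emph{martingale} noise components ${V}_{k+1}^{(0)}, {W}_{k+1}^{(0)}$ (which are conditionally zero-mean given $\F_k$), this step mirrors Proposition~\ref{prop:mthw} from the martingale case almost verbatim; the only difference is the extra `$1$' in the variance bound coming from B\ref{assb:bdd} (i.e.~$\normop{\E[{V}_{k+1}^{(0)} ({V}_{k+1}^{(0)})^\top]} \leq \mv(1 + \mth_k + \mtw_k)$ rather than without the constant). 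First I would write $\E[\ttheta^{(0)}_{k+1} (\tw^{(0)}_{k+1})^\top]$ in terms of $\E[\ttheta^{(0)}_{k} (\tw^{(0)}_{k})^\top]$ by multiplying out the two updates, and use the conditional independence of the current martingale increments from $\F_k$ to kill all cross terms involving a single noise factor. This leaves a recursion of the schematic form
\beq \notag
\E[\ttheta^{(0)}_{k+1} (\tw^{(0)}_{k+1})^\top]
= (\Id - \beta_k B_{11}^k)\, \E[\ttheta^{(0)}_{k} (\tw^{(0)}_{k})^\top]\, (\Id - \gamma_k B_{22}^k)^\top
- \beta_k A_{12}\, \E[\tw^{(0)}_{k} (\tw^{(0)}_{k})^\top]\, (\Id - \gamma_k B_{22}^k)^\top + R_k,
\eeq
where $R_k$ collects the second-moment noise contribution $\beta_k \gamma_k\, \E[{V}_{k+1}^{(0)} (C_k V_{k+1}^{(0)} + {W}_{k+1}^{(0)})^\top]$ together with higher-order $\beta_k^2, \gamma_k^2$ remainders.

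Next I would take operator norms and invoke the contraction estimates \eqref{eq:contraction_p} on $\Id - \gamma_k B_{22}^k$ (via Lemma~\ref{lem:LkBound} bounding $L_k$, hence $B_{22}^k$ and $C_k$) to produce a per-step contraction factor of the form $(1 - \gamma_\ell a_{22}/4)$. Crucially, the term $- \beta_k A_{12} \E[\tw^{(0)}_k (\tw^{(0)}_k)^\top] (\cdots)^\top$ is $\mathcal{O}(\beta_k)$ times the fast-variance, and substituting the bound \eqref{eq:mtw0bd} on $\| \E[\tw^{(0)}_k (\tw^{(0)}_k)^\top] \|$ from Lemma~\ref{lem:m1interbd} is what injects the $\beta_{k+1}$ steady-state rate (the $\Conetwtilde \beta_{k+1}$ term) rather than a slower $\gamma_{k+1}$ rate. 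The noise remainder $R_k$ contributes, after bounding $\normop{\E[{V}_{k+1}^{(0)} ({V}_{k+1}^{(0)})^\top]} \leq \mv(1 + \mth_k + \mtw_k)$ and similarly for $W^{(0)}$, a convolutive tail $\sum_{j=0}^k \gamma_j^2 \prod_{\ell=j+1}^k (1 - \gamma_\ell a_{22}/4)\, \mth_j$ (after absorbing the $1 + \mtw_j$ pieces using \eqref{eq:mwmarkovbd} and the geometric summation lemmas in Appendix~\ref{app:aux}). Unrolling the resulting scalar recursion and applying the step-size ratio control from A\ref{assum:stepsize} then yields exactly the three-term bound claimed, with the closed forms of $\Czerotwtilde, \Conetwtilde, \Ctwotwtilde$ read off from the accumulated constants.

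I expect the main obstacle to be the bookkeeping required to convert the $\gamma_k$-scale contraction $(1-\gamma_\ell a_{22}/4)$ into the $\beta_{k+1}$-scale \emph{steady-state} term while keeping the transient prefactor clean. The subtlety is that the cross-variance naturally lives on the fast ($\gamma$) timescale through $\tw^{(0)}_k$, yet the coupling term $\beta_k A_{12} \tw^{(0)}_k$ carries a slow-timescale $\beta_k$ factor; correctly tracking how the ratio $\beta_k/\gamma_k \leq \kappa$ (A\ref{assum:stepsize}) and B\ref{assb:step} interact to collapse the convolution $\sum_j \beta_j \prod_\ell (1-\gamma_\ell a_{22}/4)(\cdots)$ down to a single $\mathcal{O}(\beta_{k+1})$ term is the delicate part. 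This is precisely the mechanism that gives the fast $\mathcal{O}(\beta_k)$ rate for $\mth_k$ downstream, so the estimate must be tight and cannot afford the crude $\mthw_k \leq (\sqrt{\dth\dw}/2)\{\mth_k + \mtw_k\}$ bound flagged after Proposition~\ref{prop:mthw}. A secondary technical point is verifying that the cross second-moment $\E[{V}_{k+1}^{(0)} ({W}_{k+1}^{(0)})^\top]$ and $\E[{V}_{k+1}^{(0)} ({V}_{k+1}^{(0)})^\top]$ are controlled uniformly by $\mth_k + \mtw_k + 1$ under B\ref{assb:bdd}, which I would handle by the same Poisson-equation decomposition already established for \eqref{eq:vkwk_rewrite}.
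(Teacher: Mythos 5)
Your proposal is correct and takes essentially the same route as the paper: expand the cross-covariance recursion for the martingale-driven pair $(\ttheta^{(0)}_k,\tw^{(0)}_k)$, use the conditional zero-mean property of $V^{(0)}_{k+1},W^{(0)}_{k+1}$ to kill the linear noise terms, substitute the fast-variance bounds (Lemma~\ref{lem:m1interbd} cleaned up via Proposition~\ref{prop:mw_markov}, as in \eqref{eq:mw0markovbd}) into the $\beta_j A_{12}$ coupling term, and collapse the $\beta_j\gamma_j$ convolution against the fast contraction to ${\cal O}(\beta_{k+1})$ via Corollary~\ref{cor:rate-of-convergence}. The only cosmetic difference is that the paper derives the self-contained bound \eqref{eq:mw0markovbd} on $\|\E[\tw^{(0)}_j(\tw^{(0)}_j)^\top]\|$ as a separate step before attacking the cross term, whereas you inline that substitution; the mechanism and the resulting three-term bound are the same.
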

However, we observe that it is unnecessary to derive a similar (tight) bound for $\|\E [ \ttheta^{(1)}_{k} (\tw^{(1)}_{k})^\top ] \|$ as in the above lemma. The reason is that as observed in Lemma~\ref{lem:m1interbd}, the Markovian noise driven terms are anticipated to be sufficiently small compared to the martingale noise driven terms. In particular, a crude bound suffices to obtain the desirable convergence rate of $\mth_k$, as we observe next.

\paragraph{Step 2: Bounding $\mth_k$} 
Again we consider the bounds on $\| \E [ \ttheta^{(0)}_{k+1} ( \ttheta^{(0)}_{k+1} )^\top ] \|$ and $\E[\| \ttheta_{k+1}^{(1)} \|^2]$ separately. As we show in the appendix, both bounds are comparable as the Markovian noise term admits a successive difference structure. 
Using the decomposition $\ttheta_k = \ttheta_k^{(0)} + \ttheta_k^{(1)}$, we obtain:
\begin{proposition} \label{prop:mth_markov}
Assume A\ref{assum:hurwitz}--\ref{assum:stepsize}, B\ref{assb:mc1}--\ref{assb:step} and the step sizes satisfy \eqref{eq:gamma_markov}. For any $k \in \nset$, it holds
\beq \label{eq:mthmarkovbd} \textstyle
\mth_{k+1} \leq \Czerottilde \prod_{\ell=0}^k \big( 1 - \frac{\beta_\ell a_\Delta}{4} \big) + \Conettilde \beta_{k+1} + \Ctwottilde \sum_{i=0}^k \beta_i^2 \prod_{\ell=j+1}^k \big( 1 - \frac{\beta_\ell a_\Delta}{4} \big) \mth_i,
\eeq
where the constants $\Czerottilde, \Conettilde, \Ctwottilde$ are defined in \eqref{eq:ctheta_markov} in the appendix.
\end{proposition}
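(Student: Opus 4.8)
The plan is to mirror Step~3 of the martingale analysis (Proposition~\ref{prop:mth}), but now applied to the decomposition $\ttheta_k = \ttheta_k^{(0)} + \ttheta_k^{(1)}$ coming from \eqref{eq:recur_wt}. Since $\mth_k = \normop{\E[\ttheta_k \ttheta_k^\top]}$ and $\ttheta_k = \ttheta_k^{(0)} + \ttheta_k^{(1)}$, I would first use the elementary inequality $\mth_k \leq (1+\epsilon) \| \E[\ttheta_k^{(0)} (\ttheta_k^{(0)})^\top] \| + (1 + \epsilon^{-1}) \| \E[\ttheta_k^{(1)} (\ttheta_k^{(1)})^\top] \|$ (say with $\epsilon = 1$, giving a factor of $2$), so that it suffices to bound the two diagonal blocks separately and recombine. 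This reduces the claim to estimating the martingale-driven term $\| \E[\ttheta_{k+1}^{(0)} (\ttheta_{k+1}^{(0)})^\top] \|$ and the Markovian-driven term $\E[\|\ttheta_{k+1}^{(1)}\|^2]$.

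For the martingale-driven block, I would expand the recursion \eqref{eq:recur_wt} for $\ttheta^{(0)}$ in the $Q_\Delta$-weighted norm exactly as in Proposition~\ref{prop:mth}: the contraction \eqref{eq:contraction_p} gives the factor $\prod(1 - \beta_\ell a_\Delta/2)$, the cross term $-\beta_k A_{12}\tw_k^{(0)}$ is controlled by Lemma~\ref{lem:mtw0crossbd} (which yields the steady-state rate ${\cal O}(\beta_k)$ rather than ${\cal O}(\gamma_k)$, this being exactly the point of bounding the cross-variance tightly), and the noise term $-\beta_k V_{k+1}^{(0)}$ contributes the $\beta_k^2$-weighted convolution with $(\mth_j)$ since $\CPE{V_{k+1}^{(0)}}{\F_k}=0$ and its second moment is bounded by $1 + \mth_k + \mtw_k$ under B\ref{assb:bdd}. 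Substituting the $\mtw_j$ bound from Proposition~\ref{prop:mw_markov} and absorbing the resulting $\gamma_j^2$-weighted convolutions into $\beta_j^2$-weighted ones via B\ref{assb:step} ($\gamma_{k-1}^2 \leq \rho_0 \beta_k$) reproduces the three-term structure of \eqref{eq:mthmarkovbd} for this block.

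For the Markovian-driven block, the key is that I do \emph{not} need a tight cross-variance bound: as emphasized after Lemma~\ref{lem:mtw0crossbd}, a crude estimate suffices because $V_{k+1}^{(1)}, W_{k+1}^{(1)}$ have the successive-difference form $\xi_k - \xi_{k+1}$ with bounded $\xi$. Carrying out an Abel summation (summation by parts) on the noise term in the $\ttheta^{(1)}$ recursion turns the accumulated noise into boundary terms plus differences of the slowly-varying factors $(\Id - \beta_k B_{11}^k)$ and step sizes; because these increments are ${\cal O}(\beta_k)$ or ${\cal O}(\gamma_k)$ by A\ref{assum:stepsize}, the whole Markovian contribution picks up an extra factor of step size relative to the martingale term, so $\E[\|\ttheta_k^{(1)}\|^2]$ is dominated by the martingale block up to the same convolution with $(\mth_j)$. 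I expect the \textbf{main obstacle} to be this summation-by-parts bookkeeping: one must carefully track that the telescoped boundary terms and the sum of increment terms both remain ${\cal O}(\beta_k)$ in steady state (not ${\cal O}(\gamma_k)$), and that all residual convolutions can be folded — using B\ref{assb:step} and $\beta_k/\gamma_k \leq \kappa$ — into the single $\beta_i^2$-weighted recursive term appearing in \eqref{eq:mthmarkovbd}, so that the recombined bound has exactly the stated form with the contraction rate $a_\Delta/4$ rather than $a_\Delta/2$ (the degraded rate absorbing the cross-term and recombination losses).
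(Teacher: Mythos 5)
Your proposal follows essentially the same route as the paper's proof: decompose $\ttheta_k = \ttheta_k^{(0)}+\ttheta_k^{(1)}$, treat the martingale block exactly as in Proposition~\ref{prop:mth} (contraction in the $Q_\Delta$-norm, cross term via Lemma~\ref{lem:mtw0crossbd}, $\mtw_j$ via Proposition~\ref{prop:mw_markov}), treat the Markovian block through the successive-difference form of $V^{(1)}$ with Abel summation (Lemma~\ref{lem:addsubtract}), and recombine at the degraded rate $a_\Delta/4$. The paper additionally splits $\ttheta_k^{(1)}$ into the piece driven by $A_{12}\tw^{(1)}$ and the piece driven directly by $V^{(1)}$, but that is cosmetic relative to your two-way split.

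Two details need repair when the sketch is executed. First, the Abel-summation boundary term at the top index and the term $\Cthreewtilde\gamma_k^2\mth_{k+1}$ inside Proposition~\ref{prop:mw_markov} place $\mth_{k+1}$ and $\mtw_{k+1}$ on the right-hand side of your estimate, with coefficients $\beta_k^2$ or $\gamma_k^2$; since \eqref{eq:mthmarkovbd} involves only $\mth_i$ for $i\le k$, these self-referential terms must be absorbed into the left-hand side. This is exactly why \eqref{eq:gamma_markov} contains the conditions $\beta_k \le 1/\sqrt{6\tC_3^{(1,1)}}$ and $\gamma_k \le (1/\sqrt{\dth\vee\dw})/(6\pw\boundwsim)$, and why the paper invokes $3\tC_3^{(1,1)}\beta_k^2\le 1/2$ (and $2\tC_2\gamma_k^2\le 1/2$) at the end; your sketch never mentions this absorption step. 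Second, B\ref{assb:step} gives $\gamma_j^2\le\rho_0\beta_j$ — one power of $\beta$ — so it does fold the $\gamma^2$-weighted convolutions of the \emph{Markovian} blocks into $\beta^2$-weighted ones (this is how the paper obtains \eqref{eq:mth10bd_fin} and \eqref{eq:mth11bd_fin}), but for the martingale block the double convolution collapses to a $\beta_i\gamma_i$-weighted sum (the paper's own \eqref{eq:mth0bd_fin}), and B\ref{assb:step} cannot upgrade $\beta_i\gamma_i$ to $\beta_i^2$ since $\gamma_i\gg\beta_i$. The paper's final combination glosses over the same point, so your sketch inherits rather than creates this wrinkle, but the mechanism you name is not the one doing the work for that term.
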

Equipped with Proposition~\ref{prop:mth_markov}, we can repeat the same steps as in \eqref{eq:recursion_main_paper} to derive an upper bound for $\mth_k$ through solving the recursive inequality \eqref{eq:mthmarkovbd}. Similar steps also apply for yielding \eqref{eq:tracking-fast-component-markov}.


\section{Tightness of the Finite-time Error Bounds} \label{sec:expansion}
This section examines the tightness of our finite time error bounds in Theorem~\ref{theo:preliminary-bound-martingale}, \ref{theo:preliminary-bound-markov} through characterizing the squared error $\E[ \| \theta_k - \theta^\star \|^2 ]$ with expansion. We consider the assumption: 
\begin{assum}
\label{assum:covariance matrices}
There exist matrices $\Sigma^{11}, \Sigma^{12}, \Sigma^{22}$, and a constant $\EmVW \geq 0$ such that for all $j \in \nset$, it holds
\begin{align} 
\normop{\E[V_jV_j^\top] - \Sigma^{11}} \vee \normop{ \E[W_j W_j^\top] - \Sigma^{22}} \vee \normop{\E[V_j W_j^\top] - \Sigma^{12}} \le \EmVW (\normop{\E[\theta_k \theta_k^\top]} + \normop{[w_k w_k^\top]}). \nonumber
\end{align}
\end{assum}
Note that A\ref{assum:covariance matrices} implies A\ref{assum:bound-conditional-variance} and therefore poses a stronger assumption. We have
\begin{theorem} \label{th: expansion}
Assume A\ref{assum:hurwitz}--\ref{assum:zero-mean}, A\ref{assum:covariance matrices} and for all $k \in \nset$, we have $\gamma_k \in [0, \gamma_\infty^{\sf mtg}]$, $\beta_k \in [0, \beta_\infty^{\sf exp}]$ and $\kappa \in [0,\kappa_\infty^{\sf exp}]$, where $\gamma_\infty^{\sf mtg}, \beta_\infty^{\sf exp}, \kappa_\infty^{\sf exp}$ are constants defined in \eqref{eq:gamma_martingale}, \eqref{eq:condition-beta expansion}, \eqref{eq:condition-kappa expansion}  in the appendix. Then for any $k \geq k_0^{\sf exp}:= \min\{\ell: \sum_{j=0}^{\ell-1} \beta_j \geq \log(2)/(2 \normop{\Delta})\}$, the following expansion holds
\beq \label{eq:expansion_form}
\E \big[ \|\theta_k - \theta^\star \|^2 \big] = I_k + J_k.
\eeq
The leading term $I_k$ is given by the following explicit  formula
$$ \textstyle
I_k: = \sum_{j=0}^{k} \beta_j^2 \Tr\left(\prod_{\ell=j+1}^k (\Id -  \beta_\ell \Delta) \, \Sigma \, \left\{\prod_{\ell=j+1}^k (\Id -  \beta_\ell \Delta)\right\}^\top \right),
$$
where $\Sigma := \Sigma^{11} + A_{12} A_{22}^{-1} \Sigma^{22}  A_{22}^{-\top} A_{12}^\top + \Sigma^{12}  A_{22}^{-\top} A_{12}^\top + A_{12} A_{22}^{-1} \Sigma^{21}$. Meanwhile, the following two-sided inequality holds
\beq
\label{eq:leading term of exp}
    \EConst{3} \Tr(\Sigma) \le \frac{I_k}{\beta_k} \le \EConst{4} \Tr(\Sigma),
\eeq 
and $J_k$ is bounded by
\beq 
\label{eq:remainder term of exp}
    |J_{k}| \le \EConst{0} \prod_{\ell=0}^{k-1} \left(1 - \frac{a_\Delta}{4} \beta_\ell \right) {\rm V}_0 + \EConst{1} \beta_{k} \left( \gamma_{k} + \frac{\beta_k}{\gamma_k} \right),
\eeq
where ${\rm V}_0$ was defined in \eqref{eq:definition-filtration}.
All constants $\EConst{0}, \EConst{1}, \EConst{3}$, $\EConst{4}$ are given in~\eqref{eq: exp const 0 1}, \eqref{eq: exp const 3} and \eqref{eq: exp const 4} in the appendix, respectively, and they are independent of $\beta_k,\gamma_k$. 
\end{theorem}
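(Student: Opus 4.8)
The plan is to track the full matrix-valued second moment $\E[\ttheta_k\ttheta_k^\top]$ in the transformed coordinates of Observation~\ref{obs:transform}, rather than only its operator norm $\mth_k$. Forming the outer product of \eqref{eq:2ts1-1} with itself and taking expectations, the martingale property $\CPE{V_{k+1}}{\F_k}=0$ from A\ref{assum:zero-mean} annihilates the cross terms involving $V_{k+1}$ against the $\F_k$-measurable iterates $\ttheta_k,\tw_k$, leaving a matrix recursion
\[
\E[\ttheta_{k+1}\ttheta_{k+1}^\top] = (\Id-\beta_k B_{11}^k)\,\E[\ttheta_k\ttheta_k^\top]\,(\Id-\beta_k B_{11}^k)^\top + \mathcal{F}_k ,
\]
where the forcing $\mathcal{F}_k$ gathers the direct term $\beta_k^2\E[V_{k+1}V_{k+1}^\top]$, the fast term $\beta_k^2 A_{12}\E[\tw_k\tw_k^\top]A_{12}^\top$, and the cross terms $-\beta_k(\Id-\beta_k B_{11}^k)\E[\ttheta_k\tw_k^\top]A_{12}^\top$ together with its transpose. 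The objective is to show $\mathcal{F}_k = \beta_k^2\Sigma + E_k$, with $B_{11}^k$ replaced by $\Delta$ up to a controlled mismatch, where $\|E_k\|$ is of remainder order $\beta_k^2(\gamma_k+\beta_k/\gamma_k)$, and then to solve the recursion and compare its trace with $I_k$.

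The crux is pinning down $\mathcal{F}_k$ to leading order. The direct term is immediate: A\ref{assum:covariance matrices} and the boundedness of the iterates give $\beta_k^2\E[V_{k+1}V_{k+1}^\top] = \beta_k^2\Sigma^{11}+\mathcal{O}(\beta_k^2(\mth_k+\mtw_k))$, the error being absorbed into $E_k$. The remaining three blocks of $\Sigma$ must be produced by the fast and cross covariances, so I would set up auxiliary recursions for $U_k:=\E[\ttheta_k\tw_k^\top]$ and $P_k:=\E[\tw_k\tw_k^\top]$ from \eqref{eq:2ts1-1}--\eqref{eq:2ts2-1}. Since both are driven by the fast contraction $\Id-\gamma_k B_{22}^k\approx\Id-\gamma_k A_{22}$, they enter a quasi-stationary regime in which $P_k=\gamma_k\bar P+\mathcal{O}(\gamma_k^2)$ with $A_{22}\bar P+\bar P A_{22}^\top=\Sigma^{22}$, and $U_k = \beta_k(\cdots)A_{22}^{-\top}+\text{rem}$. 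Substituting these expressions into $\mathcal{F}_k$, and using the identity $A_{22}^{-1}\bar P+\bar P A_{22}^{-\top}=A_{22}^{-1}\Sigma^{22}A_{22}^{-\top}$, reproduces exactly the block $A_{12}A_{22}^{-1}\Sigma^{22}A_{22}^{-\top}A_{12}^\top$ and the two $\Sigma^{12}$-cross blocks; that is, $\Sigma$ is precisely the covariance of the reduced (effective) noise of the slow iterate. This is where the $W$-noise contributes to the asymptotic variance of $\theta_k$ at the same order $\beta_k$ as $\Sigma^{11}$, even though it enters only indirectly through the $\mathcal{O}(\beta_k\sqrt{\gamma_k})$ coupling term $-\beta_k A_{12}\tw_k$.

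With $\mathcal{F}_k=\beta_k^2\Sigma+E_k$ in hand, I would compare $\E[\ttheta_k\ttheta_k^\top]$ with the solution $M_k$ of the clean recursion $M_{k+1}=(\Id-\beta_k\Delta)M_k(\Id-\beta_k\Delta)^\top+\beta_k^2\Sigma$, $M_0=0$, whose unrolling gives $\Tr(M_k)=I_k$ exactly. The difference obeys a perturbed recursion forced by $E_k$, by the initial condition $\ttheta_0\neq 0$ (contributing the transient $\prod_{\ell=0}^{k-1}(1-\tfrac{a_\Delta}{4}\beta_\ell){\rm V}_0$), and by the mismatch $B_{11}^k-\Delta=-A_{12}L_k=\mathcal{O}(\beta_k/\gamma_k)$; propagating all of these through the $Q_\Delta$-contraction \eqref{eq:contraction_p} and the geometric-sum lemmas of the appendix yields the bound $|J_k|\le \EConst{0}\prod_{\ell=0}^{k-1}(1-\tfrac{a_\Delta}{4}\beta_\ell){\rm V}_0+\EConst{1}\beta_k(\gamma_k+\beta_k/\gamma_k)$, in which the $\gamma_k$ piece stems from the $\mathcal{O}(\gamma_k)$ deviation of $P_k,U_k$ from quasi-stationarity and the $\beta_k/\gamma_k$ piece from the $L_k$-mismatch and the $V$-part of $\tw_k$. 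Finally, the two-sided estimate \eqref{eq:leading term of exp} is a direct calculation on $I_k$: the upper bound uses $\normop{\prod_{\ell=j+1}^k(\Id-\beta_\ell\Delta)}[Q_\Delta]\le\prod_{\ell=j+1}^k(1-a_\Delta\beta_\ell)$ with a geometric-sum lemma, while the lower bound uses $\Sigma\succeq 0$ and $\sigma_{\min}(\Id-\beta_\ell\Delta)\ge 1-\beta_\ell\normop{\Delta}$ to retain the $\mathcal{O}(1/\beta_k)$ indices $j$ near $k$; the threshold $k\ge k_0^{\sf exp}$ guarantees enough such terms have accumulated.

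The main obstacle is the two-timescale averaging of the second paragraph. The $W$-dependent blocks of $\Sigma$ are of the same leading order $\beta_k$ in $\E[\ttheta_k\ttheta_k^\top]$ as the $\Sigma^{11}$ block, yet they arise only through the quasi-stationary values of the cross covariance $U_k$ and the fast covariance $P_k$. Making this rigorous requires a uniform (in $k$) control of the deviations of $U_k,P_k$ from their quasi-stationary expressions and a verification that these deviations feed only into the $\gamma_k$ and $\beta_k/\gamma_k$ remainder orders, which is exactly where the step-size conditions of A\ref{assum:stepsize} and the ratio control $\gamma_{k-1}^2\le\rho_0\beta_k$ enter. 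By comparison, solving the matrix recursion and establishing the two-sided trace bound are comparatively routine given the contraction \eqref{eq:contraction_p} and the auxiliary lemmas.
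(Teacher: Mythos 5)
Your proposal is correct in its overall architecture, but it is a genuinely different proof from the one in the paper. The paper never forms moment recursions for this theorem: it unrolls the \emph{iterate itself} into a Konda--Tsitsiklis-type pathwise expansion $\ttheta_{k+1}=S_{k+1}^{(0)}+\cdots+S_{k+1}^{(6)}$, in which the single martingale convolution $S_{k+1}^{(3)}=\sum_{j=0}^k\beta_j\ProdBB_{j+1:k}^{(1)}\,(V_{j+1}+A_{12}A_{22}^{-1}W_{j+1})$ carries the whole leading term: its second moment equals $I_k$ up to a remainder controlled by A\ref{assum:covariance matrices}, and every other square and cross term is killed by Cauchy--Schwarz together with the crude bounds \eqref{eq:convergence-slow}, \eqref{eq:tracking-fast-component} of Theorem~\ref{theo:preliminary-bound-martingale}. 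In that scheme $\Sigma$ appears at once as the limiting covariance of an effective noise, and no quasi-stationary analysis of $\E[\tw_k\tw_k^\top]$ or $\E[\ttheta_k\tw_k^\top]$ is ever needed: the $W$-noise is routed to $\ttheta$ through an explicit double sum and the resolvent identity $\sum_{j=\ell+1}^{k}\gamma_j\ProdBB_{\ell+1:j-1}^{(2)}=A_{22}^{-1}(\Id-\ProdBB_{\ell+1:k}^{(2)})$, the deviation being the separate term $S_{k+1}^{(6)}$. Your route instead closes a system of three matrix recursions for $\Theta_k$, $\Omega_k=\E[\ttheta_k\tw_k^\top]$, $P_k=\E[\tw_k\tw_k^\top]$ and identifies the effective forcing $\beta_k^2\Sigma$ by two-timescale averaging; it buys a matrix-level conclusion ($\E[\ttheta_k\ttheta_k^\top]$ close to the clean recursion $M_k$, with $\Tr(M_{k+1})=I_k$) and a structural derivation of $\Sigma$ via the Lyapunov identity $A_{22}^{-1}\bar P+\bar PA_{22}^{-\top}=A_{22}^{-1}\Sigma^{22}A_{22}^{-\top}$, at the cost of uniform tracking estimates the paper's route avoids; the tools you need (the sharp bound $\|L_k\|\lesssim\beta_k/\gamma_k$ of Lemma~\ref{lem: L k estimate sharp}, Lemma~\ref{lem:bsum}, Lemma~\ref{cor:rate-of-convergence}) do exist in the appendix.

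Three points need repair before your plan is a proof. (i) The tracking order $P_k=\gamma_k\bar P+{\cal O}(\gamma_k^2)$ is not achievable under A\ref{assum:stepsize}: the drift of the target alone, $(\gamma_k-\gamma_{k+1})\bar P$, accumulates through the $(1-\gamma_k a_{22}/2)$-contraction to ${\cal O}(\beta_k)$, and even this relies on the \emph{third} inequality of A\ref{assum:stepsize}-2, $\gamma_k/\gamma_{k+1}\le 1+(a_\Delta/16)\beta_{k+1}$; with only the first inequality the accumulated drift would be ${\cal O}(\gamma_k)$ -- the same order as $\gamma_k\bar P$ itself -- and your identification of ${\cal F}_k$ would collapse. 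The correct statement, $\|P_k-\gamma_k\bar P\|={\cal O}(\beta_k+\gamma_k^2)$ plus transients, then gives an error in $\Omega_k$ of order ${\cal O}(\beta_k(\beta_k/\gamma_k+\gamma_k))$ via Lemma~\ref{cor:rate-of-convergence}, which is exactly what the budget \eqref{eq:remainder term of exp} tolerates; so the architecture survives, but the accounting must be run with these orders. (ii) B\ref{assb:step} is not among the hypotheses of Theorem~\ref{th: expansion} (it belongs to the Markovian setting), so do not invoke $\gamma_{k-1}^2\le\rho_0\beta_k$; all step-size control must come from A\ref{assum:stepsize} together with \eqref{eq:condition-beta expansion}, \eqref{eq:condition-kappa expansion}. (iii) A sign check: with the conventions of Observation~\ref{obs:transform}, the $W$-noise reaches $\ttheta$ only through $-\beta_k A_{12}\tw_k$, so the quasi-stationary cross covariance is $\Omega_k\approx\beta_k(\Sigma^{12}-A_{12}\bar P)A_{22}^{-\top}$, and your forcing comes out as $\beta_k^2\{\Sigma^{11}+A_{12}A_{22}^{-1}\Sigma^{22}A_{22}^{-\top}A_{12}^\top-\Sigma^{12}A_{22}^{-\top}A_{12}^\top-A_{12}A_{22}^{-1}\Sigma^{21}\}$, i.e.\ the covariance of $V-A_{12}A_{22}^{-1}W$ rather than of $V+A_{12}A_{22}^{-1}W$ (a scalar example confirms the minus sign). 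So your claim that the substitution ``reproduces exactly'' the displayed cross blocks is off by a sign; the discrepancy traces to sign conventions inside the paper itself ($S_{k+1}^{(3)}$ versus Observation~\ref{obs:transform}), but whichever convention you adopt, your final $\Sigma$ must be stated consistently with the dynamics you actually use.
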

The proof is skipped in the interest of space, and it can be found in Appendix~\ref{app:ext}. 
Observe that from \eqref{eq:remainder term of exp}, the dominant term for $J_k$ is given by ${\cal O}(\beta_k \gamma_k + \frac{\beta_k^2}{\gamma_k})$. As such, using \eqref{eq:leading term of exp}, we observe that 
\[
{|J_k|} / {I_k} = {\cal O} \left( \gamma_k + {\beta_k} / {\gamma_k} \right)
\]
If $\lim_{k \rightarrow \infty} \beta_k / \gamma_k = 0$, we have $\lim_{k \rightarrow \infty} |J_k| / I_k = 0$. Combining \eqref{eq:expansion_form}, \eqref{eq:leading term of exp} shows that the expected error $\E[ \| \theta_k - \theta^\star \|^2 ]$ is lower bounded by $\Omega( \beta_k )$. 

We note that the assumptions A\ref{assum:hurwitz}--\ref{assum:zero-mean}, A\ref{assum:covariance matrices} imposed by the theorem imply A\ref{assum:hurwitz}--A\ref{assum:bound-conditional-variance} required by Theorem~\ref{theo:preliminary-bound-martingale}. Hence, together with \eqref{eq:convergence-slow} in Theorem~\ref{theo:preliminary-bound-martingale}, the above observations constitute a \emph{matching} lower bound on the convergence rate of linear two timescale SA with martingale noise. 
For the Markovian noise setting, we observe that if we impose the assumption that the random elements $(X_k)_{k \geq 0}$ are i.i.d., and $\widetilde{b}_i(x), \widetilde{A}_{ij}(x)$ are bounded above for any $i,j=1,2$ and $x \in {\sf X}$, then A\ref{assum:covariance matrices}, B\ref{assb:poisson}--B\ref{assb:bdd} can be satisfied. Therefore, the lower bound on the convergence rate also holds.

\vspace{-.1cm}
\section{Numerical Experiments, Conclusions} \label{sec:num}

We present numerical experiments to support our theoretical claims. We consider {\sf (a)} a toy example with a randomly generated problem parameters $b_i, A_{ij}$ and i.i.d.~samples $(X_k)_{k \in \nset}$ such that $\E[ \widetilde{b}_i(X_k) ] = b_i$, $\E[ \widetilde{A}_{ij}(X_k)] = A_{ij}$, {\sf (b)} the Garnet problem \citep{geist:offpolicy:2014} with the GTD algorithm \citep{sutton:gtd:2009} using $X_k$ from a simulated Markov chain. For example {\sf (a)}, we compute the stationary point $\theta^\star, w^\star$ exactly using \eqref{eq:opt_sol}; for example {\sf (b)}, while it is known that $w^\star = 0$, the solution $\theta^\star$ is computed using Monte Carlo simulation of the matrices $\widetilde{b}_i(X_k), \widetilde{A}_{ij}(X_k)$ with $2 \cdot 10^9$ iterations. The step sizes are chosen as $\beta_k = c^\beta/(k_0^\beta + k), \gamma_k = c^\gamma/(k_0^\beta + k)^{\sigma}$ with $\sigma \in \{ 0.5, 0.67, 0.75\}$. In the toy example {\sf (a)}, we have $\dth = \dw = 10,k_0^{\beta}= 10^4,k_0^{\gamma}=10^7, c^{\beta}=140,c^{\gamma}=300$; while for the Garnet problem {\sf (b)}, we have $k_0^{\beta}=8 \cdot 10^5,k_0^{\gamma}=2 \cdot 10^5, c^{\beta}=2300,c^{\gamma}=120$. Garnet problem is generated from family $n_S=30,n_A=2,b=2,p=8$, see \citep{geist:offpolicy:2014}. 
Further details about both experiments are described in Appendix~\ref{app:num}. 

\newcommand\parboxc[3]{%
    \settowidth{\mytemplength}{#3}%
    \parbox[#1][#2]{\mytemplength}{\centering #3}%
}

\begin{figure}[h!]
\centering
    \begin{tabular}{c c r c r c r c}
    \addvbuffer[0cm 2.6cm]{\sf (a)}\hspace{-.45cm} & \includegraphics[width=.2\linewidth]{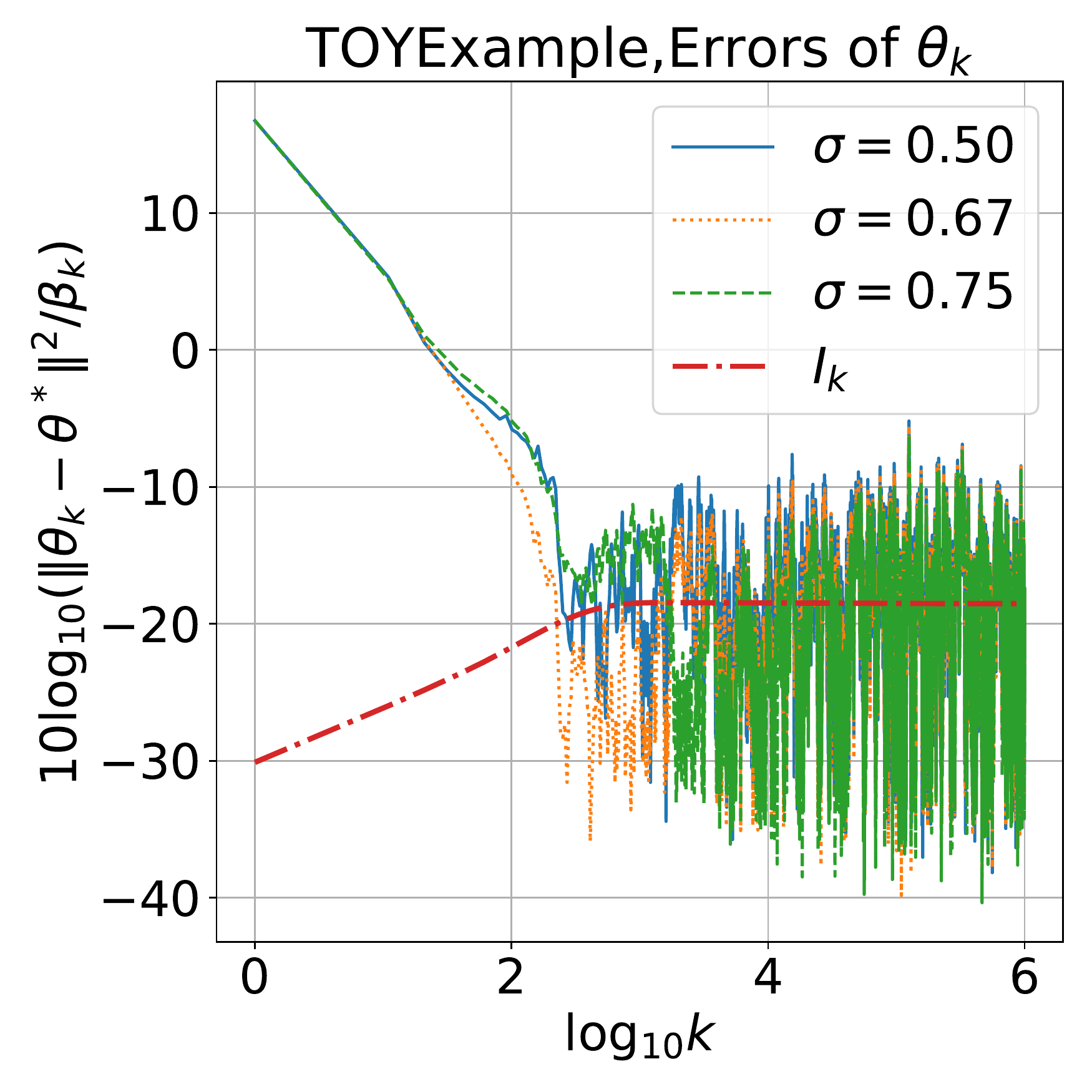} & \addvbuffer[0cm 2.6cm]{\sf (b)}\hspace{-.45cm} & \includegraphics[width=.2\linewidth]{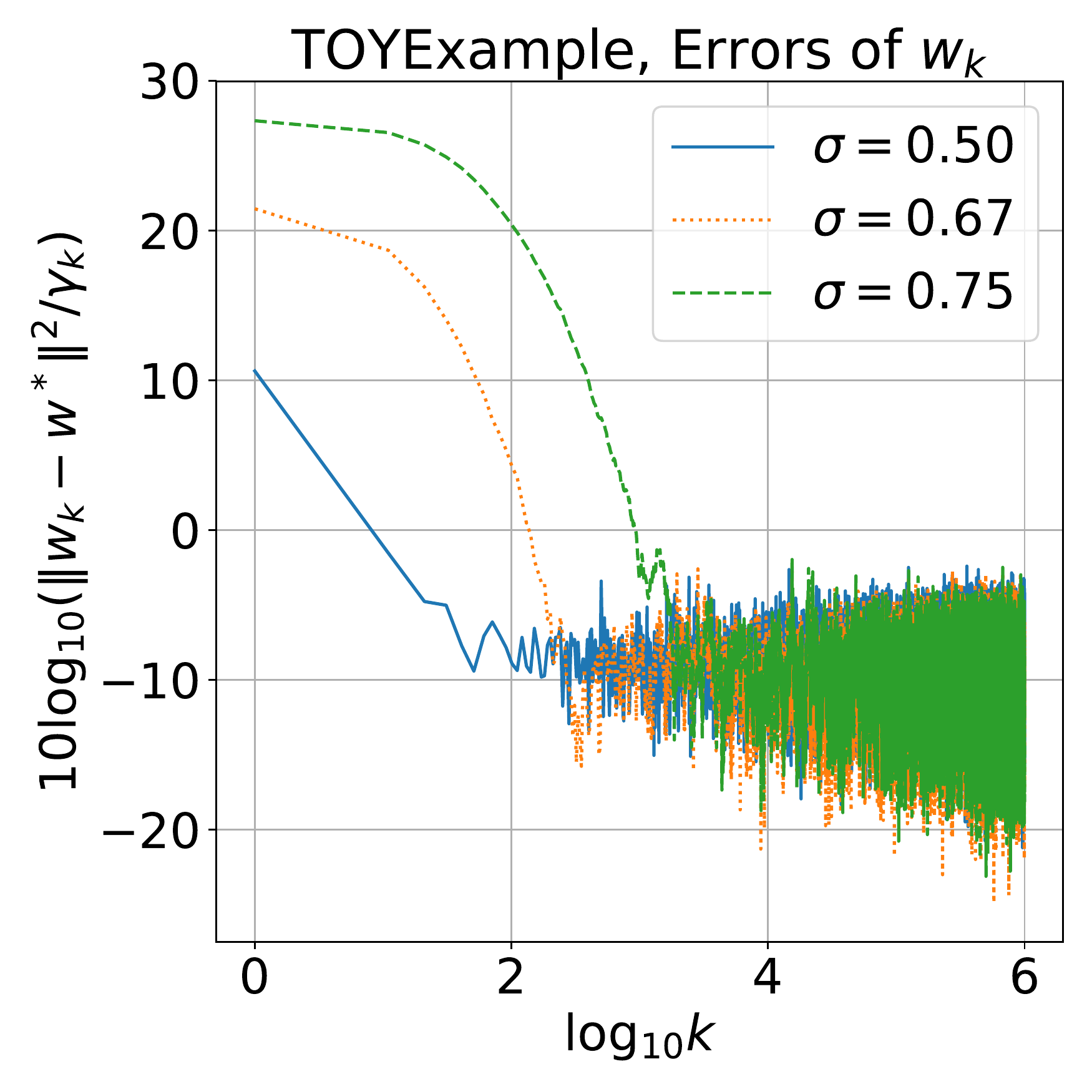} & \addvbuffer[0cm 2.6cm]{\sf (c)}\hspace{-.45cm} & \includegraphics[width=.2\linewidth]{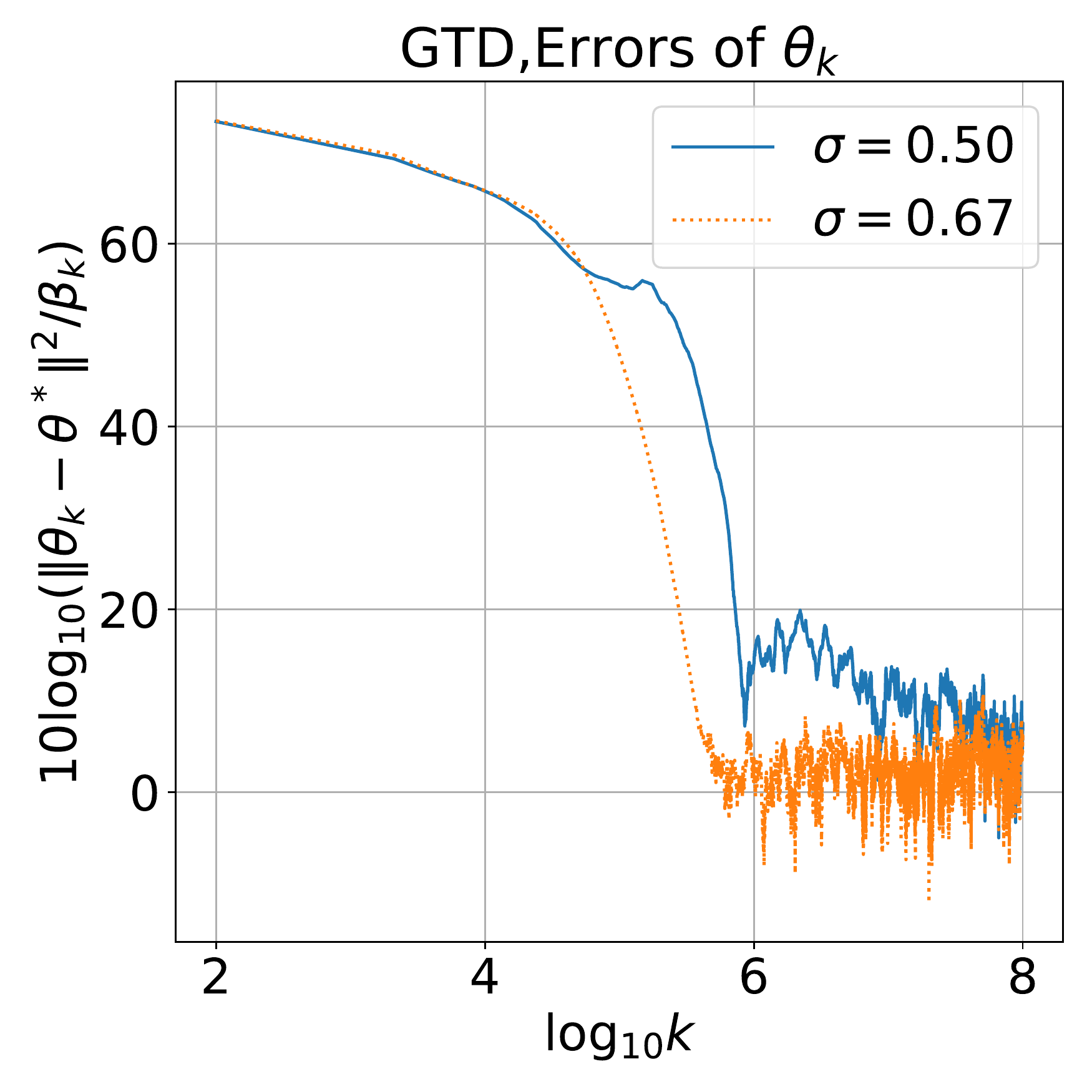} & \addvbuffer[0cm 2.6cm]{\sf (d)}\hspace{-.45cm} & \includegraphics[width=.2\linewidth]{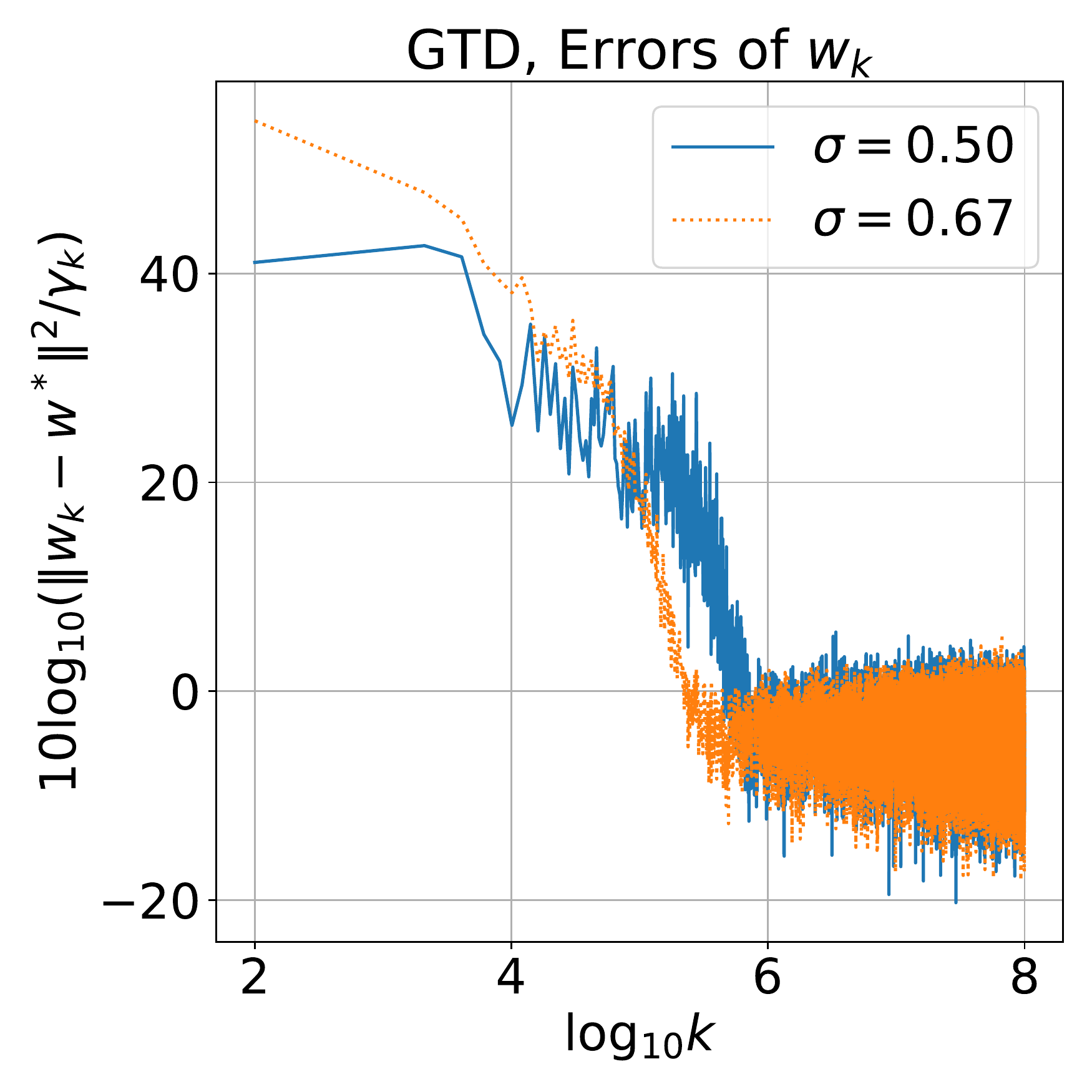}\vspace{-.4cm}
    \end{tabular}
    \caption{Deviations from stationary point $(\theta^\star, w^\star)$ \emph{normalized} by step sizes $\beta_k, \gamma_k$: {\sf (a,b)} the toy example, note we also show $I_k$ using the exact formula in Theorem~\ref{th: expansion} (unnormalized plot also available in the Appendix);  {\sf (c,d)} the Garnet problem.\vspace{-.5cm}}
    \label{fig:ratesMartingaleMarkov}
\end{figure}

We illustrate the convergence rates of the linear two timescale SA on the two problems in Figure~\ref{fig:ratesMartingaleMarkov}. Note that the plots show the (normalized) steady state errors are $\E[ \| \theta_k - \theta^\star \|^2 ] ={\cal O}(\beta_k)$, $\E[ \| w_k - w^\star \|^2 ] = {\cal O}(\gamma_k)$, which hold for both examples on martingale and Markovian noise. In addition, they are independent of the choice of $\sigma$. These observations agree with our main results. 

\paragraph{Conclusions} We have provided an improved finite time convergence analysis of the linear two timescale SA on both martingale and Markovian noises with relaxed conditions. Our analysis show that a tight analysis is possible through deriving and solving a sequence of recursive error bounds. Future works include the finite time analysis of nonlinear two timescale SA.

\newpage
\bibliographystyle{plain}
\bibliography{references}

\newpage
\appendix

\section{Proof of Observation~\ref{obs:transform}}\label{app:obs}
The following derivation is largely borrowed from \citep{konda:tsitsiklis:2004} and is repeated here for completeness. We begin by substituting $\ttheta_k$ into \eqref{eq:2ts1} to obtain
\[
\begin{split}
\ttheta_{k+1} & = (\Id - \beta_k A_{11}) \theta_k - \beta_k A_{12} w_k - \theta^\star + \beta_k b_1 - \beta_k V_{k+1} \\
& = (\Id - \beta_k A_{11}) \ttheta_k - \beta_k A_{11} \theta^\star - \beta_k A_{12} (\tw_k + w^\star - C_{k-1} \ttheta_k) + \beta_k b_1 - \beta_k V_{k+1} \\
& = (\Id - \beta_k (A_{11} - A_{12}A_{22}^{-1}A_{21} - A_{12} L_k))\ttheta_k - \beta_k A_{12} \tw_k - \beta_k ( A_{12} w^\star + A_{11} \theta^\star - b_1 ) - \beta_k V_{k+1}.
\end{split}
\]
Notice that
\[
\begin{split}
A_{12} w^\star + A_{11} \theta^\star - b_1 &  = (A_{11} - A_{12} A_{22}^{-1} A_{21}) \theta^\star + A_{12} A_{22}^{-1} b_2 - b_1 = 0.
\end{split}
\]
The above yields
\beq 
\ttheta_{k+1} = ( \Id - \beta_k B_{11}^k ) \ttheta_k - \beta_k A_{12} \tw_k - \beta_k V_{k+1}. 
\eeq
Next, we observe that
\[
\begin{split}
w_{k+1} - w^\star & = (\Id - \gamma_k A_{22}) w_{k} - \gamma_k A_{21} \theta_k - w^\star  + \gamma_k b_2 - \gamma_k W_{k+1} \\
& = (\Id - \gamma_k A_{22}) (w_{k} - w^\star) - \gamma_k A_{22} w^\star - \gamma_k A_{21} \theta_k + \gamma_k b_2 - \gamma_k W_{k+1} \\
& = (\Id - \gamma_k A_{22}) (w_{k} - w^\star) - \gamma_k A_{21} (  \theta_k - \theta^\star ) - \gamma_k W_{k+1}
\end{split}
\]
Substitute $\tw_k$ into \eqref{eq:2ts2} and using \eqref{eq:2ts1-1} yield:
\[
\begin{split}
& \tw_{k+1} = (\Id - \gamma_k A_{22}) (w_{k} - w^\star) - \gamma_k A_{21} \ttheta_k  + C_{k} \ttheta_{k+1} - \gamma_k W_{k+1} \\
& = (\Id - \gamma_k A_{22}) \tw_k - \big( (\Id - \gamma_k A_{22}) C_{k-1} + \gamma_k A_{21} \big) \ttheta_k + C_{k} \big( (\Id - \beta_k B_{11}^k) \ttheta_k - \beta_k A_{12} \tw_k \big) \\
& \quad - \beta_k C_k V_{k+1} - \gamma_k W_{k+1} \\
& = (\Id - \gamma_k B_{22}^k ) \tw_k - \Big( C_{k-1} - \gamma_k (A_{22} C_{k-1} - A_{21} ) - C_k ( \Id - \beta_k B_{11}^k) \Big) \ttheta_k - \beta_k C_k V_{k+1} - \gamma_k W_{k+1}
\end{split}
\]
We observe that
\[
\begin{split}
& C_{k-1} - \gamma_k (A_{22} C_{k-1} - A_{21} ) - C_k ( \Id - \beta_k B_{11}^k) \\
& = L_k + A_{22}^{-1} A_{21} - (L_{k+1} + A_{22}^{-1} A_{21}) ( \Id - \beta_k B_{11}^k) - \gamma_k (A_{22} C_{k-1} - A_{21} ) \\
& = L_k - ( L_k - \gamma_k A_{22} L_k + \beta_k A_{22}^{-1} A_{21} B_{11}^k ) - \beta_k A_{22}^{-1} A_{21} B_{11}^k - \gamma_k (A_{22} C_{k-1} - A_{21} ) \\
& = \gamma_k A_{22} L_k - \gamma_k (A_{22} ( L_k + A_{22}^{-1} A_{21})  - A_{21} ) = 0.
\end{split}
\]
The above yields
\beq 
\tw_{k+1} = (\Id - \gamma_k B_{22}^k ) \tw_k - \beta_k C_k V_{k+1} - \gamma_k W_{k+1}.
\eeq

\section{Detailed Proofs for Section~\ref{sec:upperbd}}

Before we proceed to proving the main results of Section~\ref{sec:upperbd}, we first study a few properties of the two timescale linear SA scheme.

To facilitate our discussions next, we define the constant:
\begin{align}
\label{eq:definition-CC} 
\Cinfty & := \sqrt{\lambda_{\sf min}(Q_\Delta)^{-1} \lambda_{\sf max}(Q_{22})} \Linfty+ \normop{A_{22}^{-1} A_{21}}[] ,
\end{align}
where $\| C_k \| \leq \Cinfty$ for any $k \geq 0$. Then, as we have $\theta_k \theta_k^\top \preceq 2 \ttheta_k \ttheta_k^\top + 2 \theta^\star (\theta^\star)^\top$, it holds
\[
\normop{\E[\theta_k \theta_k^\top]}[] \le 2 \big\{ \mth_k  +  \normop{\theta^\star (\theta^\star)^\top}[] \big\}, \quad
\normop{\E[w_k w_k^\top]}[] \le 3 \big\{\mtw_k + \mth_k \Cinfty^2 +  \normop{w^\star (w^\star)^\top}[] \big\}
\]
The noise terms $V_k,W_k$ can then be estimated in terms of the transformed variables $\ttheta_k, \tw_k$ and their variances $\mth_k, \mtw_k$. In particular, combining with A\ref{assum:bound-conditional-variance} yields
\begin{align} \label{eq:noise:constants}
\normop{\E[ V_{k+1} V_{k+1}^\top]}[] &\leq \ttmv (1 + \mth_k + \mtw_k),~~\normop{\E[ W_{k+1} W_{k+1}^\top]}[]  \leq \ttmw (1 + \mth_k + \mtw_k) \\
\label{eq:noise:constantsVW}
\normop{\E[ V_{k+1}W_{k+1}^\top]}[] & \leq \ttmVW (1 + \mth_k + \mtw_k )
\end{align}
where 
\beq \label{eq:ttm_def}
\begin{split}
& \frac{\ttmv}{\mv} = \frac{\ttmw}{\mw} = ( 1 + 2 \normop{\theta^\star (\theta^\star)^\top}[] + 3 \normop{w^\star (w^\star)^\top}[]) \vee (2 + 3 \Cinfty^2) \vee 3 \\
& \ttmVW  = \frac{\sqrt{\dth \dw}}{2} \left( \ttmw + \ttmv\right)
\end{split}
\eeq
We also define a few constants related to the matrices $Q_\Delta, Q_{22}$ associated with the Hurwitz matrices $\Delta, A_{22}$ in \eqref{eq:definition-Q-22}. Set $\pth := \lambda_{\sf min}^{-1}( Q_\Delta ) \lambda_{\sf max}( Q_\Delta )$, $\pw := \lambda_{\sf min}^{-1}( Q_{22} ) \lambda_{\sf max}( Q_{22} )$, $\pwth := \sqrt{ \pw \pth }$. Moreover, for any $a > 0$, we set 
\beq \label{eq:cseq}
\CSEQ{a} := \frac{2}{a} \varsigma \max\{1, a_{22}/(4a_\Delta)\} \vee \frac{4}{a}  \, (\varsigma)^3.
\eeq 
Next, we study the contraction properties of $\Id - \beta_k B_{11}^k$ and $\Id - \gamma_k B_{22}^k$ that appear in the transformed two timescale SA \eqref{eq:2ts1-1},\eqref{eq:2ts2-1}.
Using \eqref{eq:contraction_p}, we observe that 
\begin{align} 
    \normop{\Id - \beta_k B_{11}^k}[Q_\Delta]  = \normop{\Id - \beta_k \Delta + \beta_k A_{12} L_k}[Q_\Delta] 
    &\leq \normop{\Id - \beta_k \Delta}[Q_\Delta] + \beta_k \normop{A_{12}}[Q_{22},Q_{\Delta}] \normop{L_k}[Q_\Delta,Q_{22}] \nonumber \\
    &\leq (1-\beta_k a_\Delta) + \beta_k \normop{A_{12}}[Q_{22},Q_{\Delta}] \normop{L_k}[Q_\Delta,Q_{22}] \eqsp. \nonumber
\end{align}
Recalling that $\normop{L_k}[Q_\Delta,Q_{22}] \leq \Linfty$, the above inequality yields
\begin{equation}
\label{eq:contraction-B11}
\normop{\Id - \beta_k B_{11}^k}[Q_\Delta] \leq 1 - (1/2) \beta_k a_\Delta \eqsp.
\end{equation}
Since $\normop{\Id - \gamma_k B_{22}^k}[Q_{22}] \leq \normop{\Id - \gamma_k A_{22}}[Q_{22}]
+ \beta_k \normop{C_k A_{12}}[Q_{22}]$, we obtain the contraction:
\beq
\label{eq:contraction-B22}
\begin{split} 
\normop{\Id - \gamma_k B_{22}^k}[Q_{22}] & \leq 1 - \gamma_k a_{22} + \beta_k (\Linfty + \normop{ A_{22}^{-1} A_{21} }[Q_\Delta, Q_{22}]) \normop{A_{12}}[Q_{22},Q_\Delta] \\
& \leq 1 - (1/2) \gamma_k a_{22}.
\end{split}
\eeq
The last inequality is due to $\kappa  \leq (a_{22}/2) \{ (\Linfty + \normop{ A_{22}^{-1} A_{21} }[Q_\Delta, Q_{22}]) \normop{A_{12}}[Q_{22}, Q_\Delta]\}^{-1}$.
Lastly,
the following quantities will be used throughout the analysis: 
\begin{align}
&\ProdB_{m:n}^{(1)} := \prod_{i=m}^n (\Id - \beta_i B_{11}^i ), \quad \ProdB_{m:n}^{(2)} := \prod_{i=m}^n (\Id - \gamma_i B_{22}^i ), \nonumber \\
& G_{m:n}^{(1)} := \prod_{i=m}^n \Big(1 - (1/2) \beta_i a_{\Delta} \Big), \quad G_{m:n}^{(2)} := \prod_{i=m}^n \Big(1 - (1/2) \gamma_i a_{22} \Big). \nonumber
\end{align}
As a convention, we define $\ProdB_{m:n}^{(1)} = \ProdB_{m:n}^{(2)} = \Id$ if $m > n$.
In particular, for any $n,m \geq 0$,
we observe the following bound on the operator norm of $\ProdB_{m:n}^{(1)}$, 
\beq
\begin{split}
\| \ProdB_{m:n}^{(1)} \| & = \sqrt{\pth}
\| \ProdB_{m:n}^{(1)} \|_{Q_\Delta} \leq \sqrt{\pth} \prod_{i=m}^n \| \Id - \beta_i B_{11}^i \|_{Q_\Delta} 
\leq \sqrt{\pth} G_{m:n}^{(1)}
\end{split}
\eeq
Similarly, we have $\| \ProdB_{m:n}^{(1)} \| \leq \sqrt{\pw} G_{m:n}^{(2)}$. Lastly, we define
\[
\Sigma_k := \E \big[ \tw_{k} \tw_{k}^\top \big], \quad \Omega_k := \E \big[ \ttheta_k \tw_k^\top], \quad \Theta_k := \E \big[ \ttheta_k \ttheta_k^\top \big],
\]
whose operator norms correspond to $\mtw_k, \mthw_k, \mth_k$, respectively.

\subsection{Detailed Proof of Theorem \ref{theo:preliminary-bound-martingale}}

This subsection provides proofs to the propositions stated in Section~\ref{sec:bd_martin}, as well as providing detailed steps in establishing Theorem~\ref{theo:preliminary-bound-martingale}.


\paragraph{Bounding $\mtw_k$ (Proof of Proposition~\ref{prop:mw})} Using~\eqref{eq:2ts2-1}, as the noise terms are martingale, we get 
\begin{multline} \label{eq:w_recursion}
\CPE{\tw_{k+1} \tw_{k+1}^\top}{\F_k} = (\Id- \gamma_k B_{22}^k) \tw_k \tw_k^\top (\Id- \gamma_k B_{22}^k)^\top + \gamma_k^2 \CPE{W_{k+1}W_{k+1}^\top}{\F_k} \\ 
+ \beta_k^2 C_k \CPE{V_{k+1} V_{k+1}^\top}{\F_k}C_k^\top + \beta_k \gamma_k \left(\CPE{W_{k+1} V_{k+1}^\top}{\F_k} C_k^\top + C_k \CPE{V_{k+1} W_{k+1}^\top}{\F_k} \right).
\end{multline}
Repeatedly applying \eqref{eq:w_recursion} and taking the total expectation on both sides show
\beq \label{eq:sigma_recur}
\Sigma_{k+1} = \ProdB_{0:k}^{(2)} \Sigma_0 (\ProdB_{0:k}^{(2)})^\top + \sum_{j=0}^{k} \ProdB_{j+1:k}^{(2)} D_{j+1} (\ProdB_{j+1:k}^{(2)} )^\top,
\eeq
where  
\[
\begin{split}
& D_{k+1} = {\textstyle \gamma_k^2 \E [W_{k+1}W_{k+1}^\top] + \beta_k^2 C_k \E[ V_{k+1} V_{k+1}^\top] C_k^\top}  {\textstyle + \beta_k \gamma_k \big( \E[ {W_{k+1} V_{k+1}^\top}] C_k^\top + C_k \E[{V_{k+1} W_{k+1}^\top}] \big)}. \end{split}
\]
Using Lemma~\ref{lem:key-inequality}, we observe that 
\beq
\label{eq:crossbound}
\gamma_k \beta_k \| \E [W_{k+1} V_{k+1}^\top] C_k^\top \| \leq \frac{\sqrt{\dth \dw}}{2} C_\infty \big( \gamma_k^2 \| \E [W_{k+1} W_{k+1}^\top ] \| + \beta_k^2 \| \E [V_{k+1} V_{k+1}^\top ] \| \big),
\eeq
Let $K_C := \max\{ C_\infty^2,1 \} + \sqrt{\dth \dw} C_\infty$, we have 
\beq \notag
\begin{split}
\| D_{k+1} \| & 
\leq \gamma_k^2 \Big(1 + C_\infty \sqrt{\dth \dw}  \Big) \| \E [W_{k+1} W_{k+1}^\top ] \| + \beta_k^2 C_\infty \Big( C_\infty + \sqrt{\dth \dw} \Big) \| \E [V_{k+1} V_{k+1}^\top ] \| \\
& \leq K_C \Big( \gamma_k^2 \big\{ \ttmv + \ttmv \mth_k + \ttmv \mtw_k \big\}  + \beta_k^2 \big\{ \ttmw + \ttmw \mth_k + \ttmw \mtw_k \big\} \Big)
\end{split}
\eeq
where the last inequality is due to \eqref{eq:noise:constants}. Taking the operator norm on both sides of \eqref{eq:sigma_recur} yields
\beq \notag \mtw_{k+1} \leq \pw \Big\{ \big( G_{0:k}^{(2)} \big)^2 \mtw_{0} + K_C \sum_{j=0}^{k} 
 \big(  G_{j+1:k}^{(2)} \big)^2 ( \gamma_j^2 \ttmv + \beta_j^2 \ttmw ) \big\{ 1 + \mth_j + \mtw_j \big\} \Big\} .
\eeq
Using that $\beta_k \leq \kappa \gamma_k$ one writes
\beq \label{eq:mtw_intermediate}
\mtw_{k+1} \leq {\Czerow}' \big( G_{0:k}^{(2)} \big)^2 + c_0 {\Conew}' \sum_{j=0}^{k} \gamma_j^2  \big( G_{j+1:k}^{(2)} \big)^2 (1 + \mth_j)  + {\Ctwow}' \sum_{j=0}^{k} \gamma_j^2 \big( G_{j+1:k}^{(2)} \big)^2 \mtw_j
\eeq
where $c_0 = \ttmv+ \kappa^2\ttmw$, ${\Czerow}' = \pw \mtw_0$, ${\Conew}' =  \pw K_C$, and ${\Ctwow}' =  \pw c_0$.
Define:
\[
\topu_{k} = {\Czerow}' \big( G_{0:k-1}^{(2)} \big)^2 + c_0 {\Conew}' \sum_{j=0}^{k-1} \gamma_j^2  \big( G_{j+1:k-1}^{(2)} \big)^2 (1 + \mth_j)   + {\Ctwow}' \sum_{j=0}^{k-1} \gamma_j^2 \big( G_{j+1:k-1}^{(2)} \big)^2 \topu_j,
\]
It is easily seen that the sequence $(\topu_k)_{k \geq 0}$ is given by the following recursion
\begin{align} \nonumber
\topu_{k+1} =  (1-a_{22}\gamma_k/2)^2 \topu_k + c_0 {\Conew}' \gamma_k^2 (1 + \mth_k)   + {\Ctwow}' \gamma_k^2  \topu_k,~~\topu_0 = {\Czerow}'.
\end{align}
Since the step size was chosen such that $\gamma_k ( {\Ctwow}' + (a_{22}^2/4) ) \leq \frac{a_{22}}{2}$ [cf.~\eqref{eq:gamma_martingale}], we have 
\[
\topu_{k+1} \leq (1-a_{22}\gamma_k/2)\topu_k  + {\Conew}' \gamma_k^2 (c_0 + c_1 \mth_k) 
\]
which implies 
\beq
\topu_{k+1} \leq \Czero G_{0:k}^{(2)} + c_0 {\Conew}' \sum_{j=0}^{k} \gamma_j^2 (1 + \mth_j) G_{j+1:k}^{(2)}.
\eeq
Observe that $\mtw_k \leq \topu_k$.
Applying Corollary~\ref{cor:rate-of-convergence} shows that $\sum_{j=0}^k \gamma_j^2 G_{j+1:k}^{(2)} \leq \CSEQ{a_{22}/2} \gamma_{k+1}$, we get
\beq
\label{eq:mwbound_app}
\boxed{\mtw_{k+1} \leq 
\Czerow G_{0:k}^{(2)} + \Conew \gamma_{k+1} + \Ctwow \sum_{j=0}^k \gamma_j^2 G_{j+1:k}^{(2)} \mth_j, }
\eeq
where we recall $K_C := \max\{ C_\infty^2,1 \} + \sqrt{\dth \dw} C_\infty$, and
\beq \label{eq:mwbound_const}
\Czerow :=  \pw \mtw_0, \quad \Conew := \pw (\ttmv + \kappa^2 \ttmw) K_C  \CSEQ{a_{22}/2}, \quad \Ctwow := \pw K_C (\ttmv + \kappa^2 \ttmw) .
\eeq
This concludes the proof for Proposition~\ref{prop:mw}.

\paragraph{Bounding $\mthw_k$ (Proof of Proposition~\ref{prop:mthw})} We proceed by observing the following recursion of $\Omega_k$:
\beq
\begin{split}
\Omega_{k+1} &= (\Id - \beta_kB_{11}^k) \Omega_{k} (\Id - \gamma_kB_{22}^k)^\top - \beta_k A_{12} \Sigma_k (\Id - \gamma_kB_{22}^k)^\top \\
& +\beta_k\gamma_k \E[V_{k+1}W_{k+1}^\top] + \beta_k^2 \E[V_{k+1}V_{k+1}^\top]C_k^\top.
\end{split}
\eeq
Repeatedly applying the recursion gives
\begin{align}
\Omega_{k+1} &= \ProdB_{0:k}^{(1)} \Omega_0 \left( \ProdB_{0:k}^{(2)} \right)^\top - \sum_{j=0}^{k}\beta_j \ProdB_{j+1:k}^{(1)} A_{12} \Sigma_j \left( \ProdB_{j:k}^{(2)}\right)^\top \\
& + \sum_{j=0}^{k} \beta_j \gamma_j \ProdB_{j+1:k}^{(1)} \E\left[ V_{j+1} W_{j+1}^\top\right] \left(\ProdB_{j+1:k}^{(2)} \right)^\top + \sum_{j=0}^{k}\beta_j^2 \ProdB_{j+1:k}^{(1)} \E\left[ V_{j+1} V_{j+1}^\top \right] C_{j}^\top \left(\ProdB_{j+1:k}^{(2)} \right)^\top. \notag
\end{align}
The contraction properties \eqref{eq:contraction-B22}, \eqref{eq:contraction-B11}  result in
\begin{align}
\mthw_{k+1} & \leq \pwth \Big\{ G_{0:k}^{(1)} G_{0:k}^{(2)} \mthw_0 +  \normop{A_{12}}[] \sum_{j=0}^{k}\beta_j G_{j+1:k}^{(1)} G_{j:k}^{(2)}  \mtw_j \Big\} \label{eq:mtwbound_immediate} \\
& + \pwth \Big\{ \sum_{j=0}^{k} \beta_j \gamma_j G_{j+1:k}^{(1)}G_{j+1:k}^{(2)} \normop{\E [ V_{j+1} W_{j+1}^\top]} + \Cinfty \sum_{j=0}^{k}\beta_j^2 G_{j+1:k}^{(1)} G_{j+1:k}^{(2)} \normop{\E[ V_{j+1} V_{j+1}^\top]}[] \Big\} \notag
\end{align}
Applying \eqref{eq:mwbound_app}, we bound the third last term of \eqref{eq:mtwbound_immediate} as
\beq \label{eq:mtwbd_template}
\begin{split}
& \sum_{j=0}^{k}\beta_j G_{j+1:k}^{(1)} G_{j:k}^{(2)} \mtw_j
\leq \sum_{j=0}^{k}\beta_j G_{j+1:k}^{(1)} G_{j:k}^{(2)} \big(
\Czerow G_{0:j-1}^{(2)} + \Conew \gamma_{j} + \Ctwow \sum_{i=0}^{j-1} \gamma_i^2 G_{i+1:j-1}^{(2)} \mth_i
\big) \\
& \leq \frac{2 \Czerow G_{0:k}^{(2)}}{a_\Delta} + \Conew \sum_{j=0}^k \beta_j G_{j:k}^{(2)} \gamma_j + \Ctwow \sum_{j=0}^k \beta_j G_{j+1:k}^{(1)} G_{j:k}^{(2)} \sum_{i=0}^{j-1} \gamma_i^2 G_{i+1:j-1}^{(2)} \mth_i
\end{split}
\eeq
where we have used Lemma~\ref{lem:bsum} and  $G_{j+1:k}^{(1)} \leq 1$ in the last inequality.
Applying Corollary~\ref{cor:rate-of-convergence} and  Lemma~\ref{lem:summation-lemma}, A\ref{assum:stepsize} to the second and the last term on the right hand side, respectively, we obtain the following upper bound:
\beq \label{eq:w_term}
\begin{split}
\sum_{j=0}^{k}\beta_j G_{j+1:k}^{(1)} G_{j:k}^{(2)} \mtw_j &
\leq \frac{2 \Czerow G_{0:k}^{(2)}}{a_\Delta} + \Conew \CSEQ{a_{22}/2} \beta_{k+1} + \frac{2 \Ctwow}{a_\Delta} \sum_{i=0}^k \gamma_i^2 G_{i+1:k}^{(2)} \mth_i.
\end{split}
\eeq
Applying \eqref{eq:noise:constantsVW}, we bound the second last term of \eqref{eq:mtwbound_immediate} as 
\beq
\label{eq:mtwbound2} \begin{split}
& \sum_{j=0}^{k} \beta_j \gamma_j G_{j+1:k}^{(1)}G_{j+1:k}^{(2)} \normop{\E [ V_{j+1} W_{j+1}^\top]}
\leq 
\ttmVW \sum_{j=0}^{k} \beta_j \gamma_j G_{j+1:k}^{(1)}G_{j+1:k}^{(2)} \big( 1 + \mth_j + \mtw_j \big) \\
& \leq \ttmVW \Big\{ \sum_{j=0}^k \beta_j \gamma_j G_{j+1:k}^{(2)} + \sum_{j=0}^k \beta_j \gamma_j G_{j+1:k}^{(2)} \mth_j + \sum_{j=0}^k \beta_j \gamma_j G_{j+1:k}^{(1)} G_{j+1:k}^{(2)} \mtw_j \Big\} \\
& \leq \ttmVW \Big\{ \CSEQ{a_{22}/2} \beta_{k+1} + \kappa \sum_{j=0}^k \gamma_j^2  G_{j+1:k}^{(2)} \mth_j + \sum_{j=0}^k \beta_j \gamma_j G_{j+1:k}^{(1)} G_{j+1:k}^{(2)} \mtw_j \Big\}
\end{split}
\eeq
where the last inequality applied Corollary~\ref{cor:rate-of-convergence} again. We observe
\beq \label{eq:mtwbound3}
\begin{split}
\sum_{j=0}^{k} \beta_j \gamma_j G_{j+1:k}^{(1)} G_{j+1:k}^{(2)} \mtw_j &
\leq \frac{ \gamma_0 }{1 - \gamma_0 a_{22}/2} \sum_{j=0}^{k}\beta_j G_{j+1:k-1}^{(1)} G_{j:k-1}^{(2)} \mtw_j
\end{split}
\eeq
Thirdly, we repeat the calculations above and exploit $\beta_k \leq \kappa \gamma_k$ to bound
\beq \label{eq:mtwbound4} \begin{split}
& \sum_{j=0}^{k} \beta_j^2 G_{j+1:k}^{(1)}G_{j+1:k}^{(2)} \normop{\E [ V_{j+1} V_{j+1}^\top]}
\leq 
\ttmv \sum_{j=0}^{k} \beta_j^2 G_{j+1:k}^{(1)}G_{j+1:k}^{(2)} \big( 1 + \mth_j + \mtw_j \big) \\
& \leq \ttmv \Big\{ \sum_{j=0}^k \beta_j^2 G_{j+1:k}^{(1)} + \sum_{j=0}^k \beta_j^2 G_{j+1:k}^{(2)} \mth_j + \sum_{j=0}^k \beta_j^2 G_{j+1:k}^{(1)} G_{j+1:k}^{(2)} \mtw_j \Big\} \\
& \leq \ttmv \Big\{ \kappa \CSEQ{a_{22}/2} \beta_{k+1} + \kappa^2 \sum_{j=0}^k \gamma_j^2  G_{j+1:k}^{(2)} \mth_j + \kappa \sum_{j=0}^k \beta_j \gamma_j G_{j+1:k}^{(1)} G_{j+1:k}^{(2)} \mtw_j \Big\}.
\end{split}
\eeq
Combining \eqref{eq:w_term}, \eqref{eq:mtwbound2}, \eqref{eq:mtwbound3}, \eqref{eq:mtwbound4}, we conclude that
\beq \label{eq:mtwbound_app}
\boxed{ \mthw_{k+1} \leq \Czerotw G_{0:k}^{(2)} + \Conetw \beta_{k+1} + \Ctwotw \sum_{j=0}^k \gamma_j^2 G_{j+1:k}^{(2)} \mth_j }
\eeq
where
\beq \label{eq:mthwbound_const}
\begin{split}
& \Czerotw := \pwth  \Big( \mthw_0 + \normop{A_{12}}[] \frac{2 \Czerow}{a_\Delta} + (\ttmVW + \kappa C_\infty \ttmv) \frac{ 2 \gamma_0 \Czerow }{a_\Delta(1-\gamma_0 a_{22}/2)} \Big), \\
& \Conetw := \pwth  \CSEQ{a_{22}/2} \Big( \Conew \big( \normop{A_{12}}[] + \frac{\gamma_0}{1-\gamma_0 a_{22}/2} ( \ttmVW + C_\infty \kappa \ttmv ) \big)  + \ttmVW + C_\infty \kappa \ttmv \Big), \\
& \Ctwotw := \pwth \Big( \frac{2 \Ctwow}{a_\Delta} \big( \normop{A_{12}}[] + \frac{ \gamma_0 }{1 - \gamma_0 a_{22}/2 } (\ttmVW + C_\infty \kappa \ttmv ) \big) + \kappa \big( \ttmVW + C_\infty \kappa \ttmv \big) \Big).
\end{split}
\eeq
This concludes the proof of Proposition~\ref{prop:mthw}.

\paragraph{Bounding $\mth_k$ (Proof of Proposition~\ref{prop:mth})} We observe the following recursion:
\[
\begin{split}
& \CPE{ \ttheta_{k+1} \ttheta_{k+1}^\top }{\F_k}  = ( \Id - \beta_k B_{11}^k ) \CPE{ \ttheta_{k} \ttheta_{k}^\top }{\F_k} ( \Id - \beta_k B_{11}^k )^\top
+ \beta_k^2 A_{12} \CPE{ \tw_k \tw_k^\top }{ \F_k } A_{12}^\top \\
& \quad + \beta_k^2 \CPE{ V_{k+1} V_{k+1}^\top }{ \F_k } - \beta_k \Big( (\Id - \beta_k B_{11}^k) \CPE{ \ttheta_k \tw_k^\top }{\F_k} A_{12}^\top + A_{12} \CPE{ \tw_k \ttheta_k^\top }{\F_k} (\Id - \beta_k B_{11}^k )^\top \Big)
\end{split}
\]
Taking total expectations and evaluating the recursion gives
\[
\begin{split}
\Theta_{k+1} & = \ProdB_{0:k}^{(1)} \Theta_0 ( \ProdB_{0:k}^{(1)} )^\top +  \sum_{j=0}^k \beta_j^2 \ProdB_{j+1:k}^{(1)} (A_{12} \Sigma_j A_{12}^\top + \E[V_{j+1}V_{j+1}^\top]) \big(\ProdB_{j+1:k}^{(1)}\big)^\top \\
& - \sum_{j=0}^k \beta_j       \ProdB_{j+1:k}^{(1)}  ( (\Id - \beta_j B_{11}^j) \Omega_j A_{12}^\top + A_{12} \Omega_j^\top (\Id - \beta_j B_{11}^j)^\top ) \big(\ProdB_{j+1:k}^{(1)}\big)^\top 
\end{split}
\]
The above implies
\beq \label{eq:mth_immediate}
\begin{split}
\mth_{k+1} & \le \pth \bigg\{ (G_{0:k}^{(1)})^2 \mth_0 + 2\normop{A_{12}}[] \sum_{j=0}^k \beta_j (G_{j+1:k}^{(1)})^2 (1 - \beta_j a_\Delta/2) \mthw_j  \bigg\} \\
& + \pth \sum_{j=0}^{k} \beta_j^2 (G_{j+1:k}^{(1)})^2 \big( \normop{A_{12}}[]^2 \mtw_j + \| \E [ V_{j+1}V_{j+1}^\top ] \| \big)
\end{split} 
\eeq
Applying \eqref{eq:noise:constants} and Corollary~\ref{cor:rate-of-convergence} yield
\beq \label{eq:mtbound1}
\begin{split}
\mth_{k+1} & \leq \pth \Big\{ (G_{0:k}^{(1)})^2 \mth_0 + \ttmv \CSEQ{a_\Delta/2} \beta_{k+1} + \ttmv \sum_{j=0}^k \beta_j^2 (G_{j+1:k}^{(1)})^2 \mth_j \Big\}
\\ 
& + 2 \pth \| A_{12} \| \sum_{j=0}^k \beta_j G_{j:k}^{(1)} G_{j+1:k}^{(1)} \mthw_j + \pth \big(\| A_{12} \|^2 + \ttmv \big) \sum_{j=0}^{k} \beta_j^2 (G_{j+1:k}^{(1)})^2  \mtw_j ,
\end{split} 
\eeq
Applying \eqref{eq:mtwbound_app}, we can bound the second last term in \eqref{eq:mtbound1} as
\[
\begin{split}
& \sum_{j=0}^k \beta_j G_{j:k}^{(1)} G_{j+1:k}^{(1)} \mthw_j
\leq \sum_{j=0}^k \beta_j G_{j:k}^{(1)} G_{j+1:k}^{(1)} \Big( \Czerotw G_{0:j-1}^{(2)} + \Conetw \beta_{j} + \Ctwotw \sum_{i=0}^{j-1} \gamma_i^2 G_{i+1:j-1}^{(2)} \mth_i \Big) \\
& \leq \frac{2 \Czerotw}{a_\Delta} G_{0:k}^{(1)} +  \Conetw \CSEQ{a_{22}/2} \beta_{k+1} + \Ctwotw \sum_{j=0}^k \beta_j G_{j:k}^{(1)} G_{j+1:k}^{(1)} \sum_{i=0}^{j-1} \gamma_i^2 G_{i+1:j-1}^{(2)} \mth_i
\end{split}
\]
where the second inequality is derived using Corollary~\ref{cor:rate-of-convergence}. To bound the last term in the above we start from the following observation. 
Indeed, taking into account definition of $\beta_\infty$ in \eqref{eq:gamma_martingale}, we get
$(1 - \beta_\ell a_\Delta/2)^{-1} \le 1 + \beta_\ell a_\Delta$.
This inequality and assumption A\ref{assum:stepsize}-2 yield that
\beq \label{eq:atildebound}
\begin{split}
& \frac{\gamma_{\ell-1}}{\gamma_{\ell}} \frac{1 - \gamma_\ell a_{22}/2}{1 - \beta_\ell a_\Delta/2} \leq (1 + \epsilon_\gamma \gamma_\ell)(1 - \gamma_\ell a_{22}/2) ( 1+ \beta_\ell a_\Delta) \\ 
&  \le 1 - \gamma_\ell \bigg\{a_{22}/2 - a_\Delta \kappa  - \epsilon  \bigg \} + \epsilon_\gamma \gamma_\ell^2 \bigg\{\kappa a_\Delta - a_{22}/2\bigg\} \le 1 - (1/8) a_{22} \gamma_\ell, 
\end{split}
\eeq
since $\kappa_\infty \le (1/4) a_{22}/a_\Delta$, see~\eqref{eq:condition-kappa}. 
We observe the following chain
\beq \label{eq:mthbound2}
\begin{split}
& \sum_{j=0}^k \beta_j G_{j:k}^{(1)} G_{j+1:k}^{(1)} \sum_{i=0}^{j-1} \gamma_i^2 G_{i+1:j-1}^{(2)} \mth_i = \sum_{i=0}^{k-1} \gamma_i^2  \mth_i \sum_{j=i+1}^k \beta_j G_{j:k}^{(1)} G_{j+1:k}^{(1)} G_{i+1:j-1}^{(2)} \\
& = \sum_{i=0}^{k-1} \gamma_i^2 G_{i+1:k}^{(1)}  \mth_i \sum_{j=i+1}^k \beta_j G_{j+1:k}^{(1)} \frac{G_{i+1:j-1}^{(2)}}{G_{i+1:j-1}^{(1)}} \overset{(a)}{\leq} \sum_{i=0}^{k-1} \beta_i  \gamma_i G_{i+1:k}^{(1)}  \mth_i \sum_{j=i+1}^k \gamma_{j-1} \prod_{\ell=i+1}^{j-1} \frac{\gamma_{\ell-1}}{\gamma_\ell} \frac{G_{i+1:j-1}^{(2)}}{G_{i+1:j-1}^{(1)}} \\
& \overset{(b)}{\leq} \sum_{i=0}^{k-1} \beta_i  \gamma_i G_{i+1:k}^{(1)}  \mth_i \sum_{j=i+1}^k \gamma_{j-1}  \prod_{\ell=i+1}^{j-1} (1 - (1/8)\gamma_\ell a_{22} ) \overset{(c)}{\leq} \frac{8 \varsigma}{a_{22}} \sum_{i=0}^{k-1} \beta_i  \gamma_i G_{i+1:k}^{(1)}  \mth_i.
\end{split}
\eeq
where (a) is due to $\beta_j \leq \beta_i$ and $G_{j+1:k}^{(1)} \leq 1$, (b) is due to \eqref{eq:atildebound}, (c) is due to A\ref{assum:stepsize}-1 and $\sum_{j=i+1}^k \gamma_j \prod_{\ell=i+1}^{j-1} (1 - \gamma_\ell \tilde{a} ) \leq (8/a_{22})$ for any $i,k$.

Moreover, applying \eqref{eq:mwbound_app}, we can bound the last term of \eqref{eq:mtbound1} as:
\[
\begin{split}
& \sum_{j=0}^{k} \beta_j^2 (G_{j+1:k}^{(1)})^2  \mtw_j \leq \sum_{j=0}^{k} \beta_j^2 (G_{j+1:k}^{(1)})^2 \Big( \Czerow G_{0:j-1}^{(2)} + \Conew \gamma_j + \Ctwow \sum_{i=0}^{j-1} \gamma_i^2 G_{i+1:j-1}^{(2)} \mth_i \Big) \\
& \leq \frac{\Czerow G_{0:k}^{(1)}}{1-\beta_0 a_\Delta/2} \sum_{j=0}^k \beta_j^2 G_{j+1:k}^{(1)}  + \gamma_0 \Conew \sum_{j=0}^k \beta_j^2 G_{j+1:k}^{(1)} + \Ctwow \sum_{j=0}^{k} \beta_j^2 (G_{j+1:k}^{(1)})^2 \sum_{i=0}^{j-1} \gamma_i^2 G_{i+1:j-1}^{(2)} \mth_i \\
& \leq \Big( \frac{\Czerow G_{0:k}^{(1)}}{1-\beta_0 a_\Delta/2} + \gamma_0 \Conew \Big) \CSEQ{a_{22}/2} \beta_{k+1} + \Ctwow \sum_{j=0}^{k} \beta_j^2 (G_{j+1:k}^{(1)})^2 \sum_{i=0}^{j-1} \gamma_i^2 G_{i+1:j-1}^{(2)} \mth_i,
\end{split}
\]
where the last inequality is due to Corollary~\ref{cor:rate-of-convergence}. In addition, similar to \eqref{eq:mthbound2}, we can derive the bound
\[
\begin{split}
& \sum_{j=0}^{k} \beta_j^2 (G_{j+1:k}^{(1)})^2 \sum_{i=0}^{j-1} \gamma_i^2 G_{i+1:j-1}^{(2)} \mth_i  \leq \frac{ (8 \varsigma)/a_{22} }{1 - \beta_0 a_\Delta/2} \sum_{i=0}^{k} \beta_i^2 \gamma_i G_{i+1:k}^{(1)} \mth_i
\end{split}
\]
Substituting the above inequalities into \eqref{eq:mtbound1} leads to
\beq \label{eq:mthbound_app}
\boxed{\mth_{k+1} \leq \Czerot G_{0:k}^{(1)} + \Conet \beta_{k+1} 
+ \Ctwot \sum_{j=0}^{k} \gamma_j \beta_j G_{j+1:k}^{(1)} \mth_j}
\eeq
where 
\beq \label{eq:martingale_finconst}
\begin{split}
& \Czerot := \pth \Big( \mth_0 + \frac{4 \| A_{12} \| \Czerotw}{a_\Delta} \Big), \\
& \Conet := \pth \Bigg\{ \ttmv \CSEQ{a_\Delta/2} + 2 \| A_{12} \| \Conetw \CSEQ{a_{22}/2} +  ( \|A_{12}\|^2 + \ttmv) \Big( \gamma_0 \Conew + \frac{ \Czerow }{1 - \beta_0 a_\Delta/2} \Big) \CSEQ{a_{22}/2} \Bigg\}, \\
& \Ctwot:= \pth \Bigg\{ \frac{16 \varsigma \| A_{12} \| \Ctwotw }{a_{22}} + \ttmv + (\| A_{12}\|^2 + \ttmv) \frac{8 \Ctwow \varsigma / a_{22} }{1 - \beta_0 a_\Delta/2} \Bigg\}.
\end{split}
\eeq
This completes the proof for Proposition~\ref{prop:mth}. 

\paragraph{Completing the Proof of Theorem~\ref{theo:preliminary-bound-martingale}} We complete the proof by analyzing the convergence rate of $\mth_k$ using \eqref{eq:mthbound_app}. Consider the following recursion which upper bounds $\mth_k$:
\[
\opu_{k+1} = \Czerot G_{0:k}^{(1)} + \Conet \beta_{k+1} 
+ \Ctwot \sum_{j=0}^{k} \gamma_j \beta_j G_{j+1:k}^{(1)} \opu_j,
\]
where we have set $\opu_0 = \Czerot$.
Observe that
\[
\begin{split}
& \opu_{k+1} - (1-\beta_k a_\Delta/2) \opu_k = \Conet ( \beta_{k+1} - (1-\beta_k a_\Delta/2) \beta_k ) + \Ctwot \gamma_k \beta_k \opu_k \\
\Longleftrightarrow & \opu_{k+1} = (1 - \beta_k (a_\Delta / 2 - \Ctwot \gamma_k) ) \opu_k + \Conet ( \beta_{k+1} - \beta_k + \beta_k^2 a_\Delta/2 )
\end{split}
\]
Since $\gamma_k \leq \gamma_0 \leq \frac{a_\Delta}{4 \Ctwot}$, we have
\[
\opu_{k+1} \leq (1 - \beta_k a_\Delta/4) \opu_k + \Conet \beta_k^2 a_\Delta/2
\]
Evaluating the recursion gives
\[
\opu_{k+1} \leq \prod_{\ell=0}^k (1-\beta_\ell a_\Delta/4) \opu_0 + \Conet (a_\Delta/2) \sum_{j=0}^{k} \beta_j^2 \prod_{\ell=j+1}^k (1- \beta_\ell a_\Delta/4)
\]
Applying Corollary~\ref{cor:rate-of-convergence} shows $\sum_{j=0}^{k} \beta_j^2 \prod_{\ell=j+1}^k (1- \beta_\ell a_\Delta/4) \leq \CSEQ{a_\Delta/4} \beta_{k+1}$. Lastly, observing that $\mth_k \leq \opu_k$ gives
\beq
\label{eq:mthbound final}
\boxed{\mth_{k+1} \leq \Czerot \prod_{\ell=0}^{k}\Big(1-\beta_\ell \frac{a_\Delta}{4} \Big)  + \Conet \CSEQ{a_\Delta/4} \frac{a_\Delta}{2} \beta_{k+1}.}
\eeq    
To finish the proof of~\eqref{eq:convergence-slow}, we observe (i) the constant $\Czerot \leq {\rm C}_0^{\ttheta,\sf mtg} {\rm V}_0$ for some constant ${\rm C}_0^{\ttheta,\sf mtg}$,  
(ii) the inequality that $\E[ \| \theta_k - \theta^\star \|^2 ] \leq \dth \mth_k$, and (iii) setting the constant ${\rm C}_1^{\ttheta,\sf mtg} := \Conet \CSEQ{a_\Delta/4} (a_\Delta/2)$.

Our last endeavor is to prove~\eqref{eq:tracking-fast-component}. Observe that the tracking error $\hw_k := w_k - A_{22}^{-1} ( b_2 - A_{21} \theta_k)$ may be represented as 
\[
\begin{split}
\hw_k & = w_k - w^\star + w^\star - A_{22}^{-1} ( b_2 - A_{21} \theta_k) \\
& = \tw_k - C_{k-1} \ttheta_k + A_{22}^{-1} \big( (b_2 - A_{21} \theta^\star) - (b_2 - A_{21} \theta_k ) \big) = \tw_k - L_k \ttheta_k
\end{split}
\]
using the definitions in \eqref{eq:tilde_def}. This leads to the following estimate of 
$\mhr_k: = \normop{\E[\hw_k \hw_k^\top]}$:
\beq 
\label{eq:haw via tilde w}
\mhr_{k+1} \le 2 \mtw_{k+1} + 2 \| L_{k+1} \|^2 \mth_{k+1} \leq 2 \mtw_{k+1} + 2  \Linfty^2  \frac{\lambda_{\sf max}( Q_{22} )}{\lambda_{\sf min}( Q_\Delta )} \mth_{k+1}
\eeq
In particular, substituting~\eqref{eq:mthbound final} into~\eqref{eq:mwbound_app}, we obtain:
\[
\begin{split}
\mtw_{k+1} &\leq 
\Czerow G_{0:k}^{(2)} + \Conew \gamma_{k+1} + \Ctwow \sum_{j=0}^k \gamma_j^2 G_{j+1:k}^{(2)} \bigg\{ \Czerot \prod_{\ell=0}^{j-1} \Big(1-\beta_\ell \frac{a_\Delta}{4} \Big) + \Conet \CSEQ{a_\Delta/4} \frac{a_\Delta}{2} \beta_{j}\bigg\} \\
&\le \Czerow G_{0:k}^{(2)} +  \gamma_{k+1} \Bigg\{ \Conew + \Conet \CSEQ{a_\Delta/4} \frac{a_\Delta}{2} \Ctwow \CSEQ{a_{22}/2}  + \frac{\Ctwow \Czerot \CSEQ{a_{22}/2}}{1 - \beta_0 a_\Delta/4} \prod_{\ell=0}^k \Big(1-\beta_\ell \frac{a_\Delta}{4} \Big) \Bigg\}
\end{split}
\]
where the last inequality is due to the observation $G_{j+1:k}^{(2)} \leq \prod_{i=j+1}^k (1-\gamma_i a_{22}/4)^2$ and the application of Corollary~\ref{cor:rate-of-convergence}. Furthermore using $G_{0:k}^{(2)} \leq \prod_{\ell=0}^k (1 - \beta_\ell a_\Delta/4)$ and applying \eqref{eq:haw via tilde w} gives 
\beq \label{eq:mhwbound final}
\boxed{\mhr_{k+1} \leq {\rm C}_0^{w} \prod_{\ell=0}^k (1-\beta_\ell a_\Delta/4) + {\rm C}_1^{\hw,\sf mtg} \gamma_{k+1}, }
\eeq
where
\beq \label{eq:mhw_const}
\begin{split}
& {\rm C}_0^w := 2 \Big\{ \Linfty^2  \frac{\lambda_{\sf max}( Q_{22} )}{\lambda_{\sf min}( Q_\Delta )} \Czerot + \frac{\CSEQ{a_{22}/2} \Ctwow \Czerot}{1 - \beta_0 a_\Delta/4} + \Czerow \Big\} \\
& {\rm C}_1^{\hw,\sf mtg} := 2 \Big\{ \kappa \Linfty^2  \frac{\lambda_{\sf max}( Q_{22} )}{\lambda_{\sf min}( Q_\Delta )} \CSEQ{a_\Delta/4} \frac{a_\Delta}{2} \Conet + \Conew +  \CSEQ{a_\Delta/4} \frac{a_\Delta}{2} \Ctwow \CSEQ{a_{22}/2} \Conet \Big\}
\end{split}
\eeq
We conclude the proof for Theorem~\ref{theo:preliminary-bound-martingale} by observing that ${\rm C}_0^w \leq {\rm C}_0^{\hw,\sf mtg} {\rm V}_0$ for some constant ${\rm C}_0^{\hw,\sf mtg}$.

\subsection{Detailed Proofs of Theorem~\ref{theo:preliminary-bound-markov}}
To facilitate our discussions next, define a few additional constants as:
\beq
\begin{split}
& \widetilde{G}_{m:n}^{(1)} := \prod_{i=m}^n (1 - \beta_i a_\Delta / 4), \quad \widetilde{G}_{m:n}^{(2)} := \prod_{i=m}^n (1 - \gamma_i a_{22} / 4) \\
& \Boneinfty := \| \Delta \| + \sqrt{\lambda_{\sf min}(Q_\Delta)^{-1} \lambda_{\sf max}(Q_{22})} \Linfty \| A_{12} \|,~~ \Btwoinfty := \kappa \Cinfty \| A_{12} \| + \| A_{22} \| .
\end{split}
\eeq
Before we begin the proof, notice by observing the form of \eqref{eq:vkwk_rewrite} that that A\ref{assum:bound-conditional-variance} is satisfied by the Markovian noise through setting
\[
\mv = \overline{\rm b} \vee ( 3 \overline{\rm A} ),~\mw = \overline{\rm b} \vee ( 3 \overline{\rm A} ),
\]
and furthermore \eqref{eq:noise:constants} is satisfied with $\ttmv, \ttmw, \ttmVW$ defined in \eqref{eq:ttm_def} and the above $\mv, \mw$. Moreover, for $i=0,1$, the second order moments of the decomposed noise satisfy:
\begin{align}
\label{eq:noise:constants:0}
& \normop{\E[ V_{k+1}^{(i)} (V_{k+1}^{(i)})^\top]}[] \leq \ttmvi ( 1 + \mth_k + \mtw_k),~\normop{\E[ W_{k+1}^{(i)} (W_{k+1}^{(i)})^\top]}[] \leq \ttmwi (1 +  \mth_k + \mtw_k), \\
\label{eq:noise:constantsVW:0}
& \normop{\E[ V_{k+1}^{(i)} (W_{k+1}^{(i)})^\top]}[] \leq \ttmVWi (1 + \mth_k + \mtw_k),
\end{align}
for some constants $\ttmvi$, $\ttmwi$, $\ttmVWi$, $i=1,2$. 
We proceed with the proof for Theorem~\ref{theo:preliminary-bound-markov} as follows.
\paragraph{Bounding $\mtw_k$ (Proof of Lemma~\ref{lem:m1interbd} and Proposition~\ref{prop:mw_markov})} 
Repeating the analysis that leaded to \eqref{eq:mtw_intermediate} and using the martingale property of $V_{k+1}^{(0)}, W_{k+1}^{(0)}$ shows that 
\beq \label{eq:mtw0bd_app}
\| \E[ \tw^{(0)}_{k+1} ( \tw^{(0)}_{k+1} )^\top ] \| \leq  (G_{0:k}^{(2)})^2 \pw \mtw_0 +    \tC_0 \sum_{j=0}^k \gamma_j^2 (G_{j+1:k}^{(2)})^2 (1 + \mtw_j + \mth_j ) ,
\eeq
where
\beq \label{eq:tC0}
\tC_0 = \pw \Big[\{ \Cinfty^2 \vee 1 + \sqrt{\dw \dth} \Cinfty \} \{ \ttmv+ \kappa^2\ttmwzero \vee \ttmvt+\kappa^2\ttmwzero\} \Big] \vee \big[ \ttmvzero + \kappa^2 \ttmwzero \big] .
\eeq
Our next endeavor is to bound $\E[ \| \tw^{(1)}_{k+1} \|^2 ]$. Evaluating the recursion in \eqref{eq:recur_wt} gives
\beq \label{eq:solved_wt}
\tw^{(1)}_{k+1} = \ProdB_{0:k}^{(2)} \tw^{(1)}_0 + \sum_{j=0}^k \gamma_j \ProdB_{j+1:k}^{(2)} ( W_{j+1}^{(1)} + C_j V_{j+1}^{(1)} )
\eeq
Set $\markovtermt_j^{b_i} := \markovterm_j^{b_i} + \Markovterm_j^{A_{i1}} \theta^\star + \Markovterm_j^{A_{i2}} w^\star$ for $i=1,2$.
Using the definitions, the combined noise has the following expression 
\beq \notag
\begin{split}
& W_{j+1}^{(1)} + C_j V_{j+1}^{(1)} = \big(\markovtwotb_j - \markovtwotb_{j+1}\big) + C_j \big( \markovonetb_j - \markovonetb_{j+1} \big) + \Big\{ \markovtwoB_j - \markovtwoB_{j+1} + C_j \big( \markovoneB_j - \markovoneB_{j+1} \big) \Big\} \tw_j \\
& + \Big\{ \markovtwoA_j - \markovtwoA_{j+1} - (\markovtwoB_j - \markovtwoB_{j+1} ) C_{j-1} + C_j ( \markovoneA_j - \markovoneA_{j+1}) - C_j ( \markovoneB_j - \markovoneB_{j+1}) C_{j-1} \Big\} \ttheta_j 
\end{split}
\eeq
Upon some algebra manipulations that are detailed in Appendix~\ref{subsec:aux_markov}, we deduce that the combined noise may be decomposed as:
\beq \label{eq:WVdecompose}
\begin{split}
W_{j+1}^{(1)} + C_j V_{j+1}^{(1)} & \equiv \markovterm_j^{WV} - \markovterm_{j+1}^{WV}   + \simplewtA_j \ttheta_j + \simplewwA_j \tw_j  \\
& \quad + \big(\simplewtdiff_j \ttheta_j - \simplewtdiff_{j+1} \ttheta_{j+1}\big)  + 
\big(\simplewwdiff_j \tw_j - \simplewwdiff_{j+1} \tw_{j+1}\big) \\
& \quad  + \simplewtiter \big( \ttheta_{j+1} - \ttheta_j \big) + \simplewwiter \big( \tw_{j+1} - \tw_j \big),
\end{split}
\eeq
where it holds that 
\beq \notag
\| \markovterm_j^{WV}\| \vee \| \simplewtdiff_j \| \vee \| \simplewwdiff_j \| \vee \| \simplewwiter \| \vee \|\simplewtiter \| \leq \boundwsim,~~
\| \simplewtA_j \| \vee \| \simplewwA_j \| \leq \boundwsim \gamma_j,
\eeq
with
\beq \label{eq:boundwsim}
\boundwsim := \max\{ \overline{\rm b} (1 + \Cinfty), \overline{\rm A}(1 + 2 \Cinfty + \Cinfty^2), \overline{\rm A} C_2^U \CSEQ{a_{22}/2} ( 1 + \Cinfty ) (1 + \varsigma ) \}.
\eeq
Let us bound the second term in \eqref{eq:solved_wt} one by one as follows.
Using Lemma~\ref{lem:addsubtract}, we obtain
\[
\begin{split}
& \sum_{j=0}^k \gamma_j \ProdB_{j+1:k}^{(2)} \Big( \markovterm_j^{WV} - \markovterm_{j+1}^{WV}   + \big(\simplewtdiff_j \ttheta_j - \simplewtdiff_{j+1} \ttheta_{j+1} \big)  + \big( \simplewwdiff_j \tw_j - \simplewwdiff_{j+1} \tw_{j+1}\big) \Big) \\
& = \gamma_0 \ProdB_{1:k}^{(2)} \big( \markovterm_0^{WV} + \simplewtdiff_0 \ttheta_0 + \simplewwdiff_0 \tw_0 \big)  - \gamma_k \big( \markovterm_{k+1}^{WV} + \simplewtdiff_{k+1} \ttheta_{k+1} + \simplewwdiff_{k+1} \tw_{k+1} \big) \\
& + \sum_{j=1}^k \big( \gamma_j^2 B_{22}^j \ProdB_{j+1:k}^{(2)} + (\gamma_j - \gamma_{j-1}) \ProdB_{j:k}^{(2)} \big) \big( \markovterm_j^{WV} + \simplewtdiff_j \ttheta_j + \simplewwdiff_j \tw_j \big),
\end{split}
\]
Secondly, 
\[ \begin{split}
& \sum_{j=0}^k \gamma_j \ProdB_{j+1:k}^{(2)} \simplewtiter \big( \ttheta_{j+1} - \ttheta_j \big) = -\sum_{j=0}^k \gamma_j \beta_j \ProdB_{j+1:k}^{(2)} \simplewtiter \big( A_{12} \tw_{j+1} + V_{j+1} \big)
\end{split}
\]
\[ \begin{split}
& \sum_{j=0}^k \gamma_j \ProdB_{j+1:k}^{(2)} \simplewwiter \big( \tw_{j+1} - \tw_j \big) = -\sum_{j=0}^k \gamma_j^2 \ProdB_{j+1:k}^{(2)} \simplewwiter \big( W_{j+1} + C_j V_{j+1} \big)
\end{split}
\]
As a consequence of \eqref{eq:noise:constants:0}--\eqref{eq:noise:constantsVW:0}, we have
\beq \notag
\E [\| A_{12} \tw_{j+1} + V_{j+1} \|^2] \leq \ttmtcom \big( 1 + \mtw_j + \mth_j \big),~~
\E [\| W_{j+1} + C_j V_{j+1} \|^2] \leq \ttmwcom \big( 1 + \mtw_j + \mth_j \big)
\eeq
where
\[
  \ttmtcom := 2 \big\{ \|A_{12}\|^2 + \ttmv \big\}, \quad
  \ttmwcom := 2 ( \ttmw + \Cinfty \ttmv ).
\]
Noting that $\tw_0^{(1)} = 0$, taking Euclidean norm on both sides of \eqref{eq:solved_wt} yields
\beq \label{eq:bd1stordermarkov}
\begin{split}
\| \tw_{k+1}^{(1)} \| & \leq \sqrt{\pw} \Big\{ \boundwsim \big[ G_{1:k}^{(2)} \gamma_0  (1 + \| \ttheta_0 \| + \| \tw_0 \| ) + \gamma_k (1 + \| \ttheta_{k+1} \| + \| \tw_{k+1} \| ) \big] \Big\} \\
& \hspace{-.9cm} + \sqrt{\pw}  \boundwsim \Big\{ \sum_{j=1}^k G_{j+1:k}^{(2)} (\gamma_j^2 + \| \gamma_j^2 B_{22}^j + (\gamma_j-\gamma_{j-1})(\Id - B_{22}^j) \| )  (1 + \| \tw_j\| + \| \ttheta_j \|)  \Big\} \\
& \hspace{-.9cm} + \sqrt{\pw} \boundwsim \sum_{j=0}^k \gamma_j^2 G_{j+1:k}^{(2)} ( \kappa \| A_{12} \tw_{j+1} + V_{j+1} \| + \| W_{j+1} + C_j V_{j+1} \|)
\end{split}
\eeq
Note that for any sequence $(b_j)_{j \geq 0}$, the following inequality holds:
\beq \label{eq:jensen}
\Big( \sum_{j=0}^k \gamma_j^2 G_{j+1:k}^{(2)} b_j \Big)^2 \leq \big( \sum_{i=0}^k \gamma_i^2 G_{i+1:k}^{(2)} \big) \sum_{j=0}^k \gamma_j^2 G_{j+1:k}^{(2)} b_j^2
\leq  \gamma_{k+1} \CSEQ{a_{22}/2}  \sum_{j=0}^k \gamma_j^2 G_{j+1:k}^{(2)} b_j^2,
\eeq
where the first inequality is due to Jensen's inequality and the second inequality is due to Corollary~\ref{cor:rate-of-convergence}.
Using $\| B_{22}^j \| \leq \Btwoinfty$, $|\gamma_j - \gamma_{j-1}| \leq \frac{a_{22}}{8} \gamma_j^2$ [cf.~it is a direct consequence of A\ref{assum:stepsize}-2 and the fact $\gamma_j \leq \gamma_{j-1}$] and applying the above inequality to \eqref{eq:bd1stordermarkov} yields
\beq \notag
\begin{split}
\| \tw_{k+1}^{(1)} \|^2 & \leq 9\pw (\boundwsim)^2 \Big\{ (G_{1:k}^{(2)})^2 \gamma_0^2 (1+\| \tw_0\| + \| \ttheta_0 \| )^2 + \gamma_k^2 (1 + \| \ttheta_{k+1} \|^2 + \| \tw_{k+1} \|^2) \Big\}
\\
& + 9\pw (\boundwsim)^2 (\Btwoinfty + \frac{a_{22}}{8} + 1)^2 \CSEQ{a_{22}/2} \gamma_{k+1} \Big\{ \sum_{j=0}^k \gamma_j^2 G_{j+1:k}^{(2)} (1 + \| \tw_j\|^2  + \| \ttheta_j \|^2 ) \Big\} \\
& + 9\pw (\boundwsim)^2  \CSEQ{a_{22}/2} \gamma_{k+1} \sum_{j=0}^k \gamma_j^2 G_{j+1:k}^{(2)} \big( \kappa \| A_{12} \tw_{j+1} + V_{j+1} \|^2 + \| W_{j+1} + C_j V_{j+1} \|^2 \big)
\end{split}
\eeq
Using the fact $\E[ \| \tw_k \|^2 ] \leq \dw \| \E[ \tw_k \tw_k^\top ] \|$, $\E[ \| \ttheta_k \|^2 ] \leq \dth \| \E[ \ttheta_k \ttheta_k^\top ] \|$ (cf.~Corollary~\ref{coro:matrix-trace-normop}), taking the expectation on both sides yields 
\[
\begin{split}
& \E[ \| \tw_{k+1}^{(1)} \|^2] \leq 9\pw (\boundwsim)^2 \Big\{ (G_{1:k}^{(2)})^2 \gamma_0^2 (1+\| \tw_0\| + \| \ttheta_0 \| )^2 + \gamma_k^2 (1 + \dth \mth_{k+1} + \dw \mtw_{k+1} ) \Big\}
\\
& + 9\pw (\boundwsim)^2 (\Btwoinfty + \frac{a_{22}}{8} + 1)^2 \CSEQ{a_{22}/2} \gamma_{k+1}  \Big\{ \sum_{j=0}^k \gamma_j^2 G_{j+1:k}^{(2)} (1 + \dth \mth_j + \dw \mtw_j ) \Big\} \\
& + 9\pw (\boundwsim)^2 ( \kappa \ttmtcom + \ttmwcom) \CSEQ{a_{22}/2} \gamma_{k+1}  \sum_{j=0}^k \gamma_j^2 G_{j+1:k}^{(2)} \big( 1 + \mth_j + \mtw_j \big)
\end{split}
\]
The above simplifies to
\beq \label{eq:mtw1bd_app}
\begin{split}
\E[ \| \tw_{k+1}^{(1)} \|^2] & \leq \tC_1 (G_{0:k}^{(2)})^2  + \tC_2 \gamma_k^2 (\mth_{k+1} + \mtw_{k+1} ) + \tC_3 \gamma_{k+1} \sum_{j=0}^k \gamma_j^2 G_{j+1:k}^{(2)} ( \mth_j + \mtw_j ) + \tC_4 \gamma_{k}^2
\end{split}
\eeq
where we have used $\gamma_{k+1} \leq \gamma_k$ and defined
\beq \label{eq:tC3}
\begin{split}
& \tC_1 = 9 \pw ( \boundwsim)^2 (1 + \| \tw_0 \| + \| \ttheta_0 \|)^2 ( \gamma_0 / (1-\gamma_0 a_{22}/2) )^2,\\
& \tC_2 = 9 \pw ( \boundwsim )^2 (\dth \vee \dw), \\
& \tC_3 = 9 \pw ( \boundwsim)^2 \CSEQ{a_{22}/2} \big[ (\dth \vee \dw) (\Btwoinfty + \frac{a_{22}}{8} + 1)^2 + \big( \kappa \ttmtcom + \ttmwcom\big)  ], \\
& \tC_4 = \tC_2 + \CSEQ{a_{22}/2} \tC_3.
\end{split}
\eeq 
Notice that the intermediate results \eqref{eq:mtw0bd_app}, \eqref{eq:mtw1bd_app} lead to Lemma~\ref{lem:m1interbd}.

Compared to \eqref{eq:mtw0bd_app}, an important feature of the bound \eqref{eq:mtw1bd_app} is that the latter contains an extra $\gamma_k$ factor. This indicates that the iterate $\tw_{k+1}^{(1)}$ driven by Markovian noise decays at a faster rate. As we will demonstrate below, the effect of the additional Markov noise is thus negligible compared to the martingale noise driven terms.

As the operator norm $\| \cdot \|$ is convex, applying Jensen's inequality yields
\[
\mtw_{k+1} \leq 2 \| \E [ \tw_{k+1}^{(1)} (\tw_{k+1}^{(1)})^\top ] \| + 2 \E  [ \tw_{k+1}^{(0)} (\tw_{k+1}^{(0)})^\top ] \| \leq 2 \E[ \| \tw^{(1)}_{k+1} \|^2 ] + 2 \| \E  [ \tw_{k+1}^{(0)} (\tw_{k+1}^{(0)})^\top ] \|
\]
Substituting \eqref{eq:mtw0bd_app} and \eqref{eq:mtw1bd_app} gives
\beq \notag
\begin{split}
\mtw_{k+1} & \leq 2 \Big\{ \tC_1 (G_{0:k}^{(2)})^2 + \tC_2 \gamma_k^2 ( \mtw_{k+1} + \mth_{k+1} ) + \tC_3 \gamma_{k+1} \sum_{j=0}^k \gamma_j^2 G_{j+1:k}^{(2)} (\mtw_j + \mth_j ) + \tC_4 \gamma_k^2 \Big\} \\
& + 2 \Big\{ \pw  ( G_{0:k}^{(2)} )^2 \mtw_0 + \tC_0 \CSEQ{a_{22}/2} \gamma_{k+1} + \tC_0 \sum_{j=0}^k \gamma_j^2 G_{j+1:k}^{(2)} (\mtw_j + \mth_j ) \Big\}
\end{split}
\eeq
The assumption on step size in \eqref{eq:gamma_markov} guarantees $2 \tC_2 \gamma_k^2 \leq (1/2)$, which further implies
\beq \notag
\begin{split}
\mtw_{k+1} & \leq 4 \Big\{ \tC_1 (G_{0:k}^{(2)})^2 + \tC_2 \gamma_k^2 \mth_{k+1} + \tC_3 \gamma_{k+1} \sum_{j=0}^k \gamma_j^2 G_{j+1:k}^{(2)} (\mtw_j + \mth_j ) + \tC_4 \gamma_k^2 \Big\} \\
& + 4 \Big\{ \pw  ( G_{0:k}^{(2)} )^2 \mtw_0 + \tC_0 \CSEQ{a_{22}/2} \gamma_{k+1} + \tC_0 \sum_{j=0}^k \gamma_j^2 G_{j+1:k}^{(2)} (\mtw_j + \mth_j ) \Big\}
\end{split}
\eeq
Like in the proof of Theorem~\ref{theo:preliminary-bound-martingale}, we set
\beq \notag
\opu_{k+1} =  G_{0:k}^{(2)} (\tC_1 + \pw \mtw_0) + \tC_0 \CSEQ{a_{22}/2} \gamma_{k+1} + \sum_{j=0}^k \gamma_j^2 G_{j+1:k}^{(2)} \big(\tC_3 \gamma_{j}  + \tC_0 \big) (\opu_j + \mth_j ) 
\eeq
with $\opu_0 = \tC_1 + \pw \mtw_0$. Through evaluating the recursion, we observe that for any $k \geq 0$, it holds
\beq \label{eq:mkopu_relate}
\begin{split}
\mtw_{k+1} & \leq 4 \Big\{ \opu_{k+1} +  \sum_{j=1}^{k+1} \gamma_{j-1}^2 G_{j:k}^{(2)} \big( \tC_2 \mth_{j} + \tC_4 ) \Big\} \\
& \leq 4 \Big\{ \opu_{k+1} + \gamma_k^2 (\tC_2 \mth_{k+1} + \tC_4) +  \sum_{j=1}^k \gamma_{j}^2 G_{j+1:k}^{(2)} \big( \tC_2 \mth_{j} + \tC_4 ) \Big\}
\end{split}
\eeq 
where the last inequality is due to A\ref{assum:stepsize}-2 which guarantees that $\gamma_{j-1}^2 G_{j:k}^{(2)} \leq \gamma_j^2 G_{j+1:k}$.
Moreover, the sequence $\opu_{k+1}$ can be expressed as follows:
\beq \notag
\begin{split}
\opu_{k+1} - (1 - \gamma_k a_{22}/2) \opu_k & = \tC_0 \CSEQ{a_{22}/2} ( \gamma_{k+1} - \gamma_k ( 1 - \gamma_k a_{22}/2)) + \gamma_k^2 \big(\tC_3 \gamma_k  + \tC_0 \big) (\opu_k + \mth_k) \\
& \leq \tC_0 \CSEQ{a_{22}/2} (a_{22}/2) \gamma_k^2 + \gamma_k^2 \big(\tC_3 \gamma_k  + \tC_0 \big) (\opu_k + \mth_k)
\end{split}
\eeq
As the step size satisfies $\gamma_k \big(\tC_3 \gamma_0  + \tC_0 \big) \leq \frac{a_{22}}{4}$,
we get
\beq
\begin{split}
& \opu_{k+1} \leq (1 - \gamma_k a_{22}/4) \opu_k + \gamma_k^2 \big(\tC_3 \gamma_0  + \tC_0 \big) \mth_k + \tC_0 \CSEQ{a_{22}/2} (a_{22}/2) \gamma_k^2 \\
& \Longrightarrow \opu_{k+1} \leq \widetilde{G}_{0:k}^{(2)} \opu_0 + \sum_{j=0}^k \gamma_j^2 \widetilde{G}_{j+1:k}^{(2)} \Big\{ \big(\tC_3 \gamma_0  + \tC_0 \big) \mth_j + (\tC_0 \CSEQ{a_{22}/2} (a_{22}/2)) \Big\}.
\end{split}
\eeq
Substituting the above into \eqref{eq:mkopu_relate} yields
\beq
\begin{split}
\mtw_{k+1} & \leq 4 \Big\{ \widetilde{G}_{0:k}^{(2)} \opu_0 + \sum_{j=0}^k \gamma_j^2 \widetilde{G}_{j+1:k}^{(2)} \big\{ \big(\tC_3 \gamma_0  + \tC_0 \big) \mth_j + (\tC_0 \CSEQ{a_{22}/2} (a_{22}/2)) \big\} \Big\} \\
& + 4 \Big\{ \gamma_k^2 (\tC_2 \mth_{k+1} + \tC_4) +  \sum_{j=1}^k \gamma_{j}^2 G_{j+1:k}^{(2)} \big( \tC_2 \mth_{j} + \tC_4 ) \Big\}
\end{split}
\eeq
Finally, using the fact that $\gamma_k^2 \leq \varsigma \gamma_{k+1}$ yields
\beq \label{eq:mwmarkovbd_app}
\boxed{\mtw_{k+1} \leq \widetilde{G}_{0:k}^{(2)} \Czerowtilde + \Conewtilde \gamma_{k+1} + \Ctwowtilde \sum_{j=0}^k \gamma_j^2 \widetilde{G}_{j+1:k}^{(2)} \mth_j +  \Cthreewtilde \gamma_k^2  \mth_{k+1}.}
\eeq
where
\beq \label{eq:mwmarkovbd_const}
\Czerowtilde := 4 (\tC_1 + \pw \mtw_0),~~\Conewtilde := 4 (\tC_4 ( \varsigma + \CSEQ{a_{22}/2}) + \tC_0 (\CSEQ{a_{22}/2})^2 (a_{22}/2))
\eeq
\[
\Ctwowtilde := 4 (\tC_3 \gamma_0 + \tC_2 + \tC_0),~~\Cthreewtilde := 4 \tC_2.
\]
This concludes the proof for Proposition~\ref{prop:mw_markov}.

Before we proceed, we need to bound $\| \E [ \tw^{(0)}_{k+1} (\tw^{(0)}_{k+1})^\top ]$ and $\E[ \| \tw^{(1)}_{k+1} \|^2 ]$ as well. 
Substituting \eqref{eq:mwmarkovbd_app} into \eqref{eq:mtw0bd_app} yields
\beq \label{eq:m0bound_first}
\begin{split}
& \| \E [ \tw^{(0)}_{k+1} (\tw^{(0)}_{k+1})^\top ] \| \leq  (G_{0:k}^{(2)})^2 \pw \mtw_0 + \tC_0 \sum_{j=0}^k \gamma_j^2 (G_{j+1:k}^{(2)})^2 (1+ \mth_j) \\
& \hspace{.5cm} + \tC_0 \sum_{j=0}^k \gamma_j^2 (G_{j+1:k}^{(2)})^2 \Big( \Czerowtilde \widetilde{G}_{0:j-1}^{(2)}   + \Conewtilde \gamma_j + \Ctwowtilde \sum_{i=0}^{j-1} \gamma_i^2 \widetilde{G}_{i+1:j-1}^{(2)} \mth_i + \Cthreewtilde \gamma_{j-1}^2 \mth_j\Big)  
\end{split}
\eeq
We observe 
\[
\begin{split}
& \sum_{j=0}^k \gamma_j^2 (G_{j+1:k}^{(2)})^2 \sum_{i=0}^{j-1} \gamma_i^2 \widetilde{G}_{i+1:j-1}^{(2)} \mth_i = \sum_{i=0}^{k-1} \gamma_i^2 \mth_i \sum_{j=i+1}^k \gamma_j^2  (G_{j+1:k}^{(2)})^2  \widetilde{G}_{i+1:j-1}^{(2)} \\
& \overset{(a)}{\leq} \frac{1}{1 - \gamma_0 a_{22}/4} \sum_{i=0}^{k-1} \gamma_i^2 \mth_i \widetilde{G}_{i+1:k}^{(2)} \sum_{j=i+1}^k \gamma_j^2 G_{j+1:k}^{(2)}
\overset{(b)}{\leq} \frac{\CSEQ{a_{22}/2} \gamma_{k+1} }{1 - \gamma_0 a_{22}/4} \sum_{i=0}^{k-1} \gamma_i^2 \mth_i \widetilde{G}_{i+1:k}^{(2)}
\end{split}
\]
where (a) is due to $G_{j+1:k}^{(2)} \leq \widetilde{G}_{j+1:k}^{(2)}$ and (b) is due to Corollary~\ref{cor:rate-of-convergence}. As such, combining terms in \eqref{eq:m0bound_first} yields:
\beq \label{eq:mw0markovbd}
\boxed{\| \E [ \tw^{(0)}_{k+1} (\tw^{(0)}_{k+1})^\top ] \| \leq \Czerowtildep \widetilde{G}_{0:k}^{(2)} + \Conewtildep \gamma_{k+1} + \Ctwowtildep \sum_{j=0}^k \gamma_j^2 \widetilde{G}_{j+1:k}^{(2)} \mth_j ,}
\eeq
where
\beq \label{eq:mw0markovbd_const}
\Czerowtildep := \pw \mth_0, \quad \Conewtildep := \tC_0 \CSEQ{a_{22}/2} \big(1 + \Czerowtilde + \Conewtilde \big)
\eeq
\[
\Ctwowtildep := \tC_0 \Big( 1 + \Cthreewtilde + \Ctwowtilde \CSEQ{a_{22}/2} \frac{ \gamma_0 }{1 - \gamma_0 a_{22}/4} \Big)
\]
Similarly, we can compute the bound for $\E[ \| \tw_{k+1}^{(1)} \|^2]$ as follows. Using \eqref{eq:mtw1bd_app}:
\beq \label{eq:mtw1bd_inter}
\begin{split}
\E[ \| \tw_{k+1}^{(1)} \|^2] & \leq \tC_1 (G_{0:k}^{(2)})^2 + \tC_2 \gamma_k^2 \mth_{k+1} + \tC_3 \gamma_{k+1} \sum_{j=0}^k \gamma_j^2 G_{j+1:k}^{(2)} \mth_j  \\
& + \tC_2 \gamma_k^2 \mtw_{k+1} + \tC_3 \gamma_{k+1} \sum_{j=0}^k \gamma_j^2 G_{j+1:k}^{(2)} \mtw_j 
\end{split}
\eeq
Notice that 
\beq \notag
\begin{split}
& \sum_{j=0}^k \gamma_j^2 G_{j+1:k}^{(2)} \mtw_j \leq \sum_{j=0}^k \gamma_j^2 G_{j+1:k}^{(2)} \Big( \Czerowtilde \widetilde{G}_{0:j-1}^{(2)} + \Conewtilde \gamma_j + \Ctwowtilde \sum_{i=0}^{j-1} \gamma_i^2 \widetilde{G}_{i+1:j-1}^{(2)} \mth_i + \Cthreewtilde \gamma_{j-1}^2 \mth_j  \Big) \\
& \leq \CSEQ{a_{22}/2} (\Czerowtilde + \Conewtilde \gamma_0 ) \gamma_{k+1} + \Cthreewtilde \sum_{j=0}^k \gamma_j^2 G_{j+1:k}^{(2)} \mth_j + \Ctwowtilde \sum_{i=0}^{k-1} \gamma_i^2 \mth_i \sum_{j=i+1}^k \gamma_j^2 G_{j+1:k}^{(2)} \widetilde{G}_{i+1:j-1}^{(2)} 
\end{split}
\eeq
Since $(1 - \gamma a_{22}/2) \leq (1 -\gamma a_{22}/4)^2$ for any $\gamma >0$, we have $G_{j+1:k}^{(2)} \leq (\widetilde{G}_{j+1:k}^{(2)})^2$, therefore together with Corollary~\ref{cor:rate-of-convergence} it yields
\beq
\sum_{i=0}^{k-1} \gamma_i^2 \mth_i \sum_{j=i+1}^k \gamma_j^2 G_{j+1:k}^{(2)} \widetilde{G}_{i+1:j-1}^{(2)} \leq \frac{\CSEQ{a_{22}/4} \gamma_{k+1}}{1 - \gamma_0 a_{22}/4} \sum_{i=0}^{k-1} \gamma_i^2 \mth_i \widetilde{G}_{i+1:k}^{(2)}.
\eeq
Collecting  terms and substituting them in \eqref{eq:mtw1bd_inter} yield
\beq \label{eq:mtw1bd_fin}
\boxed{\E[ \| \tw_{k+1}^{(1)} \|^2] \leq \Czerowtildeh \widetilde{G}_{0:k}^{(2)} + \Conewtildeh \gamma_{k+1}^2 + \Ctwowtildeh \gamma_{k+1} \sum_{j=0}^k \gamma_j^2 \widetilde{G}_{j+1:k}^{(2)} \mth_j + \Cthreewtildeh \gamma_k^2  \mth_{k+1},}
\eeq
where we use again the fact that $\gamma_k^2 \leq \varsigma \gamma_{k+1}$ and
\[
\Czerowtildeh := \tC_1 + \gamma_0^2 \tC_2 \Czerowtilde , ~~ \Conewtildeh := \tC_3 \CSEQ{a_{22}/2} ( \Czerowtilde + \Conewtilde \gamma_0 ) + \varsigma \tC_2 \Conewtilde \]
\[
\Ctwowtildeh := \tC_3 \Big( 1 + \Cthreewtilde + \Ctwowtilde\frac{ \CSEQ{a_{22}/4} \gamma_0 }{1 - \gamma_0 a_{22}/4} \Big) + \varsigma \tC_2 \Ctwowtilde, ~~ \Cthreewtildeh := \tC_2 (1 + \gamma_0^2 \Cthreewtilde ).
\]

\paragraph{Bounding the Cross Term (Proof of Lemma~\ref{lem:mtw0crossbd})}
Our next endeavor is to bound the cross variance between the \emph{martingale noise} driven terms $\tw^{(0)}_{k+1}$ and $\ttheta^{(0)}_{k+1}$. Here, the steps involved are similar to those in bounding $\mthw_k$ in the proof of Theorem~\ref{theo:preliminary-bound-martingale}. Particularly, in a similar vein as the derivation of \eqref{eq:mtwbound_immediate}, we obtain
\begin{align}
& \| \E[ \ttheta^{(0)}_{k+1} (\tw^{(0)}_{k+1})^\top ] \| \leq \pwth \Big\{ G_{0:k}^{(2)} \mthw_0 +  \normop{A_{12}}[] \sum_{j=0}^{k}\beta_j G_{j+1:k}^{(1)} G_{j:k}^{(2)}  \| \E[ \tw^{(0)}_{j} (\tw^{(0)}_{j})^\top ] \| \Big\} \notag \\
& + \pwth \Big\{ \sum_{j=0}^{k} \beta_j \gamma_j G_{j+1:k}^{(1)}G_{j+1:k}^{(2)} \normop{\E [ V_{j+1}^{(0)} (W_{j+1}^{(0)})^\top]} + \Cinfty \sum_{j=0}^{k}\beta_j^2 G_{j+1:k}^{(1)} G_{j+1:k}^{(2)} \normop{\E[ V_{j+1}^{(0)} (V_{j+1}^{(0)})^\top]}[] \Big\} \notag
\end{align}
By observing that $G_{j:k}^{(2)} \leq \widetilde{G}_{j:k}^{(2)}$, we have 
\beq \label{eq:mthwbd_markov_inter}
\begin{split}
\| \E[ \ttheta^{(0)}_{k+1} (\tw^{(0)}_{k+1})^\top ] \| & \leq \pwth \widetilde{G}_{0:k}^{(2)}  \mthw_0 + \pwth \| A_{12} \| \sum_{j=0}^k \beta_j G_{j+1:k}^{(1)} G_{j:k}^{(2)} \| \E[ \tw^{(0)}_{j} (\tw^{(0)}_{j})^\top ] \| \\
& + \pwth \sum_{j=0}^k \beta_j G_{j+1:k}^{(1)}G_{j+1:k}^{(2)} \big( \ttmVWzero \gamma_j + \ttmvzero \Cinfty \beta_j \big) \big(1 + \mth_j + \mtw_j \big) \\
\end{split}
\eeq
When combined with \eqref{eq:mwmarkovbd_app}, \eqref{eq:mw0markovbd}, it can be verified using similar steps as in deriving \eqref{eq:w_term} that:
\[
\sum_{j=0}^k \beta_j G_{j+1:k}^{(1)} G_{j:k}^{(2)} \| \E[ \tw^{(0)}_{j} (\tw^{(0)}_{j})^\top ] \|
\leq \frac{2 \Czerowtildep \widetilde{G}_{0:k}^{(2)}}{a_\Delta} + \Conewtildep \CSEQ{a_{22}/4} \beta_{k+1} + \frac{2 \Ctwowtildep}{a_\Delta} \sum_{i=0}^k \gamma_i^2 \widetilde{G}_{i+1:k}^{(2)} \mth_i,
\]
\[
\sum_{j=0}^k \beta_j G_{j+1:k}^{(1)} G_{j:k}^{(2)} \mtw_j 
\leq \frac{2 \Czerowtilde \widetilde{G}_{0:k}^{(2)}}{a_\Delta} + \Conewtilde \CSEQ{a_{22}/4} \beta_{k+1} + \Big( \frac{2 \Ctwowtilde}{a_\Delta} + \Cthreewtilde \Big) \sum_{i=0}^k \gamma_i^2 \widetilde{G}_{i+1:k}^{(2)} \mth_i,
\]
Substituting the above into \eqref{eq:mthwbd_markov_inter} gives:
\beq \label{eq:mtwmarkovbd}
\boxed{\| \E [ \ttheta^{(0)}_{k+1} (\tw^{(0)}_{k+1})^\top ] \| \leq \Czerotwtilde \widetilde{G}_{0:k}^{(2)} + \Conetwtilde \beta_{k+1} + \Ctwotwtilde \sum_{j=0}^k \gamma_j^2 \widetilde{G}_{j+1:k}^{(2)} \mth_j,}
\eeq
where
\beq \label{eq:mtwmarkovbd_const}
\begin{split}
& \Czerotwtilde := \pwth \Big( \mthw_0 + \frac{ \Czerowtildep \| A_{12} \| + \Czerowtilde \big( \ttmVWzero \gamma_0 + \ttmvzero \Cinfty \beta_0 \big) }{a_\Delta/2} \Big) \\
& \Conetwtilde := \pwth \CSEQ{a_{22}/4}  \Big( \Conewtildep \| A_{12} \| + \Conewtilde \big( \ttmVWzero \gamma_0 + \ttmvzero \Cinfty \beta_0 \big) \Big) \\
& \Ctwotwtilde := \pwth \Big\{ \frac{2 \Ctwowtildep}{a_\Delta} \| A_{12} \| + \Big( \frac{2 \Ctwowtilde}{a_\Delta} + \Cthreewtilde \Big) \big( \ttmVWzero \gamma_0 + \ttmvzero \Cinfty \beta_0 \big) \Big\}
\end{split}
\eeq
Notice that this concludes the proof of Lemma~\ref{lem:mtw0crossbd}.

\paragraph{Bounding $\mth_k$ (Proof of  Proposition~\ref{prop:mth_markov})} Like in the proof of Theorem~\ref{theo:preliminary-bound-martingale}, we begin by bounding $\| \E [ \tw^{(0)}_k ( \tw^{(0)}_k )^\top ] \|$ as follows. Evaluating the recursion in \eqref{eq:recur_wt} and following the derivations that lead to \eqref{eq:mth_immediate}, we obtain
\beq \notag
\begin{split}
\| \E \ttheta^{(0)}_{k+1} ( \ttheta^{(0)}_{k+1} )^\top ] \| & \leq \pth \Big\{ (G_{0:k}^{(1)})^2 \mth_0 + 2 \|A_{12}\| \sum_{j=0}^k \beta_j G_{j+1:k}^{(1)} G_{j:k}^{(1)} \| \E [ \ttheta^{(0)}_j (\tw^{(0)}_j)^\top ] \|  \Big\}  \\
& + \pth \Big\{ \sum_{j=0}^k \beta_j^2 (G_{j+1:k}^{(1)})^2 \big( \|A_{12}\|^2 \| \E [ \tw^{(0)}_j (\tw^{(0)}_j)^\top ] \| + \| \E [ V_{j+1}^{(0)} (V_{j+1}^{(0)})^\top ] \| \big) \Big\} 
\end{split}
\eeq
We apply \eqref{eq:mtwmarkovbd} and note that
\[
\begin{split}
& \sum_{j=0}^k \beta_j G_{j+1:k}^{(1)} G_{j:k}^{(1)} \| \E [ \ttheta^{(0)}_j (\tw^{(0)}_j)^\top ] \| \\
& \leq \sum_{j=0}^k \beta_j G_{j+1:k}^{(1)} G_{j:k}^{(1)} \Big( \Czerotwtilde \widetilde{G}_{0:j-1}^{(2)} + \Conetwtilde \beta_j + \Ctwotwtilde \sum_{i=0}^{j-1} \gamma_i^2 \widetilde{G}_{i+1:j-1}^{(2)} \mth_i \Big) \\
& \overset{(a)}{\leq} \Czerotwtilde \frac{ G_{0:k}^{(1)}}{a_\Delta/2} + \Conetwtilde \CSEQ{a_\Delta/2} \beta_{k+1} + \Ctwotwtilde \sum_{j=0}^k \beta_j G_{j+1:k}^{(1)} G_{j:k}^{(1)} \sum_{i=0}^{j-1} \gamma_i^2 \widetilde{G}_{i+1:j-1}^{(2)} \mth_i
\end{split} 
\]
where (a) is due to the observation that $1-\gamma_j a_{22}/4 \leq 1 - \beta_j a_\Delta/2$ and the application of Lemma~\ref{lem:bsum}. Moreover, by a slight modification of \eqref{eq:mthbound2}, we have 
\beq \label{eq:mthbound_mod}
\sum_{j=0}^k \beta_j G_{j+1:k}^{(1)} G_{j:k}^{(1)} \sum_{i=0}^{j-1} \gamma_i^2 \widetilde{G}_{i+1:j-1}^{(2)} \mth_i \leq \frac{16 \varsigma}{a_{22}} \sum_{i=0}^{k-1} \beta_i \gamma_i G_{i+1:k}^{(1)} \mth_i
\eeq
Therefore, 
\beq
\begin{split}
& \sum_{j=0}^k \beta_j G_{j+1:k}^{(1)} G_{j:k}^{(1)} \| \E [ \ttheta^{(0)}_j (\tw^{(0)}_j)^\top ] \| \\
& \leq \Czerotwtilde \frac{ G_{0:k}^{(1)}}{a_\Delta/2} + \Conetwtilde \CSEQ{a_\Delta/2} \beta_{k+1} + \Ctwotwtilde \frac{16 \varsigma}{a_{22}} \sum_{i=0}^{k-1} \beta_i \gamma_i G_{i+1:k}^{(1)} \mth_i
\end{split} 
\eeq
Similarly,  we apply \eqref{eq:mw0markovbd}, \eqref{eq:mthbound_mod} and note that
\beq \label{eq:mthbd_w0}
\begin{split}
& \sum_{j=0}^k \beta_j^2 (G_{j+1:k}^{(1)})^2 \| \E [ \tw^{(0)}_j (\tw^{(0)}_j)^\top ] \| \\
& \leq \sum_{j=0}^k \beta_j^2 (G_{j+1:k}^{(1)})^2 \Big\{ \Czerowtildep \widetilde{G}_{0:j-1}^{(2)} + \Conewtildep \gamma_j + \Ctwowtildep \sum_{i=0}^{j-1} \gamma_i^2 \widetilde{G}_{i+1:j-1}^{(2)} \mth_i \Big\} \\
& \leq ( \Czerowtildep + \Conewtildep \gamma_0 ) \CSEQ{a_\Delta/2} \beta_{k+1} + \Ctwowtildep \frac{\beta_0 (16 \varsigma/a_{22})}{1-\beta_0 a_\Delta/2} \sum_{i=0}^{k-1} \beta_i \gamma_i G_{i+1:k}^{(1)} \mth_i
\end{split}
\eeq
Finally,  we obtain that 
\beq
\begin{split}
& \sum_{j=0}^k \beta_j^2 (G_{j+1:k}^{(1)})^2  \| \E [ V_{j+1}^{(0)} (V_{j+1}^{(0)})^\top ] \|
\leq \ttmvzero \sum_{j=0}^k \beta_j^2 (G_{j+1:k}^{(1)})^2 \big(1 + \mth_j + \mtw_j \big) \\
& \leq \ttmvzero \Big\{ \CSEQ{a_\Delta/2} \beta_{k+1} + \sum_{j=0}^k \beta_j^2 G_{j+1:k}^{(1)} \mth_j + \sum_{j=0}^k \beta_j^2 (G_{j+1:k}^{(1)} )^2 \mtw_j \Big\}
\end{split}
\eeq
Using the bound in \eqref{eq:mwmarkovbd_app} and the derivations in \eqref{eq:mthbd_w0}, we have
\[
\begin{split}
\sum_{j=0}^k \beta_j^2 (G_{j+1:k}^{(1)} )^2 \mtw_j & \leq ( \Czerowtilde + \Conewtilde \gamma_0 ) \CSEQ{a_\Delta/2} \beta_{k+1} + \Ctwowtilde \frac{\beta_0 (16 \varsigma / a_{22})}{1-\beta_0 a_\Delta/2} \sum_{i=0}^{k-1} \beta_i \gamma_i G_{i+1:k}^{(1)} \mth_i \\
& + \Cthreewtilde \gamma_0\beta_0 \sum_{i=0}^k \beta_i \gamma_i G_{i+1:k}^{(1)} \mth_i
\end{split}
\]
Combining the above results, we obtain that
\beq \label{eq:mth0bd_fin}
\| \E [ \ttheta^{(0)}_{k+1} ( \ttheta^{(0)}_{k+1} )^\top ] \| \leq \tC_0^{(0)} G_{0:k}^{(1)} + \tC_1^{(0)} \beta_{k+1} + \tC_2^{(0)} \sum_{j=0}^k \beta_j \gamma_j G_{j+1:k}^{(1)} \mth_j ,
\eeq
where 
\beq \label{eq:mth0bd_const}
\tC_0^{(0)} = \pth \Big( \mth_0 + \Czerotwtilde \frac{ 4 }{a_\Delta/4} \| A_{12} \| \Big),
\eeq
\[
\tC_1^{(0)} = \pth \CSEQ{a_\Delta/2} \Big( 2\|A_{12}\| \Conetwtilde + \| A_{12}\|^2 (\Czerowtildep + \Conewtildep \gamma_0) + \ttmvzero ( \Czerowtilde + \Conewtilde \gamma_0) \Big),
\]
\[
\tC_2^{(0)} = \pth \Big\{ 2 \|A_{12}\| \frac{ \Ctwotwtilde }{a_{22}} + \| A_{12} \|^2 \frac{ \beta_0 (16 \varsigma / a_{22})}{1 - \beta_0 a_\Delta/2} + \ttmvzero \Big( \Ctwowtilde \frac{\beta_0 (16 \varsigma / a_{22})}{1-\beta_0 a_\Delta/2} + \Cthreewtilde \gamma_0 \beta_0 \Big) \Big\}
\]
To bound the term $\E[ \| \ttheta_{k+1}^{(1)} \|^2 ]$, we proceed by considering the following decomposition:
\beq
\ttheta^{(1)}_{k+1} = \underbrace{\ProdB_{0:k}^{(1)} \ttheta^{(1)}_0 + \sum_{j=0}^k \beta_j \ProdB_{j+1:k}^{(1)} A_{12} \tw^{(1)}_j}_{= \ttheta^{(1,0)}_{k+1}} + \underbrace{\sum_{j=0}^k \beta_j \ProdB_{j+1:k}^{(1)} V_{j+1}^{(1)}}_{= \ttheta^{(1,1)}_{k+1}}
\eeq
As $\ttheta^{(1)}_0 = 0$, we observe that
\beq
\| \ttheta^{(1,0)}_{k+1} \| \leq \sqrt{\pth} \| A_{12} \| \sum_{j=0}^k \beta_j G_{j+1:k}^{(1)} \| \tw_j^{(1)} \| 
\eeq
Taking square on both sides and applying the Jensen's inequality \eqref{eq:jensen} yields
\beq
\begin{split}
& \E[ \| \ttheta^{(1,0)}_{k+1} \|^2 ] \leq \frac{\pth \|A_{12}\|^2}{a_\Delta/2} \sum_{j=0}^k \beta_j G_{j+1:k}^{(1)} \E [\| \tw_j^{(1)} \|^2] 
\end{split}
\eeq
Applying \eqref{eq:mtw1bd_fin} gives
\beq \label{eq:mth10bd}
\begin{split}
\E[ \| \ttheta^{(1,0)}_{k+1} \|^2 ] & \leq \frac{\pth \|A_{12}\|^2}{a_\Delta/2} \sum_{j=0}^k \beta_j G_{j+1:k}^{(1)} \Big( \Czerowtildeh \widetilde{G}_{0:j-1}^{(2)} + \Cthreewtildeh \gamma_{j-1}^2 \mth_j + \Conewtildeh \gamma_j^2 \Big) \\
& + \frac{\pth \|A_{12}\|^2}{a_\Delta/2} \Ctwowtildeh \sum_{j=0}^k \beta_j \gamma_j G_{j+1:k}^{(1)} \sum_{i=0}^{j-1} \gamma_i^2 \widetilde{G}_{i+1:j-1}^{(2)} \mth_i 
\end{split}
\eeq
Let us bound the right hand side one by one, we observe 
\[
\sum_{j=0}^k \beta_j G_{j+1:k}^{(1)} \widetilde{G}_{0:j-1}^{(2)} \leq \sum_{j=0}^k \beta_j (\widetilde{G}_{j+1:k}^{(1)})^2 \widetilde{G}_{0:j-1}^{(2)} \leq \frac{G_{0:k}^{(1)}}{1 - \beta_0 a_\Delta/2} \sum_{j=0}^k \beta_j \widetilde{G}_{j+1:k}^{(1)} \leq \frac{(4/a_\Delta) G_{0:k}^{(1)}}{1 - \beta_0 a_\Delta/2} 
\]
\[
\sum_{j=0}^k \beta_j G_{j+1:k}^{(1)} \gamma_{j-1}^2 \mth_j 
\leq \rho_0 \sum_{j=0}^k \beta_j^2 G_{j+1:k}^{(1)} \mth_j,~~
\sum_{j=0}^k \beta_j G_{j+1:k}^{(1)} \gamma_{j}^2 \leq \rho_0 \CSEQ{a_\Delta/2} \beta_{k+1} 
\]
where the last two inequalities are due to $\gamma_{j-1}^2 \leq \rho_0 \beta_j$, see B\ref{assb:step}. In addition, using the fact $G_{m:n}^{(1)} \leq (\widetilde{G}_{m:n}^{(1)})^2$, we have
\[
\begin{split}
& \sum_{j=0}^k \beta_j \gamma_j G_{j+1:k}^{(1)} \sum_{i=0}^{j-1} \gamma_i^2 \widetilde{G}_{i+1:j-1}^{(2)} \mth_i = \sum_{i=0}^{k-1} \gamma_i^2 \mth_i \sum_{j=i+1}^k \beta_j \gamma_j G_{j+1:k}^{(1)} \widetilde{G}_{i+1:j-1}^{(2)} \\
& \leq \rho_0 \sum_{i=0}^{k-1} \beta_i \mth_i \sum_{j=i+1}^k \beta_j \gamma_j  ( \widetilde{G}_{j+1:k}^{(1)} )^2 \widetilde{G}_{i+1:j-1}^{(2)} \leq \frac{\rho_0}{1-\beta_0 a_\Delta/4} 
\sum_{i=0}^{k-1} \beta_i \mth_i \widetilde{G}_{i+1:k}^{(1)} \sum_{j=i+1}^k \beta_j \gamma_j  \widetilde{G}_{j+1:k}^{(1)} \\
& \leq \frac{\CSEQ{a_\Delta/4} \rho_0}{1-\beta_0 a_\Delta/4} 
\sum_{i=0}^{k-1} \beta_i^2  \mth_i \widetilde{G}_{i+1:k}^{(1)}
\end{split}
\]
Substituting these back into \eqref{eq:mth10bd} yields
\beq \label{eq:mth10bd_fin}
\begin{split}
& \E[ \| \ttheta^{(1,0)}_{k+1} \|^2 ] \leq \tC_0^{(1,0)} \widetilde{G}_{0:k}^{(1)} + \tC_1^{(1,0)} \beta_{k+1} + \tC_2^{(1,0)} \sum_{i=0}^{k} \beta_i^2  \mth_i \widetilde{G}_{i+1:k}^{(1)} ,
\end{split}
\eeq
where
\[
\tC_0^{(1,0)} = \Czerowtildeh \frac{8 \pth \|A_{12}\|^2}{a_\Delta^2 (1-\beta_0 a_\Delta/2)}, ~~\tC_1^{(1,0)} = \Conewtildeh \frac{\pth \|A_{12}\|^2}{a_\Delta/2} \rho_0 \CSEQ{a_\Delta/2},
\]
\[
\tC_2^{(1,0)} = \frac{\pth \|A_{12}\|^2}{a_\Delta/2} \rho_0 \Big( \Cthreewtildeh  + \Ctwowtildeh \frac{\CSEQ{a_\Delta/4} }{1-\beta_0 a_\Delta/4}  \Big).
\]
Next, we bound $\E[ \| \ttheta^{(1,1)}_{k}\|^2 ]$. Set $\markovtermt_j^{b_1} := \markovterm_j^{b_1} + \Markovterm_j^{A_{11}} \theta^\star + \Markovterm_j^{A_{12}} w^\star$, upon some algebraic manipulations (details in Appendix~\ref{subsec:aux_markov}) we observe the following decomposition
\beq \label{eq:Vdecompose}
\begin{split}
V_{j+1}^{(1)} & \equiv \markovtermt_j^{b_1}  - \markovtermt_{j+1}^{b_1}    + \big( \markovoneA_j \ttheta_j - \markovoneA_{j+1} \ttheta_{j+1}\big)  + 
\big(\markovoneB_j \tw_j - \markovoneB_{j+1} \tw_{j+1}\big) \\
& \quad  + \markovoneA_j \big( \ttheta_{j+1} - \ttheta_j \big) + \markovoneB_j \big( \tw_{j+1} - \tw_j \big) ,
\end{split}
\eeq
and from B\ref{assb:bdd} we have 
\beq
\| \markovtermt_j^{b_1} \| \vee \| \markovoneA_j \| \vee \| \markovoneB_j \| \leq \boundtsim := \overline{\rm A} \vee (\overline{\rm b} + \overline{\rm A}( \| \theta^\star \| + \| w^\star \|)).
\eeq
Applying Lemma~\ref{lem:addsubtract}, we can show
\beq
\begin{split}
& \sum_{j=0}^k \beta_j \ProdB_{j+1:k}^{(1)} \Big( \markovtermt_j^{b_1} - \markovtermt_{j+1}^{b_1}   + \big( \markovoneA_j \ttheta_j - \markovoneA_{j+1} \ttheta_{j+1}\big)  + 
\big(\markovoneB_j \tw_j - \markovoneB_{j+1} \tw_{j+1}\big) \Big) \\
& = \beta_0 \ProdB_{1:k}^{(1)} \big( \markovtermt_0^{b_1} + \markovoneA_0 \ttheta_0 + \markovoneB_0 \tw_0 \big)  - \beta_k \big( \markovtermt_{k+1}^{b_1} + \markovoneA_{k+1} \ttheta_{k+1} + \markovoneB_{k+1} \tw_{k+1} \big) \\
& \quad + \sum_{j=1}^k \big( \beta_j^2 B_{11}^k \ProdB_{j+1:k}^{(1)} + (\beta_j - \beta_{j-1}) \ProdB_{j:k}^{(1)} \big) \big( \markovtermt_j^{b_1} + \markovoneA_j \ttheta_j + \markovoneB_j \tw_j \big).
\end{split}
\eeq
Moreover, 
\beq
\sum_{j=0}^k \beta_j \ProdB_{j+1:k}^{(1)} \markovoneA_j \big( \ttheta_{j+1} - \ttheta_j \big) = -
\sum_{j=0}^k \beta_j^2 \ProdB_{j+1:k}^{(1)} \markovoneA_j ( A_{12} \tw_j + W_{j+1} )
\eeq
\beq
\sum_{j=0}^k \beta_j \ProdB_{j+1:k}^{(1)} \markovoneB_j \big( \tw_{j+1} - \tw_j \big) = -
\sum_{j=0}^k \beta_j \gamma_j \ProdB_{j+1:k}^{(1)} \markovoneB_j ( W_{j+1} + C_j V_{j+1} )
\eeq
The above inequalities allow us to upper bound $\| \ttheta^{(1,1)}_{k+1} \|$. Note that as $|\beta_j - \beta_{j-1}| \leq \frac{a_\Delta}{16} \beta_j^2$ [cf.~A\ref{assum:stepsize}], we have
\beq \notag
\begin{split}
\| \ttheta^{(1,1)}_{k+1} \| & \leq \sqrt{\pth} \boundtsim \Big\{ \frac{G_{0:k}^{(1)}}{1-\beta_0 a_\Delta/2} (1 + \| \tw_0 \| + \ttheta_0 \|) + \beta_k (1 + \| \ttheta_{k+1} \| + \| \tw_{k+1} \|) \Big\} \\
&  + \sqrt{\pth} \boundtsim (\Boneinfty + a_\Delta/16 ) \sum_{j=0}^k \beta_j^2 G_{j+1:k}^{(1)} \big(1 + \| \ttheta_j \| + \| \tw_j \| \big) \\
& + \sqrt{\pth}  \sum_{j=0}^k G_{j+1:k}^{(1)} \big( \beta_j^2 \| A_{12} \tw_j + W_{j+1} \| +  \beta_j \gamma_j \| W_{j+1} + C_j V_{j+1} \| \big),
\end{split}
\eeq
Applying the Jensen's inequality \eqref{eq:jensen} and taking square on both sides give
\beq \notag
\begin{split}
& \| \ttheta^{(1,1)}_{k+1} \|^2 \leq 7 \pth (\boundtsim)^2 \Big\{ (G_{0:k}^{(1)})^2 \Big( \frac{1+\|\tw_0\|+\|\ttheta_0\|}{1 - \beta_0 a_\Delta/2}\Big)^2 + \beta_k^2 (1 + \| \ttheta_{k+1} \|^2 + \| \tw_{k+1} \|^2) \Big\} \\
& + 7 \pth (\boundtsim)^2 (\Boneinfty + a_\Delta/16 )^2 \CSEQ{a_\Delta/2} \beta_{k+1} \sum_{j=0}^k \beta_j^2 G_{j+1:k}^{(1)} (1 + \| \ttheta_j \|^2 + \| \tw_j \|^2 ) \Big\} \\
& + 7 \pth  \CSEQ{a_\Delta/2} \Big\{ \beta_{k+1} \sum_{j=0}^k G_{j+1:k}^{(1)} \beta_j^2 \| A_{12} \tw_j + W_{j+1} \|^2 + \gamma_{k+1} \sum_{j=0}^k G_{j+1:k}^{(1)}  \beta_j \gamma_j \| W_{j+1} + C_j V_{j+1} \|^2 \Big\},
\end{split}
\eeq 
Note the subtle difference that the last term takes $\gamma_{k+1}$.
Taking expectation on both sides leads to
\beq \notag
\begin{split}
\E[ \| \ttheta^{(1,1)}_{k+1} \|^2] & \leq 7 \pth \Big\{ (G_{0:k}^{(1)})^2 \Big( \frac{1+\|\tw_0\|+\|\ttheta_0\|}{1 - \beta_0 a_\Delta/2}\Big)^2 + \beta_k^2 (1 + \dth \mth_{k+1} + \dw \mtw_{k+1} ) \Big\} \\
& + 7 \pth (\boundtsim)^2 (\Boneinfty + a_\Delta/16 )^2 \CSEQ{a_\Delta/2} \beta_{k+1} \sum_{j=0}^k \beta_j^2 G_{j+1:k}^{(1)} (1 + \dth \mth_j + \dw \mtw_j ) \Big\} \\
& + 7\pth \CSEQ{a_\Delta/2} \sum_{j=0}^k G_{j+1:k}^{(1)} ( \beta_0\beta_j^2 \ttmtcom + \beta_j \gamma_j^2 \ttmwcom)  (1 + \mth_{j} + \mtw_{j}),
\end{split}
\eeq 
where we have used $\beta_{k+1} \leq \beta_0$ and $\gamma_{k+1} \leq \gamma_j$.
Again, using the bound $\gamma_j^2 \leq \rho_0 \beta_j$ from B\ref{assb:step}, we can simplify the above inequality into
\beq \label{eq:mth11bd_pfin}
\begin{split}
& \E[ \| \ttheta^{(1,1)}_{k+1} \|^2] \leq 7 \pth \Big\{ (G_{0:k}^{(1)})^2 \Big( \frac{1+\|\tw_0\|+\|\ttheta_0\|}{1 - \beta_0 a_\Delta/2}\Big)^2 + \beta_k^2 (1 + \dth \mth_{k+1} + \dw \mtw_{k+1} ) \Big\} \\
& + 7 \pth (\boundtsim)^2 (\Boneinfty + a_\Delta/16 )^2 \CSEQ{a_\Delta/2} \beta_{k+1} \sum_{j=0}^k \beta_j^2 G_{j+1:k}^{(1)} (1 + \dth \mth_j + \dw \mtw_j ) \Big\} \\
& + 7\pth \CSEQ{a_\Delta/2} ( \beta_0\ttmtcom + \rho_0 \ttmwcom) \sum_{j=0}^k \beta_j^2 G_{j+1:k}^{(1)} (1 + \mth_{j} + \mtw_{j}) \\
& \leq \hC_0^{(1,1)} (G_{0:k}^{(1)})^2 + \hC_1^{(1,1)} \beta_k^2 (1 +  \mth_{k+1} + \mtw_{k+1} )  + \hC_2^{(1,1)} \sum_{j=0}^k \beta_j^2 G_{j+1:k}^{(1)} ( \mth_j + \mtw_j ) + \hC_3^{(1,1)}\beta_{k+1} ,
\end{split}
\eeq 
where
\[
\hC_0^{(1,1)} = 7 \pth \Big( \frac{1+\|\tw_0\|+\|\ttheta_0\|}{1 - \beta_0 a_\Delta/2}\Big)^2,~~\hC_1^{(1,1)} = 7 \pth (\dth \vee \dw)
\]
\[
\hC_2^{(1,1)} = 7 \pth \CSEQ{a_\Delta/2} \big\{ (\dth \vee \dw) (\boundtsim)^2  (\Boneinfty + a_\Delta/16 )^2 \beta_0 + ( \beta_0\ttmtcom + \rho_0 \ttmwcom) \big\}
\]
\[
\hC_3^{(1,1)} = 7 \pth (\CSEQ{a_\Delta/2})^2 \big( (\boundtsim)^2  (\Boneinfty + a_\Delta/16 )^2 + \beta_0\ttmtcom + \rho_0 \ttmwcom )
\]
Observe that
\beq \notag
\begin{split}
& \sum_{j=0}^k \beta_j^2 G_{j+1:k}^{(1)} \mtw_j \leq \sum_{j=0}^k \beta_j^2 G_{j+1:k}^{(1)}  \Big\{ \Czerowtilde \widetilde{G}_{0:j-1}^{(2)} + \Conewtilde \gamma_j + \Ctwowtilde \sum_{i=0}^{j-1} \gamma_i^2 \widetilde{G}_{i+1:j-1}^{(2)} \mth_i +  \Cthreewtilde \gamma_{j-1}^2 \mth_j \Big\} \\
& \leq \big( \Czerowtilde + \Conewtilde \gamma_0 \big) \CSEQ{a_\Delta/2} \beta_{k+1} + \Cthreewtilde \gamma_0^2 \sum_{j=0}^k \beta_j^2 G_{j+1:k}^{(1)} \mth_j + \Ctwowtilde \sum_{j=0}^k \beta_j^2 G_{j+1:k}^{(1)} \sum_{i=0}^{j-1} \gamma_i^2 \widetilde{G}_{i+1:j-1}^{(2)} \mth_i. 
\end{split}
\eeq
Furthermore, using $G_{j+1:k}^{(1)} \leq (\widetilde{G}_{j+1:k}^{(1)})^2$ and $\widetilde{G}_{i+1:j-1}^{(2)} \leq \widetilde{G}_{i+1:j-1}^{(1)}$, we have
\beq
\begin{split}
& \sum_{j=0}^k \beta_j^2 G_{j+1:k}^{(1)} \sum_{i=0}^{j-1}  \gamma_i^2 \widetilde{G}_{i+1:j-1}^{(2)} \mth_i = \sum_{i=0}^{k-1} \gamma_i^2 \mth_i \sum_{j=i+1}^k \beta_j^2 G_{j+1:k}^{(1)} \widetilde{G}_{i+1:j-1}^{(2)} \\
& \leq \sum_{i=0}^{k-1} \gamma_i^2 \mth_i \sum_{j=i+1}^k \beta_j^2 (\widetilde{G}_{j+1:k}^{(1)})^2 \widetilde{G}_{i+1:j-1}^{(1)} \leq \frac{\CSEQ{a_\Delta/4} \beta_{k+1}}{1 - \beta_0 a_\Delta/4} \sum_{i=0}^{k-1} \gamma_i^2 \mth_i \widetilde{G}_{i+1:k}^{(1)} 
\end{split}
\eeq
Moreover, through applying $\widetilde{G}_{i+1:j-1}^{(2)} \leq \widetilde{G}_{i+1:j-1}^{(1)}$ and $\beta_k \leq \beta_j$ for any $j \leq k$, we have
\[
\beta_k^2 \mtw_{k+1} \leq \beta_0^2 \Czerowtilde \widetilde{G}_{0:k}^{(1)} + \varsigma \gamma_0 \Conewtilde \beta_{k+1} + \Ctwowtilde \gamma_0^2 \sum_{j=0}^k \beta_j^2 \widetilde{G}_{j+1:k}^{(1)} \mth_j + \Cthreewtilde \gamma_0^2 \mth_{k+1},
\]
where we have used $\beta_k^2 \leq \varsigma \beta_{k+1}$. The above results simplify \eqref{eq:mth11bd_pfin} into
\beq \label{eq:mth11bd_fin}
\begin{split}
\E[ \| \ttheta^{(1,1)}_{k+1} \|^2] & \leq \tC_0^{(1,1)} \widetilde{G}_{0:k}^{(1)} + \tC_1^{(1,1)} \beta_{k+1} + \tC_2^{(1,1)} \sum_{j=0}^k \beta_j^2 \widetilde{G}_{j+1:k}^{(1)} \mth_j + \tC_3^{(1,1)} \beta_k^2 \mth_{k+1},
\end{split}
\eeq 
where 
\beq \label{eq:c11const}
\begin{split}
& \tC_0^{(1,1)} = \hC_0^{(1,1)} + \beta_0^2 \hC^{(1,1)}_1 \Czerowtilde, \\
& \tC_1^{(1,1)} = \hC_1^{(1,1)} (1 + \varsigma  \gamma_0 \Conewtilde) + \hC_3^{(1,1)} + \hC_2^{(1,1)} \big( \Czerowtilde + \Conewtilde \gamma_0 \big) \CSEQ{a_\Delta/2} \\
& \tC_2^{(1,1)} = \hC_2^{(1,1)} \Big(1 + \Cthreewtilde \gamma_0^2 + \frac{ \Ctwowtilde \CSEQ{a_\Delta/4} \beta_0 }{1 - \beta_0 a_\Delta/4} \Big) + \hC_1^{(1,1)} \Ctwowtilde \gamma_0^2,\\
& \tC_3^{(1,1)} = \hC_1^{(1,1)} (1 + \Cthreewtilde \gamma_0^2). 
\end{split}
\eeq
Finally, combining \eqref{eq:mth0bd_fin}, \eqref{eq:mth10bd_fin}, \eqref{eq:mth11bd_fin} gives
\beq
\begin{split}
\mth_{k+1} & \leq 3 \big( \| \E [ \ttheta^{(0)}_{k+1} ( \ttheta^{(0)}_{k+1} )^\top ] \| + \E [ \| \ttheta_{k+1}^{(1,0)} \|^2 ] + \E [ \| \ttheta_{k+1}^{(1,1)} \|^2 ] \big) \\
& \leq 3 \Big\{ \big( \tC_0^{(0)} + \tC_0^{(1,0)} + \tC_0^{(1,1)} \big)   \widetilde{G}_{0:k}^{(1)} + \big( \tC_1^{(0)} + \tC_1^{(1,0)} + \tC_1^{(1,1)} \big) \beta_{k+1} \Big\} \\
& + 3 \Big\{ \big( \tC_2^{(0)} + \tC_2^{(1,0)} + \tC_2^{(1,1)} \big) \sum_{i=0}^k \beta_i^2 \widetilde{G}_{i+1:k}^{(1)} \mth_i + \tC_3^{(1,1)} \beta_k^2 \mth_{k+1} \Big\} 
\end{split}
\eeq
As we have $3 \tC_3^{(1,1)} \beta_k^2 \leq 1/2$,
we have
\beq \label{eq:mthmarkovbd_app}
\boxed{\mth_{k+1} \leq \Czerottilde  \widetilde{G}_{0:k}^{(1)} + \Conettilde \beta_{k+1} + \Ctwottilde \sum_{i=0}^k \beta_i^2 \widetilde{G}_{i+1:k}^{(1)} \mth_i,}
\eeq
where
\beq \label{eq:ctheta_markov}
\begin{split}
& \Czerottilde := 6 \big( \tC_0^{(0)} + \tC_0^{(1,0)} + \tC_0^{(1,1)}),~~\Conettilde := 6 \big( \tC_1^{(0)} + \tC_1^{(1,0)} + \tC_1^{(1,1)} \big),\\
& \Ctwottilde := 6 \big( \tC_2^{(0)} + \tC_2^{(1,0)} + \tC_2^{(1,1)} \big).
\end{split}
\eeq
This concludes the proof of Proposition~\ref{prop:mth_markov}.

\paragraph{Completing the Proof of Theorem~\ref{theo:preliminary-bound-markov}}
From \eqref{eq:mthmarkovbd} we can derive a bound for $\mth_k$ as follows. Let $\topu_0 = \Czerottilde$, observe the following equivalent forms of the recursion
\[
\begin{split}
& \topu_{k+1} = \Czerowtilde \widetilde{G}_{0:k}^{(1)} + \Conettilde \beta_{k+1} + \Ctwottilde \sum_{i=0}^k \beta_i^2 \widetilde{G}_{i+1:k}^{(1)} \topu_i \\
\Longleftrightarrow~ & \topu_{k+1} = (1 - \beta_k a_\Delta/4) \topu_k + \Conettilde ( \beta_{k+1} - \beta_k + \beta_k^2 a_\Delta/4) + \Ctwottilde \beta_k^2 \topu_k\\
&  \leq (1 - \beta_k a_\Delta / 8) \topu_k + \Conettilde \beta_k^2 a_\Delta / 4
\end{split}
\]
where the last inequality is due to the fact $\beta_k \Ctwottilde \leq a_\Delta / 8$. 
Subsequently, we have
\[
\begin{split}
\topu_{k+1} & \leq \prod_{i=0}^k (1- \beta_i a_\Delta/8) \topu_0 + \frac{\Conettilde a_\Delta}{4} \sum_{j=0}^k \gamma_j^2 \prod_{i=j+1}^k (1 - \beta_i a_\Delta/8)\\
& \leq \prod_{i=0}^k (1- \beta_i a_\Delta/8) \topu_0 + \frac{\Conettilde a_\Delta}{4} \CSEQ{a_\Delta/8} \beta_{k+1},
\end{split}
\] 
Observing that $\mth_{k+1} \leq \topu_{k+1}$, we obtain
\beq \label{eq:mtmarkovbd_fin}
\boxed{\mth_{k+1} \leq \Czerottilde \prod_{i=0}^k (1- \beta_i a_\Delta/8)  + \Conettilde \frac{ a_\Delta}{4} \CSEQ{a_\Delta/8} \beta_{k+1},}
\eeq
We obtain \eqref{eq:convergence-slow-markov} by setting ${\rm C}_1^{\ttheta,\sf mark} = \frac{ a_\Delta}{4}  \Conettilde \CSEQ{a_\Delta/8}$ and observing $\Czerottilde \leq {\rm C}_0^{\ttheta,\sf mark} (1 + {\rm V}_0)$ for some constant ${\rm C}_0^{\ttheta,\sf mark}$.

Finally, we bound the tracking error $\hw_k := w_k - A_{22}^{-1} ( b_2 - A_{21} \theta_k )$ as follows. Similarly to the martingale noise case, we set $\mhr_k: = \normop{\E[\hw_k \hw_k^\top]}$ and observe:
\[
\mhr_{k+1} \le 2 \mtw_{k+1} + 2 \| L_{k+1} \|^2 \mth_{k+1} \leq 2 \mtw_{k+1} + 2  \Linfty^2  \frac{\lambda_{\sf max}( Q_{22} )}{\lambda_{\sf min}( Q_\Delta )} \mth_{k+1}
\]
Substituting \eqref{eq:mtmarkovbd_fin} into \eqref{eq:mwmarkovbd_app} gives
\[
\begin{split}
& \mtw_{k+1} \leq \widetilde{G}_{0:k}^{(2)} \Czerowtilde + \Conewtilde \gamma_{k+1} + \Cthreewtilde \gamma_k^2  \mth_{k+1} + \Ctwowtilde \sum_{j=0}^k \gamma_j^2 \widetilde{G}_{j+1:k}^{(2)} \Big\{ \Czerottilde \prod_{i=0}^{j-1} \Big(1- \beta_i \frac{a_\Delta}{8} \Big)  + \Conettilde \frac{ a_\Delta}{4} \CSEQ{a_\Delta/8} \beta_{j} \Big\} \\
& \overset{(a)}{\leq} \widetilde{G}_{0:k}^{(2)} \Czerowtilde + \Conewtilde \gamma_{k+1}  + \Cthreewtilde \gamma_k^2  \mth_{k+1} + \Ctwowtilde \gamma_{k+1}  \Big\{ \frac{ \Czerot \CSEQ{a_{22}/8}}{1- \beta_0 a_\Delta/8}  \prod_{i=0}^{k} \Big(1- \beta_i \frac{a_\Delta}{8} \Big) + \Conettilde \frac{ a_\Delta}{4} \CSEQ{a_\Delta/8} \CSEQ{a_{22}/4} \Big\} \\
& \leq \Big\{ \Conewtilde + \Ctwowtilde \Conettilde \frac{ a_\Delta}{4} \CSEQ{a_\Delta/8} \CSEQ{a_{22}/4} \Big\} \gamma_{k+1} + \Big\{ \Czerowtilde +  \frac{\Ctwowtilde \Czerot \CSEQ{a_{22}/8}}{1- \beta_0 a_\Delta/8} \Big\} \prod_{i=0}^{k} \Big(1- \beta_i \frac{a_\Delta}{8} \Big) + \Cthreewtilde \gamma_k^2  \mth_{k+1} 
\end{split}
\]
where we have used $\widetilde{G}_{j+1:k}^{(2)} \leq \big( \prod_{i=j+1}^k (1-\gamma_i a_{22}/8) \big)^2$ in (a). As such, together with \eqref{eq:mtmarkovbd_fin} this gives 
\beq \label{eq:mthwbound_markov_fin}
\boxed{
\mhr_{k+1} \leq \tC_0^{\hw} \prod_{\ell=0}^k \Big(1- \beta_\ell \frac{a_\Delta}{8} \Big) + {\rm C}_1^{\hw,\sf mark} \gamma_{k+1},
}
\eeq
where
\beq \label{eq:mthwbd_markov_const}
\begin{split}
& \tC_0^{\hw} := 2 \Bigg\{ \Czerowtilde +  \frac{\Ctwowtilde \Czerot \CSEQ{a_{22}/8}}{1- \beta_0 a_\Delta/8} + (1 + \Cthreewtilde) \Linfty^2  \frac{\lambda_{\sf max}( Q_{22} )}{\lambda_{\sf min}( Q_\Delta )} \Czerottilde \Bigg\}\\
& {\rm C}_1^{\hw,\sf mark} := 2 \Bigg\{ \Conewtilde + \Ctwowtilde \Conettilde \frac{ a_\Delta}{4} \CSEQ{a_\Delta/8} \CSEQ{a_{22}/4} + \kappa (1 + \Cthreewtilde) \Linfty^2  \frac{\lambda_{\sf max}( Q_{22} )}{\lambda_{\sf min}( Q_\Delta )} \Conettilde \frac{a_\Delta}{4} \CSEQ{a_\Delta/8}  \Bigg\}
\end{split}
\eeq
Similarly, as $ \tC_0^{\hw} \leq  {\rm C}_0^{\hw,\sf mark} (1 + {\rm V}_0 )$ for some constant $ {\rm C}_0^{\hw,\sf mark}$, the above yields \eqref{eq:tracking-fast-component-markov}. We conclude the proof of Theorem~\ref{theo:preliminary-bound-markov}.

\subsubsection{Auxiliary Results for the Markovian Noise Case} \label{subsec:aux_markov}
\begin{lemma}\label{lem:addsubtract}
Let $(a_j)_{j \geq 0}$ be a sequence of $\dth$-dimensional vectors. The following equality holds:
\beq \label{eq:addsubtractlem}
\begin{split}
& \textstyle \sum_{j=0}^k \beta_j \ProdB_{j+1:k}^{(1)} (a_j - a_{j+1}) \\
& \textstyle = \beta_0 \ProdB_{1:k}^{(1)} a_0 - \beta_k a_{k+1} + \sum_{j=1}^k \big( \beta_j^2 B_{11}^j \ProdB_{j+1:k}^{(1)} + (\beta_j - \beta_{j-1}) \ProdB_{j:k}^{(1)} \big) a_j
\end{split}
\eeq
Similarly, for $(b_j)_{j \geq 0}$ being a sequence of $\dw$-dimensional vectors, it holds:
\beq
\begin{split}
& \textstyle \sum_{j=0}^k \gamma_j \ProdB_{j+1:k}^{(2)} (b_j - b_{j+1}) \\
& \textstyle = \gamma_0 \ProdB_{1:k}^{(2)} b_0 - \gamma_k b_{k+1} + \sum_{j=1}^k \big( \gamma_j^2 B_{22}^j \ProdB_{j+1:k}^{(2)} + (\gamma_j - \gamma_{j-1}) \ProdB_{j:k}^{(1)} \big) b_j.
\end{split}
\eeq
\end{lemma}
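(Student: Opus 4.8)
The plan is to prove the first identity by summation by parts (Abel summation); the second identity, for the $\dw$-dimensional sequence $(b_j)$, follows verbatim after substituting $\gamma_i, B_{22}^i, \ProdB^{(2)}$ for $\beta_i, B_{11}^i, \ProdB^{(1)}$, so I would concentrate on the first. Write $P_j := \ProdB_{j+1:k}^{(1)}$, so that $P_k = \Id$ by the empty-product convention and $P_0 = \ProdB_{1:k}^{(1)}$. Expanding the telescoping difference $a_j - a_{j+1}$, the left-hand side splits as
\[
\sum_{j=0}^k \beta_j P_j a_j - \sum_{j=0}^k \beta_j P_j a_{j+1}.
\]
In the second sum I would shift the summation index by one ($j \mapsto j-1$), turning it into $\sum_{j=1}^{k+1}\beta_{j-1}P_{j-1} a_j$. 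Peeling off the two boundary contributions — the $j=0$ term $\beta_0 P_0 a_0 = \beta_0 \ProdB_{1:k}^{(1)} a_0$ from the first sum and the $j=k+1$ term $\beta_k P_k a_{k+1}=\beta_k a_{k+1}$ from the shifted sum — leaves exactly the announced boundary terms together with the interior sum $\sum_{j=1}^k \big(\beta_j P_j - \beta_{j-1} P_{j-1}\big) a_j$.

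It then remains to evaluate the coefficient $\beta_j P_j - \beta_{j-1} P_{j-1}$ of $a_j$. The crucial structural input is the one-step product recursion $\ProdB_{j:k}^{(1)} = \ProdB_{j+1:k}^{(1)}(\Id - \beta_j B_{11}^j)$, i.e. $P_{j-1} = P_j(\Id - \beta_j B_{11}^j)$, read off directly from the definition $\ProdB_{m:n}^{(1)} = \prod_{i=m}^n(\Id-\beta_i B_{11}^i)$ (the newest factor $\Id-\beta_j B_{11}^j$ is the innermost one). Substituting this gives
\[
\beta_j P_j - \beta_{j-1}P_j(\Id - \beta_j B_{11}^j) = (\beta_j - \beta_{j-1}) P_j + \beta_{j-1}\beta_j\, P_j B_{11}^j,
\]
and a final algebraic regrouping — writing $\beta_{j-1}\beta_j = \beta_j^2 - (\beta_j-\beta_{j-1})\beta_j$ so as to reconstitute the factor $\Id - \beta_j B_{11}^j$ and recover $\ProdB_{j:k}^{(1)}$ — recasts the coefficient as $(\beta_j - \beta_{j-1})\ProdB_{j:k}^{(1)} + \beta_j^2\, \ProdB_{j+1:k}^{(1)} B_{11}^j$, which is the stated summand.

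The step I expect to require the most care is precisely this last regrouping, because $B_{11}^j$ and $\ProdB_{j+1:k}^{(1)}$ are matrices that need not commute, so one must keep track of the side on which $B_{11}^j$ multiplies the product matrix in the correction term $\beta_j^2 B_{11}^j \ProdB_{j+1:k}^{(1)}$. I would therefore pin down the correct placement by checking the identity on the cases $k=1$ and $k=2$, where $\ProdB_{j+1:k}^{(1)}$ reduces to $\Id$ or a single factor and the commutator $[\,B_{11}^j, \ProdB_{j+1:k}^{(1)}\,]$ becomes transparent; this verification fixes the ordering before the general telescoping argument is asserted. Since the lemma is ultimately invoked only to bound operator norms of these summands (where $\normop{B_{11}^j \ProdB_{j+1:k}^{(1)}} = \normop{\ProdB_{j+1:k}^{(1)} B_{11}^j}$ is immaterial for the estimates in \eqref{eq:bd1stordermarkov}), the subsequent analysis is insensitive to this ordering, but I would nonetheless record it explicitly for the sake of a clean exact identity.
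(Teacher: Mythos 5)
Your proof is correct and follows essentially the same route as the paper's: split the telescoped sum, reindex the subtracted half to peel off the boundary terms $\beta_0 \ProdB_{1:k}^{(1)} a_0$ and $-\beta_k a_{k+1}$, then rewrite the interior coefficient $\beta_j \ProdB_{j+1:k}^{(1)} - \beta_{j-1} \ProdB_{j:k}^{(1)}$ via the one-step factorization of $\ProdB_{j:k}^{(1)}$. Your caution about which side $B_{11}^j$ lands on is warranted: the paper's summand $\beta_j^2 B_{11}^j \ProdB_{j+1:k}^{(1)}$ presumes the convention $\ProdB_{j:k}^{(1)} = (\Id - \beta_j B_{11}^j)\ProdB_{j+1:k}^{(1)}$, whereas your $\beta_j^2 \ProdB_{j+1:k}^{(1)} B_{11}^j$ corresponds to $\ProdB_{j:k}^{(1)} = \ProdB_{j+1:k}^{(1)}(\Id - \beta_j B_{11}^j)$ (the ordering consistent with how the paper unrolls its recursions), and either version serves every downstream use of the lemma, since those bounds only invoke submultiplicativity of the operator norm rather than the literal equality $\normop{B_{11}^j \ProdB_{j+1:k}^{(1)}} = \normop{\ProdB_{j+1:k}^{(1)} B_{11}^j}$, which need not hold.
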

\begin{proof}
We only prove \eqref{eq:addsubtractlem}. Observe the following chain
\beq
\begin{split}
& \sum_{j=0}^k \beta_j \ProdB_{j+1:k}^{(1)} (a_j - a_{j+1}) = \sum_{j=0}^k \beta_j \ProdB_{j+1:k}^{(1)} a_j - \sum_{j=0}^k \beta_j \ProdB_{j+1:k}^{(1)} a_{j+1} \\
& = \beta_0 \ProdB_{1:k}^{(1)} a_0 - \beta_k a_{k+1} + \sum_{j=1}^k \big( \beta_j \ProdB_{j+1:k}^{(1)} - \beta_{j-1} \ProdB_{j:k}^{(1)} \big) a_j 
\end{split}
\eeq
Using $\beta_j \ProdB_{j+1:k}^{(1)} - \beta_{j-1} \ProdB_{j:k}^{(1)} = \beta_j^2 B_{11}^j \ProdB_{j+1:k}^{(1)} + (\beta_j - \beta_{j-1}) \ProdB_{j:k}^{(1)}$ concludes the proof.
\end{proof}

\paragraph{Derivation of Eq.~\eqref{eq:WVdecompose}} The decomposition is obtained through repeatedly adding/subtracting terms. Particularly, we observe that the individual terms can be expressed as:
\[
\begin{array}{rl}
C_j \big(\markovonetb_j - \markovonetb_{j+1}\big) & = C_j \markovonetb_j - C_{j-1} \markovonetb_j + C_{j-1} \markovonetb_{j} - C_j \markovonetb_{j+1} \\[.1cm]
\big( \markovtwoB_j - \markovtwoB_{j+1} \big) \tw_j & = \markovtwoB_j \tw_j - \markovtwoB_{j+1} \tw_{j+1} + \markovtwoB_{j+1} ( \tw_{j+1} - \tw_j ) \\[.1cm]
C_j \big( \markovoneB_j - \markovoneB_{j+1} \big) \tw_j & = (C_j - C_{j-1}) \markovoneB_j \tw_j + C_{j-1} \markovoneB_j (\tw_j - \tw_{j-1}) \\[.2cm]
& \hspace{2cm} + C_{j-1} \markovoneB_j \tw_{j-1} - C_j \markovoneB_{j+1} \tw_j \\[.1cm]
\big( \markovtwoA_j - \markovtwoA_{j+1} \big) \ttheta_j & = \markovtwoA_j \ttheta_j - \markovtwoA_{j+1} \ttheta_{j+1} + \markovtwoA_{j+1} ( \ttheta_{j+1} - \ttheta_j ) \\[.1cm]
C_j \big( \markovoneA_j - \markovoneA_{j+1} \big) \ttheta_j & = (C_j - C_{j-1}) \markovoneA_j \ttheta_j + C_{j-1} \markovoneA_j (\ttheta_j - \ttheta_{j-1}) \\[.1cm]
& \hspace{2cm} + C_{j-1} \markovoneA_j \ttheta_{j-1} - C_j \markovoneA_{j+1} \ttheta_j \\[.1cm]
\big( \markovtwoB_j - \markovtwoB_{j+1} \big) C_{j-1} \ttheta_j & = \markovtwoB_j (C_{j-1} - C_{j-2}) \ttheta_j + \markovtwoB_j C_{j-2} (\ttheta_j - \ttheta_{j-1}) \\[.1cm]
& \hspace{2cm} + \markovtwoB_j C_{j-2} \ttheta_{j-1} - \markovtwoB_{j+1} C_{j-1} \ttheta_j \\[.1cm]
C_j ( \markovoneB_j - \markovoneB_{j+1}) C_{j-1} \ttheta_j & = (C_j - C_{j-1}) \markovoneB_j C_{j-1} \ttheta_j + C_{j-1} \markovoneB_j (C_{j-1} - C_{j-2}) \ttheta_j \\[.1cm]
& \hspace{2cm} + C_{j-1} \markovoneB_j C_{j-2} ( \ttheta_j - \ttheta_{j-1} ) \\[.1cm]
& \hspace{2cm} +  C_{j-1} \markovoneB_j C_{j-2} \ttheta_{j-1} - C_j \markovoneB_{j+1} C_{j-1} \ttheta_j.
\end{array}
\]
Collecting terms on the right hand side of the above equations yields \eqref{eq:WVdecompose}. Moreover, we the vectors/matrices that appear in \eqref{eq:WVdecompose} can be bounded as
\[
\| \markovterm_j^{WV} \| \leq  \overline{\rm b} (1 + \Cinfty),~~
\| \simplewtdiff_j \| \leq  \overline{\rm A} (1 + 2 \Cinfty + \Cinfty^2),~~\| \simplewwdiff_j \| \leq \overline{\rm A}(1 + \Cinfty)
\]
\[
\| \simplewtiter \| \leq \overline{\rm A} (1 + 2 \Cinfty + \Cinfty^2),~~\| \simplewwiter \| \leq \overline{\rm A} (1 + \Cinfty )
\]
\[
\| \simplewtA_j \| \leq \overline{\rm A} C_2^U \CSEQ{a_{22}/2} ( 1 + \Cinfty ) (1 +  \varsigma ) \gamma_j, ~~\| \simplewwA_j \| \leq \overline{\rm A} C_2^U \CSEQ{a_{22}/2} \gamma_j .
\]
where the last inequality is due to Lemma~\ref{lem:Lkgamkbound} and we have used $\gamma_{j-1} \leq \varsigma \gamma_j$ [cf.~A\ref{assum:stepsize}-1].
Consequently, we can establish the bounds on the matrix/vector norms by setting
\beq \label{eq:markovWV_const}
\boundwsim := \max\{ \overline{\rm b} (1 + \Cinfty), \overline{\rm A}(1 + 2 \Cinfty + \Cinfty^2), \overline{\rm A} C_2^U \CSEQ{a_{22}/2} ( 1 + \Cinfty ) (1 + \varsigma ) \}.
\eeq

\paragraph{Derivation of Eq.~\eqref{eq:Vdecompose}} Setting $\markovtermt_j^{b_1} := \markovterm_j^{b_1} + \Markovterm_j^{A_{11}} \theta^\star + \Markovterm_j^{A_{12}} w^\star$, we observe
\[
\begin{split}
V_{j+1}^{(1)} & = (\markovoneb_j - \markovoneb_{j+1}) + (\markovoneA_j - \markovoneA_{j+1}) \theta_j + (\markovoneB_j - \markovoneB_{j+1}) w_j \\
& = \markovtermt_j^{b_1} - \markovtermt_{j+1}^{b_1} + (\markovoneA_j - \markovoneA_{j+1}) \ttheta_j + (\markovoneB_j - \markovoneB_{j+1}) \tw_j
\end{split}
\]
 Similar to the previous paragraph, the decomposition is obtained through repeatedly adding/subtracting terms. We observe
\[
\begin{array}{rl}
(\markovoneA_j - \markovoneA_{j+1}) \ttheta_j & = \markovoneA_j \ttheta_j - \markovoneA_{j+1} \ttheta_{j+1} + \markovoneA_{j+1} ( \ttheta_{j+1} - \ttheta_j ) \\[.2cm]
(\markovoneB_j - \markovoneB_{j+1}) \tw_j & =  \markovoneB_j \tw_j - \markovoneB_{j+1} \tw_{j+1} + \markovoneB_{j+1} ( \tw_{j+1} - \tw_j )
\end{array}
\]


\section{Detailed Proof of Theorem~\ref{th: expansion}} \label{app:ext}
Throughout this section we will use additional notations. We denote 
\[
\kappa_\ell: = \frac{\beta_\ell}{\gamma_\ell}. 
\]
Let
\beq
\label{eq:condition-kappa expansion}
\kappa_\infty^{\sf exp} : = \kappa_\infty \wedge (1/2) (\normop{A_{12}} \Cinfty \CSEQ{a_{22}})^{-1}
\eeq
and
\beq
\label{eq:condition-beta expansion}
\beta_\infty^{\sf exp} : = \beta_\infty^{\sf mtg} \wedge 1/(4 \normop{\Delta})
\eeq
We assume $\beta_k \le \beta_\infty^{\sf exp}, \gamma_k \le \gamma_\infty^{\sf mtg}, \kappa_k \le \kappa \le \kappa_\infty^{\sf exp}$. 
Furthermore, let us define 
$$
\ProdBB_{m,n}^{(1)}:= \prod_{j=m}^n (\Id - \beta_j \Delta), \, \ProdBB_{m,n}^{(2)}:= \prod_{j=m}^n (\Id - \gamma_j A_{22}).
$$
Using standard arguments we may bound operator norm of these matrices
\beq
\label{eq: gamma 1 estimate}
\| \ProdBB_{m:n}^{(1)} \|  
\leq \sqrt{\pth} \prod_{j=m}^n(1 - a_{\Delta}\beta_j), \, \| \ProdBB_{m:n}^{(2)} \| \leq \sqrt{\pw}  \prod_{j=m}^n(1 - a_{22}\gamma_j)
\eeq
We set quantities $\mw, \mv$ from the assumption~\ref{assum:bound-conditional-variance} to be equal to 
$$
\mw := \mv: = \max(\EmVW, \normop{\Sigma^{11}}, \normop{\Sigma^{12}}, \normop{\Sigma^{22}})
$$
All conditions of Theorem~\ref{theo:preliminary-bound-martingale} are satisfied. We will use this theorem in the following form
\begin{align}
\label{eq: theta crude bound}
\mth_k &\le \EConst{0, \theta}  \prod_{\ell = 0}^{k-1}(1 - (a_\Delta/4) \beta_{\ell}) {\rm V}_0 + \EConst{1,\theta} \beta_{k}, \\
\label{eq: w crude bound}
\mtw_k &\le  \EConst{0, w} \prod_{\ell = 0}^{k-1}(1 - (a_\Delta/4) \beta_{\ell}) {\rm V}_0 + \EConst{1,w} \gamma_k,
\end{align}
where $\EConst{0, \theta}, \EConst{1, \theta}, \EConst{0,w}$ and $\EConst{1,w}$ denote corresponding constants from Theorem~\ref{theo:preliminary-bound-martingale}. 
Similarly to~\eqref{eq:noise:constants} and~\eqref{eq:noise:constantsVW} we can define $\EtmVW$. Hence, the following inequality holds 
$$
\normop{\E[V_jV_j^T]} \vee \normop{ \E[W_j W_j^T]}  \vee \normop{\E[V_j W_j^T]}  \le \EtmVW (1 + \mth_k + \mtw_k)
$$
Applying~\eqref{eq:2ts1-1} and~\eqref{eq:2ts2-1} (compare with~\cite{konda:tsitsiklis:2004}[Formula 4.4]) we may write down the following expansion for $\ttheta_{k+1}$:
\begin{align} \nonumber 
    \ttheta_{k+1} &= S_{k+1}^{(0)} + \ldots +   S_{k+1}^{(6)},
\end{align}    
where 
\begin{align}   
    S_{k+1}^{(0)} &:= \ProdBB_{0:k}^{(1)} \ttheta_0; \nonumber \\
    S_{k+1}^{(1)} &:= \sum_{j=0}^{k} \beta_j \ProdBB_{j+1:k}^{(1)} A_{12} \ProdBB_{0:j}^{(2)} \tw_0; \nonumber \\
    S_{k+1}^{(2)} &:= \sum_{j=0}^{k} \beta_j \ProdBB_{j+1:k}^{(1)} \delta_j^{(1)}; \nonumber \\ 
    S_{k+1}^{(3)}&:=\sum_{j=0}^{k} \beta_j\ProdBB_{j+1:k}^{(1)} (V_{j+1} + A_{12} A_{22}^{-1} W_{j+1}); \nonumber\\ 
    S_{k+1}^{(4)} &:= \sum_{j=0}^{k} \beta_j \ProdBB_{j+1:k}^{(1)} A_{12} \bigg(\sum_{\ell=0}^{j-1} \beta_\ell \ProdBB_{\ell+1:j-1}^{(2)} \delta_\ell^{(2)}\bigg); \nonumber \\
     S_{k+1}^{(5)} &:= \sum_{j=0}^{k} \beta_j \ProdBB_{j+1:k}^{(1)} A_{12} \bigg(\sum_{\ell=0}^{j-1} \beta_\ell \ProdBB_{\ell+1:j}^{(2)} C_\ell V_{\ell+1}\bigg); \nonumber \\
     S_{k+1}^{(6)} &:= \sum_{j=0}^{k} \beta_j \ProdBB_{j+1:k}^{(1)} A_{12} \sum_{\ell=0}^{j-1} \gamma_\ell \ProdBB_{\ell+1:j-1}^{(2)} W_{\ell+1} - \sum_{\ell=0}^{k} \beta_\ell\ProdBB_{j+1:k}^{(1)} A_{12} A_{22}^{-1} W_{\ell+1}, \nonumber
    \end{align}
where     
$$
\delta_\ell^{(1)} := A_{12} L_\ell \ttheta_\ell, \, \delta_\ell^{(2)} := - C_k A_{12} \tw_\ell.
$$
We will group all terms in the expansion into $5$ blocks, $S_{k+1}^{(0)} + S_{k+1}^{(1)}, S_{k+1}^{(2)}  + S_{k+1}^{(5)}, S_{k+1}^{(3)}, S_{k+1}^{(4)}$ and $S_{k+1}^{(6)}$. It is easy to see that 
$S_{k+1}^{(0)} + S_{k+1}^{(1)}$ is uncorrelated with $S_{k+1}^{(3)}, S_{k+1}^{(6)}$  (moreover it is uncorrelated with $S_{k+1}^{(5)}$, but we ignore this fact). Since $\E[\norm{\ttheta_{k+1}}^2] = \E[\Tr(\ttheta_{k+1} \ttheta_{k+1}^\top)]$ and by the linearity of trace using expansion we show 
\begin{align}
\label{eq:expansion}
\E[\norm{\ttheta_{k+1}}^2] &=  \E\big[\Tr(S_{k+1}^{(3)} (S_{k+1}^{(3)})^\top)\big] + J_{k+1}', 
\end{align}
where for $J_{k+1}'$ we will use the following crude estimate
\begin{align}
|J_{k+1}'| &\le
  3\E\big[\Tr(\{S_{k+1}^{(0)} + S_{k+1}^{(1)}\} \{S_{k+1}^{(0)} + S_{k+1}^{(1)}\}^\top) \big] \nonumber \\
  & + 5\E\big[\Tr(\{S_{k+1}^{(2)} +  S_{k+1}^{(5)}\} \{S_{k+1}^{(2)} +  S_{k+1}^{(5)}\}^\top) + 5\E\big[\Tr(S_{k+1}^{(6)}(S_{k+1}^{(6)})^\top)\big] \big] \nonumber \\
  &+ 2\E\big[\Tr(S_{k+1}^{(3)} (S_{k+1}^{(6)})^\top)\big] + 2 \E\big[\Tr(S_{k+1}^{(3)} \{S_{k+1}^{(2)} +  S_{k+1}^{(5)}\}^\top) \big] \nonumber \\
  &+ 5\E\big[\Tr(S_{k+1}^{(4)} (S_{k+1}^{(4)})^\top)\big] + 2 \E\big[\Tr(S_{k+1}^{(3)} (S_{k+1}^{(4)})^\top) \big] \nonumber
\end{align}
Using martingale property and definition of $\Sigma$ we rewrite the term $\E[\Tr(S_{k+1}^{(3)} (S_{k+1}^{(3)})^\top)]$ as follows
\beq
 \label{eq: leading term expansion}
\begin{split}
 \Tr ( \E[S_{k+1}^{(3)} (S_{k+1}^{(3)})^\top]) &= \sum_{j=0}^{k} \beta_j^2 \Tr(\ProdBB_{j+1:k}^{(1)} \Sigma [\ProdBB_{j+1:k}^{(1)}]^\top) \\
 &+ \sum_{j=0}^{k} \beta_j^2 \Tr(\ProdBB_{j+1:k}^{(1)} (\Sigma_j - \Sigma) [\ProdBB_{j+1:k}^{(1)}]^\top)
\end{split}
\eeq
where
$$
\Sigma_j := \E[V_j V_j^\top] + A_{12} A_{22}^{-1} \E[W_j W_j^\top]  A_{22}^{-\top} A_{12}^\top + \E[V_j W_j^\top]  A_{22}^{-\top} A_{12}^\top + A_{12} A_{22}^{-1} \E[W_j V_j^\top]
$$
\paragraph{Leading term in~\eqref{eq: leading term expansion}}

For lower bound of the first term in~\eqref{eq: leading term expansion} we  will use the following fact. Since for any $s \in [j+1,k]$
$$
(\Id - \beta_s \Delta)^\top (\Id - \beta_s \Delta) = \Id - \beta_s (\Delta + \Delta^\top) + \beta_s^2 \Delta^\top \Delta \succeq (1 - 2\beta_s \|\Delta\|) \Id, 
$$
we obtain using Lemma~\ref{lem:bsum} (and remark after this lemma)
\beq
\label{eq:lower bound}
\sum_{j=0}^{k} \beta_j^2 \Tr(\ProdBB_{j+1:k}^{(1)} \Sigma [\ProdBB_{j+1:k}^{(1)}]^\top) \geq \beta_{k+1} \Tr \Sigma \sum_{j=0}^{k} \beta_j \prod_{\ell=j+1}^{k}(1-2 \beta_\ell \normop{\Delta})  \geq \EConst{3} \beta_{k+1} \Tr \Sigma
\eeq
where
\beq
\label{eq: exp const 3}
\EConst{3}: =  1/(8 \normop{\Delta})
\eeq
and we used $\beta_\infty^{\sf exp} \le 1/(4 \normop{\Delta})$ and $k \geq k_0^{\sf \exp}$. To obtain upper bound we
apply von Neumann trace inequality 
(i.e. $\Tr(A B) \le \sum_{j=1}^n a_j b_j$, where $\{a_j\}$ and $\{b_j\}$ are non-increasing sequences of eigenvalues of Hermitian matrices $A$ and $B$ resp.) and Lemma~\ref{cor:rate-of-convergence}
\beq
\label{eq:upper bound}
 \sum_{j=0}^{k} \beta_j^2 \Tr(\ProdBB_{j+1:k}^{(1)} \Sigma [\ProdBB_{j+1:k}^{(1)}]^\top) \le \pth \Tr(\Sigma) \sum_{j=0}^{k} \beta_j^2 \prod_{\ell=j+1}^{k}(1 - a_\Delta \beta_\ell)^2  \le \EConst{4}  \Tr(\Sigma) \beta_{k+1}  
\eeq
where
\beq
\label{eq: exp const 4}
\EConst{4}: = \pth \CSEQ{a_\Delta}
\eeq
Inequalities~\eqref{eq:lower bound} and~\eqref{eq:upper bound} together imply~\eqref{eq:leading term of exp}. 

\paragraph{Remainder term in~\eqref{eq: leading term expansion}}
The second term in~\eqref{eq: leading term expansion}, which we denote by $R_{k+1}$ may be estimated as follows
\begin{align}
    |R_{k+1}| \le \pth \dth \EtmVW (1 +  \normop{A_{12} A_{22}^{-1}})^2 \sum_{j=0}^{k} \beta_j^2 \prod_{\ell=j+1}^k (1 - a_\Delta \beta_\ell)^2 (\mth_{j} + \mtw_j) \nonumber
\end{align}
Applying~\eqref{eq: theta crude bound} and~\eqref{eq: w crude bound} and Lemma~\ref{cor:rate-of-convergence} 
\begin{align}\label{eq: second term of leading term exp}
\boxed{    |R_{k+1} | \le \EConst{3,0}\prod_{\ell=0}^k (1 - (a_\Delta/4) \beta_\ell) {\rm V}_0 \beta_{k+1} + \EConst{3,1} \beta_{k+1} \gamma_{k+1},  }
\end{align}
where 
\begin{align}
    \EConst{3,0}&:= \pth \dth \EtmVW (1 +  \normop{A_{12} A_{22}^{-1}})^2 \CSEQ{a_\Delta} (\EConst{0, \theta} + \EConst{0,w})/(1 - a_\Delta \beta_\infty^{\sf exp}), \nonumber \\
    \EConst{3,1}&:= \pth \dth \EtmVW (1 +  \normop{A_{12} A_{22}^{-1}})^2 \CSEQ{a_\Delta} (\kappa_\infty^{\sf exp} \EConst{1, \theta} + \EConst{1, w}) \nonumber
\end{align}
\paragraph{Estimation of $J_{k+1}'$}
To finish the proof of the theorem it remains to estimate $J_{k+1}'$.  Applying~\eqref{eq: gamma 1 estimate} it is easy to check that
$$
\Tr(\E [S_{k+1}^{(0)} (S_{k+1}^{(0)})^\top]) = \Tr(\ProdBB_{0:k}^{(1)}  \E[\ttheta_0 \ttheta_0^\top] [\ProdBB_{0:k}^{(1)} ]^\top) \le  \pth \prod_{\ell=0}^k (1 - a_\Delta \beta_\ell)^2  \E[\norm{\ttheta_0}^2]
$$
Similarly, recalling that $\kappa_\infty^{\sf exp} \le (1/4) a_{22}/a_{\Delta}$ and using $\prod_{s = 0}^j (1 - a_{22}\gamma_s)(1 - a_\Delta \beta_s)^{-1} \le \prod_{s = 0}^j (1 - (a_{22}/2)\gamma_s)$ we obtain 
\begin{align}
\Tr(\E [S_{k+1}^{(1)} (S_{k+1}^{(1)})^\top]) &= \sum_{j=0}^{k} \sum_{\ell=0}^{k}  \beta_j \beta_\ell \Tr\big(\ProdBB_{j+1:k}^{(1)} A_{12} \ProdBB_{0:j}^{(2)}\E[\tw_0 \tw_0^\top] [\ProdBB_{0:\ell}^{(2)}]^\top A_{12}^\top [\ProdBB_{\ell+1:k}^{(1)}]^\top  \big) \nonumber \\
& \le \EConst{1} \prod_{\ell=0}^{k} (1 - a_\Delta \beta_\ell)^2 \E[\norm{\tw_0}^2], \nonumber
\end{align}
where
$$
\EConst{1} : = (4/a_{22}^2)  \pw \pth \normop{A_{12}}^2 (\kappa_\infty^{\sf exp})^2
$$
Hence, we may conclude from the previous two inequalities that
\begin{align}\label{eq: term 0 + 1}
\boxed{\E[\Tr(\{S_{k+1}^{(0)} + S_{k+1}^{(1)}\}\{S_{k+1}^{(0)} + S_{k+1}^{(1)}\}^\top)]  \le \EConst{0+1}\prod_{\ell=0}^{k} (1 - a_\Delta \beta_\ell)^2 {\rm V}_0}, 
\end{align}
where
$$
\EConst{0+1} := 2\pth + 4 \EConst{1} (1 + \Cinfty^2) 
$$
For the next term in the expansion we have
\begin{align}
\Tr(\E [S_{k+1}^{(2)} (S_{k+1}^{(2)})^\top]) &\le  \sum_{j=0}^{k} \sum_{\ell=0}^{k} \beta_j \beta_\ell \E \big[\big|\Tr \big( \ProdBB_{j+1:k}^{(1)}  A_{12} L_j \ttheta_j \ttheta_l^\top  L_l^\top A_{12}^T [\ProdB_{\ell+1:k}^{(1)} ]^\top \big) \big|\big] \nonumber
\end{align}
We apply Cauchy-Schwartz inequality twice, first $|\Tr(A B^\top)| \le \Tr^{1/2}(A A^\top) \Tr^{1/2} (B B^\top)$ and then for expectation. We obtain 
\begin{align}
\Tr(\E [S_{k+1}^{(2)} (S_{k+1}^{(2)})^\top]) 
 \le \bigg(\sum_{j=0}^{k}  \beta_j  \big(\Tr \big( \ProdBB_{j+1:k}^{(1)} A_{12} L_j  \E[\ttheta_j \ttheta_j^\top]  L_j^\top A_{12}^\top [\ProdBB_{j+1:k}^{(1)}]^\top \big) \big)^{1/2} \bigg)^2 \nonumber
\end{align}
From Lemma~\ref{lem: L k estimate sharp} we conclude that $\normop{L_j}[Q_\Delta, Q_{22}] \le \EConst{L} \beta_j \gamma_j^{-1}$, where $\EConst{L} : = C_D(L_\infty) \CSEQ{a_{22}}$. This inequality and Jensen's inequality imply
\begin{align}
\Tr(\E [S_{k+1}^{(2)} (S_{k+1}^{(2)})^\top]) 
& \le (\EConst{L})^2 \pth  \normop{A_{12}}[Q_{22}, Q_\Delta]^2 \bigg(\sum_{j=0}^{k}  \beta_j \kappa_j \prod_{\ell=j+1}^k (1 - a_\Delta \beta_{\ell})  \{\E[\norm{\ttheta_j}^2]\}^{1/2}\bigg)^2 \nonumber \\
&\le 
\dth (\EConst{L})^2 \pth a_\Delta^{-1} \normop{A_{12}}[Q_{22}, Q_\Delta]^2  \sum_{j=0}^{k}  \beta_j \kappa_j^2 \prod_{\ell=j+1}^k (1 - a_\Delta \beta_{\ell}) \mth_j \nonumber
\end{align}
Applying~\eqref{eq: theta crude bound} and Lemma~\ref{cor:rate-of-convergence} we get
\begin{align}
\Tr(\E [S_{k+1}^{(2)} (S_{k+1}^{(2)})^\top]) 
\le \EConst{2,0} \prod_{\ell = 0}^k (1 - (a_\Delta/4) \beta_\ell) {\rm V}_0 \kappa_{k+1}^2 + \EConst{2,1} \beta_{k+1} \kappa_{k+1}^2\} \nonumber
\end{align}
where 
\begin{align}
    \EConst{2,0}&: = \dth a_\Delta^{-1} (\EConst{L})^2 \pth  \normop{A_{12}}[Q_{22}, Q_\Delta]^2 \EConst{0,\theta} \CSEQ{a_\Delta/2}/(1 - a_\Delta \beta_\infty^{\sf exp}) , \nonumber \\
    \EConst{2,1}&: = \dth a_\Delta^{-1} (\EConst{L})^2 \pth  \normop{A_{12}}[Q_{22}, Q_\Delta]^2 \EConst{1,\theta} \CSEQ{a_\Delta} \nonumber 
\end{align}
To estimate the next term  we rewrite it as follows
\begin{align} \nonumber
S_{k+1}^{(4)} = \sum_{\ell=0}^{k} \beta_\ell N_{\ell+1,k} \delta_\ell^{(2)},
\end{align}
where 
$$
N_{\ell+1,k} := \sum_{j=\ell+1}^{k} \beta_j \ProdBB_{j+1:k}^{(1)} A_{12}  \ProdBB_{\ell+1:j-1}^{(2)}. 
$$
It is straightforward to check
\beq
\label{eq: N bound}
\begin{split}
\normop{N_{\ell+1,k}} &\le \sqrt{\pw \pth}  \normop{A_{12}} \kappa_{\ell} \prod_{s = \ell+1}^{k} (1 - a_\Delta \beta_s) \sum_{j = \ell + 1}^{k} \gamma_j \prod_{s = \ell+1}^{j-1} (1 - (a_{22}/2) \gamma_s)  \\
&\le \EConst{N} \kappa_\ell (1 - a_\Delta \beta_\infty^{\sf exp})^{-1} \prod_{s = \ell+1}^{k-1} (1 - a_\Delta \beta_s) ,
\end{split}
\eeq
where $\EConst{N}: = (2/a_{22}) \sqrt{\pw \pth}  \normop{A_{12}} (1 - a_\Delta \beta_\infty^{\sf exp})^{-1}$ and we used  (compare with Lemma~\ref{lem:bsum})
\beq \label{eq: finite sum 0}
\sum_{j = \ell +1}^{k} \gamma_j \prod_{s = \ell+1}^{j-1} (1 - (a_{22}/2)  \gamma_s) = \frac{1}{a_{22}} \bigg\{1  - \prod_{s = \ell+1}^{k} (1 - (a_{22}/2) \gamma_s) \bigg\} \le 2/a_{22}. 
\eeq
Applying~\eqref{crude bounds w}, Jensen's inequality and observation
\begin{align}
\label{crude bounds w}
\E[\norm{\delta_\ell^{(2)}}^2] \le \EConst{22} \E[\norm{\tw_\ell}^2],  
\end{align}
where 
$\EConst{22}  :=\Cinfty^2  \normop{A_{12}}^2$, we obtain     
\begin{align}
\Tr (\E[S_{k+1}^{(4)} (S_{k+1}^{(4)})^\top]) &\le \dw a_\Delta^{-1} \EConst{22} (\EConst{N})^2    \sum_{\ell=0}^{k} \beta_\ell \kappa_\ell^2  \prod_{s = \ell+1}^k (1 - a_\Delta \beta_s) \mtw_\ell. \nonumber
\end{align}
Applying~\eqref{eq: w crude bound} and Lemma~\ref{cor:rate-of-convergence} we get
\begin{align}\label{eq: term 4 4}
\boxed{\Tr (\E[S_{k+1}^{(4)} (S_{k+1}^{(4)})^\top]) \le      \EConst{4,0} \prod_{s = 0}^k (1 - (a_\Delta/4) \beta_s) {\rm V}_0   + \EConst{4,0} \beta_{k+1} \kappa_{k+1},} 
\end{align}
where
\begin{align}
    \EConst{4,0}&: = \dw \EConst{0,w} \EConst{22} (\EConst{N})^2 (a_\Delta)^{-1} \CSEQ{a_\Delta/2} (1 - a_\Delta \beta_\infty^{\sf exp})^{-1} \kappa_{\infty}^2 , \nonumber \\
     \EConst{4,1}&: = \dw \EConst{1,w} \EConst{22} (\EConst{N})^2 a_\Delta^{-1} \CSEQ{a_\Delta}. \nonumber
\end{align}
To estimate the next term we proceed similarly. Using martingale property we obtain
\begin{align} \nonumber
\Tr (\E[S_{k+1}^{(5)} (S_{k+1}^{(5)})^\top]) \le \dth \EtmVW \Cinfty^2 \normop{A_{12}}^2  (\EConst{N})^2 \sum_{\ell=0}^{k} \beta_\ell^2 \kappa_\ell^2  \prod_{s = \ell+1}^k (1 - a_\Delta \beta_s)^2(\beta_\ell \gamma_\ell^{-1})^2  (1 +  \mth_\ell +  \mtw_\ell)
\end{align}
Hence, due to Corollary
\begin{align} \nonumber
\Tr (\E[S_{k+1}^{(5)} (S_{k+1}^{(5)})^\top]) \le \EConst{5,0} \prod_{\ell=0}^k (1 - (a_\Delta/4) \beta_\ell) {\rm V}_0\beta_{k+1} \kappa_{k+1}^2 + \EConst{5,1} \beta_{k+1} \kappa_{k+1}^2
\end{align}
where
\begin{align}
\EConst{5,0}&: =  \dth \EtmVW \Cinfty^2 \normop{A_{12}}^2  (\EConst{N})^2 \CSEQ{a_\Delta} (\EConst{0,w} + \EConst{0,\theta})/(1 - a_\Delta \beta_\infty^{\sf exp}), \nonumber \\
\EConst{5,1}&: = \dth \EtmVW \Cinfty^2 \normop{A_{12}}^2  (\EConst{N})^2 (1 + \EConst{1, w} \gamma_\infty^{\sf mtg} + \EConst{ 1   , \theta} \beta_\infty^{\sf exp}) \nonumber
\end{align}
It follows from the previous inequalities that
\begin{align}\label{eq: term 2 + 5}
\boxed{\E\big[\Tr(\{S_{k+1}^{(2)}  + S_{k+1}^{(5)}\} \{S_{k+1}^{(2)} +  S_{k+1}^{(5)}\}^\top)   \le \EConst{2+5, 0 } \prod_{\ell=0}^k (1 - (a_\Delta/4) \beta_\ell) {\rm V}_0  \kappa_{k+1}^2 + \EConst{2+5, 1 }\beta_{k+1} \kappa_{k+1}^2}
\end{align}
where 
\begin{align}
\EConst{2+5, 0}&:= 2 \EConst{5,0} \beta_\infty^{\sf exp} + 2 \EConst{2,0}, \nonumber \\
\EConst{2+5, 1}&:= 2 \EConst{5,1} + 2 \EConst{2,1} \nonumber
\end{align}
For the term $\E\big[\Tr(S_{k+1}^{(3)} \{S_{k+1}^{(2)} +  S_{k+1}^{(5)}\}^\top) \big]$ we write
\begin{align}
&\E\big[\Tr(S_{k+1}^{(3)} \{S_{k+1}^{(2)} +  S_{k+1}^{(5)}\}^\top) \big] \le \E^{1/2}\big[\Tr(S_{k+1}^{(3)} (S_{k+1}^{(3)})^\top \big]\E^{1/2}\big[\Tr(\{S_{k+1}^{(2)} +  S_{k+1}^{(5)}\} \{S_{k+1}^{(2)} +  S_{k+1}^{(5)}\}^\top \big] \nonumber  \\
&\qquad\qquad \le \big\{\EConst{3,0}\prod_{\ell=0}^k (1 - (a_\Delta/4) \beta_\ell) {\rm V}_0 \beta_{k+1} + \EConst{3,1} \beta_{k+1} \gamma_{k+1} + \pth \CSEQ{a_\Delta}   \Tr(\Sigma) \beta_{k+1} \big\}^{1/2} \nonumber \\
&\qquad\qquad \times \big\{\EConst{2+5, 0 } \prod_{\ell=0}^k (1 - (a_\Delta/4) \beta_\ell) {\rm V}_0  \kappa_{k+1}^2 + \EConst{2+5, 1 }\beta_{k+1} \kappa_{k+1}^2 \big\}^{1/2} \nonumber
\end{align}
We obtain 
\begin{align}\label{eq: term 3 2+5}
\boxed{\E\big[\Tr(S_{k+1}^{(3)} \{S_{k+1}^{(2)} +  S_{k+1}^{(5)}\}^\top) \big] \le \EConst{3/2+5,0} \prod_{\ell=0}^k (1 - (a_\Delta/4) \beta_\ell) {\rm V}_0  + \EConst{3/2+5,1} \beta_{k+1} \kappa_{k+1},} 
\end{align}
where 
\begin{align}
    \EConst{3/2+5,0}&:= (\EConst{2+5, 0} +  \EConst{3, 0} + \EConst{3, 0} \EConst{2+5, 0})/2, \nonumber \\
    \EConst{3/2+5,1}&:= \kappa_\infty^{\sf exp}/2 +  \EConst{2+5, 1} \beta_\infty^{\sf exp} \kappa_\infty^{\sf exp}/2 + (\EConst{3,1} \gamma_\infty^{\sf mtg} + \pth \CSEQ{a_\Delta}   \Tr(\Sigma) ) \kappa_\infty^{\sf exp}/2 \nonumber \\
    &+ (\EConst{2+5, 1} (\EConst{3,1} \gamma_\infty^{\sf mtg} + \pth \CSEQ{a_\Delta}   \Tr(\Sigma) ))^{1/2} \nonumber
\end{align}
Let us consider the term
\begin{align}
\label{eq: 3/4 term start}
\E[\Tr(S_{k+1}^{(3)} S_{k+1}^{(4)})^\top] = \sum_{j=0}^{k} \beta_j \sum_{\ell=j+1}^{k} \beta_\ell \Tr(\ProdBB_{j+1:k}^{(1)} Z_{j+1} \tw_\ell^\top A_{12}^\top C_\ell^\top N_{\ell+1, k}^\top),
\end{align}
where $Z_{j+1}: = V_{j+1} + A_{12}A_{22}^{-1} W_{j+1}$. 
For $\tw_\ell$ we can use the following expansion
$$
\tw_\ell = \ProdBB_{0:\ell-1}^{(2)} \tw_0 + \sum_{i=0}^{\ell-1} \gamma_i \ProdBB_{i+1:\ell-1}^{(2)} \widetilde Z_{i+1} + \sum_{i=0}^{\ell-1} \beta_i \ProdBB_{i+1:\ell-1}^{(2)} C_{i} A_{12} \tw_i,  
$$
where $\widetilde Z_{i+1} := W_{i+1} + \kappa_i C_i V_{i+1}$
Substituting this expansion into r.h.s of~\eqref{eq: 3/4 term start} and repeating this procedure until  $\E[Z_{j+1}\tw_l^\top] = \gamma_j\E[Z_{j+1} \widetilde Z_{j+1}^\top (\ProdBB_{j+1:\ell_1-1}^{(2)})^\top$ we come to the following expansion of~\eqref{eq: 3/4 term start}
\begin{align}
& \sum_{\ell=j+1}^{k} \beta_\ell \Tr(\ProdBB_{j+1:k}^{(1)} Z_{j+1} \tw_\ell^\top A_{12}^\top C_\ell^\top N_{\ell+1, k}^\top) \nonumber  \\
 & = \sum_{\ell_1=j+1}^{k} \beta_{\ell_1} \sum_{\ell_2 = j+1}^{\ell_1 - 1} \beta_{\ell_2}\Tr(\ProdBB_{j+1:k}^{(1)} Z_{j+1} \tw_{\ell_2}^\top A_{12}^\top C_{\ell_2}^\top (\ProdBB_{\ell_2+1:\ell_1-1}^{(2)})^\top A_{12}^\top C_{\ell_1}^\top N_{\ell_1+1, k}^\top) \nonumber  \\
 & + \gamma_j \sum_{\ell_1=j+1}^{k} \beta_{\ell_1} \Tr(\ProdBB_{j+1:k}^{(1)} Z_{j+1} \widetilde Z_{j+1}^\top (\ProdBB_{j+1:\ell_1-1}^{(2)})^\top A_{12}^\top C_{\ell_1}^\top N_{\ell_1+1, k}^\top) \nonumber \\
 & = \gamma_j \sum_{s = 1}^{k-j}   \sum_{\ell_1 = j+1}^{k} \beta_{\ell_1}  \sum_{\ell_2 = 1}^{ \ell_1-1} \beta_{\ell_2} \ldots  \sum_{\ell_s = j+1}^{l_{s-1} - 1} \beta_{\ell_{s}} \nonumber \\
 &\qquad\qquad \times \Tr ( \ProdBB_{j+1:k}^{(1)} Z_{j+1} \widetilde Z_{j+1}^\top ( \ProdBB_{j+1:\ell_s-1}^{(2)})^\top  A_{12}^\top C_{\ell_{s}}^\top ( \ProdBB_{\ell_{s}+1:\ell_{s-1}-1}^{(2)})^\top  A_{12}^\top C_{\ell_{s}}^\top \ldots  (\ProdBB_{\ell_2+1:\ell_1-1}^{(2)})^\top A_{12}^\top C_{\ell_1}^\top N_{\ell_1+1, k}^\top) \nonumber
\end{align}
where $\ell_0 := k+1$. Using iteratively Corollary  and estimate~\eqref{eq: N bound} for $N_{\ell_1+1, k}$ we obtain the following bound
\begin{align}
    &\bigg \| \sum_{\ell=j+1}^{k} \beta_\ell \Tr(\ProdB_{j:k-1}^{(1)} Z_{j+1} \tw_\ell^\top A_{12}^\top C_\ell^\top N_{\ell+1, k}^\top) \bigg  \| \nonumber \\
    &\qquad\qquad\le \Cinfty \CSEQ{a_{22}}\EConst{N} \beta_j \kappa_j \prod_{\ell = j+1}^k (1 - a_\Delta \beta_\ell) \E^{1/2}[\norm{Z_{j+1}}^2] \E^{1/2}[\norm{\widetilde Z_{j+1}}^2] \sum_{s = 1}^{k-j} (\kappa_{j} \normop{A_{12}} \Cinfty \CSEQ{a_{22}})^{s-1} \nonumber
\end{align}
Since $\kappa_\infty^{\sf exp} \le (1/2) (\normop{A_{12}} \Cinfty \CSEQ{a_{22}})^{-1}$ and
$$
\E^{1/2}[\norm{Z_{j+1}}^2] \E^{1/2}[\norm{\widetilde Z_{j+1}}^2] \le 2 \EtmVW (1 + \normop{A_{12}A_{22}^{-1}})(1 + \sqrt{\kappa_\infty^{\sf exp} \Cinfty}) (1 + \mth_j + \mtw_j)
$$
we obtain that 
\begin{align}
    &\bigg \| \sum_{\ell=j+1}^{k} \beta_\ell \Tr(\ProdBB_{j+1:k}^{(1)} Z_{j+1} \tw_\ell^\top A_{12}^\top C_\ell^\top N_{\ell+1, k-1}^\top) \bigg  \| \nonumber \\
    &\qquad\qquad \le \Cinfty \EConst{N} 2 \EtmVW (1 + \normop{A_{12}A_{22}^{-1}})(1 + \sqrt{\kappa_\infty^{\sf exp} \Cinfty}) \beta_j \kappa_j \prod_{s = j+1}^{k} (1 - a_\Delta \beta_s) (1 + \mth_j + \mtw_j) \nonumber
\end{align}
This inequality and~\eqref{eq: 3/4 term start} together imply
\begin{align}
&\big|\E[\Tr(S_{k+1}^{(3)} S_{k+1}^{(4)})^\top] \big| \nonumber \\
&\qquad\qquad \le \Cinfty \EConst{N} 2 \EtmVW (1 + \normop{A_{12}A_{22}^{-1}})(1 + \sqrt{\kappa_\infty^{\sf exp} \Cinfty}) \sum_{j=0}^{k} \beta_j^2 \kappa_j \prod_{s = j+1}^k (1 - a_\Delta \beta_s) (1 + \mth_{j} + \mtw_j) \nonumber
\end{align}
Finally, the standard arguments will lead to
\begin{align}\label{eq: term 3 4}
\boxed{\big|\E[\Tr(S_{k+1}^{(3)} S_{k+1}^{(4)})^\top] \big|  \le \EConst{3/4,0} \prod_{\ell=0}^k (1 - (a_\Delta/4) \beta_\ell) V_0  + \EConst{3/4,0}\beta_{k+1} \kappa_{k+1},} 
\end{align}
where
\begin{align}
    \EConst{3/4,0}&: = \Cinfty \EConst{N} 2 \EtmVW (1 + \normop{A_{12}A_{22}^{-1}})(1 + \sqrt{\kappa_\infty^{\sf exp} \Cinfty}) \CSEQ{a_\Delta/2} (\EConst{0,w} + \EConst{0,\theta})/(1 - a_\Delta \beta_\infty^{\sf exp}) \beta_\infty^{\sf exp} \kappa_\infty^{\sf exp}, \nonumber \\
     \EConst{3/4,1}&: = \Cinfty \EConst{N} 2 \EtmVW (1 + \normop{A_{12}A_{22}^{-1}})(1 + \sqrt{\kappa_\infty^{\sf exp} \Cinfty}) \CSEQ{a_\Delta} (1 + \EConst{1,w} \gamma_\infty^{\sf mtg} + \EConst{1,\theta} \beta_\infty^{\sf exp}) \nonumber
\end{align}
Finally, we estimate all terms involving $S_{k+1}^{(6)}$. We rewrite $S_{k+1}^{(6)}$ as follows
$$
S_{k+1}^{(6)} = \sum_{\ell=0}^{k} \beta_l  M_{\ell,k} W_{\ell+1}
$$
where we defined
$$
M_{\ell,k} :=    \gamma_\ell \beta_\ell^{-1} \sum_{j=\ell+1}^{k} \beta_j \ProdBB_{j+1:k}^{(1)}  A_{12}   \ProdBB_{\ell+1:j-1}^{(2)}  -   \ProdBB_{\ell+1:k}^{(1)} A_{12} A_{22}^{-1}.
$$
Using martingale property we obtain 
\begin{align}
\Tr (\E[S_{k+1}^{(6)} (S_{k+1}^{(6)})^\top]) \le \sum_{\ell=0}^{k} \beta_\ell^2 \normop{M_{\ell,k}}^2 \E[\norm{W_{\ell+1}}^2] \nonumber
\end{align}
We rewrite $M_{\ell,k}$ as follows
\begin{align}
M_{\ell,k} &=\sum_{j=\ell+1}^{k} \gamma_j \bigg[ \frac{\beta_j \gamma_l}{\beta_l \gamma_j} \Id - \ProdBB_{\ell+1:j}^{(1)}  \bigg] \ProdBB_{j+1:k}^{(1)}  A_{12}   \ProdBB_{\ell+1:j-1}^{(2)}  + \ProdBB_{\ell+1:k}^{(1)} A_{12} \bigg(\sum_{j=l+1}^{k} \gamma_j \ProdBB_{\ell+1:j-1}^{(2)} - A_{22}^{-1}\bigg) \nonumber
\end{align}
Since,
$$
\sum_{j=l+1}^{k} \gamma_j \ProdB_{\ell+1:j-1}^{(2)} = A_{22}^{-1} \bigg ( \Id  - \ProdBB_{\ell+1:k}^{(2)} \bigg)
$$
this equation leads to  
\begin{align}
M_{\ell,k} &=\sum_{j=\ell+1}^{k} \gamma_j \bigg[ \frac{\beta_j \gamma_l}{\beta_l \gamma_j} \Id - \ProdBB_{\ell+1:j}^{(1)}  \bigg] \ProdBB_{j+1:k}^{(1)}  A_{12}   \ProdBB_{\ell+1:j-1}^{(2)}    - \ProdBB_{\ell+1:k}^{(1)} A_{12} A_{22}^{-1} \ProdBB_{\ell+1:k}^{(2)} \nonumber
\end{align}
We rewrite the term in the square brackets as follows 
$$
 \frac{\beta_j \gamma_l}{\beta_l \gamma_j} \Id - \ProdBB_{\ell+1:j}^{(1)} = \prod_{s=\ell+1}^{j} \frac{\kappa_{s}}{\kappa_{s-1}}\Id -  \prod_{s=\ell+1}^{j} (\Id - \beta_s \Delta) = \sum_{t = \ell+1}^{j} \frac{\kappa_{t-1}}{\kappa_l}\Id \big\{\beta_t \Delta + (\kappa_{t}/\kappa_{t-1} -1)  \Id \big\} \ProdBB_{t+1:j}^{(1)}
$$
Using assumption~\ref{assum:stepsize} we may show that 
$$
|\kappa_{t}/\kappa_{t-1} -1| \le  (a_\Delta/16) \beta_t
$$
Taking norm of the both sides of the previous equation we obtain
$$
\bigg \|\frac{\beta_j \gamma_l}{\beta_l \gamma_j} \Id - \ProdBB_{\ell+1:j}^{(1)} \bigg\| \le \sqrt{p_\Delta} (\normop{\Delta} + (a_\Delta/16))\kappa_l^{-1} \sum_{t = \ell+1}^{j} \beta_t \kappa_{t-1} \prod_{s = t+1}^{j}(1 - a_\Delta \beta_s)
$$
Finally we arrive at the following bound for $M_{\ell,k}$
\beq
\label{eq: M l k bound}
\begin{split}
&\normop{M_{\ell,k}} \le  \sqrt{\pw \pth} \normop{A_{12}A_{22}^{-1}} \prod_{s = \ell+1}^k (1 - a_\Delta \beta_s) \prod_{s = \ell+1}^{k} (1 - a_{22} \gamma_s) \\
& + \sqrt{\pth \pw} \bigg(\normop{\Delta} +\frac{a_\Delta}{16}\bigg)\kappa_l^{-1} \sum_{j = \ell+1}^{k} \gamma_j  \sum_{t = \ell+1}^{j} \beta_t \kappa_{t-1} \prod_{s = t+1}^{k}(1 - a_\Delta \beta_s) \prod_{s = \ell+1}^{j-1}(1 - a_{22} \gamma_s) 
\end{split}
\eeq
This bound will yield
\begin{align}
\Tr (\E[S_{k+1}^{(6)} (S_{k+1}^{(6)})^\top]) &\le 2  \pw \pth \normop{A_{12}A_{22}^{-1}}^2 \sum_{\ell=0}^{k}  \beta_{\ell}^2 \prod_{s = \ell+1}^k (1 - a_\Delta \beta_s) \prod_{s = \ell+1}^{k} (1 - a_{22} \gamma_s)    \E[\norm{W_{\ell+1}}^2] \nonumber \\
& + 2\pth \pw (\normop{\Delta} + (a_\Delta/16))^2
\sum_{\ell=0}^{k} \gamma_{\ell}^2 
\E[\norm{W_{\ell+1}}^2] \prod_{s = \ell+1}^{k}(1 - a_\Delta \beta_s) \nonumber \\
&\times \bigg \{ \sum_{j = \ell+1}^{k} \gamma_j \prod_{s = \ell+1}^{j-1}(1 - (a_{22}/2) \gamma_s)  \sum_{t = \ell+1}^{j} \beta_t \kappa_{t-1} \prod_{s = t+1}^{j}(1 - a_\Delta \beta_s)  \bigg \}^2 \nonumber \\
& =: A_1 + A_2
\end{align}
The estimate of $A_1$ follows from Lemma~\ref{cor:rate-of-convergence} 
\begin{align}\label{eq: term 6 1 estimate}
    A_1 \le \EConst{6,0,2} \prod_{\ell=0}^k (1 - (a_\Delta/4) \beta_\ell){\rm V}_0 
    + \EConst{6,1,1} \beta_{k+1} \kappa_{k+1}. 
\end{align}
where 
\begin{align}
    \EConst{6,0,1}&:= 2 \dw \EtmVW  \pw \pth \normop{A_{12}A_{22}^{-1}}^2 \CSEQ{a_{22}}(\EConst{0,w} + \EConst{0,\theta}) \beta_\infty^{\sf exp} \kappa_\infty^{\sf exp} /(1 - a_\Delta \beta_\infty^{\sf exp})  , \nonumber \\
    \EConst{6,1,1}&:= 2 \dw \EtmVW  \pw \pth \normop{A_{12}A_{22}^{-1}}^2 (1 + \EConst{1, w}\gamma_\infty^{\sf mtg} + \EConst{1, \theta} \beta_\infty^{\sf exp}) \CSEQ{a_{22}} \nonumber
\end{align}
Inequality~\eqref{eq: finite sum 0} and 
Jensen's inequality together imply
\begin{align}
A_2 &\le  2(2/a_{22})^2 \pth \pw (\normop{\Delta} + (a_\Delta/16))^2
\sum_{\ell=0}^{k} \gamma_\ell^2
\E[\norm{W_{\ell+1}}^2] \prod_{s = \ell+1}^{k}(1 - a_\Delta \beta_s) \nonumber \\
&\times  \sum_{j = \ell+1}^{k} \gamma_j \bigg \{ \sum_{t = \ell+1}^{j} \beta_t \kappa_{t-1} \prod_{s = t+1}^{j}(1 - a_\Delta \beta_s)  \bigg \}^2 \prod_{s = \ell+1}^{j-1}(1 - (a_{22}/2) \gamma_s) \nonumber
\end{align} 
Similarly, applying Jensen's inequality for the second time we come to the following inequality
\begin{align}
A_2 &\le  2(2/a_{22})^2 (1/a_\Delta)^2 \pth \pw (\normop{\Delta} + (a_\Delta/16))^2
\sum_{\ell=0}^{k} \gamma_\ell^2
\E[\norm{W_{\ell+1}}^2] \prod_{s = \ell+1}^{k}(1 - a_\Delta \beta_s) \nonumber  \\
&\times  \sum_{j = \ell+1}^{k} \gamma_j \sum_{t = \ell+1}^{j} \beta_t \kappa_{t-1}^2 \prod_{s = t+1}^{j}(1 - a_\Delta \beta_s)  \prod_{s = \ell+1}^{j-1}(1 - (a_{22}/2) \gamma_s) \nonumber
\end{align} 
Changing the order of summation we obtain
\begin{align}
A_2 &\le  2(2/a_{22})^2 (1/a_\Delta)^2 \pth \pw (\normop{\Delta} + (a_\Delta/16))^2
\sum_{j = 0}^{k} \gamma_j \prod_{s = j+1}^{k}(1 - a_\Delta \beta_s) \sum_{t = 0}^{j} \beta_t \kappa_{t-1}^2  \prod_{s = t+1}^{j}(1 - (a_{22}/2) \gamma_s) \nonumber \\
& \times \sum_{\ell=0}^{t-1} \gamma_\ell^2 \prod_{s = \ell+1}^{t-1}(1 - (a_{22}/2) \gamma_s)
   \E[\norm{W_{\ell+1}}^2] \nonumber
   \end{align} 
Finally, estimating $\E[\norm{W_{\ell+1}}^2]$ by \eqref{eq: theta crude bound} and~\eqref{eq: w crude bound} and applying Lemma~\ref{cor:rate-of-convergence} we obtain 
\begin{align}\label{eq: term 6 2 estimate}
A_2 &\le \EConst{6,0,2} \prod_{\ell=0}^k (1 - (a_\Delta/4) \beta_\ell) {\rm V}_0  +  \EConst{6,1,2} \beta_{k+1} \kappa_{k+1}, 
\end{align} 
where 
\begin{align}
  \EConst{6,0,2} &: = \dw \EtmVW (\CSEQ{a_{22}/4})^2 \CSEQ{a_\Delta/2} 2(2/a_{22})^2 (1/a_\Delta)^2 \pth \pw (\normop{\Delta} + (a_\Delta/16))^2 \nonumber \\
  & \qquad\qquad\qquad \times (\EConst{0,w} + \EConst{0,\theta}) (\kappa_\infty^{\sf exp})^2 \gamma_\infty^{\sf mtg}/(1 - a_\Delta \beta_\infty^{\sf exp})^2, \nonumber \\
 \EConst{6,1,2} &: =  \dw \EtmVW (\CSEQ{a_{22}/2})^2 \CSEQ{a_\Delta} 2(2/a_{22})^2 (1/a_\Delta)^2 \pth \pw (\normop{\Delta} + (a_\Delta/16))^2   (1 + \EConst{1, w} \gamma_\infty^{\sf mtg} + \EConst{1,\theta} \beta_\infty^{\sf exp}) \nonumber
\end{align}
We conclude from~\eqref{eq: term 6 1 estimate} and~\eqref{eq: term 6 2 estimate} that
\begin{align}\label{eq: term 6 6}
\boxed{\Tr (\E[S_{k+1}^{(6)} (S_{k+1}^{(6)})^\top]) \le \EConst{6,0} \prod_{\ell=0}^k (1 - (a_\Delta/4) \beta_\ell) {\rm V}_0 +  \EConst{6,1} \beta_{k+1} \kappa_{k+1},} 
\end{align}
where
\begin{align} \nonumber
  \EConst{6,0} &: = \EConst{6,0, 1} + \EConst{6,0, 2},  \\
 \EConst{6,1} &: = \EConst{6,1,1} + \EConst{6,1,2}
\end{align}
It remains to consider $\E\big[\Tr(S_{k+1}^{(3)} (S_{k+1}^{(6)})^\top)\big]$. We proceed similarly and use~\eqref{eq: M l k bound} to get
\begin{align} \nonumber
\big|\E\big[\Tr(S_{k+1}^{(3)} (S_{k+1}^{(6)})^\top)\big] \big| &\le \sum_{\ell=0}^{k} \beta_\ell^2 \normop{M_{\ell,k}} \prod_{s=\ell+1}^k (1 - a_\Delta \beta_s) \E^{1/2} [\norm{Z_{\ell+1}}^2]\E^{1/2} [\norm{W_{\ell+1}}^2] \\
&= A_1' + A_2'\nonumber
\end{align}
The following estimate holds for $A_1'$
\begin{align}
|A_1'| &\le \sqrt{\pw \pth} \normop{A_{12}A_{22}^{-1}} \max(\dth, \dw) ( 2+2 \normop{A_{12} A_{22}^{-1}})^{1/2} \sum_{\ell=0}^{k} \beta_\ell^2  \prod_{s = \ell+1}^k(1 - a_\Delta \beta_s) \nonumber \\
&\qquad\qquad\qquad\qquad\times\prod_{s = \ell+1}^k (1 - a_{22} \gamma_s) (1 + \mth_\ell + \mtw_\ell) \nonumber
\end{align}
Applying standard arguments we get
\begin{align}\label{eq: term 3 6 1 estimate}
A_{1}' \le \EConst{3/6,0,1} \prod_{\ell=0}^k (1 - (a_\Delta/4) \beta_\ell) {\rm V}_0 +  \EConst{3/6,1,1} \beta_{k+1} \kappa_{k+1}, 
\end{align}
where
\begin{align}
   \EConst{3/6,0,1}&:= \EtmVW\sqrt{\pw \pth} \normop{A_{12}A_{22}^{-1}} \max(\dth, \dw) ( 2+2 \normop{A_{12} A_{22}^{-1}})^{1/2} \nonumber \\
   &\qquad\qquad\qquad\qquad \times (\EConst{0,w} + \EConst{0,\theta}) \CSEQ{a_{22}} \beta_\infty^{\sf exp} \kappa_\infty^{\sf exp}/(1 - a_\Delta \beta_\infty^{\sf exp}), \nonumber \\ 
   \EConst{3/6,1,1}&:= 2\EtmVW\sqrt{\pw \pth} \normop{A_{12}A_{22}^{-1}} \max(\dth, \dw) ( 2+2 \normop{A_{12} A_{22}^{-1}})^{1/2} (1 + \EConst{1,w} \gamma_\infty^{\sf mtg} + \EConst{1,\theta} \beta_\infty^{\sf exp}) \CSEQ{a_{22}} \nonumber
\end{align}
For $A_2'$ we write the following bound
\begin{align}
A_2' &\le \sqrt{\pth \pw} (\normop{\Delta} + (a_\Delta/16))\max(\dth, \dw) ( 2+2 \normop{A_{12} A_{22}^{-1}})^{1/2}
\sum_{\ell=0}^{k} \gamma_{\ell}^2 
(1 + \mth_\ell + \mtw_\ell) \prod_{s = \ell+1}^{k}(1 - a_\Delta \beta_s) \nonumber \\
&\times \bigg \{ \sum_{j = \ell+1}^{k} \gamma_j \prod_{s = \ell+1}^{j-1}(1 - a_{22} \gamma_s)  \sum_{t = \ell+1}^{j} \beta_t \kappa_{t-1} \prod_{s = t+1}^{j}(1 - a_\Delta \beta_s)  \bigg \} \nonumber
\end{align}
Changing the order of summation and applying arguments from the estimation of $A_2$ we come to the following bound
\begin{align}\label{eq: term 3 6 2 estimate}
A_2' &\le \EConst{3/6,0,2} \prod_{\ell=0}^k (1 - (a_\Delta/4) \beta_\ell) {\rm V}_0  +  \EConst{3/6,1,2} \beta_{k+1} \kappa_{k+1}, 
\end{align} 
where 
\begin{align}
  \EConst{3/6,0,2} &: = \sqrt{\pth \pw} (\normop{\Delta} + (a_\Delta/16))\max(\dth, \dw) ( 2+2 \normop{A_{12} A_{22}^{-1}})^{1/2} \nonumber \\
  &\qquad\qquad\qquad\qquad \times (\EConst{0,w} + \EConst{0,\theta}) (\CSEQ{a_{22}/4})^2 \CSEQ{a_{\Delta}/2}  \gamma_\infty^{\sf mtg} (\kappa_\infty^{\sf exp})^2 /(1 - a_\Delta \beta_\infty^{\sf exp})^2 , \nonumber \\
 \EConst{3/6,1,2} &: =  \sqrt{\pth \pw} (\normop{\Delta} + (a_\Delta/16))\max(\dth, \dw) ( 2+2 \normop{A_{12} A_{22}^{-1}})^{1/2}   (1 + \EConst{1,w} \gamma_\infty^{\sf mtg} + \EConst{1,\theta} \beta_\infty^{\sf exp}) (\CSEQ{a_{22}/2})^2 \CSEQ{a_{\Delta}} \nonumber
\end{align}
We conclude from~\eqref{eq: term 3 6 1 estimate} and~\eqref{eq: term 3 6 2 estimate} 
\begin{align}\label{eq: term 3 6}
\boxed{\big|\Tr (\E[S_{k+1}^{(3)} (S_{k+1}^{(6)})^\top]) \big| \le \EConst{3/6,0} \prod_{\ell=0}^k (1 - (a_\Delta/4) \beta_\ell) {\rm V}_0  +  \EConst{3/6,1} \beta_{k+1} \kappa_{k+1},} 
\end{align}
where
\begin{align}
  \EConst{3/6,0} &: = \EConst{3/6,0, 1} + \EConst{3/6,0, 2}, \nonumber  \\
 \EConst{3/6,1} &: = \EConst{3/6,1,1} + \EConst{3/6,1,2} \nonumber
\end{align}
\paragraph{Final estimate of the remainder term $J_{k+1}$}
Collecting \eqref{eq: second term of leading term exp}, \eqref{eq: term 0 + 1},\eqref{eq: term 4 4}, \eqref{eq: term 2 + 5}, \eqref{eq: term 3 2+5}, \eqref{eq: term 3 4}, \eqref{eq: term 6 6}, \eqref{eq: term 3 6}  we obtain the estimate of $J_{k+1}: = R_{k+1} + J_{k+1}'$
\begin{align}
\boxed{|J_{k+1}| \le \EConst{0} \prod_{\ell=0}^k (1 - (a_\Delta/4) \beta_\ell){\rm V}_0 + \EConst{0} \beta_{k+1} (\gamma_{k+1} +  \kappa_{k+1}),} \nonumber
\end{align}
where
\beq
 \label{eq: exp const 0 1}
\begin{split}
  \EConst{0}&:= \EConst{3,0} \beta_\infty^{\sf exp}  + 3\EConst{0/1} + 5\EConst{4,0}  + 5\EConst{2+5,0} ( \kappa_\infty^{\sf exp})^2  + 2\EConst{3/2+5,0} + 2 \EConst{3/4,0} \\
  &+ 5\EConst{6,0}  + 2\EConst{3/6,0} , \\
  \EConst{1}&:= \EConst{3,1} + 5\EConst{4,1} + 5\EConst{2+5,1} + 2\EConst{3/2+5,1} + 2\EConst{3/4,1} + 5\EConst{6,1 } + 2\EConst{3/6,1}
\end{split}
\eeq
Hence, we obtained~\eqref{eq:remainder term of exp}.


\section{Auxiliary Lemmas}\label{app:aux}

\begin{lemma} \label{lem:bsum}
Let $a > 0$ and $(\gamma_k)_{k \geq 0}$ be a nonincreasing sequence such that $\gamma_0 < 1/a$. Then, for any integer $k \geq 1$, 
\[
\sum_{j=0}^{k-1} \gamma_j \prod_{l=j+1}^{k-1} (1 - \gamma_l a) = \frac{1}{a} \left\{1  - \prod_{l=0}^{k-1} (1 - \gamma_l a) \right\}
\]
\end{lemma}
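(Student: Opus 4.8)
The plan is to prove the identity by a telescoping argument rather than by induction, since the summand is engineered to collapse once the step factor $a\gamma_j$ is rewritten. First I would multiply both sides by $a$, reducing the claim to showing that $\sum_{j=0}^{k-1}(a\gamma_j)\prod_{l=j+1}^{k-1}(1-\gamma_l a) = 1 - \prod_{l=0}^{k-1}(1-\gamma_l a)$. The key algebraic observation is the trivial identity $a\gamma_j = 1 - (1-\gamma_j a)$, which lets me split each summand into the difference of two consecutive partial products.

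Concretely, I would introduce the shorthand $T_j := \prod_{l=j}^{k-1}(1-\gamma_l a)$ with the empty-product convention $T_k = 1$. Substituting $a\gamma_j = 1-(1-\gamma_j a)$ turns the $j$-th term into $\prod_{l=j+1}^{k-1}(1-\gamma_l a) - (1-\gamma_j a)\prod_{l=j+1}^{k-1}(1-\gamma_l a) = T_{j+1} - T_j$. The sum $\sum_{j=0}^{k-1}(T_{j+1}-T_j)$ then telescopes to $T_k - T_0 = 1 - \prod_{l=0}^{k-1}(1-\gamma_l a)$, and dividing through by $a$ yields the stated formula.

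The argument is essentially bookkeeping, so the only place that demands care is the handling of the empty-product conventions at the two endpoints: the factor $\prod_{l=j+1}^{k-1}(1-\gamma_l a)$ must be read as $1$ when $j = k-1$, and $T_k$ must be read as $1$ for the telescoping to close correctly at the top end. To guard against an off-by-one slip I would verify the base case $k=1$ explicitly, where the left side is just $\gamma_0$ and the right side is $(1/a)\{1-(1-\gamma_0 a)\} = \gamma_0$, confirming the conventions are consistent. Finally I would note that the hypotheses $\gamma_0 < 1/a$ and monotonicity are not actually needed for the equality itself, which is purely algebraic; they only guarantee that each factor $1-\gamma_l a$ lies in $(0,1]$, which is what makes the right-hand side a genuine contraction-type quantity and underlies the remark (used elsewhere) that the sum is bounded above by $1/a$.
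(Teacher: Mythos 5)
Your proof is correct and is essentially the same telescoping argument the paper uses: the paper defines $u_{j:k-1}=\prod_{l=j}^{k-1}(1-\gamma_l a)$ and notes $u_{j+1:k-1}-u_{j:k-1}=a\gamma_j u_{j+1:k-1}$, which is exactly your identity $a\gamma_j T_{j+1}=T_{j+1}-T_j$, and then sums the telescope. Your extra remarks (the $k=1$ sanity check and the observation that $\gamma_0<1/a$ is not needed for the algebraic identity) are fine but add nothing beyond the paper's argument.
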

\begin{remark}
If $k_0$ is such that $\sum_{l = 0}^{k_0-1} \gamma_l \geq \log(2)/a $ then the r.h.s. of the previous equation is lower bounded by $1/(2a)$ for any $k \geq k_0$.
\end{remark}
\begin{proof}
Let us denote $u_{j:k-1} = \prod_{l = j}^{k-1} (1 - \gamma_l a)$. Then, for $j \in\{0,\dots,k-1\}$,  
$u_{j+1:k-1} - u_{j:k-1} = a \gamma_j u_{j+1}$. Hence, 
\[
\sum_{j=0}^{k-1} \gamma_j \prod_{l=j+1}^{k-1} (1 - \gamma_l a) = \frac{1}{a} \sum_{j=0}^{k-1} (u_{j+1:k-1} - u_{j:k-1}) = a^{-1} ( 1 - u_{0:k-1} ) \,.
\]
\end{proof}


\begin{lemma}
\label{cor:rate-of-convergence}
Assume A\ref{assum:stepsize} and set
\begin{equation}
\label{eq:definition-Cseq}
\CSEQ{a} = \frac{2}{a} \varsigma \max\{1, a_{22}/(4a_\Delta)\} \vee \frac{4}{a}  \, \varsigma^3.
\end{equation}
The following holds 
\begin{enumerate}[label=(\roman*)]
    \item For any $a \in [a_{22}/4,\gamma_0^{-1}]$ and $k \in \nset$, if in addition, we have $\kappa \leq a_{22} / (4a_\Delta)$, then
\[
\sum_{j=0}^{k-1} \gamma^2_j \prod_{l=j+1}^{k-1} (1 - \gamma_l a)  \leq \CSEQ{a} \gamma_k,~~ \sum_{j=0}^{k-1} \beta_j \gamma_j \prod_{l=j+1}^{k-1} (1 - \gamma_l a)  \leq \CSEQ{a} \beta_k,~~
\sum_{j=0}^{k-1} \beta_j^2  \prod_{l=j+1}^{k-1} (1 - \gamma_l a)  \leq \CSEQ{a} \beta_k
\]
\item For any $a \in [a_\Delta/8, \beta_0^{-1}]$ and $k \in \nset$,
\[
\sum_{j=0}^{k-1} \beta_j \gamma_j \prod_{\ell=j+1}^{k-1} (1 - a \beta_\ell ) \leq \CSEQ{a} \gamma_{k}, \quad \sum_{j=0}^{k-1} \beta_j^2 \prod_{\ell=j+1}^{k-1} (1 - a \beta_\ell ) \leq \CSEQ{a} \beta_{k}
\]
\item For any $a \in [a_\Delta/4, \beta_0^{-1}]$ and $k \in \nset$,
\[
\sum_{j=0}^{k-1} \beta_j^3/\gamma_j \prod_{\ell=j+1}^{k-1} (1 - a \beta_\ell ) \leq \CSEQ{a} \beta_{k}^2/\gamma_k, \quad \sum_{j=0}^{k-1} \beta_j^4/\gamma_j^2 \prod_{\ell=j+1}^{k-1} (1 - a \beta_\ell ) \leq \CSEQ{a} \beta_{k}^3/\gamma_k^2,
\]
\[
\sum_{j=0}^{k-1} \beta_j^2 \gamma_j \prod_{\ell=j+1}^{k-1} (1 - a \beta_\ell ) \leq \CSEQ{a} \beta_{k} \gamma_k
\]
\item For any $a \in [a_{22}/4, \gamma_0^{-1}]$ and $k \in \nset$,
\[
\sum_{j=0}^{k-1} \beta_j \prod_{l=j+1}^{k-1} (1 - \gamma_l a)  \leq \CSEQ{a} \beta_k/\gamma_k, \quad \sum_{j=0}^{k-1} \beta_j^2 \prod_{l=j+1}^{k-1} (1 - \gamma_l a)  \leq \CSEQ{a} \beta_k^2/\gamma_k, \quad \sum_{j=0}^{k-1} \beta_j^3/\gamma_j \prod_{l=j+1}^{k-1} (1 - \gamma_l a)  \leq \CSEQ{a} \beta_k^3/\gamma_k^2
\]
\end{enumerate}
\end{lemma}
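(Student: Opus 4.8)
The plan is to treat all eleven sums uniformly through the one-step recursion they satisfy. Writing $T_k := \sum_{j=0}^{k-1}\eta_j\prod_{l=j+1}^{k-1}(1-\zeta_l a)$ with $\zeta\in\{\gamma,\beta\}$ the decay sequence and $\eta$ the summand, every sum in the statement obeys $T_{k+1}=(1-\zeta_k a)T_k+\eta_k$ with $T_0=0$. I would prove each bound $T_k\le\CSEQ{a}m_k$, where $m_k$ is the corresponding target ($\gamma_k$, $\beta_k$, $\beta_k/\gamma_k$, $\beta_k^2/\gamma_k$, $\beta_k^3/\gamma_k^2$, $\beta_k\gamma_k$, etc.), by induction on $k$; the base case is the empty sum $T_0=0$. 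Feeding $T_k\le\CSEQ{a}m_k$ into the recursion gives $T_{k+1}\le(1-\zeta_k a)\CSEQ{a}m_k+\eta_k$, so it suffices to verify two structural facts: \textbf{(a)} a \emph{monomial bound} $\eta_k\le\varsigma'\zeta_k m_k$, and \textbf{(b)} a \emph{target-regularity bound} $m_k-m_{k+1}\le\delta\,\zeta_k m_k$. Given these, substituting $m_{k+1}\ge m_k(1-\delta\zeta_k)$ and dividing through by $\zeta_k m_k$ shows the inductive step closes as soon as $\CSEQ{a}(a-\delta)\ge\varsigma'$.

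The second step is to read off $\varsigma'$ and $\delta$ in each case. The monomial bound (a) is usually an identity, e.g. $\gamma_k^2=\gamma_k\cdot\gamma_k$, $\beta_k^4/\gamma_k^2=\beta_k\cdot(\beta_k^3/\gamma_k^2)$, $\beta_k^3/\gamma_k=\gamma_k\cdot(\beta_k^3/\gamma_k^2)$, giving $\varsigma'=1$; in the one genuinely lossy case, the summand $\beta_k^2$ against target $\beta_k$ with $\zeta=\gamma$ obeys $\beta_k^2\le\kappa\,\gamma_k\beta_k$, so $\varsigma'=\kappa\le a_{22}/(4a_\Delta)$, which is exactly why the factor $\max\{1,a_{22}/(4a_\Delta)\}$ sits inside $\CSEQ{a}$. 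For (b) I would bound each per-step ratio $m_{k+1}/m_k$ from below using the three inequalities of A\ref{assum:stepsize}-2, namely $\gamma_k/\gamma_{k+1}\le1+(a_{22}/8)\gamma_{k+1}$, $\beta_k/\beta_{k+1}\le1+(a_\Delta/16)\beta_{k+1}$, and the cross-timescale inequality $\gamma_k/\gamma_{k+1}\le1+(a_\Delta/16)\beta_{k+1}$. Inverting these via $1/(1+x)\ge1-x$, multiplying the per-factor bounds for a composite target $m_k=\beta_k^p/\gamma_k^q$ with $(1-x)^p\ge1-px$, and using $\gamma_{k+1}/\gamma_k\le1$ whenever a $\gamma$-power sits in the numerator, yields a $\delta$ proportional to $a_\Delta$ (times the total degree) when $\zeta=\beta$, or to $a_{22}\kappa$ when $\zeta=\gamma$. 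Over the prescribed range of $a$ this keeps $a-\delta$ a fixed fraction of $a$ (at least $a/2$, and at least $a/4$ in the degree-three ratio cases), so the two alternatives $\tfrac2a\varsigma\max\{1,a_{22}/(4a_\Delta)\}$ and $\tfrac4a\varsigma^3$ defining $\CSEQ{a}$ dominate $\varsigma'/(a-\delta)$, the powers of $\varsigma$ absorbing the ratio factors incurred whenever the monomial or target must be compared at shifted indices.

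The delicate cases — and the source of the main obstacle — are the cross-timescale and ratio sums in parts (iii)–(iv), where a target such as $\beta_k\gamma_k$ or $\beta_k^2/\gamma_k$ decays partly through the fast sequence $\gamma$ while the convolution decays in the slow sequence $\beta$. Estimating the $\gamma$-decrease by the \emph{fast}-timescale bound $(a_{22}/8)\gamma_k$ would force $\delta\sim(a_{22}/8)\gamma_k/\beta_k$, which is unbounded as $\beta_k/\gamma_k\to0$ and would destroy the induction. The resolution, which is the single point making the whole lemma work, is to always route the $\gamma$-variation through the cross-timescale inequality $\gamma_k/\gamma_{k+1}\le1+(a_\Delta/16)\beta_{k+1}$, so that every contribution to $m_k-m_{k+1}$ is $O(\beta_k m_k)$ and therefore comparable to $\zeta_k m_k$ with $\zeta=\beta$. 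I expect the remaining work to be essentially bookkeeping: for each sum, identify $(\zeta,m,\eta)$, extract $\varsigma'$ and $\delta$ from (a)–(b), and confirm $\CSEQ{a}(a-\delta)\ge\varsigma'$ on the stated interval of $a$. There is no isolated hard step beyond this uniform mechanism, and Lemma~\ref{lem:bsum} provides the prototype ($\eta_k=\gamma_k$, $m_k\equiv1/a$) from which the quadratic and ratio refinements all descend.
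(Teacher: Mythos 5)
Your proposal is correct, and it establishes every inequality in the lemma, but it packages the argument differently from the paper. The paper proves each bound by direct manipulation of the sum: it factors out the target $m_k$, rewrites the leftover ratio $m_j/m_{k-1}$ as a telescoping product $\prod_{l=j+1}^{k-1} m_{l-1}/m_l$ pushed inside the contraction product, absorbs each ratio factor into the contraction via A\ref{assum:stepsize}-2 (e.g.\ $\tfrac{\gamma_{l-1}}{\gamma_l}(1-\gamma_l a)\le 1-a\gamma_l/2$, or $\bigl(\tfrac{\beta_{l-1}}{\beta_l}\bigr)^3(1-a\beta_l)\le 1-a\beta_l/4$ in the cubic-ratio cases), and finishes with the exact identity of Lemma~\ref{lem:bsum}; the powers of $\varsigma$ in $\CSEQ{a}$ arise from the boundary index shifts $\gamma_{k-1}/\gamma_k\le\varsigma$. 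Your induction on the one-step recursion $T_{k+1}=(1-\zeta_k a)T_k+\eta_k$ is the unrolled dual of that argument: your closing condition $\CSEQ{a}(a-\delta)\ge\varsigma'$ is exactly the paper's absorption arithmetic, your exceptional case $\varsigma'=\kappa\le a_{22}/(4a_\Delta)$ is the paper's use of $\beta_j\le\kappa\gamma_j$ (the reason for the $\max$ inside $\CSEQ{a}$), and your insistence on routing all $\gamma$-variation through the cross-timescale inequality $\gamma_k/\gamma_{k+1}\le 1+(a_\Delta/16)\beta_{k+1}$ is what the paper does in part (ii); for targets with $\gamma$ in the denominator your scheme can instead use monotonicity, since $\gamma_k/\gamma_{k+1}\ge 1$ only improves the lower bound on $m_{k+1}/m_k$, which is how the paper handles part (iii). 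What your route buys is uniformity --- one mechanism covering all eleven sums, with Lemma~\ref{lem:bsum} no longer needed and all comparisons made at a single index, so the $\varsigma$-powers in $\CSEQ{a}$ become pure slack; what the paper's route buys is that each bound is a short, self-contained computation.

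Two points you should make explicit when writing this up. First, the induction step multiplies the hypothesis by $(1-\zeta_k a)$, so you need $1-\zeta_k a\ge 0$; this is guaranteed precisely by the upper endpoints $a\le\gamma_0^{-1}$ (resp.\ $a\le\beta_0^{-1}$) in the statement, and should be said. Second, in part (iv) --- and generally whenever the decay is in $\gamma$ but the target involves $\beta$ --- your $\delta$ is proportional to $a_\Delta\kappa$ (not $a_{22}\kappa$ as written), and keeping $a-\delta$ a fixed fraction of $a\ge a_{22}/4$ requires the same bound $\kappa\le a_{22}/(4a_\Delta)$ that the statement imposes only in part (i). This is an omission in the lemma as stated rather than a flaw in your plan (the paper's own ``part iv) may be proved in the similar way'' carries the identical dependence), and it is harmless in context since $\kappa\le\kappa_\infty\le a_{22}/(4a_\Delta)$ wherever the lemma is invoked, but your write-up should record the hypothesis.
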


\begin{proof}
Part i) of the corollary, consider the first inequality and observe that
\[
\begin{split}
& \sum_{j=0}^{k-1} \gamma^2_j \prod_{l=j+1}^{k-1} (1 - \gamma_l a) = \gamma_k \sum_{j=0}^{k-1} \frac{\gamma_{k-1}}{\gamma_k} \frac{\gamma_j}{\gamma_{k-1}} \gamma_j \prod_{l=j+1}^{k-1} (1 - \gamma_l a) \leq \varsigma \gamma_k \sum_{j=0}^{k-1} \gamma_j \prod_{l=j+1}^{k-1} \frac{\gamma_{l-1}}{\gamma_l} (1 - \gamma_l a)
\end{split}
\]
Note that as $a \geq a_{22}/4$, we have
\[
\frac{\gamma_{l-1}}{\gamma_l} (1 - \gamma_l a) \leq (1 + \frac{a_{22}}{8} \gamma_l ) ( 1 - \gamma_l a ) \leq 1 - a \gamma_l / 2
\]
Substituting into the above inequality yields
\[
\sum_{j=0}^{k-1} \gamma^2_j \prod_{l=j+1}^{k-1} (1 - \gamma_l a) \leq 
\varsigma \gamma_k \sum_{j=0}^{k-1} \gamma_j \prod_{l=j+1}^{k-1} (1 - \gamma_l a / 2) \leq \varsigma \frac{2}{a} \gamma_k
\]
where we have applied Lemma~\ref{lem:bsum} in the last inequality.
Next and applying similar steps as before, we observe that 
\[
\sum_{j=0}^{k-1} \beta_j \gamma_j \prod_{l=j+1}^{k-1} (1 - \gamma_l a) \leq \varsigma \beta_k \sum_{j=0}^{k-1} \gamma_j \prod_{l=j+1}^{k-1} \frac{ \beta_{l-1} }{ \beta_l } (1 - \gamma_l a)  
\]
As we have 
\[
\frac{ \beta_{l-1} }{ \beta_l } (1 - \gamma_l a) \leq 1 - \gamma_l ( a - \kappa a_\Delta / 16 ) \leq 1 - \gamma_l ( a -  a_{22} / 64 ) \leq 1 - \gamma_l a / 2
\]
we obtain 
\[
\sum_{j=0}^{k-1} \beta_j \gamma_j \prod_{l=j+1}^{k-1} (1 - \gamma_l a) \leq \varsigma \frac{2}{a} \beta_k 
\]
Similarly, using $\beta_j \leq \kappa \gamma_j \leq a_{22}/(4 a_\Delta) \gamma_j$, we obtain
\[
\sum_{j=0}^{k-1} \beta_j^2 \prod_{l=j+1}^{k-1} (1 - \gamma_l a) \leq \varsigma \frac{a_{22}}{2 a \, a_\Delta} \beta_k 
\]
For part ii) of the corollary, we observe that the first inequality can be proven by:
\[
\begin{split}
& \sum_{j=0}^{k-1} \beta_j \gamma_j \prod_{\ell=j+1}^{k-1} (1 - a \beta_\ell ) = \gamma_{k} \sum_{j=0}^{k-1} \beta_j \frac{\gamma_{k-1}}{\gamma_k} \frac{\gamma_j}{\gamma_{k-1}} \prod_{\ell=j+1}^{k-1} (1 - a \beta_\ell ) \\
& \leq \varsigma \gamma_{k} \sum_{j=0}^{k-1} \beta_j \frac{\gamma_j}{\gamma_{k-1}} \prod_{\ell=j+1}^{k-1} (1 - a \beta_\ell ) \leq \varsigma \gamma_{k} \sum_{j=0}^{k-1} \beta_j  \prod_{\ell=j+1}^{k-1} \frac{\gamma_{\ell-1}}{\gamma_\ell} (1 - a \beta_\ell )
\end{split}
\]
Note that as $a \geq a_\Delta/8$, we have
\[
\frac{\gamma_{\ell-1}}{\gamma_\ell} (1 - a \beta_\ell ) \leq (1 + \epsilon_\beta \beta_\ell) ( 1 - a \beta_\ell ) \leq 1 - a \beta_\ell / 2
\]
Using Lemma~\ref{lem:bsum}, this yields
\[
\begin{split}
& \sum_{j=0}^{k-1} \beta_j \gamma_j \prod_{\ell=j+1}^{k-1} (1 - a \beta_\ell )
\leq \varsigma \gamma_{k} \sum_{j=0}^{k-1} \beta_j  \prod_{\ell=j+1}^{k-1} (1 - a \beta_\ell / 2) \leq \frac{2}{a} \varsigma \, \gamma_k 
\end{split}
\]
Similarly, the second inequality is:
\[
\begin{split}
& \sum_{j=0}^{k-1} \beta_j^2 \prod_{\ell=j+1}^{k-1} (1 - a \beta_\ell )
\leq \varsigma \beta_{k} \sum_{j=0}^{k-1} \beta_j \frac{\beta_j}{\beta_{k-1}} \prod_{\ell=j+1}^{k-1} (1 - a \beta_\ell ) \\
& \leq \varsigma \beta_{k} \sum_{j=0}^{k-1} \beta_j  \prod_{\ell=j+1}^{k-1} \frac{\beta_{\ell-1}}{\beta_\ell} (1 - a \beta_\ell ) \overset{(a)}{\leq} 
\varsigma \beta_{k} \sum_{j=0}^{k-1} \beta_j  \prod_{\ell=j+1}^{k-1} (1 - a \beta_\ell/2 ) \overset{(b)}{\leq} \frac{2}{a} \varsigma \beta_k
\end{split}
\]
where (a) is due to the fact that we have 
$\frac{\beta_{\ell-1}}{\beta_\ell} (1 - a \beta_\ell ) \leq 1 - a \beta_\ell / 2$, and (b) is obtained by applying Lemma~\ref{lem:bsum}.

For the proof of part iii) we proceed similarly. We prove the second inequality only. The proof of the remaining follows the same lines. Denote $\kappa_\ell: = \beta_\ell/\gamma_\ell$. Clearly, $\kappa_{\ell-1}/\kappa_{\ell} \le \beta_{\ell-1}/\beta_\ell \le \varsigma$. 
Then
\[
\begin{split}
& \sum_{j=0}^{k-1} \beta_j^2 \kappa_j^2 \prod_{\ell=j+1}^{k-1} (1 - a \beta_\ell )
\leq \varsigma^3 \beta_{k} \kappa_k^2 \sum_{j=0}^{k-1} \beta_j \frac{\beta_j }{\beta_{k-1}} \bigg(\frac{\kappa_j}{\kappa_{k-1}}\bigg)^2 \prod_{\ell=j+1}^{k-1} (1 - a \beta_\ell ) \\
& \leq \varsigma^3 \beta_{k} \kappa_k^2\sum_{j=0}^{k-1} \beta_j  \prod_{\ell=j+1}^{k-1} \bigg(\frac{\beta_{\ell-1}}{\beta_\ell}\bigg)^3 (1 - a \beta_\ell ) \overset{(a)}{\leq} 
\varsigma^3 \beta_{k} \kappa_k^2 \sum_{j=0}^{k-1} \beta_j  \prod_{\ell=j+1}^{k-1} (1 - a \beta_\ell/2 ) \overset{(b)}{\leq} \frac{4}{a} \varsigma^3 \beta_{k} \kappa_k^2
\end{split}
\]
where (a) is due to the fact that we have 
$(\frac{\beta_{\ell-1}}{\beta_\ell})^3 (1 - a \beta_\ell ) \leq 1 - a \beta_\ell / 4$, and (b) is obtained by applying Lemma~\ref{lem:bsum}.
Part iv) may be proved in the similar way. 
\end{proof}

\begin{lemma}
\label{lem:summation-lemma}
Let $a > 0$, $p \geq 0$, $(\gamma_j)_{j \geq 0}$, $(\kappa_j)_{j \geq 0}$ and $(u_j)_{j \geq 0}$ be nonnegative sequences. Then, for all integers $k$, 
\[
\sum_{j=0}^{k-1} \kappa_j \prod_{\ell=j}^{k-1} (1 -a \gamma_{\ell}) \sum_{i=0}^{j-1} \gamma_i^p \prod_{n=i+1}^{j-1} (1 - a \gamma_n) u_i 
= \sum_{i=0}^{k-1} \gamma_i^p u_i \left( \sum_{j=i+1}^{k-1} \kappa_j \right) \prod_{\ell=i+1}^{k-1} (1 - a \gamma_\ell)
\]
\end{lemma}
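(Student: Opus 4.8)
The plan is to prove the identity by interchanging the order of the double summation and then observing that the two products of contraction factors concatenate into a single product that no longer depends on the outer summation index. No analytic estimates are involved; the statement is an exact algebraic rearrangement.

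First I would rewrite the left-hand side as a sum over the triangular index set $\{(i,j): 0 \le i < j \le k-1\}$, with summand $\kappa_j\,\gamma_i^p\,u_i \big(\prod_{\ell=j}^{k-1}(1-a\gamma_\ell)\big)\big(\prod_{n=i+1}^{j-1}(1-a\gamma_n)\big)$. Since this is a finite double sum, I may freely interchange the order of summation, summing first over $i$ from $0$ to $k-1$ and then over $j$ from $i+1$ to $k-1$, which gives
\[
\text{LHS} = \sum_{i=0}^{k-1} \gamma_i^p u_i \sum_{j=i+1}^{k-1} \kappa_j \Big(\prod_{n=i+1}^{j-1}(1-a\gamma_n)\Big)\Big(\prod_{\ell=j}^{k-1}(1-a\gamma_\ell)\Big).
\]

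The key step is the product concatenation: for every $j$ with $i+1 \le j \le k-1$, the first product runs over the indices $i+1,\dots,j-1$ and the second over $j,\dots,k-1$, and these two ranges are consecutive and disjoint, so their product equals $\prod_{\ell=i+1}^{k-1}(1-a\gamma_\ell)$, which is independent of $j$. I would verify the boundary cases $j=i+1$ (empty first product) and $j=k-1$ (singleton second product) under the usual empty-product convention to confirm that the concatenation holds uniformly. Factoring this $j$-independent product out of the inner sum leaves exactly $\big(\prod_{\ell=i+1}^{k-1}(1-a\gamma_\ell)\big)\sum_{j=i+1}^{k-1}\kappa_j$, and substituting back yields the right-hand side.

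The argument is essentially bookkeeping, so there is no genuine analytic difficulty. The only point requiring care — the ``main obstacle,'' such as it is — is to keep the summation limits precisely aligned during the interchange (the constraint $i<j$ in the nested sum must become $j \in [i+1,k-1]$ after swapping) and to treat the empty-product endpoint conventions correctly, so that no index is double-counted or dropped when the two products are merged.
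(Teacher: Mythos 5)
Your proof is correct. The paper actually states Lemma~\ref{lem:summation-lemma} without any proof, treating it as an immediate interchange-of-summation identity, and your argument — swapping the order of the double sum over $\{0 \le i < j \le k-1\}$ and concatenating the two adjacent products $\prod_{n=i+1}^{j-1}(1-a\gamma_n)\prod_{\ell=j}^{k-1}(1-a\gamma_\ell) = \prod_{\ell=i+1}^{k-1}(1-a\gamma_\ell)$ so that the product factors out of the inner sum over $j$ — is exactly the bookkeeping the paper leaves implicit, including the correct handling of the empty-product boundary cases.
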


\begin{lemma}[Lyapunov Lemma]
\label{lem:lyapunov}
A matrix $A$ is Hurwitz if and only if for any positive symmetric matrix $P=P^\top \succ 0$ there is  $Q =Q^\top \succ 0$ that satisfies the Lyapunov equation
\[
A^\top Q + Q A = -P \,.
\]
In addition, $Q$ is unique.
\end{lemma}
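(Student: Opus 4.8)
The plan is to prove the two implications separately and then establish uniqueness, working throughout with the equation $A^\top Q + Q A = -P$.

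For the ``only if'' direction, assuming such a $Q \succ 0$ exists for some given $P \succ 0$, I would show every eigenvalue of $A$ has strictly negative real part. Let $\lambda \in \mathbb{C}$ be an eigenvalue with eigenvector $v \neq 0$, so $Av = \lambda v$. Forming the scalar $v^* (A^\top Q + QA) v$ and using that $Q$ is real symmetric (so $v^* Q v$ is real and strictly positive), I obtain $v^* QA v = \lambda\, v^* Q v$ and, by taking conjugate transposes, $v^* A^\top Q v = \overline{\lambda}\, v^* Q v$; hence $2\operatorname{Re}(\lambda)\, v^* Q v = - v^* P v < 0$. Since $v^* Q v > 0$, this forces $\operatorname{Re}(\lambda) < 0$, so $A$ is Hurwitz. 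Note this uses only a single admissible $P$, so the full strength of the ``for any $P$'' hypothesis is not needed here.

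For the ``if'' direction, assuming $A$ is Hurwitz and $P \succ 0$ is given, the candidate solution is the explicit integral
$$ Q := \int_0^\infty \rme^{A^\top t} P\, \rme^{A t}\,\rmd t. $$
The Hurwitz property yields a bound $\normop{\rme^{At}} \le C \rme^{-\alpha t}$ with $\alpha > 0$ (e.g. via the Jordan form), which guarantees absolute convergence of the integral; symmetry of $Q$ is immediate, and $Q \succ 0$ follows because $x^\top Q x = \int_0^\infty \norm{\rme^{At} x}[P]^2\,\rmd t > 0$ for $x \ne 0$, using invertibility of the matrix exponential together with $P \succ 0$. To verify the equation I would differentiate under the integral sign: the integrand of $A^\top Q + Q A$ is exactly $\tfrac{\rmd}{\rmd t}\big( \rme^{A^\top t} P\, \rme^{A t}\big)$, so the fundamental theorem of calculus gives $A^\top Q + QA = \big[\,\rme^{A^\top t} P\, \rme^{At}\,\big]_0^\infty = 0 - P = -P$, where the upper limit vanishes by the Hurwitz decay.

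Finally, for uniqueness, I would suppose $Q_1, Q_2$ both solve the equation and set $D := Q_1 - Q_2$, so that $A^\top D + D A = 0$. Then $\tfrac{\rmd}{\rmd t}\big(\rme^{A^\top t} D\, \rme^{At}\big) = \rme^{A^\top t}(A^\top D + D A)\,\rme^{At} = 0$, so $\rme^{A^\top t} D\, \rme^{At}$ is constant in $t$ and equals $D$ at $t=0$; letting $t \to \infty$ and invoking the Hurwitz decay once more yields $D = 0$, i.e. $Q_1 = Q_2$. (Equivalently, the linear Lyapunov operator $Q \mapsto A^\top Q + QA$ has spectrum $\{\lambda_i + \lambda_j\}$ in terms of the eigenvalues of $A$, which avoids $0$ precisely when $A$ is Hurwitz, so the operator is invertible, giving both existence and uniqueness.) The main obstacle is the analytic control of $\rme^{At}$: convergence of the integral, the legitimacy of differentiating under the integral, and the vanishing of the boundary term all rest on the exponential decay estimate supplied by the Hurwitz property; once that estimate is in hand, the remaining algebra is routine.
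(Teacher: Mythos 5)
Your proof is correct. Note that the paper does not actually prove this lemma at all: its ``proof'' is the single line ``See \cite[Lemma 9.1, p.~140]{poznyak:control}'', deferring entirely to a control-theory textbook. What you have written is the canonical self-contained argument that such references give: the explicit integral $Q=\int_0^\infty \mathrm{e}^{A^\top t}P\,\mathrm{e}^{At}\,\mathrm{d}t$ for existence (with convergence, symmetry, and positive definiteness all flowing from the decay bound $\|\mathrm{e}^{At}\|\le C\mathrm{e}^{-\alpha t}$), the eigenvector computation $2\operatorname{Re}(\lambda)\,v^*Qv=-v^*Pv<0$ for necessity, and the constancy of $\mathrm{e}^{A^\top t}D\,\mathrm{e}^{At}$ for uniqueness. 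Your observation that necessity requires only a \emph{single} admissible $P$ is a genuine (if standard) strengthening of the stated equivalence. Two cosmetic points: the step you describe as ``differentiating under the integral sign'' is not that---you correctly recognize the integrand of $A^\top Q+QA$ as the exact derivative $\tfrac{\mathrm{d}}{\mathrm{d}t}\bigl(\mathrm{e}^{A^\top t}P\,\mathrm{e}^{At}\bigr)$ and apply the fundamental theorem of calculus, so the phrasing should match; and since the eigenvector $v$ may be complex while $P$ is real symmetric, it is worth the one-line remark that $v^*Pv=x^\top Px+y^\top Py>0$ for $v=x+\mathrm{i}y\neq 0$, which justifies the strict inequality you use. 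Neither affects correctness.
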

\begin{proof}
See \cite[Lemma 9.1, p. 140]{poznyak:control}.
\end{proof}

\begin{lemma}
\label{lem:stability}
Assume that $-A$ is a Hurwitz matrix. Let $Q$ be the unique solution of the Lyapunov equation
\[
A^\top Q + Q A =  \Id \,.
\]
Then, for any $\zeta \in [0, \zeta_A]$, where 
\beq
\label{def:kappa_A}
\zeta_A:=(1/2) \normop{A}[Q]^{-2} \normop{Q}^{-2},
\eeq
we get
\[
\normop{\Id - \zeta A}[Q]^2 \leq (1 - a \zeta) \quad \text{with} \quad a= (1/2) \normop{Q}^{-2} \eqsp.
\]
If in addition $\zeta \le \normop{Q}^2$ then
\[
1 - a \zeta \geq 1/2.
\]
\end{lemma}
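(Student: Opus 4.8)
The plan is to turn the weighted operator norm into a quadratic form, use the Lyapunov identity to eliminate the term linear in $\zeta$ coming from the symmetric part of $A$, and then absorb the $\zeta^2$ remainder using the step-size restriction $\zeta \le \zeta_A$.

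\textbf{Existence of $Q$.} Since $-A$ is Hurwitz, Lemma~\ref{lem:lyapunov} applied to $-A$ with $P = \Id$ yields a \emph{unique} $Q = Q^\top \succ 0$ solving $(-A)^\top Q + Q (-A) = -\Id$, i.e.\ $A^\top Q + Q A = \Id$; this is exactly the matrix in the statement, so the hypotheses are consistent.

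\textbf{Core estimate.} For any $x$ with $\norm{x}[Q] = 1$ I would expand the quadratic form
\[
\norm{(\Id - \zeta A) x}[Q]^2 = x^\top Q x - \zeta\, x^\top (A^\top Q + Q A) x + \zeta^2 (A x)^\top Q (A x).
\]
By the Lyapunov equation the middle term equals $-\zeta\, x^\top x = -\zeta \norm{x}^2$, so
\[
\norm{(\Id - \zeta A) x}[Q]^2 = 1 - \zeta \norm{x}^2 + \zeta^2 \norm{A x}[Q]^2 .
\]
(Equivalently, the congruence $M := Q^{1/2} A Q^{-1/2}$ converts the $Q$-norm into the Euclidean norm and the identity becomes $M + M^\top = Q^{-1}$, after which one expands $\normop{\Id - \zeta M}^2 = \lambda_{\sf max}(\Id - \zeta Q^{-1} + \zeta^2 M^\top M)$ and invokes Weyl's inequality.) I would then bound the two survivors: from below, $\norm{x}^2 \ge \normop{Q}^{-1} \norm{x}[Q]^2 = \normop{Q}^{-1}$ (the minimum of $x^\top x$ under $x^\top Q x = 1$ is $1/\lambda_{\sf max}(Q)$); from above, $\norm{A x}[Q]^2 \le \normop{A}[Q]^2 \norm{x}[Q]^2 = \normop{A}[Q]^2$. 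Maximising over $x$ gives
\[
\normop{\Id - \zeta A}[Q]^2 \le 1 - \zeta \normop{Q}^{-1} + \zeta^2 \normop{A}[Q]^2 .
\]
Finally, the restriction $\zeta \le \zeta_A = (1/2) \normop{A}[Q]^{-2} \normop{Q}^{-2}$ bounds the remainder by $\zeta^2 \normop{A}[Q]^2 \le (\zeta/2) \normop{Q}^{-2} = a \zeta$, and it then remains to verify that the negative linear term dominates, i.e.\ that $\zeta \normop{Q}^{-1} \ge 2 a \zeta$, in order to conclude $\normop{\Id - \zeta A}[Q]^2 \le 1 - a \zeta$.

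\textbf{Second assertion and the main obstacle.} The second statement is immediate: if $\zeta \le \normop{Q}^2$ then $a\zeta = \zeta / (2 \normop{Q}^2) \le 1/2$, whence $1 - a\zeta \ge 1/2$. The delicate part of the whole argument is the constant bookkeeping in the linear term: one must lower-bound the genuine contraction coefficient $\lambda_{\sf min}(Q^{-1}) = \normop{Q}^{-1}$ and reconcile it with the stated rate $a = (1/2)\normop{Q}^{-2}$, which is precisely where the interplay between $\normop{Q}$ and $\normop{Q}^2$ has to be tracked against the choice of $\zeta_A$; all the remaining steps are routine quadratic expansions and eigenvalue estimates.
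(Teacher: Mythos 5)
Your proposal follows the same route as the paper's own proof: expand the $Q$-weighted quadratic form, cancel the cross term via the Lyapunov identity $A^\top Q + QA = \Id$, and bound the surviving linear and quadratic terms separately. Your core display is the paper's first display (up to normalizing $\norm{x}[Q]=1$), and your one-line treatment of the second assertion ($\zeta \le \normop{Q}^2 \Rightarrow a\zeta \le 1/2$) is exactly what the paper leaves implicit.

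The step you could not close is a genuine gap, but it is one that \emph{cannot} be closed, because it sits precisely where the paper's proof is flawed. Your Rayleigh-quotient bound $\norm{x}^2/(x^\top Q x) \ge \normop{Q}^{-1}$ is sharp (equality at the top eigenvector of $Q$), so the best this argument yields is $\normop{\Id - \zeta A}[Q]^2 \le 1 - \zeta \normop{Q}^{-1} + \zeta^2 \normop{A}[Q]^2$. To conclude $1 - a\zeta$ with $a = (1/2)\normop{Q}^{-2}$ one needs, as you note, $\normop{Q}^{-1} \ge 2a = \normop{Q}^{-2}$, i.e.\ $\normop{Q} \ge 1$, which is not implied by the hypotheses. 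The paper's proof instead bounds $-\gamma\,\norm{x}^2/(x^\top Q x) \le -\gamma\, \normop{Q}^{-2}$, silently substituting $\normop{Q}^{-2}$ for the correct constant $\normop{Q}^{-1}$; this substitution is valid only when $\normop{Q}\ge 1$. When $\normop{Q}<1$ the lemma as stated actually fails: take $A = c\,\Id$ with $c>1$, so that $Q = \Id/(2c)$, $a = 2c^2$, $\zeta_A = 2$; then for every small $\zeta>0$ one has $\normop{\Id - \zeta A}[Q]^2 = 1 - 2c\zeta + c^2\zeta^2 > 1 - 2c^2\zeta$. The statement (and both proofs) can be repaired by replacing $\normop{Q}^{-2}$ with $\normop{Q}^{-1} = \lambda_{\max}(Q)^{-1}$ in the definitions of both $\zeta_A$ and $a$: with $\zeta_A := (1/2)\normop{A}[Q]^{-2}\normop{Q}^{-1}$ and $a := (1/2)\normop{Q}^{-1}$, your own display immediately gives $\normop{\Id - \zeta A}[Q]^2 \le 1 - \zeta\normop{Q}^{-1} + (\zeta/2)\normop{Q}^{-1} = 1 - a\zeta$, and the final assertion then holds for $\zeta \le \normop{Q}$. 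So your hesitation over the ``constant bookkeeping'' is not a defect of your argument but an accurate diagnosis of an error in the paper's.
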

\begin{proof}
For any  $x \in \rset^d$, we get
\[
\frac{x^\top (\Id - \gamma A)^\top Q (\Id - \gamma A) x}{x^\top Q x}
=1 - \gamma \frac{\norm{x}^2}{x^\top Q x} + \gamma^2 \frac{x^\top A^\top Q A x}{x^\top Q x}
\]
Hence, we get that for all $\gamma \in [0, (1/2) \normop{A}[Q]^{-2} \normop{Q}^{-2}]$,
\begin{align}
1 - \gamma \frac{\norm{x}^2}{x^\top Q x} + \gamma^2 
\frac{x^\top A^\top Q A x}{x^\top Qx}  
&\leq 1 - \gamma  \normop{Q}^{-2} + \gamma^2 \normop{A}[Q]^2 \nonumber \\
&\leq 1 - (1/2) \normop{Q}^{-2} \gamma \,. \nonumber
\end{align}
The proof follows.
\end{proof}

\begin{lemma} 
\label{lem:LkBound} 
Assume that $\normop{L}[Q_{\Delta},Q_{22}] \leq \varepsilon$ for some $\varepsilon > 0$  and 
\begin{align}
\label{eq:condition-beta}
& 0   \leq \beta \leq (1/2) \{ \normop{\Delta}[Q_{\Delta}] + \varepsilon \normop{A_{12}}[Q_{22},Q_{\Delta}] \}^{-1} \\
\label{eq:condition-gamma}
& 0 \leq \gamma \leq (1/2) \normop{Q_{22}}[\Id]^{-2} \normop{A}[Q_{22}]^{-2} \eqsp.
\end{align}
Set $B_{11}(L)= \Delta - A_{12} L$. Then, the equation 
\begin{equation}
\label{eq:equation-def-L}
   L' \{\Id - \beta B_{11}(L) \} = (\Id - \gamma A_{22}) L + \beta A_{22}^{-1}A_{21} B_{11}(L)  
\end{equation}
has a unique solution satisfying
\[
\normop{L'}[Q_{\Delta},Q_{22}] \leq (1- \gamma a_{22}) \normop{L}[Q_{\Delta},Q_{22}] + \beta \CD(\varepsilon) 
\]
where 
\begin{equation}
\label{eq:definition-CD-varepsilon}
\CD(\varepsilon)= 2 \{ \normop{A_{22}^{-1} A_{21}}[Q_\Delta,Q_{22}] + \varepsilon\}\{  \normop{\Delta}[Q_{\Delta}] + \varepsilon \normop{A_{12}}[Q_{22},Q_{\Delta}] \} \eqsp.
\end{equation}
If $\beta/\gamma \leq \varepsilon a_{22}/\CD(\varepsilon)$, then $\normop{L'}[Q_{\Delta},Q_{22}] \leq \varepsilon$.
\end{lemma}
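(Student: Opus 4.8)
The plan is to solve the matrix equation \eqref{eq:equation-def-L} explicitly for $L'$, first establishing well-posedness by showing $\Id - \beta B_{11}(L)$ is invertible, and then to derive a \emph{self-referential} norm inequality for $\normop{L'}[Q_\Delta,Q_{22}]$ which I can close using the a priori bound $\normop{L}[Q_\Delta,Q_{22}] \leq \varepsilon$. Throughout I will track which $Q$-norm sits on the domain versus the codomain, since $L$ and $L'$ map the $Q_\Delta$-space into the $Q_{22}$-space while $A_{12}$ maps back, and submultiplicativity of $\normop{\cdot}[\cdot,\cdot]$ is only valid when the intermediate norms match.

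First I would control $B_{11}(L) = \Delta - A_{12} L$. By submultiplicativity, $\normop{B_{11}(L)}[Q_\Delta] \leq \normop{\Delta}[Q_\Delta] + \normop{A_{12}}[Q_{22},Q_\Delta]\normop{L}[Q_\Delta,Q_{22}] \leq \normop{\Delta}[Q_\Delta] + \varepsilon\normop{A_{12}}[Q_{22},Q_\Delta] =: c$. Condition \eqref{eq:condition-beta} reads precisely $\beta c \leq 1/2$, so $\normop{\beta B_{11}(L)}[Q_\Delta] \leq 1/2 < 1$, and a Neumann-series argument in the $Q_\Delta$-norm shows that $\Id - \beta B_{11}(L)$ is invertible with $\normop{\{\Id - \beta B_{11}(L)\}^{-1}}[Q_\Delta] \leq (1-\beta c)^{-1} \leq 2$. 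Hence $L' = \big[(\Id - \gamma A_{22})L + \beta A_{22}^{-1}A_{21}B_{11}(L)\big]\{\Id - \beta B_{11}(L)\}^{-1}$ exists and is unique.

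Rather than estimate through the inverse, I would rearrange \eqref{eq:equation-def-L} into the fixed-point form $L' = (\Id - \gamma A_{22})L + \beta (L' + A_{22}^{-1}A_{21})B_{11}(L)$. Taking $\normop{\cdot}[Q_\Delta,Q_{22}]$, using the contraction $\normop{\Id - \gamma A_{22}}[Q_{22}] \leq 1 - a_{22}\gamma$ from \eqref{eq:contraction_p} (valid since \eqref{eq:condition-gamma} is exactly the step-size range of Lemma~\ref{lem:stability}), and setting $d := \normop{A_{22}^{-1}A_{21}}[Q_\Delta,Q_{22}]$, $x := \normop{L'}[Q_\Delta,Q_{22}]$, yields the scalar inequality $x \leq (1 - a_{22}\gamma)\normop{L}[Q_\Delta,Q_{22}] + \beta c\,(x + d)$. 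I would then solve this for $x$, using $1 - \beta c \geq 1/2$ so that $(1-\beta c)^{-1} \leq 1 + 2\beta c$, and absorb the resulting cross terms using the two bounds $\normop{L}[Q_\Delta,Q_{22}] \leq \varepsilon$ (with $1 - a_{22}\gamma \leq 1$) and $\beta c \leq 1/2$. Concretely, the extra contribution $2\beta c\,(1-a_{22}\gamma)\normop{L}[Q_\Delta,Q_{22}]$ is bounded by $2\beta c\,\varepsilon$ and the tail $\beta c d\,(1+2\beta c)$ by $2\beta c d$, giving exactly $x \leq (1 - a_{22}\gamma)\normop{L}[Q_\Delta,Q_{22}] + 2c(\varepsilon + d)\beta = (1 - a_{22}\gamma)\normop{L}[Q_\Delta,Q_{22}] + \beta\CD(\varepsilon)$, by the definition \eqref{eq:definition-CD-varepsilon}.

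For the final assertion I would plug $\normop{L}[Q_\Delta,Q_{22}] \leq \varepsilon$ into the inequality just proved and invoke the hypothesis $\beta/\gamma \leq \varepsilon a_{22}/\CD(\varepsilon)$, i.e.\ $\beta\CD(\varepsilon) \leq \gamma\,\varepsilon a_{22}$, to conclude $\normop{L'}[Q_\Delta,Q_{22}] \leq (1 - a_{22}\gamma)\varepsilon + \gamma a_{22}\varepsilon = \varepsilon$, which is the invariance claim. The main obstacle is precisely the bookkeeping in the self-referential step: one must deploy the a priori bound $\normop{L}[Q_\Delta,Q_{22}] \leq \varepsilon$ at exactly the right moment to cancel the factor-$2$ that $(1-\beta c)^{-1}$ would otherwise place in front of $(1-a_{22}\gamma)\normop{L}[Q_\Delta,Q_{22}]$; a careless estimate yields only $2(1-a_{22}\gamma)\normop{L}[Q_\Delta,Q_{22}]$ and destroys the clean contraction constant that the invariance $\normop{L'}[Q_\Delta,Q_{22}] \leq \varepsilon$ requires.
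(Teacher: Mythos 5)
Your proof is correct, and it shares the paper's skeleton: invertibility of $\Id - \beta B_{11}(L)$ from \eqref{eq:condition-beta} via a Neumann/norm argument, the contraction $\normop{\Id - \gamma A_{22}}[Q_{22}] \leq 1 - a_{22}\gamma$ under \eqref{eq:condition-gamma}, the same constant $\CD(\varepsilon)$, and the same one-line invariance argument at the end. Where you genuinely diverge is in how the inverse is handled. The paper solves \eqref{eq:equation-def-L} explicitly and expands the resolvent once at the matrix level, writing $L' = (\Id - \gamma A_{22})L + \beta D(L)$ with $D(L) = \{A_{22}^{-1}A_{21} + (\Id - \gamma A_{22})L\}\, B_{11}(L)\, \{\Id - \beta B_{11}(L)\}^{-1}$, and then bounds $\normop{D(L)}[Q_\Delta,Q_{22}] \leq \CD(\varepsilon)$ in one step, the factor $2$ in \eqref{eq:definition-CD-varepsilon} coming directly from $\normop{\{\Id - \beta B_{11}(L)\}^{-1}}[Q_\Delta] \leq 2$. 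You instead keep $L'$ implicit in the fixed-point form $L' = (\Id - \gamma A_{22})L + \beta(L' + A_{22}^{-1}A_{21})B_{11}(L)$ and close a scalar self-referential inequality $x \leq (1-a_{22}\gamma)\normop{L}[Q_\Delta,Q_{22}] + \beta c(x+d)$, with the factor $2$ emerging from $(1-\beta c)^{-1} \leq 1 + 2\beta c$ and the absorption steps $2\beta c(1-a_{22}\gamma)\normop{L}[Q_\Delta,Q_{22}] \leq 2\beta c\varepsilon$ and $\beta c d(1+2\beta c) \leq 2\beta c d$. The trade-off: the paper's route requires the algebraic identity $L\{\Id - \beta B_{11}(L)\}^{-1} = L + \beta L B_{11}(L)\{\Id - \beta B_{11}(L)\}^{-1}$ but then needs no bookkeeping, since $\varepsilon$ enters only through the single bound on $D(L)$; your route avoids all matrix manipulation of the resolvent but must deploy the a priori bound $\normop{L}[Q_\Delta,Q_{22}] \leq \varepsilon$ at exactly the right place to keep the contraction coefficient equal to $1 - a_{22}\gamma$ rather than $2(1-a_{22}\gamma)$ --- a pitfall you correctly identify and avoid. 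Both arguments land on the identical recursive bound, so they are interchangeable in the downstream use of Lemma~\ref{lem:LkBound}.
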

\begin{proof}
Since $\normop{L}[Q_{\Delta},Q_{22}] \leq \varepsilon$, we get that $\normop{B_{11}(L)}[Q_{\Delta}] \leq \normop{\Delta}[Q_{\Delta}] + \varepsilon \normop{A_{12}}[Q_{22},Q_{\Delta}] $. Hence, using \eqref{eq:condition-beta} and the triangular inequality, we get that $\beta \normop{B_{11}(L)}[Q_{\Delta}] \leq 1/2$ and thus
\begin{equation}
\label{eq:bound-I-beta-B}
    \normop{\Id - \beta B_{11}(L)}[Q_{\Delta}] \geq 1/2 \eqsp.
\end{equation}
Hence, $\Id- \beta B_{11}(L)$ is invertible and \eqref{eq:equation-def-L} has a unique solution given by 
\begin{align}
    L' 
    &= \left\{ (\Id - \gamma A_{22}) L + \beta A_{22}^{-1}A_{21} B_{11}(L) \right\} \left\{  \Id - \beta B_{11}(L) \right\}^{-1} \nonumber \\
    &= (\Id - \gamma A_{22}) L + \beta D(L) \nonumber
\end{align}
where 
\[
D(L)= \{ A_{22}^{-1} A_{21} + (\Id - \gamma A_{22}) L \}B_{11}(L) \{\Id - \beta B_{11}(L) \}^{-1} \eqsp.
\]
Using \eqref{eq:bound-I-beta-B} and $\normop{L}[Q_{\Delta},Q_{22}] \leq \varepsilon$, we get that 
$\normop{D(L)}[Q_{\Delta},Q_{22}] \leq \CD(\varepsilon)$. Hence, for $\gamma$ satisfying \eqref{eq:condition-gamma}, we get that 
\[
\normop{L'}[Q_{\Delta},Q_ {22}] \leq (1- \gamma a_{22}) \normop{L}[Q_{\Delta},Q_{22}] + \beta \CD(\varepsilon) \leq \varepsilon + \gamma \big( \frac{\beta}{\gamma} \CD(\varepsilon) - a_{22} \varepsilon \big) \leq \varepsilon \eqsp,
\]
where the last inequality is due to $\frac{\beta}{\gamma} \leq \varepsilon a_{22} / \CD(\varepsilon) $. 
\end{proof}

\begin{lemma}\label{lem: L k estimate sharp}
Let $L_0 = 0$. Assume that $\normop{L_k}[Q_{\Delta},Q_{22}] \leq L_\infty$ and  
\begin{align}
& 0   \leq \beta_0 \leq (1/2) \{ \normop{\Delta}[Q_{\Delta}] + L_\infty \normop{A_{12}}[Q_{22},Q_{\Delta}] \}^{-1} \nonumber \\
& 0 \leq \gamma_0 \leq (1/2) \normop{Q_{22}}[\Id]^{-2} \normop{A}[Q_{22}]^{-2} \nonumber
\end{align}
Then for any $k \in \nset$
$$
\normop{L_k}[Q_{\Delta},Q_{22}] \le \CD(L_\infty) \CSEQ{a_{22}}\beta_k/\gamma_k,
$$
where 
$$
\CD(L_\infty): = 2 \{ \normop{A_{22}^{-1} A_{21}}[Q_\Delta,Q_{22}] + L_\infty\}\{  \normop{\Delta}[Q_{\Delta}] + L_\infty \normop{A_{12}}[Q_{22},Q_{\Delta}] \}
$$
\end{lemma}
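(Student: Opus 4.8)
The plan is to reduce the claim to the one-step estimate of Lemma~\ref{lem:LkBound} and then to convert the resulting convolution sum into the sharp $\beta_k/\gamma_k$ rate via the summation estimate of Lemma~\ref{cor:rate-of-convergence}. First I would observe that $L_{k+1}$ is exactly the unique solution $L'$ of the fixed-point equation~\eqref{eq:equation-def-L} associated with the choices $L = L_k$, $\beta = \beta_k$, $\gamma = \gamma_k$, and $\varepsilon = L_\infty$. Indeed, writing $B_{11}(L_k) = \Delta - A_{12} L_k$, the defining recursion for $L_k$ rearranges to $L_{k+1}\{\Id - \beta_k B_{11}(L_k)\} = (\Id - \gamma_k A_{22}) L_k + \beta_k A_{22}^{-1} A_{21} B_{11}(L_k)$, which is precisely~\eqref{eq:equation-def-L}. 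The standing bound $\normop{L_k}[Q_\Delta,Q_{22}] \le L_\infty$ together with the hypotheses on $\beta_0,\gamma_0$ (and $\beta_k \le \beta_0$, $\gamma_k \le \gamma_0$, since the step sizes are nonincreasing) ensures that conditions~\eqref{eq:condition-beta}--\eqref{eq:condition-gamma} of Lemma~\ref{lem:LkBound} hold at every index. Setting $m_k := \normop{L_k}[Q_\Delta,Q_{22}]$, that lemma yields the scalar recursion $m_{k+1} \le (1 - \gamma_k a_{22}) m_k + \beta_k \CD(L_\infty)$.

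Next I would unroll this recursion. Since $L_0 = 0$ gives $m_0 = 0$, a one-line induction produces $m_k \le \CD(L_\infty) \sum_{j=0}^{k-1} \beta_j \prod_{\ell=j+1}^{k-1}(1 - \gamma_\ell a_{22})$. Here the uniform bound $\normop{L_k}[Q_\Delta,Q_{22}] \le L_\infty$ is exactly what allows the driving constant $\CD(L_\infty)$ to be held fixed across all steps, rather than growing with $k$.

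Finally I would apply part~(iv) of Lemma~\ref{cor:rate-of-convergence} with $a = a_{22}$. One checks $a_{22} \in [a_{22}/4, \gamma_0^{-1}]$: the lower bound is trivial, and the upper bound $\gamma_0 a_{22}\le 1$ follows from the assumed cap $\gamma_0 \le (1/2)\normop{Q_{22}}[\Id]^{-2}\normop{A}[Q_{22}]^{-2}$. The first inequality of part~(iv) then gives $\sum_{j=0}^{k-1}\beta_j \prod_{\ell=j+1}^{k-1}(1 - \gamma_\ell a_{22}) \le \CSEQ{a_{22}}\beta_k/\gamma_k$, and combining this with the previous display yields $m_k \le \CD(L_\infty)\CSEQ{a_{22}}\beta_k/\gamma_k$, which is the assertion.

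I do not anticipate a substantive obstacle: the statement is essentially a sharpened reading of Lemma~\ref{lem:LkBound}, where instead of iterating the bare uniform bound $\normop{L'}[Q_\Delta,Q_{22}]\le\varepsilon$ one retains the full convolution sum and exploits the step-size ratio conditions A\ref{assum:stepsize} that are encoded in Lemma~\ref{cor:rate-of-convergence}. The only point requiring minor care is verifying admissibility of the choice $a = a_{22}$ in part~(iv) of that lemma; this verification is precisely where the quantitative decay $\beta_k/\gamma_k$ (as opposed to a mere uniform bound $L_\infty$) is extracted.
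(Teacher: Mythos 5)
Your proof is correct and follows essentially the same route as the paper's: both reduce the matrix recursion to the scalar inequality $\normop{L_{k+1}}[Q_\Delta,Q_{22}] \le (1-\gamma_k a_{22})\normop{L_k}[Q_\Delta,Q_{22}] + \beta_k \CD(L_\infty)$ via (the proof of) Lemma~\ref{lem:LkBound}, unroll it from $L_0=0$ to get $\normop{L_k}[Q_\Delta,Q_{22}] \le \CD(L_\infty)\sum_{j}\beta_j\prod_{\ell}(1-a_{22}\gamma_\ell)$, and then apply part (iv) of Lemma~\ref{cor:rate-of-convergence} with $a=a_{22}$. The only cosmetic differences are that you invoke Lemma~\ref{lem:LkBound} as a black box whereas the paper re-derives the decomposition $L_{k+1}=(\Id-\gamma_k A_{22})L_k+\beta_k D(L_k)$, and that you explicitly flag the admissibility condition $a_{22}\le\gamma_0^{-1}$, which the paper uses implicitly.
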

\begin{proof}
Similarly to Lemma~\ref{lem:LkBound}  we may show that 
$$
L_{k+1} = (\Id - \gamma A_{22}) L_k + \beta_k D(L_k)  
$$
where $\normop{D(L_k)}[Q_{\Delta},Q_{22}] \leq \CD(L_\infty)$. Hence,
$$
\normop{L_k}[Q_{\Delta},Q_{22}] \le \CD(L_\infty) \sum_{j=0}^k \beta_j \prod_{s = j+1}^k (1 - a_{22} \gamma_s) 
$$
Application of Lemma~\ref{cor:rate-of-convergence} to the right hand side of the above completes the proof.
\end{proof}
\begin{lemma} \label{lem:Lkgamkbound}
Let $L_1 := L_0 := 0$. Assume that $\normop{L_k}[Q_{\Delta},Q_{22}] \leq L_\infty$   and 
\begin{align}
& 0   \leq \beta_0 \leq (1/2) \{ \normop{\Delta}[Q_{\Delta}] + L_\infty \normop{A_{12}}[Q_{22},Q_{\Delta}] \}^{-1} \nonumber \\
& 0 \leq \gamma_0 \leq (1/2) \normop{Q_{22}}[\Id]^{-2} \normop{A}[Q_{22}]^{-2} \nonumber \\
&\beta_{k-1} - \beta_k \le \rho_\beta \beta_k^2, \gamma_{k-1} - \gamma_k \le \rho_\gamma \gamma_k^2 \nonumber \\ 
& \beta_k/\gamma_k \le (1/(2C_1^U)) a_{22} \nonumber
\eqsp
\end{align}
with 
$$
C_1^U := 2 (\normop{\Delta}[Q_{\Delta}] + \normop{A_{22}^{-1} A_{21}}[Q_\Delta, Q_{22}]\normop{A_{12}}[Q_{22}, Q_\Delta] + 2 L_\infty \normop{A_{12}}[Q_{22},Q_{\Delta}]).
$$
Then
$$
\normop{L_{k+1} - L_k}[Q_\Delta, Q_{22}] \le C_2^U \CSEQ{a_{22}/2}\gamma_{k+1}, 
$$
where 
$$
C_2^U:=2\rho_\gamma L_\infty\normop{A_{22}}[Q_{22}] + 2 \rho_{\beta} (L_\infty + \normop{A_{22}^{-1} A_{21}}[Q_\Delta, Q_{22}])   (\normop{\Delta}[Q_{\Delta}] +  L_\infty \normop{A_{12}}[Q_{22},Q_{\Delta}])
$$
\end{lemma}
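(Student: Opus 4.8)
The plan is to pass from the implicit defining relation for $L_k$ to the explicit affine form already isolated inside the proof of Lemma~\ref{lem:LkBound}, namely $L_{k+1} = (\Id - \gamma_k A_{22})L_k + \beta_k D_k(L_k)$, where
\[
D_k(L) := \{A_{22}^{-1}A_{21} + (\Id - \gamma_k A_{22})L\}(\Delta - A_{12}L)\{\Id - \beta_k(\Delta - A_{12}L)\}^{-1},
\]
and $B_{11}(L) = \Delta - A_{12}L$. On the region $\normop{L_k}[Q_\Delta,Q_{22}] \le L_\infty$ I would recall the two uniform bounds that the earlier lemmas supply: the resolvent bound $\normop{\{\Id - \beta_k B_{11}(L_k)\}^{-1}}[Q_\Delta] \le 2$ from \eqref{eq:bound-I-beta-B}, and the drift bound $\normop{D_k(L_k)}[Q_\Delta,Q_{22}] \le \CD(L_\infty)$. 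These reduce the whole estimate to controlling one-step differences, since a naive bound on $L_{k+1}-L_k$ gives only $\mathcal O(\beta_k)$ with a constant free of $\rho_\beta,\rho_\gamma$, which is too crude for the claimed rate.

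\textbf{Difference recursion.} First I would subtract the recursion at index $k-1$ from that at index $k$ and regroup to obtain
\[
\Delta L_k = (\Id - \gamma_k A_{22})\Delta L_{k-1} + (\gamma_{k-1}-\gamma_k)A_{22}L_{k-1} + \beta_k\{D_k(L_k)-D_k(L_{k-1})\} + \beta_k\{D_k(L_{k-1})-D_{k-1}(L_{k-1})\} + (\beta_k-\beta_{k-1})D_{k-1}(L_{k-1}),
\]
where $\Delta L_k := L_{k+1}-L_k$; the convention $L_0=L_1=0$ gives $\Delta L_0=0$, so this is an inhomogeneous linear recursion anchored at zero. The contraction factor is handled by \eqref{eq:contraction_p}, $\normop{\Id-\gamma_k A_{22}}[Q_{22}] \le 1 - a_{22}\gamma_k$. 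The two ``frozen-step'' inhomogeneous pieces are immediate: the $\gamma$-increment term is bounded by $\rho_\gamma\gamma_k^2\normop{A_{22}}[Q_{22}]L_\infty$ using $\gamma_{k-1}-\gamma_k \le \rho_\gamma\gamma_k^2$ and $\normop{L_{k-1}}[Q_\Delta,Q_{22}]\le L_\infty$, while the $\beta$-increment term is bounded by $\rho_\beta\beta_k^2\CD(L_\infty)$; after summation these assemble exactly into the two contributions making up $C_2^U$.

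\textbf{The Lipschitz term.} The crux is $\beta_k\{D_k(L_k)-D_k(L_{k-1})\}$, which I would control through the one-sided Lipschitz estimate $\normop{D_k(L_k)-D_k(L_{k-1})}[Q_\Delta,Q_{22}] \le C_1^U\,\normop{\Delta L_{k-1}}[Q_\Delta,Q_{22}]$. This follows by telescoping the three $L$-dependent factors of $D_k$, estimating the resolvent difference via $X^{-1}-Y^{-1} = X^{-1}(Y-X)Y^{-1}$ together with the uniform bound $\le 2$; the two surviving leading terms $\normop{\Delta}[Q_\Delta]+L_\infty\normop{A_{12}}[Q_{22},Q_\Delta]$ and $(\normop{A_{22}^{-1}A_{21}}[Q_\Delta,Q_{22}]+L_\infty)\normop{A_{12}}[Q_{22},Q_\Delta]$, each carrying a factor $2$, reproduce precisely $C_1^U$. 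The hypothesis $\beta_k/\gamma_k \le a_{22}/(2C_1^U)$ then yields $(1-a_{22}\gamma_k)+\beta_k C_1^U \le 1 - a_{22}\gamma_k/2$, so this term is absorbed into the contraction and merely downgrades the rate from $a_{22}$ to $a_{22}/2$. The remaining ``step-size-at-frozen-$L$'' term $\beta_k\{D_k(L_{k-1})-D_{k-1}(L_{k-1})\}$ depends only on $\gamma_k-\gamma_{k-1}$ and $\beta_k-\beta_{k-1}$, hence is of size $\beta_k\cdot\mathcal O(\rho_\gamma\gamma_k^2+\rho_\beta\beta_k^2)$; being strictly higher order in the step size, it is dominated by the two leading inhomogeneous terms and harmlessly absorbed into the constant.

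\textbf{Conclusion and main obstacle.} Finally I would unroll the scalar recursion $\normop{\Delta L_k}[Q_\Delta,Q_{22}] \le (1-a_{22}\gamma_k/2)\normop{\Delta L_{k-1}}[Q_\Delta,Q_{22}] + \rho_\gamma\gamma_k^2\normop{A_{22}}[Q_{22}]L_\infty + \rho_\beta\beta_k^2\CD(L_\infty) + (\text{h.o.t.})$ from $\Delta L_0=0$, and apply Lemma~\ref{cor:rate-of-convergence}(i) with $a=a_{22}/2$ to the convolution sums, using $\sum_j\gamma_j^2\prod_s(1-a_{22}\gamma_s/2) \le \CSEQ{a_{22}/2}\gamma_{k+1}$ and $\sum_j\beta_j^2\prod_s(\cdots) \le \CSEQ{a_{22}/2}\beta_{k+1} \le \CSEQ{a_{22}/2}\gamma_{k+1}$, together with $\CD(L_\infty)=2(\normop{A_{22}^{-1}A_{21}}[Q_\Delta,Q_{22}]+L_\infty)(\normop{\Delta}[Q_\Delta]+L_\infty\normop{A_{12}}[Q_{22},Q_\Delta])$, to recognize the stated $C_2^U\,\CSEQ{a_{22}/2}\gamma_{k+1}$. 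The main obstacle is the Lipschitz estimate for $D_k$: one must telescope through the data-dependent resolvent cleanly enough to land on the exact constant $C_1^U$, while verifying that the extra $\mathcal O(\beta_k)$ corrections it and the factor $(\Id-\gamma_k A_{22})$ generate are genuinely negligible, so that the step-size condition $\beta_k/\gamma_k\le a_{22}/(2C_1^U)$ indeed buys the halved contraction rate.
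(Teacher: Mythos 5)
You follow the same overall strategy as the paper: derive a one-step recursion for the difference of consecutive $L_k$'s, contract it at rate $1-a_{22}\gamma_k/2$ by using $\beta_k/\gamma_k\le a_{22}/(2C_1^U)$ to absorb the Lipschitz-type term, and unroll from $L_1-L_0=0$ via Lemma~\ref{cor:rate-of-convergence}. The essential difference is the order of operations. The paper never differentiates the map $D_k$: it rewrites the implicit relation at step $k-1$ so that both relations carry the same right factor $\Id-\beta_k B_{11}(L_k)$, subtracts, and inverts only once at the very end, obtaining (with $U_k=L_k-L_{k-1}$)
\begin{equation*}
U_{k+1}=(\Id-\gamma_k A_{22})U_k+\beta_k(\Id-\gamma_k A_{22})U_k B_{11}(L_k)\{\Id-\beta_k B_{11}(L_k)\}^{-1}-E_k\{\Id-\beta_k B_{11}(L_k)\}^{-1},
\end{equation*}
where $E_k$ contains only the step-size increments and a single term $\beta_k A_{12}U_k$. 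Since the resolvent enters only as a right multiplier of norm at most $2$, and never as a difference of two resolvents, the coefficient of $\normop{U_k}[Q_\Delta,Q_{22}]$ is exactly $1-a_{22}\gamma_k+C_1^U\beta_k$ and the inhomogeneous term is exactly $C_2^U\gamma_k^2$, so the stated constants drop out.

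Your route (invert first, subtract second) forces a Lipschitz estimate for $L\mapsto D_k(L)$, and this is where a genuine gap sits — precisely the point you flagged as the main obstacle and then asserted away. Telescoping the three factors of $D_k$ gives the two leading terms summing to $C_1^U$, \emph{plus} a term from the difference of resolvents $\{\Id-\beta_k B_{11}(L_k)\}^{-1}-\{\Id-\beta_k B_{11}(L_{k-1})\}^{-1}$, so the honest estimate is $\normop{D_k(L_k)-D_k(L_{k-1})}[Q_\Delta,Q_{22}]\le(C_1^U+c\,\beta_k)\normop{L_k-L_{k-1}}[Q_\Delta,Q_{22}]$ with $c=4\normop{A_{12}}[Q_{22},Q_\Delta](L_\infty+\normop{A_{22}^{-1}A_{21}}[Q_\Delta,Q_{22}])(\normop{\Delta}[Q_\Delta]+L_\infty\normop{A_{12}}[Q_{22},Q_\Delta])$. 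This correction is fatal to the absorption as you describe it: the hypothesis $\beta_k C_1^U\le a_{22}\gamma_k/2$ exhausts the contraction budget exactly, and the remaining hypotheses (in particular the bound on $\beta_0$) guarantee no more than $c\beta_k^2\le (1/2)\,a_{22}\gamma_k$, i.e.\ a recursion coefficient bounded only by $1$ — no contraction at all; securing any fixed positive rate from this side would require an additional condition such as $\normop{A_{12}}[Q_{22},Q_\Delta]\normop{A_{22}^{-1}A_{21}}[Q_\Delta,Q_{22}]\le\normop{\Delta}[Q_\Delta]$, which is not assumed. The repair within your scheme is to move $c\beta_k^2\normop{L_k-L_{k-1}}[Q_\Delta,Q_{22}]$ to the inhomogeneous side using the crude bound $\normop{L_k-L_{k-1}}[Q_\Delta,Q_{22}]\le 2L_\infty$; this preserves the $\mathcal{O}(\gamma_{k+1})$ rate but yields a constant strictly larger than the stated $C_2^U$. (Your frozen-step term $\beta_k\{D_k(L_{k-1})-D_{k-1}(L_{k-1})\}$ is, by contrast, genuinely higher order — it carries $\rho_\gamma$ or $\rho_\beta$ times an extra factor of $\beta_k$ — and is absorbable under the stated bound on $\beta_0$.) So as written your proposal proves the lemma only with inflated constants or under extra step-size smallness; to land on the exact $C_1^U$ and $C_2^U$, subtract the implicit equations first and invert last, as the paper does.
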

\begin{proof} Recall that $B_{11}(L)= \Delta - A_{12} L$. It follows from Lemma~\ref{lem:LkBound} that $\Id - \beta_k B_{11}(L_k)$ is invertible matrix with bounded norm. Equation
$$
L_{k}(\Id - \beta_{k-1} B_{11}(L_{k-1})) = \bigg \{ (\Id - \gamma_{k-1} A_{22}) L_{k-1} + \beta_{k-1} A_{22}^{-1} A_{21} B_{11}(L_{k-1})\bigg\}
$$
may be rewritten as follows
\begin{align}
L_{k}(\Id - \beta_{k} B_{11}(L_{k})) = (\Id - \gamma_k A_{22}) L_{k-1} + \beta_{k} B_{11}(L_{k}) + E_k, 
\end{align}
where $E_k := (\gamma_k - \gamma_{k-1}) A_{22} L_{k-1} +  (L_k + A_{22}^{-1} A_{21}) D_k, D_k: = - \beta_k A_{12}(L_k - L_{k-1}) + (\beta_k - \beta_{k-1}) B_{11}(L_{k-1})$. Let $U_k = L_{k} - L_{k-1}$. Then
$$
U_{k+1} (\Id - \beta_{k} B_{11}(L_{k})) = (\Id - \gamma_k A_{22}) U_k - E_k. 
$$
Then
$$
U_{k+1} = (\Id - \gamma_k A_{22}) U_k + \beta_k (\Id - \gamma_k A_{22}) U_k B_{11}(L_k)(\Id - \beta_{k} B_{11}(L_{k}))^{-1} - E_k (\Id - \beta_{k} B_{11}(L_{k}))^{-1}  
$$
It is easy to check that
$$
\normop{(\Id - \gamma_k A_{22}) U_k B_{11}(L_k)(\Id - \beta_{k} B_{11}(L_{k}))^{-1}}[Q_\Delta, Q_{22}] \le 2 \normop{U_k}[Q_\Delta, Q_{22}] \{\normop{\Delta}[Q_{\Delta}] + L_\infty \normop{A_{12}}[Q_{22},Q_{\Delta}]\}
$$
Moreover,
\begin{align}
&\normop{E_k (\Id - \beta_{k} B_{11}(L_{k}))^{-1}}[Q_\Delta, Q_{22}] \le 2\rho_\gamma \gamma_k^2 L_\infty\normop{A_{22}}[Q_{22}] \nonumber \\
& + 2(L_\infty + \normop{A_{22}^{-1} A_{21}}[Q_\Delta, Q_{22}]) \{ \rho_{\beta} \beta_k^2 (\normop{\Delta}[Q_{\Delta}] +  L_\infty \normop{A_{12}}[Q_{22},Q_{\Delta}]) +\beta_k \normop{A_{12}}[Q_{22}, Q_\Delta] \normop{U_k}[Q_\Delta, Q_{22}]  \} \nonumber
\end{align}
Applying previous inequalities we obtain
\begin{align}
\normop{U_{k+1}}[Q_\Delta, Q_{22}] \le (1 - \gamma_k a_{22} + C_1^U \beta_k) \normop{U_{k}}[Q_\Delta, Q_{22}] + C_2^U \gamma_k^2 \nonumber
\end{align}
Since $\beta_k/\gamma_k \le (1/(2C_1^U)) a_{22} $ we obtain 
$$
\normop{U_{k+1}}[Q_\Delta, Q_{22}] \le C_2^U \CSEQ{a_{22}/2}\gamma_{k+1} 
$$
\end{proof}

\begin{lemma}
\label{lem:matrix-trace-normop}
Let $Q$ be a symmetric definite positive $n \times n$ matrix and $\Sigma$ be a $n  \times n$ matrix. Then
\[
\Tr(Q \Sigma) \leq \normop{\Sigma}[Q] \Tr(Q) \eqsp.
\]
\end{lemma}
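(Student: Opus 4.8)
The plan is to reduce the $Q$-weighted statement to a purely Euclidean one by conjugating with the symmetric positive definite square root $Q^{1/2}$, and then to invoke the elementary fact that $\Tr(AR)\le\normop{A}[2]\Tr(R)$ for any matrix $A$ and any positive semidefinite $R$. The whole point is to trade the unfamiliar $Q$-operator norm for the ordinary spectral norm $\normop{\cdot}[2]$, for which trace bounds are transparent.

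First I would introduce $\widetilde{\Sigma}:=Q^{1/2}\Sigma Q^{-1/2}$ and record two observations. By cyclicity of the trace, $\Tr(Q\Sigma)=\Tr(Q^{1/2}\Sigma Q^{1/2})=\Tr(\widetilde{\Sigma}\,Q)$, so the left-hand side is invariant under this conjugation. Second, since $\norm{y}[Q]=\norm{Q^{1/2}y}$ for every $y\in\rset^n$, the change of variables $x=Q^{-1/2}z$ in the definition $\normop{\Sigma}[Q]=\max_{\norm{x}[Q]=1}\norm{\Sigma x}[Q]$ gives $\normop{\Sigma}[Q]=\max_{z\neq 0}\norm{Q^{1/2}\Sigma Q^{-1/2}z}/\norm{z}=\normop{\widetilde{\Sigma}}[2]$. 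Thus the $Q$-norm of $\Sigma$ is exactly the Euclidean spectral norm of $\widetilde{\Sigma}$, and the claimed inequality is equivalent to $\Tr(\widetilde{\Sigma}\,Q)\le\normop{\widetilde{\Sigma}}[2]\Tr(Q)$.

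To close the argument I would use the spectral decomposition $Q=\sum_{i=1}^n\lambda_i v_i v_i^\top$, where the $\lambda_i>0$ are the eigenvalues of $Q$ and the $v_i$ are orthonormal. Then $\Tr(\widetilde{\Sigma}\,Q)=\sum_{i=1}^n\lambda_i\, v_i^\top\widetilde{\Sigma}\,v_i$, and by Cauchy--Schwarz each term satisfies $v_i^\top\widetilde{\Sigma}\,v_i\le\norm{\widetilde{\Sigma}\,v_i}\,\norm{v_i}\le\normop{\widetilde{\Sigma}}[2]$; since $\lambda_i>0$ preserves the inequality, summing yields $\Tr(\widetilde{\Sigma}\,Q)\le\normop{\widetilde{\Sigma}}[2]\sum_{i=1}^n\lambda_i=\normop{\widetilde{\Sigma}}[2]\Tr(Q)$. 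Combining with the two observations above gives $\Tr(Q\Sigma)=\Tr(\widetilde{\Sigma}\,Q)\le\normop{\widetilde{\Sigma}}[2]\Tr(Q)=\normop{\Sigma}[Q]\Tr(Q)$, as desired; note the bound holds irrespective of the sign of $\Tr(Q\Sigma)$.

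The only genuine obstacle is the mismatch between the $Q$-operator norm and the ordinary spectral norm: one cannot directly bound the Euclidean quadratic forms $v_i^\top\Sigma v_i$ for the standard eigenvectors of $Q$ by $\normop{\Sigma}[Q]$, because the latter controls $Q$-norms of images rather than Euclidean quadratic forms. Conjugation by $Q^{1/2}$ is precisely the device that resolves this, since it simultaneously converts $\normop{\Sigma}[Q]$ into a genuine spectral norm $\normop{\widetilde{\Sigma}}[2]$ and, via cyclicity, brings the trace into a form where the spectral decomposition of $Q$ applies cleanly.
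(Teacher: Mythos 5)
Your proof is correct and is essentially the paper's argument in different coordinates: both expand $\Tr(Q\Sigma)$ over an orthonormal eigenbasis of $Q$ and bound each term by Cauchy--Schwarz together with the definition of the operator norm, with the eigenvalue weights summing to $\Tr(Q)$. The paper carries this out directly in the $Q$-inner product, whereas you first conjugate by $Q^{1/2}$ so that $\normop{\Sigma}[Q]$ becomes the Euclidean spectral norm of $Q^{1/2}\Sigma Q^{-1/2}$; this conjugation is precisely the isometry that maps the paper's computation onto yours, so the two proofs coincide term by term.
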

\begin{proof}
Denote by $(e_i)_{i=1}^n$ an orthonomal basis of eigenvectors of $Q$, $Q e_i= \lambda_i(Q) e_i$, $i=1,\dots,n$, $\ps{e_i}{e_j}= \delta_{i,j}$, where $\delta_{i,j}$ is the Kronecker symbol. We get that
\begin{align}
\Tr(Q \Sigma)
&= \sum_{i=1}^n \ps{e_i}{Q \Sigma e_i}= \sum_{i=1}^n \ps[Q]{e_i}{\Sigma e_i} \nonumber \\
&\leq \normop{\Sigma}[Q] \sum_{i=1}^n \norm{e_i}[Q]^2 = \normop{\Sigma}[Q] \Tr{Q} \nonumber
\end{align}
where we have used $\norm{e_i}[Q]=\lambda_i$ and $\Tr{Q}= \sum_{i=1}^n \lambda_i(Q)$ \eqsp.
\end{proof}

\begin{corollary}
\label{coro:matrix-trace-normop}
If $X$ is a $n \times 1$ random vector such that $\PE[\norm{X}[2]] < \infty$. Then, 
\[
\E[\norm{X}[Q]^2] \leq \Tr(Q) \normop{\E[XX^\top]}[Q] \eqsp.
\]
\end{corollary}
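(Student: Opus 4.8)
The plan is to reduce the corollary directly to Lemma~\ref{lem:matrix-trace-normop} by recognizing the squared weighted norm as a trace. First I would observe that, by the definition $\norm{X}[Q]^2 = X^\top Q X$, the quantity $\norm{X}[Q]^2$ is a scalar and hence equals its own trace; applying the cyclic property of the trace gives, pointwise in $\omega$, the identity $\norm{X}[Q]^2 = \Tr(X^\top Q X) = \Tr(Q X X^\top)$.

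Next I would pass to expectations. The moment hypothesis guarantees that each entry $\E[X_i X_j]$ is finite (by Cauchy--Schwarz), so the matrix $\E[XX^\top]$ is well defined, and the interchange of the expectation with the finite linear combination of entries that defines the trace is justified. This yields $\E[\norm{X}[Q]^2] = \E[\Tr(Q XX^\top)] = \Tr(Q\, \E[XX^\top])$.

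Finally I would invoke Lemma~\ref{lem:matrix-trace-normop} with the choice $\Sigma := \E[XX^\top]$, which gives $\Tr(Q\, \E[XX^\top]) \leq \normop{\E[XX^\top]}[Q]\, \Tr(Q)$. Combining this with the previous display produces exactly the claimed inequality $\E[\norm{X}[Q]^2] \leq \Tr(Q)\, \normop{\E[XX^\top]}[Q]$.

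The argument is essentially immediate once the trace identity is in place, so I do not anticipate a genuine obstacle. The only point requiring any care is the interchange of expectation and trace, and this is fully handled by the finite-moment assumption; in particular, no use is made of the symmetry of $\E[XX^\top]$, since Lemma~\ref{lem:matrix-trace-normop} is stated for an arbitrary $n \times n$ matrix $\Sigma$.
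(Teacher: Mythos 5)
Your proposal is correct and follows exactly the paper's argument: write $\norm{X}[Q]^2 = \Tr(Q XX^\top)$, exchange expectation and trace to get $\E[\norm{X}[Q]^2] = \Tr(Q\,\E[XX^\top])$, and apply Lemma~\ref{lem:matrix-trace-normop} with $\Sigma = \E[XX^\top]$. The extra care you take in justifying the interchange of expectation and trace is fine but not something the paper dwells on; there is no substantive difference between the two proofs.
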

\begin{proof}
Note that $\E[\norm{X}[Q]^2] = \Tr(Q\E[X X^\top]) \leq  \normop{\E[XX^\top]}[Q] \Tr{Q}$
\end{proof}

\begin{lemma}
\label{lem:key-inequality}
Let $m$ and $n$ be two integers, $P$ and $Q$ be $m \times m$ and $n \times n$ symmetric positive definite matrices. Let $X$ and $Y$ be  $m \times 1$ and $n \times 1$ random vectors such that $\PE[\norm{X}^2] < \infty$ and $\PE[\norm{Y}^2] < \infty$. Then,
\[
\normop{\PE[X Y^\top]}[Q,P] \leq \lambda_{\min}(Q)^{-1} \{ \Tr(Q) \}^{1/2} \{\Tr(P) \}^{1/2} \normop{\PE[X X^\top]}[P]^{1/2} \normop{\PE[Y Y^\top]}[Q]^{1/2}
\]
\end{lemma}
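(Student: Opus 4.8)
The plan is to unfold the weighted operator norm into its definition and reduce the bound to a one-parameter family of vector inequalities. By definition $\normop{\E[XY^\top]}[Q,P] = \max_{\norm{y}[Q]=1}\norm{\E[XY^\top]y}[P]$, and since $\E[XY^\top]y = \E[X\,(Y^\top y)]$ is the expectation of a random vector in $\rset^m$, I would first apply Jensen's inequality for the convex map $z\mapsto\norm{z}[P]$ to get $\norm{\E[X(Y^\top y)]}[P]\le\E[\norm{X}[P]\,|Y^\top y|]$, where I have used that $Y^\top y$ is a scalar. A Cauchy--Schwarz step in $L^2(\Omega)$ then separates the two random factors:
\[
\E[\norm{X}[P]\,|Y^\top y|]\le \E[\norm{X}[P]^2]^{1/2}\,\E[(Y^\top y)^2]^{1/2}.
\]

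The two factors are handled by the trace lemmas already proved. For the $X$-factor I would invoke Corollary~\ref{coro:matrix-trace-normop} directly, giving $\E[\norm{X}[P]^2]\le \Tr(P)\,\normop{\E[XX^\top]}[P]$, which supplies the $\{\Tr(P)\}^{1/2}\normop{\E[XX^\top]}[P]^{1/2}$ portion of the bound; finiteness of this expectation follows from $\norm{X}[P]^2\le\lambda_{\sf max}(P)\norm{X}^2$ together with the hypothesis $\E[\norm{X}^2]<\infty$. The $Y$-factor is the only asymmetric part: rather than expanding $(Y^\top y)^2=y^\top\E[YY^\top]y$ directly, I would first bound the pairing by the weighted Cauchy--Schwarz inequality $|Y^\top y|=|\ps{Y}{y}|\le\norm{Y}[Q]\norm{y}[Q^{-1}]$, square and take expectations, and apply Corollary~\ref{coro:matrix-trace-normop} once more to obtain $\E[(Y^\top y)^2]\le \norm{y}[Q^{-1}]^2\,\Tr(Q)\,\normop{\E[YY^\top]}[Q]$.

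It remains to control $\norm{y}[Q^{-1}]$ under the constraint $\norm{y}[Q]=1$. Diagonalizing $Q=\sum_i\lambda_i(Q)e_ie_i^\top$ and writing $y=\sum_i c_ie_i$, the elementary inequality $\lambda_i(Q)^{-1}\le\lambda_{\sf min}(Q)^{-2}\lambda_i(Q)$ yields $\norm{y}[Q^{-1}]^2=\sum_i\lambda_i(Q)^{-1}c_i^2\le\lambda_{\sf min}(Q)^{-2}\sum_i\lambda_i(Q)c_i^2=\lambda_{\sf min}(Q)^{-2}\norm{y}[Q]^2$, so that $\norm{y}[Q^{-1}]\le\lambda_{\sf min}(Q)^{-1}$ on the unit sphere. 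Substituting this into the $Y$-factor and combining with the $X$-factor gives exactly $\lambda_{\sf min}(Q)^{-1}\{\Tr(Q)\}^{1/2}\{\Tr(P)\}^{1/2}\normop{\E[XX^\top]}[P]^{1/2}\normop{\E[YY^\top]}[Q]^{1/2}$, and taking the maximum over $\norm{y}[Q]=1$ closes the argument. The one genuinely delicate choice is this asymmetric treatment of $Y$: routing the $y$-dependence through $\norm{y}[Q^{-1}]$ (and not through $y^\top\E[YY^\top]y$) is precisely what produces the single factor $\lambda_{\sf min}(Q)^{-1}$ alongside $\{\Tr(Q)\}^{1/2}$, so that the estimate matches the stated inequality rather than a fully symmetric variant.
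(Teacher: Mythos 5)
Your proof is correct and follows essentially the same route as the paper's: Jensen's inequality for the weighted norm, Cauchy--Schwarz in $L^2(\Omega)$, Corollary~\ref{coro:matrix-trace-normop} applied twice, and the comparison $\norm{\cdot}[Q^{-1}] \le \lambda_{\min}(Q)^{-1} \norm{\cdot}[Q]$ to produce the $\lambda_{\min}(Q)^{-1}$ factor. The only cosmetic difference is the order of quantifiers: you fix the test vector $y$ and take the supremum at the end (so the $\lambda_{\min}(Q)^{-1}$ lands on $y$ via $\norm{y}[Q^{-1}]\le\lambda_{\min}(Q)^{-1}$), whereas the paper applies Jensen at the matrix level, evaluates the rank-one norm exactly as $\normop{X Y^\top}[Q,P]=\norm{X}[P]\norm{Y}[Q^{-1}]$, and puts that factor on the random vector $Y$.
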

\begin{proof}
Note that $\normop{\PE[X Y^\top]}[Q,P] \leq \PE[ \normop{X Y^\top}[Q,P]]$ and 
\begin{align}
\normop{X Y^\top}[Q,P] 
&= \sup_{\norm{y}[Q]=1} \norm{X \ps[Q]{Y}{y}}= \norm{X}[P] \sup_{\norm{y}[Q]=1} \ps[Q]{Q^{-1}Y}{y} \nonumber \\
&= \norm{X}[P] \norm{Q^{-1} Y}[Q]= \norm{X}[P] \norm{Y}[Q^{-1}] \leq \lambda_{\min}^{-1}(Q) \norm{X}[P] \norm{Y}[Q] \eqsp. \nonumber
\end{align}
By applying the Cauchy-Schwarz inequality, we obtain
\begin{equation}
\normop{\PE[X Y^\top]}[Q,P] \leq \lambda_{\min}^{-1}(Q) \left\{ \PE[\norm{X}[P]^2]  \right\}^{1/2} 
\left\{ \PE[\norm{Y}[Q]^2]  \right\}^{1/2} \eqsp. \nonumber
\end{equation}
The proof follows from Corollary~\ref{coro:matrix-trace-normop}.
\end{proof}


\section{Details on Numerical Experiments} \label{app:num}
This section provides details about the numerical experiments and verification that the convergence conditions are satisfied. 

\subsection{Toy Example}
In this toy example, we consider randomly generated instances of linear two timescale SA in the form \eqref{eq:tts-gen1}, \eqref{eq:tts-gen2} with i.i.d.~samples (and thus the martingale noise setting). In particular, we let the iterates $\theta_k, w_k \in \rset^{d}$ be $d$-dimensional and construct a problem instance as follows:
\begin{enumerate}
    \item Sample a random matrix $T$ whose entries are drawn i.i.d.~from the uniform distribution $U[-1,1]$; Compute the $QR$-decomposition as $T=QR$. 
    \item Set $A_{12}=Q$ and $A_{22}=Q^\top \Lambda_0 Q$, where $\Lambda_0$ is a diagonal matrix with i.i.d. entries from $U[-1,1]$.
    \item Sample a random matrix $R$ whose entries are drawn i.i.d.~from the uniform distribution $U[-1,1]$.
    \item Set $A_{11}=R R^\top + I$ and $A_{21}=Q^\top \Lambda_1$, where $\Lambda_1$ is a diagonal matrix with i.i.d. entries from $U[-1,1]$.
    \item Sample a stationary solution pair $\theta^*,w^*$ with i.i.d.~entries from $U[-1,1]$.
    \item Compute $b_1,b_2$ using the generated matrices and stationary points, i.e.,
    \[
    b_1 = A_{11} \theta^\star + A_{12} w^\star, \quad b_2 = A_{21} \theta^\star + A_{22} w^\star.
    \]
\end{enumerate}
During the linear two timescale SA iteration, the noise terms are generated as
\begin{align}
&V_{k+1} = F_V^k + A_{V,\theta}^{k} \theta_k + A_{V,w}^{k} w_k, \quad W_{k+1} = F_W^k + A_{W,\theta}^{k} \theta_k + A_{W,w}^{k} w_k \nonumber
\end{align}
where $F_V^k,A_{V,\theta}^{k}, A_{V,w}^{k}$ are vectors/matrices with entries drawn i.i.d.~from the standard normal distribution ${\cal N}(0,0.1)$, and $F_W^k, A_{W,\theta}^{k},A_{W,w}^{k}$ are vectors/matrices with entries drawn i.i.d.~from the standard normal distribution ${\cal N}(0,0.5)$. With the above constructions, it can be verified that the required assumptions A\ref{assum:hurwitz}, A\ref{assum:zero-mean}, A\ref{assum:bound-conditional-variance} of the martingale noise setting hold. It remains to verify that the step sizes chosen satisfy A\ref{assum:stepsize}. 

Below, we show the plots of deviations in $\theta_k$ and $w_k$ without normalization by the step sizes (see Fig. \ref{fig:ratesMartingaleMarkovNotNormalized}).

\begin{figure}[h!]
\centering
    \begin{tabular}{c c r c}
    \addvbuffer[0cm 5.25cm]{\sf (a)}\hspace{-.45cm} & \includegraphics[width=.375\linewidth]{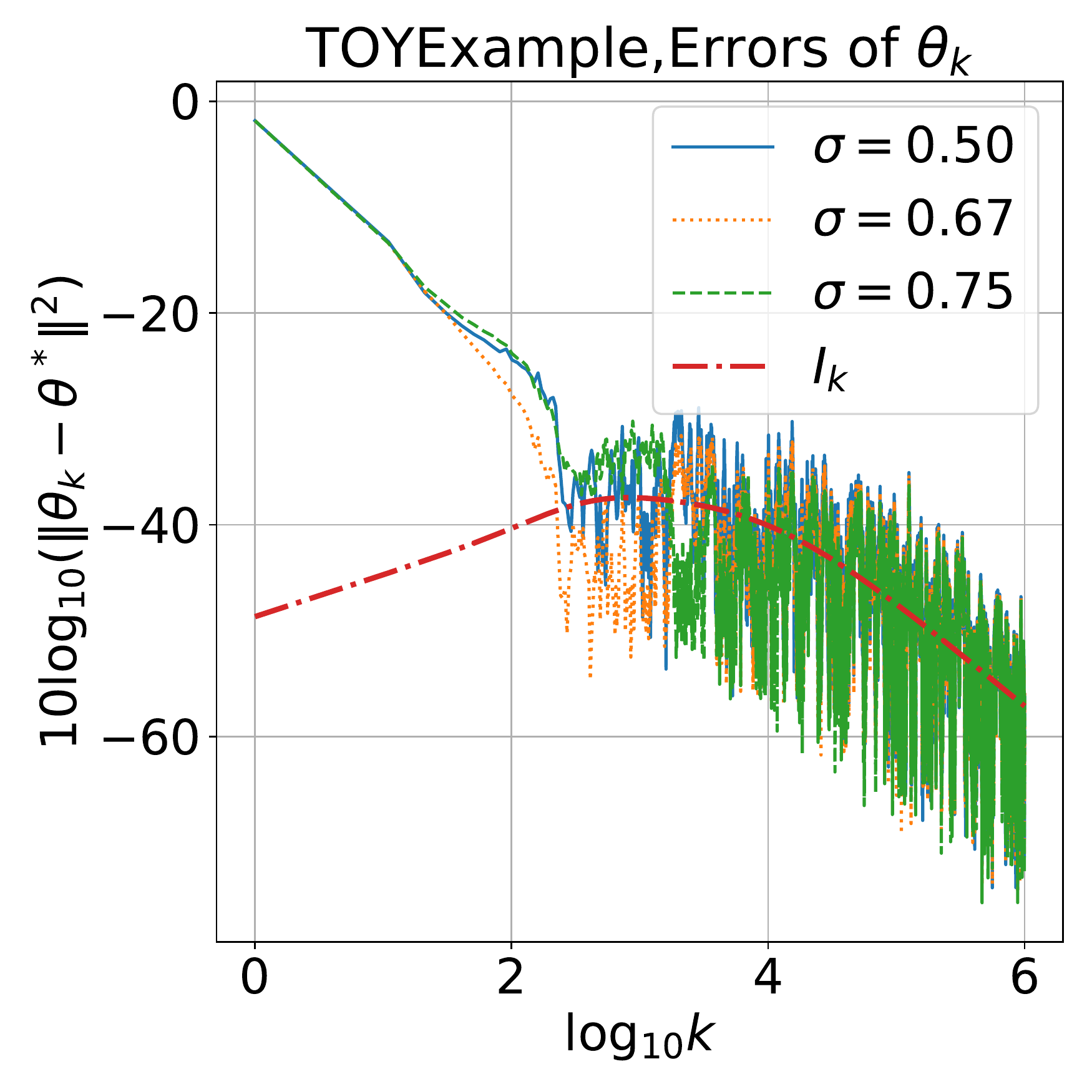} & \addvbuffer[0cm 5.25cm]{\sf (b)}\hspace{-.45cm} & \includegraphics[width=.375\linewidth]{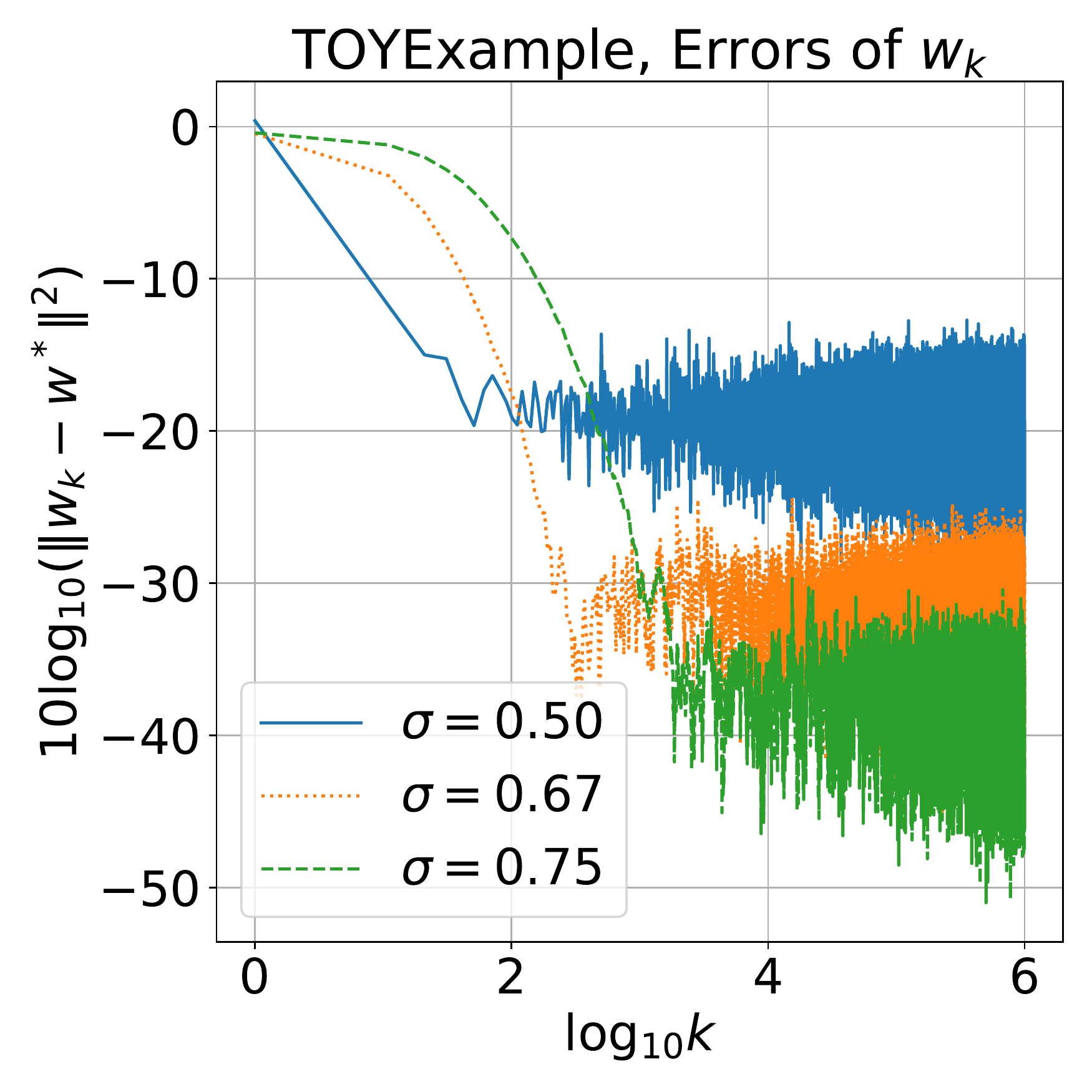} 
    \end{tabular}
    \caption{Unnormalized deviations from stationary point $(\theta^\star, w^\star)$ and term $I_k$ : the toy example.\vspace{-.5cm}}
    \label{fig:ratesMartingaleMarkovNotNormalized}
\end{figure}

\subsection{Garnet Problems}
\paragraph{GTD Algorithm and Policy Evaluation Problem}
The specific form of linear two timescale SA used in this example follows from that of the classical GTD algorithm \citep{sutton:gtd:2009,sutton:gtd2:2009}, which is described below for completeness. 
Let $\mathcal{S},\mathcal{A}$ be some discrete state and action spaces (for clarity we bound ourselves by discrete setting, but one could formulate it in more general way), $\gamma\in(0,1)$ and $\pi:\mathcal{S} \to \mathcal{P}(\mathcal{A})$ be a \textit{stochastic policy}, i.e. mapping from states to probability measures over actions. When in state $s$ the agent performs action $a$ (distributed according to its policy $\pi$), it transitions randomly to state $s'$ with probability $p(s' \vert s,a)$ and obtains reward $r(s,a)$. This induces a Markov chain with transition probabilities $p_\pi ( s' \vert s) := \sum_{a \in \mathcal{A}} \pi(a \vert s) p(s'\vert s,a)$.


The goal of policy evaluation is to estimate the average discounted cumulative reward obtained with the policy $\pi$. In detail, we evaluate the value function $V_\pi(s):=\mathds{E}\left[r(s,a)+\sum_{k=1}\rho^k r(s_k,a_k) \right]$ with $\rho$ being the \textit{discounting factor}. As the state space $|{\cal S}|$ is often large, we use the linear approximation $V_\pi(s) \approx V_\theta(s) := \langle \theta,  \phi(s) \rangle$, where  $\phi: \mathcal{S} \rightarrow \rset^{d}$ is a pre-defined feature map. Define also \textit{temporal difference} at iteration  $k\in\mathds{Z}_+$ for transition $s_k \to s_{k+1}$ as $\delta_k := r(s_k,a_k) + \gamma V_{\theta_k}(s_{k+1}) - V_{\theta_k}(s_k)$. For brevity, denote the observation at iteration $k\in \mathds{Z}_+$, namely, $\phi(s_k)$, $\phi(s_{k+1})$, $r(s_k,a_k)$ as $\phi_k$, $\phi_{k+1}$, $r_k$ respectively. The GTD algorithm iterations are described as:
\begin{align}
    &\theta_{k+1} = \theta_k + \beta_k \left[\phi_k - \rho \phi_{k+1} \right]\langle \phi_k,  w_k \rangle, \quad w_{k+1} = w_k + \gamma_k \left[\phi_k \delta_k - w_k \right].
    \label{eqs:gtd}
\end{align}
The above is a special case of our linear two timescale SA in \eqref{eq:2ts1}, \eqref{eq:2ts2} with the notations:
\begin{align}
    & b_1 =0,~ A_{11}=0,~ A_{12}= - \E[(\phi_k-\rho \phi_{k+1})\phi_k^\top],\\
    &b_2=\E[\phi_k r_k],~ A_{21} = - \E[\phi_k(\rho \phi_{k+1}-\phi_k)^\top],~ A_{22}= \Id_d,\\ 
    & V_{k+1}=\left((\phi_k- \rho \phi_{k+1})\phi_k^\top - \E[(\phi_k- \rho \phi_k')\phi_k^\top] \right) w_k,\\
    & W_{k+1}=\phi_k r_k - \E[\phi_k r_k] + \left( (\phi_k- \rho \phi_{k+1})\phi_k^\top - \E[(\phi_k- \rho \phi_{k+1})\phi_k^\top] \right) \theta_k,
    \label{eqs:gtdnoises}
\end{align}
where the expectations above are taken with respect to the stationary distribution of the MDP under policy $\pi$.
Particularly, the noise terms $V_{k+1}, W_{k+1}$ follow the Markovian noise setting. 

\paragraph{Garnet Problem} The Garnet problem refers to a set of policy evaluation problems with randomly generated problem instances, originally proposed in \citep{arch:garnetInitial:1995}. Here, we consider a simpler version of Garnet problems described in \citep{geist:offpolicy:2014}.
Particularly, we consider a finite-state MDP with the parameters $n_S$ as the number of states, $n_A$ as the number of possible actions in each state, $b$ as the branching factor, i.e., the number of transitions from each state-action pair to a new state, $p$ as the number of features in the linear function approximation applied. For each $(s,a)\in \mathcal{S} \times \mathcal{A}$ the next transitions $s' \in \mathcal{S}' \subset \mathcal{S}$ is chosen uniformly from the set of all combinations from $\mathcal{S}$ consisting of $b$ items. For all $s' \in \mathcal{S}'$ the transition probabilities $p(s'\vert s,a)$ are generated from $U[0,1]$ and then normalized by their sum. For the features, for each state $s \in \mathcal{S}$ the corresponding feature vector $\phi(s)$ is generated from $(U[0,1])^p$. In our numerical example, we consider a particular problem from the family $n_S=30,n_A=8,b=2,p=8$.

By the above constructions, we observe that the assumptions A\ref{assum:hurwitz}, B\ref{assb:mc1}--B\ref{assb:bdd} are all satisfied. It remains to verify that the step sizes chosen satisfy A\ref{assum:stepsize}, B\ref{assb:step}.
  
\subsection{Step Size Parameters}
We consider the family of step size schedules:
 \beq \label{eq:step_app}
 \beta_k = {c^\beta} / (k_0^\beta + k), ~~ \gamma_k = {c^\gamma} / { (k_0^\gamma + k)^{\sigma} },
 \eeq
with $\sigma \in [0.5,1]$ and the parameters $c^\beta, c^\gamma, k^\beta_0, k_0^\gamma$. Note that
\[
\frac{\beta_k}{\gamma_k} = \frac{c^\beta (k_0^\gamma + k)^\sigma } {c^\gamma (k_0^\beta + k) } \leq \frac{c^\beta}{c^\gamma} \left( \frac{ k_0^\gamma } {k_0^\beta } \right)^\sigma =: \kappa
\]
since we have $\sigma \leq 1$. This ensures A\ref{assum:stepsize}-1.
Furthermore, we observe that
\[
\frac{\gamma_{k-1}}{\gamma_k} = \left( 1 + \frac{1}{k_0^\gamma+k-1} \right)^\sigma \leq 1 + \frac{\sigma}{k_0^\gamma+k-1} \leq 1 + \frac{\sigma k_0^\gamma }{c^\gamma (k_0^\gamma-1) } \frac{c^\gamma}{(k_0^\gamma+k)^\sigma} = 1 + \frac{\sigma k_0^\gamma }{c^\gamma (k_0^\gamma-1) } \gamma_k,
\]
On the other hand, we also have 
\[
\frac{\gamma_{k-1}}{\gamma_k} \leq 1 + \frac{\sigma k_0^\beta }{c^\beta (k_0^\gamma-1) } \frac{c^\beta}{k_0^\beta+k} = 1 + \frac{\sigma k_0^\beta }{c^\beta (k_0^\gamma-1) } \beta_k
\]
Similar upper bound can be derived for $\beta_{k-1}/\beta_k$. Setting $c^\gamma, c^\beta$ large enough ensures A\ref{assum:stepsize}-2. Lastly, B\ref{assb:step} can be guaranteed by observing that $\sigma \geq 0.5$. 

The above discussions illustrate that the satisfaction of A\ref{assum:stepsize} hinge on setting a large $c^\gamma, c^\beta$. However, this requirement can be hard to satisfy since we also have requirements such as $\gamma_k \leq \gamma_\infty^{\sf mark}$, $\beta_k \leq \beta_\infty^{\sf mark}$. To this end, we have to set a large $k_0^\beta, k_0^\gamma$. As a result, there are four inter-related hyper parameters to be tuned in order to ensure the desired convergence of linear two timescale SA. 
We remark that tuning the step size parameters for SA scheme is generally difficult.


\end{document}